\newcommand{\mbf}[1]{\mathbf{#1}}
\newcommand{\fat}[1]{\mathds{#1}}
\newcommand{\hsm}[1]{\mathscr{#1}}
\newcommand{\ZZ}{\fat{Z}}
\newcommand{\dd}{\mathrm{d}}
\newcommand{\at}[1]{\big|_{#1}}
\newcommand{\indicator}[1]{\fat{1}_{#1}}
\newcommand{\wild}{{\scriptscriptstyle\bullet}}
\newcommand{\set}[2]{\left\{#1\left|#2\right.\right\}}
\newcommand{\gen}[2]{\left\langle #1\left|#2\right.\right\rangle}
\newcommand{\THEN}{\Rightarrow}
\newcommand{\IFF}{\Leftrightarrow}
\newcommand{\DEFF}{\;\stackrel{\begin{array}{c}
	\scriptscriptstyle{def.}\\
	%\phantom{x}	
\end{array}}{\Longleftrightarrow}}		% equality by definition
\newcommand{\minus}{\smallsetminus}		% set difference
\newcommand{\symplus}{\vartriangle}		% symmetric difference
\newcommand{\into}{\hookrightarrow}		% injection
\newcommand{\id}[1]{\mathtt{id}_{#1}}	% identity map
\newcommand{\cl}[1]{\overline{#1}}		% closure
\newcommand{\power}[1]{\mathbf{2}^{#1}}	% power set
\newcommand{\card}[1]{\left|#1\right|}	% cardinality
\newcommand{\ev}[2]{\left\langle #1:#2\right\rangle} % evaluation
\newcommand{\inv}{{^{\scriptscriptstyle -1}}}	% inverse
\newcommand{\norm}[1]{\left\Vert #1\right\Vert}	% norm
\newcommand{\res}[1]{\!\left|_{#1}\right.}		% restricted to...
\newcommand{\dist}[2]{\mathtt{dist}\!\left(#1,#2\right)}	% distance (generic)
\newcommand{\catpoc}{\mbf{Poc}_f}
\newcommand{\catmed}{\mbf{Med}_f}
\newcommand{\catcub}{\mbf{Cub}_f}
\newcommand{\sens}{\Sigma}					% sensorium
\newcommand{\spc}{\mbf{X}}					% state space
\newcommand{\env}{\mbf{E}}					% the environment
\newcommand{\model}{\mathbf{M}}				% model space
\newcommand{\punct}{\mathbf{M}^{\times}}	% punctured model space
\newcommand{\ellone}[2]{\mbf{\Delta}\!\left(#1,#2\right)} % distance in the dual
\newcommand{\minP}{\mathbf{0}}		% min element in a poc set
\newcommand{\maxP}{\mathbf{0^\ast}}	% max element in a poc set
\newcommand{\up}[1]{#1\!\uparrow}		% upward closure
\newcommand{\down}[1]{#1\!\downarrow}	% downward closure
\newcommand{\coh}[1]{\mathtt{coh}(#1)}			% coherent projection
\newcommand{\med}[3]{\mathrm{med}\!\left(#1,#2,#3\right)} 	% median
\newcommand{\morph}[2]{\mathrm{Hom}\!\left(#1,\,#2\right)}	% morphism space
\newcommand{\dual}[1]{\mathtt{Dual}\!\left(#1\right)}		% dual graph
\newcommand{\cube}[1]{\mathtt{Cube}\!\,(#1)}				% dual cubing
\newcommand{\punc}[1]{\mathtt{Cube}^{\times}\!(#1)}			% punctured dual cubing
\newcommand{\flip}[2]{\left[#1\right]_{_{#2}}}		% flipped vertex
\newcommand{\half}[1]{\mathfrak{h}\!\left(#1\right)}		% halfspace of ....
\newcommand{\sep}[1]{\mathtt{sep}\!\left(#1\right)} % separator
\newcommand{\pos}{\mathtt{pos}}			% position map
\newcommand{\mytime}{\fat{T}}			% time model
\newcommand{\ppoc}{\mbf{P}}		%poc set P at time...
\newcommand{\actions}{\Sigma_{\!_{act}}} 		%set of actions
\newcommand{\pact}{\mbf{Act}} 		%poc set of actions at time ...
\newcommand{\pobs}{\mbf{Obs}} 		%poc set of observations at time ...
\newcommand{\pog}{\mbf{\Gamma}}
\newcommand{\observe}[1]{O\at{#1}}			% raw observation at time...
\newcommand{\decide}[1]{A\at{#1}}			% action (decision) at time...
\newcommand{\current}[1]{S\at{#1}}		% current state at time...
\newcommand{\state}[1]{\#_{#1}} 		% state of snapshot evaluated at...
\newcommand{\pocg}{\mbf{K}_{\sens}}		% underlying graph for snapshots
\newcommand{\snap}[1]{\mathbf{#1}} % snapshot... at time...
\newcommand{\witness}[1]{w_{#1}} 		% witness weight for edge...
\newcommand{\threshold}[1]{\tau_{#1}} 	% threshold for edge...
\newcommand{\ori}[1]{\omega(#1)} 		% orientation cocycle for edge...
\newcommand{\tri}[1]{\mu(#1)} 			% weight metric on snapshot
\newcommand{\poc}[1]{\mathtt{Poc}(#1)} 	% poc set derived from snapshot...
\newcommand{\snapfloor}[1]{\left\lfloor #1\right\rfloor} 	% truncation of snapshot...
\newcommand{\nullsnap}{\mathtt{Null}} 						% The trivial snapshot
\newcommand{\clock}[1]{\mathtt{Clk}\left(#1\right)}			% clock (emp.snap.)
\newcommand{\proj}[1]{\mathtt{proj}_{#1}}
\newcommand{\probabilistic}[1]{\hsm{P}_{_{#1}}} 	% probabilistic snapshots over...
\newcommand{\empirical}[1]{\hsm{E}_{_{#1}}} 	% empirical snapshots over...
\newcommand{\snapdir}[1]{\mathtt{Dir}(#1)} 		% directed computational core of snapshot...
\newcommand{\loc}[1]{\mathtt{loc}[#1]} %location sensor
\newcommand{\wait}{\mathtt{wait}} %action "wait"
\newcommand{\fwd}{\mathtt{fwd}}	%action ``move one step forward''
\newcommand{\back}{\mathtt{bck}} %action ``move one step back''
\newcommand{\better}{\mathtt{better}}
\newcommand{\worse}{\mathtt{worse}}
\newcommand{\defstop}{\hfill $\square$}
\theoremstyle{plain}
\newtheorem{thm}{Theorem}[section]
\newtheorem{prop}[thm]{Proposition}
\newtheorem{lemma}[thm]{Lemma}
\newtheorem{cor}[thm]{Corollary}
\theoremstyle{definition}
\newtheorem{defn}[thm]{Definition}
\newtheorem{remark}[thm]{Remark}
\title{Universal Memory Architectures for Autonomous Machines}
\author{Dan~P.~Guralnik$^\dagger$,~%
Daniel~E.~Koditschek$^\ddagger$%
\thanks{$^{\dagger}$Post doctoral fellow, Electrical \& Systems Engineering, School of Engineering and Applied Sciences, University of Pennsylvania, , 200 S 33rd Str., Moore Bldg. \#203, Philadelphia, PA 19104, USA,~%
{\tt\small guraldan at seas.upenn.edu}}%
\thanks{$^{\ddagger}$Alfred Fitler Moore Professor, Electrical \& Systems Engineering, School of Engineering and Applied Sciences, University of Pennsylvania, 200 S 33rd Str., Moore Bldg. \#203, Philadelphia, PA 19104, USA,~%
{\tt\small kod at seas.upenn.edu}}%
}%
\begin{document}

\maketitle

\begin{abstract} We propose a self-organizing memory architecture for perceptual experience capable of supporting autonomous learning and goal-directed problem solving in the absence of any prior information about the agent's environment. The architecture is simple enough to ensure
(1) a quadratic bound (in the number of available sensors) on space requirements, and 
(2) a quadratic bound on the time-complexity of the update-execute cycle.  At the same time, it is sufficiently complex to provide the agent with an internal representation which is 
(3) minimal among all representations of its class which account for every sensory equivalence class subject to the agent's belief state; 
(4) capable, in principle, of recovering the homotopy type of the system's state space; 
(5) learnable with arbitrary precision through a random application of the available actions.

The provable properties of an effectively trained memory structure exploit a duality between {\it weak poc sets}~ ---~ a symbolic (discrete) representation of subset nesting relations~ ---~ and {\it non-positively curved cubical complexes}, whose rich convexity theory underlie the planning cycle of the proposed architecture.
\end{abstract}

\section{Introduction}

\subsection{Motivation} 
A major obstacle to autonomous systems synthesis is the absence of a capacious but efficient memory architecture. In humans, memory influences behaviour over a wide range of time scales, leading to the emergence of what seems to be a functional hierarchy of sub-systems \cite{Squire-memory_overview}: from non-declarative vs.  declarative through the split of declarative memory into semantic and episodic \cite{Tulving-episodic_memory}; and on to theories of attention and recall \cite{Baars-cognitive_theory_of_consciousness}. This variety of scales is mirrored in the collection of problems addressed by the synthetic sciences: from learning dependable actions/motion primitives \cite{Mugan_Kuipers-autonomous_learning_high_level_states,Thomas_Barto-motor_primitive_discovery}; through learning objects and their affordances \cite{Kraft_et_al-development_object_grasping_by_exploration,Ivaldi_et_al-object_learning_through_exploration} to demonstration-driven task execution \cite{Tellex-learning_with_grounding_graphs,Chernova_Thomaz-learning_from_human_teachers}; through exploring and
mapping an unknown environment \cite{Leonard_Durrant-Whyte-SLAM,Kuipers-history_of_SSH,Thrun-probabilistic_robotics,Milford_Wyeth_Prasser-RatSLAM} and motion planning \cite{Schwartz_Sharir-piano2,Koditschek_Rimon-artificial_potential_functions,Erdmann-IJRR'09-StrategySpace}; and on to general problem solving \cite{Newell_Simon-human_problem_solving} using artificial general intelligence (AGI) architectures \cite{Langley_Laird_Rogers-cognitive_architectures,Franklin_et_al-LIDA_overview,Nivel_Thorisson-AERA_overview}.
%This variety of scales is mirrored in the collection of problems addressed by the Engineering and Computer Sciences communities in their effort to facilitate autonomy in machines: from learning a useful library of dependable actions/motion primitives (\cite{Mugan_Kuipers-autonomous_learning_high_level_states,Thomas_Barto-motor_primitive_discovery}), learning to recognize objects and discovering their affordances (\cite{Kraft_et_al-development_object_grasping_by_exploration,Ivaldi_et_al-object_learning_through_exploration}) and learning the execution of a task from demonstration (e.g. \cite{Tellex-learning_with_grounding_graphs,Chernova_Thomaz-learning_from_human_teachers});  through exploring and mapping an unknown environment (SLAM) \cite{Leonard_Durrant-Whyte-SLAM,Kuipers-history_of_SSH,Thrun-probabilistic_robotics,Milford_Wyeth_Prasser-RatSLAM}; to planning motion towards a goal in that environment \cite{Schwartz_Sharir-piano2,Koditschek_Rimon-artificial_potential_functions,Erdmann-IJRR'09-StrategySpace}; and on to general problem solving (playing Chess, piloting an airplane) and to simulating the functional mechanisms ({\it subgoaling, chunking}) employed by the human thought process \cite{Newell_Simon-human_problem_solving} using artificial general intelligence (AGI) architectures (e.g. SOAR, ACT-R \cite{Langley_Laird_Rogers-cognitive_architectures}, and most recently LIDA \cite{Franklin_et_al-LIDA_overview} and AERA \cite{Nivel_Thorisson-AERA_overview}).

One idea stands out as common to all these approaches, beginning with the formal notion of a problem space introduced by Newell and Simon \cite{Newell_Simon-human_problem_solving,Newell-reasoning_about_problem_solving}: the purpose of a memory architecture is to learn the transition structure of the state space $\spc$ of the system comprised of the agent and its environment $\env$ while processing the history of observations into a format facilitating improved future control. 

It is often argued (e.g. \cite{Cobo_Isbell_Thomasz-object_focused_Qlearning,Stachowicz_Kruijff-episodic_like_memory,Milford_Wyeth-persistent_navigation_with_RatSLAM}) that memory architectures for general agents should enjoy a high degree of domain- and task-independence. In general, however, clear definitions of notions such as `domain' and `task' are not readily forthcoming across the vast breadth of literatures discussing memory, agents and autonomy. Notions of `universal learners' have been proposed \cite{Schmidhuber-Godel_machines_optimality} based on optimizing gain in estimators of predictive entropy, however there is evidence to suggest that the resulting level of generality may be insufficient for some tasks \cite{Martius_et_al-infotaxis_driven_self_organization}.

%To cope with the problem of ill-defined fundamental notions, we advance an architecture provably satisfying intuitive universality properties, including, most centrally: (1) the architecture encodes its interactions with the environment in the most generic, yet minimal, manner possible, while requiring no prior semantic information, (2) the architecture learns from binary sensory input, and is thus free to establish contextual links between sensations of varying modalities. A key conceptual contribution is that the architecture lends itself to formal interpretation as encoding the observation history of the agent in a model space, within which abstract problem solving by the agent may be interpreted as a form of reactive planning; the low-level computations performed by the architecture correspond to paths of nearest point projection in this space, giving rise to provable local reactive planning in the reachable set.
Absent broadly recognized formal foundations, we advance an architecture provably satisfying intuitive universality properties, including, most centrally: (1) interactions with the environment are encoded
in the most generic, yet minimal, manner possible, while requiring no prior semantic information; and (2) learning obtains from direct binary sensory input, automatically developing appropriate contextual links between sensations of arbitrary modality. A key outcome  is that the architecture encodes its observation history in a model space that supports the agent's problem solving as a form of reactive motion planning whereby atomic computations provably correspond to nearest point projection in the reachable set.

\subsection{Contribution} We consider a generic { \em discrete binary agent } (DBA): a machine sensing and interacting with its environment in discrete time, equipped with a finite collection $\sens$ of Boolean-valued sensors, some of which serve as triggers for actions/behaviors (switched on and off at will). 

Given an instance of a DBA interacting with an environment $\env$, it is natural to view the set $\Xi$ of sensory equivalence classes of the associated transition system $\spc$ as a subset of the power set $\{0,1\}^\sens$. It is generally accepted \cite{Donald_Jennings-planning_from_sens_equiv_classes,Tovar_et_al-Information_spaces_BITBOTS} that a memory architecture must be capable of supporting internal representations rich enough to account for the diversity \cite{Rivest_Schapire-diversity} of the transition system $\spc$: Exact  problem solving,  when construed as abstract motion planning, requires an internal representation capable, eventually, of accounting for all the classes in $\Xi$  and the transitions between them. Unfortunately, as expressed forcefully in \cite{Rivest_Schapire-diversity} and as we review at length below, the task of obtaining an exact description of $\Xi$  becomes intractable in the absence of strong simplifying assumptions about X, as the number of sensors grows.%to facilitate correct planning (or problem solving), the internal representation must be capable, eventually, of accounting for all the classes in $\Xi$ and the transitions between them. At the same time, Rivest and Schapire ({\it loc. cit.}) remind us that the task of obtaining an accurate description for $\Xi$ becomes intractable in the absence of strong simplifying assumptions about $\spc$, as the number of sensors grows.

%To circumvent this obstacle, we propose to replace the requirement for precise reconstruction of $\Xi$ in the agent's memory with the requirement that the memory architecture enable a compact, flexible and efficient encoding of a {\it model space} $\model$, not necessarily coinciding with $\Xi$, yet coming equipped with a {\it formal guarantee of $\model$ being the smallest possible object in its class accounting for all elements of $\Xi$ that are consistent with the current contents of the agent's memory}.
To circumvent this obstacle, rather than imposing any specific structure on $\spc$, we propose to relax the requirement for precise reconstruction by introducing an approximation whose discrepancy from $\Xi$ we characterize exactly  and  show to be the smallest possible in its (computationally effective) class of objects.

The new memory and control architecture we propose here consists of two layers:
\begin{itemize}
	\item A data structure $\snap{S}$ -- called a {\it snapshot} -- keeping track of the current state and summarizing observations in terms of collection of real-valued registers, of size quadratic in the number of sensors, summarizing the history of observations made by the agents.
	\item A reactive planner, built on a {\it weak poc set structure} $\ppoc$ (\cite{Allerton_2012,Roller-duality} and defn.~\ref{defn:poc set}) constituting a record of pairwise implications among the atomic sensations as observed by the agent; $\ppoc$ is computed from $\snap{S}$ in each control cycle.
\end{itemize}
A crucial property of our architecture is that $\ppoc$ and $\model$ are formally reconstructible from each other. The model space $\model$ takes the form of a CAT(0) cubical complex, or {\it cubing}\footnote{For a good introduction CAT(0) cubical complexes, see \cite{Wise-riches_to_raags}. For a tutorial on cell complexes see \cite{Hatcher-alg_top_textbook}, chapter 0 and appendix.}, whose $0$-skeleton is contained in $\{0,1\}^\sens$. As the snapshot $\snap{S}$ is updated by incoming observations, the space $\model$, as encoded by $\ppoc$, is transformed along with it. We can state our main contributions -- albeit, necessarily, informally at this point -- in terms of provable properties of the architecture and its model spaces:
\begin{itemize}
	\item[{\bf (i)}] {\bf Universality of Representation. } $\model$ is the minimal model guaranteed to represent all the sensory equivalence classes of {\it any} sensorium $\sens$ satisfying the record $\ppoc$ (see \ref{realization}). In particular, in the absence of information not already encoded in $\ppoc$, it is impossible to distinguish the $0$-skeleton of $\model$ from the set of sensory equivalence classes, $\Xi$.
	\item[{\bf(ii)}] {\bf Topological Approximation. } As a topological space, $\model$ is always contractible\footnote{The formal notion of being `hole-free'~ ---~ see  \cite{Hatcher-alg_top_textbook}, chapter 0.}. Provided a sufficiently rich sensorium, the sub-complex $\punct\subset\model$ of faces all of whose vertices lie in $\Xi$ inherits from $\model$ the topology\footnote{Up to homotopy equivalence~ ---~ see definition in \cite{Hatcher-alg_top_textbook}, chapter 0.} of the observed space $\spc$ (see appendix \ref{realization}-\ref{homotopy type result}).
	\item[{\bf (iii)}]{\bf Low-complexity, effective learning. } The proposed architecture requires quadratic space (in the number of sensors) for storage, and no more than quadratic time for updating. Furthermore, an agent picking actions at random learns an approximation of the resulting walk's limiting distribution on $\spc$ (see \ref{empirical:performance}).
	\item[{\bf (iv)}]{\bf Efficiency of Planning. } Planning the next action given a target sensation takes quadratic time in the number of sensors, while eliminating the need for searching in the model space. With sufficient parallel processing power, this bound may be reduced to a constant multiple of the height~ ---~ the maximum length of a chain of implications of $\ppoc$ (see \ref{subsection:planning}).
	%\item[{\bf (v)}] {\bf Formal Representation of the High/Low-Level Relationship. } Our planning cycle uses an implementation of poc-sets as highly simplified, idealized networks of neuron-like elements with lateral inhibition. At the same time, category-theoretic relations between poc-sets and their associated model spaces immediately give rise to such higher order capabilities traditionally required of AGI architectures as subgoaling, chunking and searching (section \ref{section:problem spaces}).
\end{itemize}

\bigskip
To the best of our knowledge, this combination of provable properties has not previously appeared in the literature.

\subsection{Overview and Related Literature} 
To establish the novelty of our contribution we now briefly review the copious literature bearing on these topics as arising from three distinct traditions: robotics; connectionist computation; and artificial general intelligence.  After presenting our technical ideas we will explore at the end of the paper in a more discursive form their relation to and implications for the broader field.
\subsubsection{Relation to Mapping and Navigation} Formulating
navigation and mapping problems in terms of
a point agent moving through a homotopically trivial ambient
space  while avoiding a collection of
geometrically defined obstacle regions representing forbidden states is fundamental to motion planning \cite{Schwartz_Sharir-piano2,Koditschek_Rimon-artificial_potential_functions} and mapping \cite{Thrun-probabilistic_robotics,Kuipers_Remolina-topological_maps}. The ubiquity  of obstacles in these settings introduces topological considerations whose primacy is well established in  the algorithmic literature \cite{Kuipers_Remolina-topological_maps,Tomatis_et_al-hybrid_SLAM,Kuipers-topological_SLAM,Dellaert-probabilistic_topological_mapping,Choset-bayesian_topological_SLAM,Curto_Itskov-cell_groups,Ghrist-homological_SLAM}, governing the complexity of not only motion planning \cite{Farber-topological_complexity} but even set membership \cite{Yao-Decision_tree_complexity}. 

Our strategy is to reduce the general problem of memory storage and its use for motion planning in the underlying transition structure of a problem space (as sensed by a DBA) to the geometric problem of motion planning in the agent's model space $\model$ (playing the role traditionally assigned to Euclidean space). Generalizing the Euclidean setting, $\model$ has a very strong convexity theory \cite{Roller-duality, Chepoi-median} enabling low-cost greedy navigation.

The topological point of view has been shown to be well warranted in the discrete setting as well. As was demonstrated by Pratt \cite{Pratt-modeling_concurrency_with_geometry}, oriented topological structures (cubical complexes, in fact) may be used to encode the causal relations among actions and states in symbolic transition systems. Approaches generalizing Pratt's have since been used to formulate very general models of reconfigurability and self-assembly \cite{Ghrist_Peterson-reconfiguration,Ghrist_Klavins_Lipsky-self_assembly}.

\subsubsection{Mechanisms for Learning and Planning} Snapshots use an evolving estimate of pairwise intersections of sensor footprints to form a record of implications among the atomic sensations of the DBA. The necessity of such a record for planning goes back (at least) to \cite{McCarthy_Hayes-Philosophical_probs_of_AI}, yet ideas about applying it as a way to encode context are fairly recent and specialized \cite{Mugan_Kuipers-autonomous_learning_high_level_states,Stachowicz_Kruijff-episodic_like_memory,Milford_Jacobson-brain_inspired_sensor_fusion_for_SLAM}. Our internal representation takes the additional step of applying this principle to {\it all} the sensations available to a DBA, including the control signals it uses to interact with its environment. 

The resulting learning and control mechanisms may be realized in a highly simplified and idealized, yet highly plastic, network of neuron-like cells simulating the structure of $\ppoc$ \cite{Allerton_2012}. This analogy with neural networks is not a coincidence: estimating {\it arbitrary} intersections from near-synchronous activation of sensors in a planar sensor field has been explored as a means for topological \cite{Curto_Itskov-cell_groups} as well as metric mapping by competitive attractor networks (RatSLAM \cite{Milford_Wyeth_Prasser-RatSLAM,Milford_Wyeth-persistent_navigation_with_RatSLAM,Milford_Jacobson-brain_inspired_sensor_fusion_for_SLAM}), as the study of the structure of stability properties vis-a-vis topology and plasticity in more general networks is just taking off \cite{Hahnloser_Seung_Slotine-permitted_and_forbidden_sets,Curto_Degratu_Itskov-flexible_memory_networks}. 

\subsubsection{Model Spaces} The necessity to maintain high-dimensional representations of the state space $\spc$ poses a major challenge for current approaches to learning \cite{Barto_Sutton-reinforcement_learning,Barto_Mahadevan-recent_advances_hierarchical_RL,Cobo_Isbell_Thomasz-object_focused_Qlearning} and general problem solving \cite{Helgason-attention_for_AI_systems,Steunebrink_et_al-resource_bounded_machines}. The method closest to ours in its formalism seems to be that of \cite{Rivest_Schapire-diversity}~ ---~ and even lends itself to learning by a connectionist network \cite{Mozer_Bachrach-SLUG}~ ---~ but still requires an exponentially large representation for planning purposes. By contrast, in our case, the ability to translate action planning in $\model$ into what is essentially a flow problem in a network constructed from the underlying sensorium obviates the need for maintaining $\model$ in memory, allowing us to evade the curse of dimensionality. Nevertheless, we are still guaranteed a model space that is sufficiently rich to account for {\it all} sensory equivalence classes perceivable by the DBA \cite{Donald_Jennings-planning_from_sens_equiv_classes}. 

The computational advantages of our approach come at a cost that is largely driven by topology, as expressed in {\bf (ii)}: $\model$ necessarily has trivial topology\footnote{Again, in the sense of $\model$ being contractible.} \cite{Sageev-thesis,Roller-duality,Bridson_Haefliger-textbook}, and our own result \cite{Allerton_2012} establishes formal conditions on $\sens$ under which the complex $\punct$ reproduces the "topological shape"\footnote{In the sense of {\it homotopy type}~ ---~ \cite{Hatcher-alg_top_textbook}, chapter 0.} of $\spc$ (as discussed above), which may not be topologically trivial. The basic algorithm driving planning in our agents, however, achieves its efficiency by disregarding this mismatch. The introduction of auxiliary intrinsic motivation mechanisms \cite{Barto_Mirolli_Baldassarre-novelty_or_surprise,Cox_Krichmar-neuromodulation_as_controller} as a means of steering the agent away from obstacle states in $\model$ and towards desirable behaviours (not necessarily states!) seems to be a possible way out of this predicament, as well as towards a solution of the problem of closing the control loop. At this early stage, as a feasibility study for the overall approach, we only consider very simple excitation mechanisms causing the agent to choose actions with the desire to maximize {\it immediate} excitation gain, to the extent that may be sensed by $\sens$ (and otherwise to choose random actions).

%\subsubsection{Poc sets, Problem Spaces and Categories} In their discussion of general problem solving \cite{Newell_Simon-human_problem_solving}, Newell and Simon emphasize the importance of the agent possessing an ability to formulate problems in terms of symbolic patterns, so that each problem solved may be matched against other problems still in need of a solution (or future problems). Formally, one needs to view problem spaces as objects of a category \footnote{For a quick introduction to the basic notions of Category Theory, see \cite{Kozlov-combi_alg_top}} where maps between objects correspond to symbolic substitution. The category of finite poc-sets~ ---~ and therefore also the category of finite cubings along with it, due to the complete duality between them \cite{Sageev-thesis,Roller-duality}~ ---~ supports an analogous structure allowing the formulation of a problem in the abstract, and hence, {\it in principle}, allows implementations of sub-goaling (by comparison with `abstract' problems) and chunking (to resolve impasses) in a way reminiscent to that used in SOAR \cite{SOAR_overview} and ACT \cite{Anderson-ACT_cognition}. While all planning in our architecture may be thought of as being done through sub-goaling (poc sets are, in fact, reminiscent of production systems), snapshots in their current formulation do not allow for symbolic mapping. Extending the architecture to allow this is an important subject for future research.

\subsection{Organization of the Paper}
Having already given proofs of the formal results underlying {\bf (i)} and {\bf (ii)} in our previous paper \cite{Allerton_2012}, we defer the technical discussion of poc sets to appendix \ref{appendix:prelim}. This appendix is intended as an introductory overview of the theory of weak poc sets and the geometry of their dual spaces -- our agents' model spaces~ ---~ as well as a repository of proofs of technical results we require but could not find elsewhere in the literature. 

Section \ref{section:snapshots} discusses {\bf (iii)}. We formally state the observation model for DBAs, describe snapshots and their learning mechanisms, and present our early numerical work illustrating the practical implications of the claims regarding learning. 

Section \ref{section:control} is dedicated to item {\bf (iv)} in the list of contributions. Actions are introduced to the observation model, and control algorithms are defined and validated.

%Section \ref{section:problem spaces} discusses some ideas on how the snapshot architecture might be extended so as to explicitly model the relationship between low-level (connectionist) cognition with high-level cognition, addressing item {\bf (v)}.

Finally, following an extended discussion of our results in relation to existing literature in section \ref{section:discussion} and the aforementioned appendix dealing with poc sets, a second appendix presents the proofs of technical results about snapshots.

\begin{table}[!t]\caption{Table of Mathematical Symbols}
\label{symbols table}
\begin{center}
	\renewcommand{\arraystretch}{1.6}
	\begin{tabular}[c]{c | p{.55\columnwidth} | c}
	\hline
		&	Topic/Notation	&	Ref.\\
	\hline\hline
		&	{\bf DBA Model (general)}	&	\\
	\hline\hline
	$\env$ & Environment (with points $p,q,\ldots$) & Sec.\ref{env and spc}\\
	$\spc$ & State space of the experiment (with points $x,y,\ldots$) & Sec.\ref{env and spc}\\
	$\pos$ & The position map $\spc\to\env$ & Sec.\ref{env and spc}\\
	\hline
	$\mytime$ & Time, the set of integers & Sec.\ref{transitions}\\
	$\at{t}$	& Reads as: "at time $t$" & Eqn.\eqref{traj} \\
	\hline
			&	{\bf DBA model (sensing)}	& \\
	\hline
	$\sens$ & Sensorium (elements are $a,b,c,\ldots$), with involution $a\mapsto a^\ast$ & Eqn.\eqref{eqn:involution}\\
	$\rho$ & Realization map of the sensorium $\sens$ & Eqn.\eqref{eqn:realization1} \\
	$\ev{a}{x}$	&	Evaluation, e.g. of $a\in\sens$ on $x\in\spc$ & Eqns.(\ref{eqn:evaluations1}-\ref{eqn:evaluations2})\\
	\hline
		&	{\bf DBA computational model (at time $t$)} & \\
	\hline\hline
	$\snap{S}\at{t}$ & Agent's snapshot & Sec.\ref{planning problem statement}\\
	$\pog\at{t}$ & The derived poc graph, $\snapdir{\snap{S}\at{t}}$ & Sec.\ref{planning problem statement}\\
	$\ppoc\at{t}$ & Derived (weak) poc set structure on $\sens$, $\poc{\snap{S}\at{t}}$ & Sec.\ref{poc sets for the first time}\\
	$\model\at{t}$ & The model space $\cube{\ppoc\at{t}}$ & Sec.\ref{intro to model spaces}\\
	$\punct\at{t}$ & The punctured model space $\punc{\ppoc\at{t}}$ & Def.\ref{defn:punctured model spc}\\
	$\observe{t}$ & Raw observation & Sec.\ref{current observation}\\
	$\current{t}$ & Recorded observation & Sec.\ref{current state}\\
	$\decide{t}$ & Decision (action) following the observation & \\
	\hline
		&	{\bf Contents/parameters of a snapshot $\snap{S}$} & \\
	\hline\hline
	$\pocg$	&	The complete graph on $\sens$ with all $aa^\ast$ edges removed & Def.\ref{defn:pocg} \\
	$\state{}\snap{S}$ & State of the snapshot & Def.\ref{defn:snapshot}(a)\\
	$\witness{ab}$	&	Weight on the edge $ab$ & Def.\ref{defn:snapshot}(b)\\
	$\threshold{ab}$	&	Learning threshold for the pair $a,b\in\sens$ & Def.\ref{defn:snapshot}(c)\\
	$\ori{ab}$	&	Orientation cocycle of $\snap{S}$ & Prop.\ref{prop:acyclicity lemma}\\
	$\tri{ab}$ 	&	Dissimilarity measure of $\snap{S}$ & App.\ref{proofs:adding equivalences}\\
%	q	&	Global discount parameter (for discounted snapshots only)\\

	\hline
		&	{\bf Objects derived from a snapshot $\snap{S}$} & \\
	\hline\hline
	$\snapdir{\snap{S}}$	&	Derived poc graph & Prop.\ref{poc sets from snapshots} \\
	$\poc{\snap{S}}$	& 	Derived weak poc set structure & Def.\ref{defn:derived poc set}\\
	\hline
		&	{\bf Weak poc sets and their duals} & \\
	\hline\hline
	$P,Q,\ldots$	&	Poc sets (with and without indices) & Def.\ref{defn:poc set} \\
%	$a,b,c,\ldots$	&	Elements in poc sets & \\
%	$f,g,h,\ldots$	&	Maps between poc sets & \\
%	$A,B,\ldots,\,R,S,T,\ldots$	&	Subsets of poc sets & \\
	$P^\circ$		&	The set dual of $P$, the $0$-skeleton of $\cube{P}$ & Def.\ref{defn:dual}(b) \\
	$\dual{P}$		&	Dual graph of $P$, the $1$-skeleton of $\cube{P}$ & Def.\ref{defn:dual}(c) \\
	$\cube{P}$		&	Dual cubing of the poc set $P$ & Def.\ref{defn:dual}(a) \\
	$\punc{P}$		&	The punctured dual $\cube{P,\rho}$ with respect to a realization $\rho$ & Def.\ref{defn:punctured dual}\\
	$f^\circ$	&	The dual map $f^\circ:Q^\circ\to P^\circ$ of a poc morphism $f:P\to Q$ & Defs.\ref{defn:poc morphism},\ref{defn:dual map} \\ 
	\hline
	\end{tabular}
\end{center}
\end{table}

\section{Snapshots: From Observation Sequences to a Memory Structure}\label{section:snapshots}
We begin with a formal statement of what we mean by a DBA and its observation model. We then proceed to construct snapshots, their updating mechanisms and the derived weak poc set structures, and conclude the section with results on the learning capabilities of snapshots. Table \ref{symbols table} reviews notation that will persist throughout the paper.

\subsection{Observation Model for DBAs}\label{subsection:DBA observation model} 
\subsubsection{Environment and State}\label{env and spc} We place an agent in an environment $\env$. The state space of the system will be denoted by $\spc$, where we assume there is a map $\pos:\spc\to\env$ producing the location $\pos(x)$ of the agent in $\env$, given the state $x\in\spc$ of the system as a whole. As it turns out, no further mathematical structure on $\spc$ is required for the results that follow, hence, with a mind toward inviting the broadest range of applications, we impose none, much in the spirit of McCarthy and HayesÕ discussion of situation calculus \cite{McCarthy_Hayes-Philosophical_probs_of_AI}. 

\subsubsection{Time and Transitions}\label{transitions} We model time $\mytime$ as the set of integers (the subjective time of the agent), with $t=0$ corresponding to the initial time. The basic objects of study are then {\it trajectories}, or maps of the form 
\begin{equation}\label{traj}
	\varphi:\mytime\to\spc\,,\qquad
	\varphi=(\varphi\at{t})_{t\in\mytime}
\end{equation}
We define abstract transitions in $\spc$ as follows:
\begin{defn}[$n$-transitions] An element of the $(n+1)$-fold Cartesian power $\spc^{n+1}$ will be referred to as an $n$-transition. For any trajectory $\varphi:\mytime\to\spc$ and $n\geq 0$ we define the map
\begin{equation}
	\dd^n\varphi:\left\{\begin{array}{ccc}
		\mytime	&\to&	\spc^{n+1}\\
		t	&\mapsto&	\varphi\at{t-n}\times\cdots\times\varphi\at{t}
	\end{array}\right.
\end{equation}
We refer to $0$-transitions as {\it states}, and to $1$-transitions simply as {\it transitions}.\defstop
\end{defn}

Any setting where $\env$, $\spc$ and the transition structure of the system are specified (though, possibly, in an implicit fashion), implies constraints on the set of achievable trajectories. We will refer to such settings as {\it experiments}, within the framework of which each allowed trajectory will be referred to as a {\it run of the experiment}, while observations produced by sensors during a run (below) will be called {\it experiences}. 

\subsubsection{Discrete Binary Agents}\label{DBAs} A {\it discrete binary agent} (DBA) is endowed with a collection of binary sensors indexed by a finite set $\sens$. We will assume that each $a\in\sens$ is assigned an {\it order} $n_a\geq 1$, and a realization $\rho(a)\subseteq\spc^{n_a+1}$. We then say that $a$ is a {\it $n_a$-sensor}, or a {\it sensor of degree $n_a$}. For example, a $0$-sensor -- or {\it state sensor} -- responds to the system entering a certain subset of $\spc$ (a "macro-state"), while a $1$-sensor responds to the system experiencing a transition of a particular kind.

Evaluation of sensors is best viewed in the context of trajectories: a $n$-sensor $a\in\sens$ is applied to a trajectory $\varphi$ and assigned a value at time $t\in\mytime$ according to the rule
\begin{equation}\label{eqn:evaluation of state sensors}
	\ev{a}{\varphi}\at{t}=1\DEFF\dd^n\varphi\at{t}\in \rho(a)
\end{equation}
Here $\ev{a}{\varphi}\at{t}$ denotes the measurement provided by the sensor $a$ at time $t$ given the trajectory $\varphi$. To avoid a profusion of parentheses and subscripts we will generally use bracket notation to denote the evaluation of Boolean- and scalar-valued functions:
\begin{eqnarray}
	\ev{a}{x}&:=&\indicator{\rho(a)}(x)\text{ whenever } a\in\sens\label{eqn:evaluations1}\\
	\ev{g}{s}&:=&g(s)\text{ whenever }g:S\to[0,1]\label{eqn:evaluations2}
\end{eqnarray}
and so forth. The symbol $\indicator{A}$ will always denote the indicator function of a set $A$ with respect to the appropriate super-set.

\medskip
We will assume that $\sens$ comes endowed with a map $a\mapsto a^\ast$ satisfying the following for all $a\in\sens$:
\begin{equation}\label{eqn:involution}
	a^{\ast\ast}=a\,,\quad a^\ast\neq a\,,
\end{equation}
as well as
\begin{equation}\label{eqn:realization1}
	\quad \rho(a^\ast)=\rho(a)^c:=\spc^{n_a+1}\minus\rho(a)
\end{equation}
We also introduce the virtual sensors $\minP,\maxP\in P$ evaluating to
\begin{equation}\label{trivial sensors}
	\ev{\minP}{\varphi}\at{t}\equiv 0\,,\quad
	\ev{\maxP}{\varphi}\at{t}\equiv 1\,
\end{equation}
on any trajectory $\varphi$ and at any time $t\in\mytime$. For subsets $A\subseteq\sens$ we will always use the notation $A^\ast$ to denote the set of all $a^\ast$, $a$ ranging over $A$.

The database structure we will be using is designed to maintain an approximate record of the relations among sensors in $\sens$ believed by the agent to hold true throughout time. This record at time $t\in\mytime$ is encoded in a weak poc set structure $\ppoc\at{t}$ over $\sens$ (definition \ref{defn:poc set}).

For two $n$-sensors $a,b\in\sens$ this requirement translates into $a\leq b$ in $\ppoc\at{t}$ being treated (for planning purposes) as the inclusion $\rho(a)\subseteq \rho(b)$ in the space $\spc^{n+1}$. Note how the equivalent containment $\rho(b^\ast)\subseteq\rho(a^\ast)$ is encoded by the contra-positive implication $b^\ast\leq a^\ast$, which, by the definition of a weak poc set, holds if and only if $a\leq b$ does.

When $a,b$ have different orders we are forced to replace this requirement by a weaker one: at any time $t$, our agents will interpret $a\leq b$ in $\ppoc\at{t}$ as 
\begin{equation}\label{eqn:implication via trajectory}
	\ev{a}{\varphi}\at{t'}\leq\ev{b}{\varphi}\at{t'}
\end{equation}
holding for all $t'\in\mytime$. In other words, our agents assume that relations among sensors do not change over time\footnote{This is not to say that our agents are not allowed to change their minds regarding which relations hold true and which do not: the purpose of keeping a dynamic record of relations is to eventually uncover the `correct' relations.}.

For example, if $a,b\in\sens$ where $a$ is a state sensor and $b$ is a transition sensor, consider the statements:
%\begin{equation*}
%	\begin{array}{lrcl}
%		(\dagger)	&	\ev{a}{\varphi}\at{t}\leq\ev{b}{\varphi}\at{t}\\[.5em]
%		(\ddagger)	&	\ev{b}{\varphi}\at{t}\leq\ev{a}{\varphi}\at{t}
%	\end{array}
%\end{equation*}
\begin{equation*}
	(\dagger)\;\ev{a}{\varphi}\at{t}\leq\ev{b}{\varphi}\at{t}\,,\quad
	(\ddagger)\;\ev{b}{\varphi}\at{t}\leq\ev{a}{\varphi}\at{t}
\end{equation*}
treated as identities over both $\varphi$ and $t$. We see that $(\ddagger)$ states all transitions of type $b$ must terminate in a state of type $a$, while $(\dagger)$ means that \emph{only} transitions of type $b$ could produce state $a$. It is clear that both kinds of statement are essential for planning.

\subsection{The Model Spaces}\label{subsection:intro to model spaces and coherence} 
\subsubsection{A Record of Implications}\label{poc sets for the first time} Informally, a ``record of implications in $\sens$'' is a partial ordering on $\sens$ reflecting the standard interactions between Boolean complementation and Boolean implication. Formally, our DBA will maintain, at any time $t\geq 0$, a {\it weak poc set structure} $\ppoc\at{t}$ on $\sens$ consisting of a partial order relation $\leq$ satisfying, for all $a,b\in\sens$:
\begin{enumerate}
	\item $\minP\leq a$;
	\item $a\leq b\THEN b^\ast\leq a^\ast$.
\end{enumerate}
Note that $a^\ast\neq a$ by the construction of $\sens$ and compare with the definition in appendix \ref{defn:poc set}.

\subsubsection{Observations as vertices of a cube}\label{current observation} From the agent's viewpoint, the current state of the experiment at time $t$ is completely characterized by the measurements $\ev{a}{\varphi}\at{t}$, where $\varphi$ is the agent's trajectory. Equivalently, the state may be encoded in a subset $\observe{t}\subset\sens$ satisfying $\card{\observe{t}\cap\{a,a^\ast\}}=1$ for all $a\in\sens$. Such subsets of $\sens$ are called {\it complete $\ast$-selections}. An incomplete measurement of the state would then correspond to a subset $O\subset\sens$ satisfying $\card{O\cap\{a,a^\ast\}}\leq 1$ for all $a\in\sens$, called an (incomplete) $\ast$-selection -- see definition in appendix\ref{defn:selection}, along with remarks on the notation to follow.

Thus, one thinks of the collection $S(\sens)^0$ of all complete $\ast$-selections on $\sens$ as enumerating the possible sensory equivalence classes in the sensed space. However, some of the elements in this collection are redundant given the record $\ppoc\at{t}$: an implication $a\leq b$ means that no $O\in S(\sens)^0$ containing $\{a,b^\ast\}$ is expected by the agent to be witnessed by any observation (see fig.\ref{fig:duality for a simple relation}). Formally, a set $O\subset\sens$ is {\it coherent} (definition \ref{defn:coherence}), if no pair of elements $a,b\in O$ satisfies $a\leq b^\ast$.

\subsubsection{The model spaces}\label{intro to model spaces} The model space $\model\at{t}$ corresponding to the record $\ppoc\at{t}$ takes the form of a cubical complex~ ---~ a topological space constructed from a collection of vertices (the $0$-skeleton), a set of edges (the $1$-skeleton), and successively higher dimensional connecting cells in the form of cubes \cite{Wise-riches_to_raags}. We choose the vertex set of $\model\at{t}$ to coincide with the set of coherent $\ast$-selections in $S(\sens)^0$. Edges are inserted to join any pair of vertices $A,B$ satisfying $|A\minus B|=1$ (this condition turns out to be symmetric). The hop-distance on the resulting graph may be seen as a variant of the crude, `information motivated', Hamming distance on $\{0,1\}^\sens$. The $1$-dimensional skeleton of $\model\at{t}$ is further enriched with higher dimensional cubes to yield the space $\cube{\ppoc\at{t}}$, as described in appendix \ref{appendix:prelim} (definition \ref{defn:dual}) for the interested reader. While a fairly detailed knowledge of the geometry and topology of spaces obtained in this way is essential for following our formal arguments regarding the modeling capabilities of this class, much of it is unnecessary for this section's account of how the agent obtains its representation of $\model\at{t}$, the record $\ppoc\at{t}$.

\subsubsection{Maintaining a record of the current state}\label{current state} Returning to the problem of representing the current state, observe that $\ppoc\at{t}$ is expected to change as time progresses, possibly giving rise to observations $\observe{t}$ that are incoherent with respect to $\ppoc\at{t}$, and therefore represent points `outside' the model space. While the raw observation $\observe{t}$ {\it must} be applied to the agent's data structure in hopes of improving $\ppoc\at{t}$, the agent must resolve the contradiction within the framework of its current model, replacing the incoherent complete observation $\observe{t}$ in its role as the record of the current state kept by the agent with a coherent but incomplete observation \eqref{eqn:coherent projection}, $\current{t}:=\coh{\observe{t}}$, satisfying certain naturality requirements~ ---~ see appendix \ref{coherent projection} for the complete technical discussion.

This means the agent resolves the contradiction at the price of introducing ambiguity into its record of the current state: instead of having a single vertex of $\model\at{t}$ representing the current state (``complete knowledge''), any vertex containing the set $\current{t}$ may turn out to be the correct current state.

The complexity of coherent projection (lemma \ref{lemma:propagation gives coherent projection}) and its role in the agent's reasoning processes, its interplay with the convexity theory of the model space $\model\at{t}$ and its interpretation as the basis for viewing our architecture as a connectionist model (albeit a very limited one) of cognition will all be discussed in section \ref{subsection:planning}.

\subsection{Snapshots}\label{subsection:snapshot preliminaries}
In \cite{Allerton_2012} we have introduced the rather loose notion of a {\it snapshot}, aiming to outline a class of database structures for dynamically maintaining weak poc-set structures from a sequence of observations made by an agent along a trajectory $\varphi$ through $\spc$. A rigorous treatment of this tool requires some careful definitions.
\begin{defn}\label{defn:pocg} Denote by $\pocg$ the graph obtained from the complete graph over the vertex set $\sens$ by removing all edges of the form $aa^\ast$, $a\in\sens$. Edges of $\pocg$ will be referred to as {\it proper pairs} in $\sens$. We will abuse notation and write $ab\in\pocg$ for the edge $\{a,b\}$ of $\pocg$.\defstop
\end{defn}
The graph $\pocg$ is the scaffolding for snapshots:
\begin{defn}[Snapshot]\label{defn:snapshot} A snapshot $\snap{S}$ over $\sens$ consists of the following:
\begin{itemize}
	\item[(a)] {\bf State. } Each vertex $a\in\sens$ of $\pocg$ is assigned a binary state $\state{a}\snap{S}\in\{0,1\}$. The set 
	\begin{equation}
		\state{}\snap{S}=\set{a\in\sens}{\state{a}=1}
	\end{equation}
	is called the \emph{state vector of the snapshot $\snap{S}$} and is required to be a $\ast$-selection on $\sens$ (definition \ref{defn:selection}).
	\item[(b)] {\bf Edge weights. } Each edge $ab\in\pocg$ is assigned a non-negative real number denoted $\witness{ab}=\witness{ab}(\snap{S})$.
	\item[(c)] {\bf Learning Thresholds.} Each edge $ab\in\pocg$ carries a non-negative real number $\threshold{ab}=\threshold{ab}(\snap{S})$ satisfying $$\threshold{ab}=\threshold{a^\ast b}=\threshold{ab^\ast}=\threshold{a^\ast b^\ast}\leq\tfrac{1}{4}\,.$$
\end{itemize}

For every $ab\in\pocg$, the restriction of $\snap{S}$ to the subgraph induced by the vertices $a,a^\ast,b$ and $b^\ast$ will be denoted by $\snap{S}\res{ab}$ and referred to as a {\it square} in $\snap{S}$.\defstop
\end{defn}

The original motivation of \cite{Allerton_2012} for the notion of a snapshot is twofold:
\begin{enumerate}
	\item {\bf Maintaining a consistent representation of the current state. } For this purpose we will generally assign the coherent projection of the current state measurement to be stored in $\state{}\snap{S}$. 
	\item {\bf Learning implications in the sensorium. } To learn an estimate of the implication order on $\sens$ inherited from its realization in $\spc$ it should suffice to maintain a system of weights $\witness{ab}^t$ on $\snap{S}\at{t}$ quantifying the relevance (e.g. frequency) of the event $a\wedge b$, allowing one to partially orient the snapshot according to the rule of thumb illustrated in figure \ref{fig:relational square}. 
\end{enumerate}

\begin{figure}[t]
	\begin{center}
%		\def\svgwidth{.5\columnwidth}
%		\includesvg{relational_square_paper}
		\includegraphics[width=.7\columnwidth]{./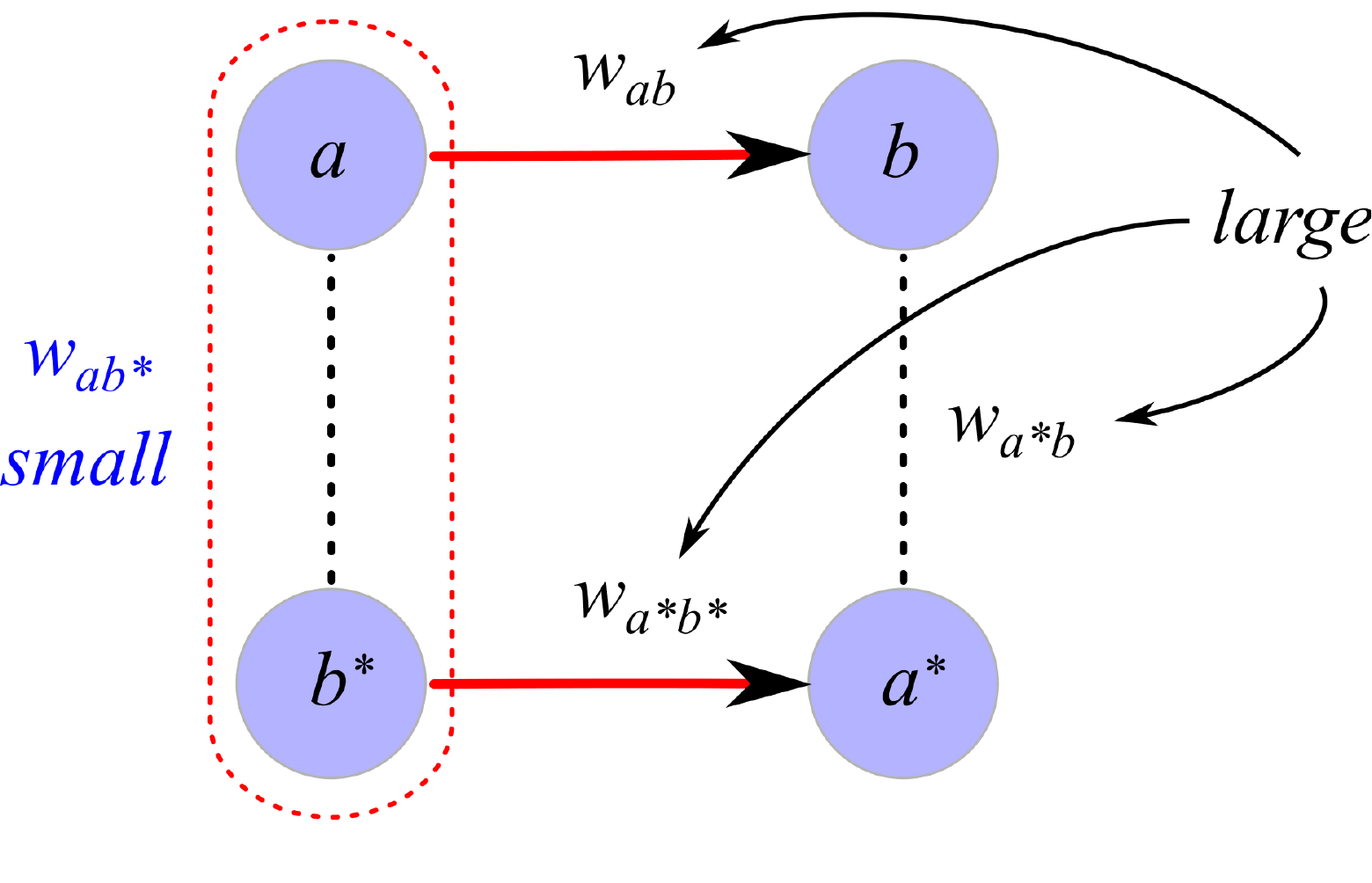}
		\caption{determining edge orientations in a snapshot by restricting attention to $\snap{S}\res{ab}$. \label{fig:relational square}}
	\end{center}
\end{figure}

The graphical representation derived from a snapshot in this manner does not automatically define a weak poc set, but is nearly there:
\begin{defn}[poc graph] A \emph{poc graph} $\pog$ over $\sens$ is a subgraph of $\pocg$ endowed with an orientation which satisfies, for every $ab\in\pocg$:
\begin{itemize}
	\item[-] If $ab\in\pog$ then $b^\ast a^\ast\in\pog$;
	\item[-] If $ab\in\pog$ then $a^\ast b,ba^\ast,b^\ast a,ab^\ast\notin\pog$.
\end{itemize}
By abuse of notation, we use the symbol $ab$ to mean the directed edge emanating from $a$ and pointing to $b$ (if it exists in $\pog$).\defstop
\end{defn}
In order for a poc graph to represent a weak poc set structure on $\sens$ one needs:
\begin{lemma}[derived poc set]\label{lemma:derived poc set} The transitive closure of the orientation relation on a poc graph $\pog$ over $\sens$ is a weak poc set structure on $\sens$ if and only if $\pog$ has no directed cycles.
\end{lemma}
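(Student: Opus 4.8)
The plan is to unwind the definition of a weak poc set structure into its separate requirements and test each one against acyclicity. Writing $\leq$ for the reflexive--transitive closure of the orientation relation (so that $a\leq b$ means $a=b$ or there is a directed path $a\to\cdots\to b$ in $\pog$), transitivity and reflexivity hold by construction, so the whole content of the lemma will sit in three places: \emph{antisymmetry} of $\leq$, the involution axiom $a\leq b\THEN b^\ast\leq a^\ast$, and the minimum-element axiom $\minP\leq a$. I would organize the proof as: a quick contrapositive for the ``only if'' direction, and an axiom-by-axiom verification for the ``if'' direction, spending acyclicity precisely on antisymmetry.

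For necessity I would argue by contraposition. Suppose $\pog$ carries a directed cycle. Since $\pog$ is an orientation of a simple graph with no $aa^\ast$ edges, it cannot place both $ab$ and $ba$ on one edge and has no self-loops, so any directed cycle has length at least three and visits at least two distinct vertices $a\neq b$. Traversing the cycle one way gives a directed path $a\to\cdots\to b$ and the other way a directed path $b\to\cdots\to a$, whence $a\leq b$ and $b\leq a$ in the transitive closure. Any weak poc set structure is in particular a partial order, hence antisymmetric, forcing $a=b$ --- a contradiction. Thus if $\leq$ is a weak poc set structure, $\pog$ must be acyclic.

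For sufficiency I would assume $\pog$ acyclic and check the axioms in turn. Antisymmetry is exactly where acyclicity is consumed: if $a\leq b$ and $b\leq a$ with $a\neq b$, concatenating the two witnessing directed paths yields a closed directed walk through two distinct vertices, which contains a directed cycle --- impossible. So $\leq$ is a genuine partial order. The involution axiom is the one genuinely poc-theoretic step: given a witnessing path $a=c_0\to c_1\to\cdots\to c_m=b$, the first poc graph axiom ($c_ic_{i+1}\in\pog\THEN c_{i+1}^\ast c_i^\ast\in\pog$) reverses each edge through the involution and assembles the dual path $b^\ast=c_m^\ast\to\cdots\to c_0^\ast=a^\ast$, giving $b^\ast\leq a^\ast$; the degenerate case $a=b$ is handled by reflexivity together with $a^{\ast\ast}=a$. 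Finally $\minP\leq a$ holds for every $a$ by the standing convention placing the virtual sensor $\minP$ below all of $\sens$, i.e.\ the edges out of $\minP$ are present and oriented outward in $\pog$.

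I expect the only real subtlety to be bookkeeping rather than mathematics. The delicate points are: confirming a directed cycle genuinely involves two \emph{distinct} vertices (so that antisymmetry is actually violated), which rests on the absence of $aa^\ast$ edges and on $\pog$ being an orientation; and confirming the involution step is well posed, which requires noting that an edge $\{a,b\}$ and its dual $\{a^\ast,b^\ast\}$ are always distinct edges of $\pocg$ --- they could coincide only if $b=a^\ast$, but $aa^\ast\notin\pocg$. The second poc graph axiom is not needed for this argument; it serves elsewhere to keep the orientation internally consistent as a record of implications, and is already subsumed in the hypothesis that $\pog$ is a poc graph.
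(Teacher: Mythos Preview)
Your argument is correct and is precisely what the paper's one-line proof points to: the generators-and-relations construction of example~\ref{example:generators and relations} closes the poc-graph edges under reflexivity, transitivity, the involution rule, and the rule $\minP\leq a$, then passes to the quotient by the induced equivalence; that quotient is trivial exactly when the closure is already antisymmetric, i.e., exactly when $\pog$ is acyclic --- you have simply unpacked this into a direct axiom-by-axiom check. The only imprecision is your justification of $\minP\leq a$: nothing in the definition of a poc graph forces the edges out of $\minP$ to be present in $\pog$; rather, these relations are \emph{imposed} as part of the closure (as the generators-and-relations construction does explicitly), which is harmless here since adding them introduces no new cycles in the cases of interest.
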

\begin{proof} This follows directly from the discussion in example \ref{example:generators and relations}.
\end{proof}

The rest of this section mainly deals with characterizing a large class of snapshots encoding acyclic directed poc graphs and with means of evolving snapshot representations of $\spc$ from trajectories. Given a trajectory $\varphi$ of our agent through $\spc$, the collection of {\it coincidence indicators}
\begin{equation}\label{eqn:coincidence indicators}
	c_{ab}^t:=\ev{a}{\varphi}\at{t}\cdot\ev{b}{\varphi}\at{t}
\end{equation}
may be used to evolve a sequence of snapshots $\snap{S}\at{t}$ representing, at any time $t>0$, the cumulative influence of the agent's observations on its perception of implications in the sensorium.

\subsection{Probabilistic Snapshots and Acyclicity}\label{subsection:probabilistic snapshots} The following set-theoretic identities among the coincidence indicators are easily verified for all $a,b,c\in\sens$:
\begin{equation}\label{indicator identities}
	\begin{array}{l}
	c_{aa^\ast}^t=0\\
	c_{ab}^t=c_{ba}^t\\
	c_{aa}^t=c_{ab}^t+c_{ab^\ast}^t	\\
	c_{ab}^t+c_{a^\ast b}^t+c_{a^\ast b^\ast}^t+c_{ab^\ast}^t=1\\
	c_{ab^\ast}^t+c_{bc^\ast}^t+c_{ca^\ast}^t=c_{a^\ast b}^t+c_{b^\ast c}^t+c_{c^\ast a}^t
	\end{array}
\end{equation}
These identities motivate considering snapshots with weights obeying analogous constraints:
\begin{defn}[Probabilistic Snapshot]\label{defn:probabilistic snapshot} We say that a snapshot $\snap{S}$ is {\it probabilistic}, if $\state{}\snap{S}$ is a coherent $\ast$-selection and the edge weights satisfy the following:
\begin{itemize}
	\item{\bf Consistency constraint. } if $ab,ac\in\pocg$ then:
	\begin{equation}\label{eqn:consistency identity}
		\witness{ab}+\witness{ab^\ast}=\witness{ac}+\witness{ac^\ast}
	\end{equation}
	\item{\bf Normalization constraint. } for any $ab\in\pocg$:
	\begin{equation}\label{eqn:normalization identity}
		\witness{ab}+\witness{a^\ast b}+\witness{a^\ast b^\ast}+\witness{ab^\ast}=1
	\end{equation}
	\item{\bf Orientation constraint. } if $ab,\,bc,\,ac\in\pocg$ then:
	\begin{equation}\label{eqn:cocycle identity}
		\witness{a^\ast b}+\witness{b^\ast c}+\witness{c^\ast a}=				
		\witness{ab^\ast}+\witness{bc^\ast}+\witness{ca^\ast}
	\end{equation}	
\end{itemize}
We denote the set of all probabilistic snapshots over $\sens$ by $\probabilistic{\sens}$, or simply $\probabilistic{}$ when there is no danger of confusion.\defstop
\end{defn}
A fundamental observation regarding probabilistic snapshots is the following
\begin{prop}[Acyclicity Lemma]\label{prop:acyclicity lemma} Suppose $\snap{S}$ is a probabilistic snapshot over $\sens$ and $\pog$ is a poc graph satisfying the {\emph orientation cocycle condition}:
\begin{equation}\label{eqn:orientation cocycle}
	ab\in\pog\THEN\ori{ab}:=\witness{a^\ast b}-\witness{ab^\ast}>0
\end{equation}
Then $\pog$ contains no directed cycles.
\end{prop}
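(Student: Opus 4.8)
The plan is to exhibit a real-valued \emph{potential} on the sensors whose differences reproduce the orientation cocycle $\ori{\wild}$ exactly. Once $\ori{ab}=\psi(b)-\psi(a)$ for some $\psi:\sens\to\RR$, the quantity $\ori{\wild}$ sums to zero around any closed walk; since the hypothesis forces $\ori{ab}>0$ along every directed edge of $\pog$, no directed cycle can survive.

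First I would extract a well-defined marginal from the consistency constraint. For a proper pair $ab$ the quantity $\witness{ab}+\witness{ab^\ast}$ is, by \eqref{eqn:consistency identity}, independent of the second coordinate, so I may define $\psi:\sens\to\RR$ by $\psi(a):=\witness{ab}+\witness{ab^\ast}$ for any $b$ with $ab\in\pocg$. In the probabilistic reading where $\witness{ab}$ models the frequency of the joint event $a\wedge b$, this $\psi(a)$ is simply the marginal frequency of $a$; but the identity I need is purely algebraic and I would verify it without appeal to that interpretation.

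Next I would compute $\ori{ab}$ in terms of $\psi$. Using that the edge weights are symmetric in the unordered edge, $\witness{a^\ast b}=\witness{ba^\ast}$ and $\witness{ab^\ast}=\witness{b^\ast a}$; applying \eqref{eqn:consistency identity} once at $b$ and once at $a$ yields $\witness{a^\ast b}=\psi(b)-\witness{ab}$ and $\witness{ab^\ast}=\psi(a)-\witness{ab}$. Subtracting, the common term $\witness{ab}$ cancels and I obtain
\begin{equation}
	\ori{ab}=\witness{a^\ast b}-\witness{ab^\ast}=\psi(b)-\psi(a).
\end{equation}
I note that the normalization constraint \eqref{eqn:normalization identity} is exactly what guarantees $\psi(a)+\psi(a^\ast)=1$, and that the orientation cocycle constraint \eqref{eqn:cocycle identity} becomes an automatic consequence of this telescoping form; neither is logically required beyond consistency for the argument at hand.

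Finally I would close by contradiction. Suppose $\pog$ contained a directed cycle $a_1\to a_2\to\cdots\to a_k\to a_1$. Each step is an edge of $\pog$, so $\ori{a_i a_{i+1}}>0$ and hence the total $\sum_{i=1}^{k}\ori{a_i a_{i+1}}$ is strictly positive; but substituting $\ori{a_i a_{i+1}}=\psi(a_{i+1})-\psi(a_i)$ (indices read modulo $k$) makes the sum telescope to $0$, a contradiction. Therefore $\pog$ has no directed cycle. The argument is short, and its only substantive step is the realization that consistency upgrades $\ori{\wild}$ from a merely antisymmetric edge function into an exact coboundary $\psi(b)-\psi(a)$; I expect the sole point needing care to be the bookkeeping of the weight symmetry $\witness{xy}=\witness{yx}$ and of the admissibility conditions ($b\neq a,a^\ast$, and so on) when invoking \eqref{eqn:consistency identity}, rather than any genuine difficulty.
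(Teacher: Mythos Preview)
Your proof is correct. Both you and the paper conclude by summing $\ori{\wild}$ around a putative directed cycle and obtaining the contradiction $0>0$, but the route to ``$\ori{\wild}$ telescopes'' is different. The paper extends $\witness{\wild}$ to the diagonal by $\witness{aa}:=\witness{ab}+\witness{ab^\ast}$ (your $\psi(a)$ in disguise), then invokes the orientation constraint \eqref{eqn:cocycle identity} to obtain the additivity law $\ori{ab}+\ori{bc}=\ori{ac}$, carefully verifying the degenerate cases $c\in\{a,a^\ast\}$ before summing along a path. You instead derive the sharper identity $\ori{ab}=\psi(b)-\psi(a)$ directly from consistency \eqref{eqn:consistency identity} and symmetry of $\witness{\wild}$, which makes telescoping immediate and shows---as you note---that the orientation constraint is not needed for this lemma at all: it is already a consequence of consistency. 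So your argument is genuinely more economical in its hypotheses and avoids the case analysis the paper performs; the paper's version, on the other hand, foregrounds the cocycle interpretation of $\ori{\wild}$ that it reuses elsewhere. Your caution about well-definedness of $\psi$ is warranted but harmless: if $\sens$ has only one $\ast$-pair then $\pocg$ has no edges and the statement is vacuous; otherwise every $a$ admits a partner $b\notin\{a,a^\ast\}$.
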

\begin{proof} See appendix \ref{proofs:acyclicity lemma}.
\end{proof}
This proposition puts the vague notion from figure \ref{fig:relational square} on how to derive implications from a snapshot on a firm footing:
\begin{prop}[Poc graphs from snapshots]\label{poc sets from snapshots} Let $\snap{S}$ be a probabilistic snapshot. Construct a poc graph $\snapdir{\snap{S}}{}$ by setting 
\begin{equation}\label{eqn:virtual implication}
	ab\in\snapdir{\snap{S}}{}\IFF\witness{ab^\ast}<\min\left\{\begin{array}{c}
		\threshold{ab},\,\\
		\witness{ab},\,
		\witness{a^\ast b},\,
		\witness{a^\ast b^\ast}
	\end{array}\right\}
\end{equation}
Then $\snapdir{\snap{S}}{}$ is an acyclic poc graph.\defstop
\end{prop}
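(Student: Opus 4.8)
The plan is to check the two poc-graph axioms directly and then read off acyclicity from the Acyclicity Lemma (Proposition~\ref{prop:acyclicity lemma}), which already contains the only substantial analysis. Everything turns on one reading of the defining rule~\eqref{eqn:virtual implication}: the condition $ab\in\snapdir{\snap{S}}$ says precisely that, among the four weights $\witness{ab},\witness{a^\ast b},\witness{a^\ast b^\ast},\witness{ab^\ast}$ of the square $\snap{S}\res{ab}$, the complement-weight $\witness{ab^\ast}$ is the \emph{strict} minimum, and that it additionally falls below the common threshold $\threshold{ab}$. Throughout I would use that edge weights and thresholds are insensitive to the order of their two indices (edges of $\pocg$ are unordered), and that $\threshold{ab}=\threshold{a^\ast b}=\threshold{ab^\ast}=\threshold{a^\ast b^\ast}$ by Definition~\ref{defn:snapshot}(c); hence the threshold is constant over the whole square and never distinguishes one orientation from another.

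For the complementation axiom I would expand rule~\eqref{eqn:virtual implication} for the directed edge $b^\ast a^\ast$ and rewrite its four weights using index-symmetry together with $\threshold{b^\ast a^\ast}=\threshold{ab}$; the resulting inequality is literally the one defining $ab\in\snapdir{\snap{S}}$, so $ab\in\snapdir{\snap{S}}\iff b^\ast a^\ast\in\snapdir{\snap{S}}$. For the consistency axiom I would apply the same rule to each forbidden orientation $a^\ast b,\ ba^\ast,\ b^\ast a,\ ab^\ast$: after the analogous rewriting, each of them requires one of the weights $\witness{ab}$ or $\witness{a^\ast b^\ast}$ to be the strict minimum of the square. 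But $ab\in\snapdir{\snap{S}}$ already makes $\witness{ab^\ast}$ the strict minimum --- in particular $\witness{ab^\ast}<\witness{ab}$ and $\witness{ab^\ast}<\witness{a^\ast b^\ast}$ --- so none of these competitors can hold, giving $a^\ast b,ba^\ast,b^\ast a,ab^\ast\notin\snapdir{\snap{S}}$. This establishes that $\snapdir{\snap{S}}$ is a poc graph.

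Acyclicity then costs almost nothing. Strict minimality of $\witness{ab^\ast}$ yields in particular $\witness{ab^\ast}<\witness{a^\ast b}$, so whenever $ab\in\snapdir{\snap{S}}$ we have $\ori{ab}=\witness{a^\ast b}-\witness{ab^\ast}>0$; thus $\snapdir{\snap{S}}$ satisfies the orientation cocycle condition~\eqref{eqn:orientation cocycle}, and since $\snap{S}$ is probabilistic, Proposition~\ref{prop:acyclicity lemma} applies to conclude that $\snapdir{\snap{S}}$ has no directed cycles. The only mildly delicate point in the whole argument is the bookkeeping in the consistency axiom --- keeping track of how~\eqref{eqn:virtual implication} permutes the square's four weights and threshold under each directed version of an edge --- but once the rule is understood as ``$\witness{ab^\ast}$ is the strict minimum and is sub-threshold'', all the axiom checks reduce to a single comparison of minima, and no genuine analytic difficulty arises: the real work is carried entirely by the earlier Acyclicity Lemma.
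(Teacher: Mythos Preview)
Your argument is correct and follows essentially the same route as the paper's proof: use the symmetries of $\witness{\wild}$ and $\threshold{\wild}$ to get $ab\in\snapdir{\snap{S}}\iff b^\ast a^\ast\in\snapdir{\snap{S}}$, use the strict inequality in \eqref{eqn:virtual implication} to rule out the four forbidden orientations, and then invoke the Acyclicity Lemma via $\ori{ab}>0$. Your version simply spells out the bookkeeping that the paper compresses into one line.
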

\begin{proof} The symmetries of $\threshold{\wild}$ and $\witness{\wild}$ immediately imply $ab\in\snapdir{\snap{S}}{}$ iff $b^\ast a^\ast\in\snapdir{S}{}$. The strict inequality in \eqref{eqn:virtual implication} implies the second condition of a poc graph holds as well. Since the orientation cocycle is positive on every edge of $\snapdir{\snap{S}}{}$ by definition, the acyclicity lemma applies.
\end{proof}

The element of thresholding present in \eqref{eqn:virtual implication} may also be used as a part of the updating procedure of a probabilistic snapshot, without affecting the derived poc set:
\begin{prop}[Snapshot Truncation]\label{snapshot truncation} Let $\snap{S}$ be a probabilistic snapshot. Define a new snapshot $\snapfloor{\snap{S}}$ to have the same state as $\snap{S}$ while for every $ab\in\pocg$ satisfying \eqref{eqn:virtual implication} the weights are updated as follows:
\begin{equation}
	\left\{\begin{array}{rcl}
		\witness{ab^\ast}&\mapsto& 0\\
		\witness{ab}&\mapsto&\witness{ab}+\witness{ab^\ast}\\
		\witness{a^\ast b^\ast}&\mapsto&\witness{a^\ast b^\ast}+\witness{ab^\ast}\\
		\witness{a^\ast b}&\mapsto&\witness{a^\ast b}-\witness{ab^\ast}
	\end{array}\right.
\end{equation}
Then $\snapdir{\snap{S}}=\snapdir{\snapfloor{\snap{S}}}{}$.
\end{prop}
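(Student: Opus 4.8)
The plan is to reduce everything to a square-local computation. First I would note that, by Proposition~\ref{poc sets from snapshots}, membership of a directed edge $ab$ in $\snapdir{\snap{S}}$ is decided by \eqref{eqn:virtual implication}, which inspects only the four weights $\witness{ab},\witness{ab^\ast},\witness{a^\ast b},\witness{a^\ast b^\ast}$ attached to the single square $\snap{S}\res{ab}$ carried by the axes $\{a,a^\ast\},\{b,b^\ast\}$. Since every proper pair of $\pocg$ lies in a unique such square, distinct squares carry disjoint sets of weights, so the truncation in the statement modifies the weights of each square independently of the others. Moreover, by the symmetry of $\witness{\wild}$, the two directed representatives $ab$ and $b^\ast a^\ast$ of one implication trigger literally the same reassignment of weights, and that reassignment sends $\witness{ab^\ast}\mapsto 0$, so re-applying it changes nothing; hence the simultaneous update is well defined and amounts to performing exactly one operation per implication-bearing square. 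Consequently it suffices to prove, square by square, that $\snapdir{}$ selects the same (possibly empty) set of directed edges before and after the update.

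Second, I would record the one structural fact that drives the argument. Arranging the square as the $2\times 2$ table with entries $\witness{ab},\witness{ab^\ast},\witness{a^\ast b},\witness{a^\ast b^\ast}$, the truncation on the implication $ab$ redistributes mass $\delta:=\witness{ab^\ast}$ between the two diagonals according to the pattern $(+1,-1,-1,+1)$ on $(\witness{ab},\witness{ab^\ast},\witness{a^\ast b},\witness{a^\ast b^\ast})$ (and the analogous pattern for an implication in any of the other three directions). This displacement lies in the common kernel of all four corner marginals (such as $\witness{ab}+\witness{ab^\ast}$), of the normalization \eqref{eqn:normalization identity}, and of the orientation cocycle $\ori{ab}=\witness{a^\ast b}-\witness{ab^\ast}$ of \eqref{eqn:orientation cocycle}. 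From this single observation I obtain at once that the normalization \eqref{eqn:normalization identity}, consistency \eqref{eqn:consistency identity} (each marginal being preserved individually in its own square), and orientation \eqref{eqn:cocycle identity} constraints all survive --- the last because \eqref{eqn:cocycle identity} is precisely the statement that $\ori{\wild}$ sums to zero around triangles while each $\ori{ab}$ is left invariant --- so that $\snapfloor{\snap{S}}\in\probabilistic{\sens}$ and $\snapdir{\snapfloor{\snap{S}}}$ is again a legitimate acyclic poc graph.

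Third, I would run the square-local case analysis. Rephrasing \eqref{eqn:virtual implication}, an edge lies in $\snapdir{}$ exactly when its $\witness{\wild}$-corner is the \emph{strict} minimum of the four corner weights and lies below the threshold $\threshold{ab}\leq\tfrac14$; in particular at most one implication occurs per square. If a square bears no implication it is untouched by the truncation, so nothing changes there. If it bears the implication $ab$, then $\witness{ab^\ast}$ is the strict minimum, so after the update $\witness{ab^\ast}\mapsto 0$ while the other three corners stay strictly positive --- here the strict inequalities in \eqref{eqn:virtual implication}, notably $\witness{ab^\ast}<\witness{a^\ast b}$, are exactly what keep $\witness{a^\ast b}-\witness{ab^\ast}>0$. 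Hence $0$ is again the unique strict minimum and still lies below $\threshold{ab}$ (which is positive, being strictly above the nonnegative $\witness{ab^\ast}$), so $ab$ is re-derived; and by uniqueness of the strict minimum no new implication is manufactured. Collecting over all squares yields $\snapdir{\snap{S}}=\snapdir{\snapfloor{\snap{S}}}$.

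The main obstacle I anticipate is bookkeeping rather than depth: making the reduction to squares airtight --- confirming that the update is genuinely square-local and insensitive to the double-counting of directed representatives --- and verifying that the strict minimum is preserved under redistribution. The latter is where the strictness in \eqref{eqn:virtual implication} is indispensable; with merely non-strict inequalities the zeroed corner could tie with another corner, either destroying the implication or creating a spurious one, so I would take care to keep every inequality strict throughout.
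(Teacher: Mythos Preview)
Your proposal is correct and follows exactly the approach the paper sketches: a direct verification that $\snapfloor{\snap{S}}$ remains probabilistic and that the same edges satisfy \eqref{eqn:virtual implication} before and after truncation. The paper's own proof is a two-line pointer to this verification, while you have actually carried it out---in particular, your observations that the update is square-local, that the displacement $(+\delta,-\delta,-\delta,+\delta)$ annihilates all four marginals and the orientation cocycle, and that the strict inequalities in \eqref{eqn:virtual implication} are precisely what keep $\witness{a^\ast b}-\witness{ab^\ast}>0$ after redistribution, are the substance the paper leaves implicit.
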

\begin{proof} The proof amounts to a direct verification that the $\snapfloor{\snap{S}}$ is probabilistic, and that $ab\in\pocg$ satisfies \eqref{eqn:virtual implication} in $\snapfloor{\snap{S}}$ if and only if the same condition is satisfied by $ab$ in $\snap{S}$.
\end{proof}

Following lemma \ref{lemma:derived poc set} we may now safely define:
\begin{defn}[Derived Poc Set]\label{defn:derived poc set} Let $\snap{S}$ be a probabilistic snapshot. Denote by $\poc{\snap{S}}$ the weak poc set structure obtained by setting $a\leq b$ iff there exists a directed path in $\snapdir{\snap{S}}$ from $a$ to $b$.\defstop
\end{defn}

We now proceed to introduce and study two possible snapshot constructions.

\subsection{Empirical Snapshots}\label{subsection:empirical snapshot} The empirical snapshot structure maintains an empirical approximation of the relative frequencies of observations of the form $a\wedge b$, $a,b\in\sens$. For any trajectory $\varphi$ of the agent through $\spc$ we could try setting
\begin{equation}\label{weights in empirical snapshot}
	\witness{ab}^t=\sum_{k=1}^t c_{ab}^k\,,
\end{equation}
with $\snap{S}\at{t}$ a trivial snapshot for all $t\leq 0$.

\begin{defn} A snapshot $\snap{S}$ with $\witness{\wild}=0$ is said to be {\it trivial} and denoted $\nullsnap$.\defstop
\end{defn}
Properties \eqref{indicator identities} then imply that $\snap{S}\at{t}$ satisfies the consistency and cocycle constraints (defn. \ref{defn:probabilistic snapshot}) for all $t>0$, and would satisfy the normalization constraint if we replace the weights $\witness{ab}^t$ by $\frac{1}{t}\witness{ab}^t$ throughout. 

\medskip
\subsubsection{Construction and Properties}\label{empirical:formalism} The formal construction is as follows:
\begin{defn}[Empirical Snapshot]\label{defn:empirical snapshot} A snapshot $\snap{S}$ over $\sens$ is an \emph{empirical snapshot} if the following conditions are satisfied:
\begin{itemize}
	\item For all $ab\in\pocg$, $\witness{ab}\in\ZZ_{_{\geq 0}}$; %(that is, $b\notin\{a,a^\ast\}$) the weight $\witness{ab}$ is a non-negative integer;
	\item For all $ab\in\pocg$, the expression
	\begin{equation}\label{eqn:snapshot consistency}
		\witness{a}:=\witness{ab}+\witness{ab^\ast}
	\end{equation}
	is independent of the choice of $b$, and vanishes only if $\state{a}=0$;
	\item The following expression does not depend on $a\in\sens$:
	\begin{equation}\label{eqn:snapshot synchrony}
		\clock{\snap{S}}:=\witness{a}+\witness{a^\ast}
	\end{equation}
\end{itemize}
Denote the set of empirical snapshots over $\sens$ by $\empirical{\sens}$ (or just $\empirical{}$ when justified).\defstop
\end{defn}
The evolution of an empirical snapshot under a sequence of observations is then defined through:
\begin{defn}[Empirical Update] Let $\snap{S}$ be an empirical snapshot and let $O\subset\sens$ be complete $\ast$-selection. The snapshot $O\ast\snap{S}$ is the empirical snapshot obtained from $\snap{S}$ by setting
\begin{equation}
	\witness{ab}(O\ast\snap{S}):=\witness{ab}(\snap{S})+\ev{\indicator{O}}{a}\cdot\ev{\indicator{O}}{b}
\end{equation}
for all $ab\in\pocg$. The state of $O\ast\snap{S}$ is set to $\coh{O}$ (where, recall, this is the the coherent projection \eqref{eqn:coherent projection} computed with respect to the weak poc set structure
derived from the new weights).\defstop
\end{defn}
\begin{defn}[Evolution] We say that a snapshot $\snap{T}$ over $\sens$ is an evolution of a snapshot $\snap{S}$, either if $\snap{S}=\snap{T}$ or if there is a sequence $\left(O_k\right)_{k=1}^m$ of complete $\ast$-selections in $\sens$ such that $\snap{T}=O_k\ast\cdots\ast O_1\ast\snap{S}$.\defstop
\end{defn}
Empirical snapshots are characterized by their ancestry:
\begin{lemma}\label{lemma:empirical evolves from trivial} A non-trivial snapshot $\snap{S}$ over $\sens$ is empirical if and only if it is an evolution of the trivial snapshot.
\end{lemma}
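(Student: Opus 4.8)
The plan is to establish the two implications separately, the pivot in both being the behaviour of $\clock{\wild}$ under the empirical update. For the ``if'' direction I would first prove the stability sublemma that $O\ast(\wild)$ preserves $\empirical{\sens}$: given $\snap{S}\in\empirical{\sens}$ and a complete $\ast$-selection $O$, the updated weights $\witness{ab}+\ev{\indicator{O}}{a}\cdot\ev{\indicator{O}}{b}$ are again non-negative integers, and summing the defining identity over the pair $b,b^\ast$ together with completeness of $O$ (so $\ev{\indicator{O}}{b}+\ev{\indicator{O}}{b^\ast}=1$) yields $\witness{a}(O\ast\snap{S})=\witness{a}(\snap{S})+\ev{\indicator{O}}{a}$, which is independent of $b$; adding the analogue for $a^\ast$ gives $\clock{O\ast\snap{S}}=\clock{\snap{S}}+1$, independent of $a$. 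The vanishing condition survives because $\witness{a}(O\ast\snap{S})=0$ forces $a\notin O$, while the new state $\coh{O}\subseteq O$ then omits $a$. Since a single update turns $\nullsnap$ into an empirical snapshot of clock $1$, a routine induction on the length of the defining sequence shows every evolution of $\nullsnap$ is empirical, non-triviality guaranteeing that at least one update was applied.

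For the ``only if'' direction I would induct on $m:=\clock{\snap{S}}$, which non-triviality forces to be at least $1$ and which, by the computation above, must equal the number of selections to be recovered. The base case $m=1$ is forced: consistency and the clock condition give $\witness{a}\in\{0,1\}$ with $\witness{a}+\witness{a^\ast}=1$, so $O:=\set{a\in\sens}{\witness{a}=1}$ is a complete $\ast$-selection, and a short check gives $\snap{S}=O\ast\nullsnap$, the state matching because $\coh{O}$ is the coherent projection dictated by the same weights. The inductive step would run the stability sublemma backwards: I would seek a \emph{peelable} complete $\ast$-selection $O$, meaning $\witness{ab}\geq 1$ for every proper pair $a,b\in O$, subtract its contribution to form $\snap{S}'$ with $\clock{\snap{S}'}=m-1$, verify that $\snap{S}'\in\empirical{\sens}$ (non-negativity is exactly peelability, the margins drop uniformly by $\ev{\indicator{O}}{\wild}$, and the vanishing condition is re-established through the coherent projection), and conclude $\snap{S}=O\ast\snap{S}'$ with $\snap{S}'$ an evolution of $\nullsnap$ by the inductive hypothesis.

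The hard part is the extraction: guaranteeing that a peelable complete $\ast$-selection always exists. This cannot rest on the pairwise data alone, since consistency only fixes each $2\times 2$ contingency table $\{\witness{ab},\witness{ab^\ast},\witness{a^\ast b},\witness{a^\ast b^\ast}\}$ from its margins $\witness{a},\witness{b}$ and leaves the higher co-occurrence structure free. The genuine obstacle is to exclude ``frustrated'' configurations --- consistent margins on a triple $a,b,c$ that nonetheless forbid any single selection from realizing them --- which neither the margins nor the orientation cocycle $\ori{ab}=\witness{a^\ast b}-\witness{ab^\ast}$ of \eqref{indicator identities} visibly preclude. I would attempt to resolve this by seeding $O$ from the recorded state $\state{}\snap{S}$ and extending it one proper pair at a time, maintaining positivity of the accumulated pairwise weights and appealing to integrality together with the clock identity to argue the process cannot stall; but I anticipate that the decisive step is a quantitative ``slack'' argument showing the empirical constraints leave enough room among the unselected literals to complete a peelable selection, and it is precisely here that the full strength of the empirical hypotheses --- beyond mere pairwise consistency --- must be brought to bear.
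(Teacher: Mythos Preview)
Your ``if'' direction and your inductive skeleton for ``only if'' both match the paper's approach. The paper also peels one selection at a time; its specific choice is $O=\state{}\snap{S}$, and after subtracting it verifies consistency, the clock identity, and the vanishing condition for the resulting $\snap{T}$, concluding that $\snap{T}$ is again empirical. What the paper does \emph{not} verify is exactly what you flag as the hard part: that the subtracted weights remain non-negative (your ``peelability''), and it is silent on whether $\state{}\snap{S}$ is even a complete $\ast$-selection as required by the update.

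Your instinct that the pairwise constraints of Definition~\ref{defn:empirical snapshot} cannot by themselves rule out frustrated configurations is correct, and no slack argument of the kind you hope for will succeed under the stated hypotheses. With three proper pairs $a,b,c$ and clock $2$, set
\[
\witness{ab}=\witness{a^\ast b^\ast}=\witness{bc}=\witness{b^\ast c^\ast}=\witness{ac^\ast}=\witness{a^\ast c}=1
\]
and the remaining six weights in these three squares equal to $0$. All marginals equal $1$, so consistency and the clock hold, and the vanishing condition is vacuous for any choice of state: this satisfies every clause of Definition~\ref{defn:empirical snapshot}. But no complete $\ast$-selection is peelable: choosing $a$ forces $b$ (since $\witness{ab^\ast}=0$), which forces $c$ (since $\witness{bc^\ast}=0$), yet $\witness{ac}=0$; the case $a^\ast$ is symmetric. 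Hence these weights are not a sum of two indicator layers, and the lemma fails as stated. The example even satisfies the orientation constraint~\eqref{eqn:cocycle identity}; what it violates is the triangle inequality~\eqref{eqn:triangle inequality for snapshots}, which is not among the defining conditions. So the gap you identified is genuine and is shared by the paper's own argument; the ``only if'' direction needs an additional hypothesis on the weights beyond those listed.
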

\begin{proof} See appendix \ref{proofs:empirical evolves from trivial}.
\end{proof}
Having characterized empirical snapshots as evolutions of the trivial one, we return to the observation that the weight $\witness{\wild}(\snap{S})/\clock{\snap{S}}$ on $\pocg$~ ---~ see \eqref{eqn:snapshot synchrony}~ ---~ defines a probabilistic snapshot. We may thus define $\snapdir{\snap{S}}$ accordingly, by setting
\begin{equation}\label{eqn:empirical virtual implication}
	ab\in\snapdir{\snap{S}}{}\IFF\witness{ab^\ast}<\min\left\{\begin{array}{c}
		\threshold{ab}\cdot\clock{\snap{S}},\\
		\witness{ab},\,
		\witness{a^\ast b},\,
		\witness{a^\ast b^\ast}
	\end{array}\right\}
\end{equation}
and conclude that:
\begin{prop}[empirical implies acyclic] If $\snap{S}$ is an empirical snapshot, then $\snapdir{\snap{S}}$ as defined in \eqref{eqn:empirical virtual implication} is an acyclic poc graph, and $\poc{\snap{S}}$ as defined in defn. \ref{defn:derived poc set} is a weak poc set structure on $\sens$.\defstop
\end{prop}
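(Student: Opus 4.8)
The plan is to reduce the empirical case to the probabilistic case already settled in Proposition~\ref{poc sets from snapshots} by rescaling the weights, and then to invoke Lemma~\ref{lemma:derived poc set} to pass from acyclicity of $\snapdir{\snap{S}}$ to the assertion about $\poc{\snap{S}}$.

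First I would dispose of the degenerate case: if $\snap{S}$ is trivial then $\witness{\wild}\equiv 0$, so $\clock{\snap{S}}=0$ and the right-hand side of \eqref{eqn:empirical virtual implication} equals $\min\{0,0,0,0\}=0$; as $\witness{ab^\ast}=0$ too, the strict inequality fails for every proper pair and $\snapdir{\snap{S}}$ is the empty graph, which is vacuously an acyclic poc graph. So assume $\snap{S}$ is non-trivial; by the empirical axioms $\clock{\snap{S}}>0$, and I may set $\snap{S}'$ to be the rescaled weight system $\witness{ab}(\snap{S}'):=\witness{ab}(\snap{S})/\clock{\snap{S}}$, carrying the same thresholds as $\snap{S}$ together with the state $\state{}\snap{S}$, which is coherent since under the empirical update it arises as a coherent projection $\coh{O}$.

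Next I would check that $\snap{S}'$ is a probabilistic snapshot (Definition~\ref{defn:probabilistic snapshot}). The consistency constraint \eqref{eqn:consistency identity} is immediate from the $b$-independence of $\witness{a}=\witness{ab}+\witness{ab^\ast}$ recorded in \eqref{eqn:snapshot consistency}, and the normalization constraint \eqref{eqn:normalization identity} is just $\bigl(\witness{a}+\witness{a^\ast}\bigr)/\clock{\snap{S}}=1$ by \eqref{eqn:snapshot synchrony}. The one constraint that does not follow from the static empirical axioms is the orientation cocycle \eqref{eqn:cocycle identity}; to obtain it I would use Lemma~\ref{lemma:empirical evolves from trivial} to realize $\snap{S}$ as an evolution of the trivial snapshot, observe that each empirical update increments the weights by a coincidence term $\ev{\indicator{O}}{a}\cdot\ev{\indicator{O}}{b}$ which individually obeys the cocycle identity among the $c^t_{\wild\wild}$ in \eqref{indicator identities}, and then sum these contributions and divide by $\clock{\snap{S}}$. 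Extracting the cocycle identity from the update history in this way --- rather than from the defining properties of an empirical snapshot, which alone do not entail it --- is the only real content of the argument and is where I expect the work to lie.

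Finally I would reconcile the two thresholding rules. Applying the probabilistic criterion \eqref{eqn:virtual implication} to $\snap{S}'$ and clearing the denominator by multiplying through by $\clock{\snap{S}}>0$ reproduces the empirical criterion \eqref{eqn:empirical virtual implication} verbatim --- the factor $\clock{\snap{S}}$ attached to $\threshold{ab}$ is exactly what compensates for leaving the weights un-normalized --- so $\snapdir{\snap{S}}=\snapdir{\snap{S}'}$ as oriented subgraphs of $\pocg$. Proposition~\ref{poc sets from snapshots}, applied to the probabilistic snapshot $\snap{S}'$, then certifies that this common graph is an acyclic poc graph. Since it is acyclic, Lemma~\ref{lemma:derived poc set} guarantees that the transitive closure of its orientation is a weak poc set structure on $\sens$; and that transitive closure is precisely $\poc{\snap{S}}$ by Definition~\ref{defn:derived poc set}, which is the claim.
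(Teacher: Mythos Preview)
Your proposal is correct and follows essentially the same approach as the paper: normalize the empirical weights by $\clock{\snap{S}}$, observe that the resulting snapshot is probabilistic (consistency and normalization from the empirical axioms, the orientation cocycle from the evolution characterization of Lemma~\ref{lemma:empirical evolves from trivial} combined with the indicator identities \eqref{indicator identities}), note that the empirical edge criterion \eqref{eqn:empirical virtual implication} is the probabilistic one \eqref{eqn:virtual implication} after clearing the denominator, and then invoke Proposition~\ref{poc sets from snapshots} followed by Lemma~\ref{lemma:derived poc set}. The paper carries out exactly this reduction in the text immediately preceding the proposition rather than in a separate proof block, but the logic is identical; your version is simply more explicit about the trivial case and about where the cocycle constraint comes from.
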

We will henceforth refer to DBAs endowed with empirical snapshots and utilizing the empirical update as {\it empirical agents}.

\medskip
\subsubsection{Performance of Empirical Agents}\label{empirical:performance}
In this paper we restrict attention to agents endowed with a fixed finite set of actions. An agent starting out at time $t=0$ with a trivial snapshot $\snap{S}\at{0}$ has no knowledge of its environment, and is therefore assumed to engage in random exploration for some time, until actionable information becomes available. This motivates the question as to how well the memory structure of an empirical agent performs in this initial stage.

\medskip
In the case where $\spc$ is finite and the agent's actions are deterministic it is easy to formulate this: Let $\actions$ be the set of available actions, and consider the graph with vertex set $\spc$, where a vertex $x$ is joined to a vertex $y$ labeled by an action $\alpha\in\actions$ if applying $\alpha$ at $x$ results in $y$. Thus, $\spc$ becomes endowed with the structure of a Markov chain, where we draw actions uniformly at random in every state. Focus on the case when all the actions available to the agent are reversible in the sense that there is an edge from $x$ to $y$ if and only if there is an edge from $y$ to $x$ (loops are allowed as well). Then the corresponding Markov chain is a random walk and its stationary distribution over $\spc$, denoted by $\pi$, is uniform \cite{Lovasz-random_walks_survey} over each connected component of the resulting transition system. Thus, each normalized weight $\witness{ab}^t/t$ is nothing but the empirical estimate of the joint probability, given by $\pi$, for $a,b\in\sens$ to fire synchronously.

Restricting to a reachability component, we may assume $\spc$ is connected. By abuse of notation, for $ab\in\pocg$ denote
\begin{equation}
	\pi(ab)=\pi(\rho(a)\cap\rho(b))
\end{equation}
Let $\mathtt{Dir}^\infty$ denote the matrix\footnote{Recall that $\mathtt{Dir}(\snap{S})$ introduced in Prop. \ref{poc sets from snapshots} is a directed graph. The new notation is intended to connote a matrix representation of such a graph.} with entries $\mathtt{Dir}^\infty_{ab}=1$ whenever $\rho(a)$ is contained in $\rho(b)$ up to a precision of $\threshold{ab}$, that is:
\begin{equation}\label{eqn:thresholded dir matrix}
	\pi(\rho(ab^\ast))<\min\left(
			\threshold{ab},
			\pi(ab),\pi(a^\ast b^\ast),\pi(a^\ast b)	
	\right)
\end{equation}
and set $\mathtt{Dir}^\infty_{ab}=0$ otherwise. This matrix represents the true poc set structure to be learned by the agent, as determined by the fixed learning thresholds. Analogously we let $\mathtt{Dir}^t$ be the matrix with $\mathtt{Dir}^t_{ab}=1$ iff the directed edge $ab\in\pocg$ is contained in the derived poc graph of $\snapdir{\snap{S}\at{t}}$ (and $0$ otherwise). A good measure of the agent's performance would be the behavior of the total error
\begin{equation}
	Err(t):=\norm{\mathtt{Dir}^t-\mathtt{Dir}^\infty}_1
\end{equation}
over time (the matrices viewed simply as vectors in $\ell^1$ of the appropriate dimension). By Theorem 5.1 in \cite{Lovasz-random_walks_survey}, the agent's random walk converges to $\pi$ at an exponential rate depending only on the transition structure of $\spc$ determined by the actions $\actions$. We conclude:
\begin{prop}\label{prop:convergence} Suppose a DBA performs a random walk on a connected $\spc$, at each moment in time performing one of a fixed finite set of reversible deterministic actions. Then $Err(t)$ converges to zero at an exponential rate. 
\end{prop}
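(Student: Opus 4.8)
The plan is to exploit the fact that $Err(t)$ is the $\ell^1$-distance between two $0/1$ matrices, hence simply the number of proper pairs $ab\in\pocg$ on which the derived orientation $\mathtt{Dir}^t$ and the target orientation $\mathtt{Dir}^\infty$ disagree. As $\pocg$ has finitely many edges, it suffices to bound, for each fixed $ab$, the probability that $\mathtt{Dir}^t_{ab}\neq\mathtt{Dir}^\infty_{ab}$ and to show it decays exponentially; summing then gives $\EE[Err(t)]=\sum_{ab}\Pr[\mathtt{Dir}^t_{ab}\neq\mathtt{Dir}^\infty_{ab}]\leq C e^{-\gamma t}$, and Borel--Cantelli upgrades this to $Err(t)=0$ for all large $t$, almost surely.

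First I would record the reduction to empirical frequencies. Since the snapshot evolves from the trivial one under the empirical update, $\clock{\snap{S}\at{t}}=t$, so dividing \eqref{eqn:empirical virtual implication} through by $t$ shows that $\mathtt{Dir}^t_{ab}=1$ is exactly the comparison
\[
\tfrac1t\witness{ab^\ast}^t<\min\left(\threshold{ab},\tfrac1t\witness{ab}^t,\tfrac1t\witness{a^\ast b}^t,\tfrac1t\witness{a^\ast b^\ast}^t\right),
\]
while $\mathtt{Dir}^\infty_{ab}=1$ is the same comparison with each empirical frequency $\tfrac1t\witness{cd}^t$ replaced by its stationary value $\pi(cd)$ (see \eqref{eqn:thresholded dir matrix}). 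Each $\witness{cd}^t=\sum_{k=1}^t c_{cd}^k$ is a partial sum of a bounded coincidence indicator, and the ergodic theorem gives $\tfrac1t\witness{cd}^t\to\pi(cd)$ (interpreted, for sensors of positive order, as the stationary probability of the corresponding window event). Consequently, provided the target comparison for $ab$ is \emph{strict} --- i.e. $\pi(ab^\ast)$ does not equal the competing minimum exactly --- the two decisions must agree once all four relevant frequencies lie within the gap $\delta_{ab}>0$ separating the two sides of that comparison. Setting $\delta:=\min_{ab}\delta_{ab}>0$ (a finite minimum), a disagreement on $ab$ at time $t$ forces $\left|\tfrac1t\witness{cd}^t-\pi(cd)\right|\geq\delta/2$ for at least one of the four pairs $cd$ involved.

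The analytic core is then a concentration estimate for these empirical frequencies. The trajectory $(\varphi\at{t})$ is a reversible random walk on the finite connected graph on $\spc$, hence an irreducible Markov chain with unique stationary distribution $\pi$, uniform on the component (after the standard lazy/loop adjustment ensuring aperiodicity). Each $c_{cd}^k$ is a bounded function of a finite window $(\varphi\at{k-n},\dots,\varphi\at{k})$; passing to the induced chain on admissible windows, which retains a spectral gap bounded away from zero, $\witness{cd}^t$ becomes an additive functional. The exponential convergence of the walk to $\pi$ supplied by Theorem~5.1 of \cite{Lovasz-random_walks_survey} is equivalent to a spectral gap $1-\lambda>0$, and feeding this into a Chernoff-type deviation bound for Markov chains (of Gillman--Lezaud type) yields
\[
\Pr\!\left[\left|\tfrac1t\witness{cd}^t-\pi(cd)\right|\geq\tfrac\delta2\right]\leq C_0\, e^{-\gamma t}
\]
for constants $C_0,\gamma>0$ depending only on the gap and on $\delta$. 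A union bound over the finitely many pairs $cd$ and edges $ab$ completes the estimate.

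The hard part is the passage from the single-time distributional bound of \cite{Lovasz-random_walks_survey}, which controls $\norm{\mu_t-\pi}$, to a deviation bound for the \emph{time-averaged} functionals $\tfrac1t\witness{cd}^t$: the plain ergodic theorem gives only almost-sure convergence with no rate, and the exponential rate genuinely requires a Markov-chain concentration inequality resting on the spectral gap (equivalently, exponential mixing), not merely on convergence of $\mu_t$. A secondary, easily overlooked point is the genericity hypothesis hidden in the word ``strict'': if for some edge $\pi(ab^\ast)$ ties exactly with the competing minimum, the thresholded decision can flip infinitely often and that edge's contribution to $Err(t)$ need not vanish; one therefore assumes the finitely many thresholds $\threshold{ab}$ avoid these degenerate coincidence values, which holds for generic thresholds and makes every $\delta_{ab}$ strictly positive.
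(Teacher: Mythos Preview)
Your argument is correct and, in fact, considerably more careful than the paper's own justification. The paper does not give a self-contained proof: it simply observes that the random walk converges to its stationary distribution $\pi$ at an exponential rate (citing Theorem~5.1 of \cite{Lovasz-random_walks_survey}) and declares the proposition as a consequence. You correctly identify that this step is not automatic: exponential decay of $\norm{\mu_t-\pi}$ controls the marginal law at time $t$, whereas $Err(t)$ depends on the \emph{time averages} $\tfrac1t\witness{cd}^t$, for which one genuinely needs a Markov-chain Chernoff/Hoeffding bound (Gillman--Lezaud) driven by the spectral gap. Your reduction to a finite union bound over edges, together with the gap parameter $\delta_{ab}$ and the Borel--Cantelli upgrade, is the right way to make the paper's claim rigorous. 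You also make explicit a genericity hypothesis on the thresholds (no exact ties between $\pi(ab^\ast)$ and the competing minimum) that the paper leaves implicit; without it the conclusion can fail on individual edges, so this is a necessary caveat rather than an overcautious one.
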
 

\medskip
With such strong performance guarantees for a broad class of empirical agents we are left to examine the variation in performance as a function of the geometry/topology of the environment (beyond the guarantees given by the preceding discussion) we have run simulations in the following settings:
\begin{itemize}
	\item[(a)] The agent performs a random walk along a path with $20$ edges (example \ref{example:N-path}), choosing between one step forward and one step back uniformly at random for every $t\in\mytime$, learning the poc structure of a sensorium consisting of $20$ `GPS' sensors, as described in example \ref{example:N-path};
	\item[(b)] The agent performs a random walk along a cycle with $20$ edges, choosing between a clockwise and a counter-clockwise step uniformly at random for every $t\in\mytime$, and learning a sensorium consisting of $20$ beacon sensors as described in example \ref{example:N-cycle};
	\item[(c)] The agent performs a random walk (up/down/left/right) on a square grid with $10$ `GPS' sensors along each of the $x-$ and $y-$ axes;
	\item[(d)] The agent performs a random walk along (forward/back) a path with $20$ edges, but the sensors are chosen to have random activation fields (randomly chosen subsets of the set of vertices along the path); the sensor fields have been drawn anew prior to each separate run.
\end{itemize}
The number of sensors is the same for each setting, and each agent carries out $50$ runs of a length that is cubic in the number of sensors, starting at a random position with an empty snapshot. We have tested $10$ different agents for each setting, corresponding to $10$ different values of the learning threshold, spread linearly in the interval from $1/(20)^3$ to the maximal meaningful value of $1/4$ (where one should not expect much useful learning to occur). 

The results are summarized in figure \ref{fig:empirical learners} plotting $Err(t)$, where we have replaced the matrix $\mathtt{Dir}^\infty$ as defined in \eqref{eqn:thresholded dir matrix} by the $\{0,1\}$-valued matrix
\begin{equation}\label{eqn:true dir matrix}
	\mathtt{Dir}^\infty_{ab}=1\IFF\rho(a)\subseteq\rho(b)
\end{equation}
to render the effect of choosing different values for the learning threshold more visible in the graph of $Err(t)$.
\begin{figure}
	\begin{center}
		\includegraphics[width=\columnwidth]{./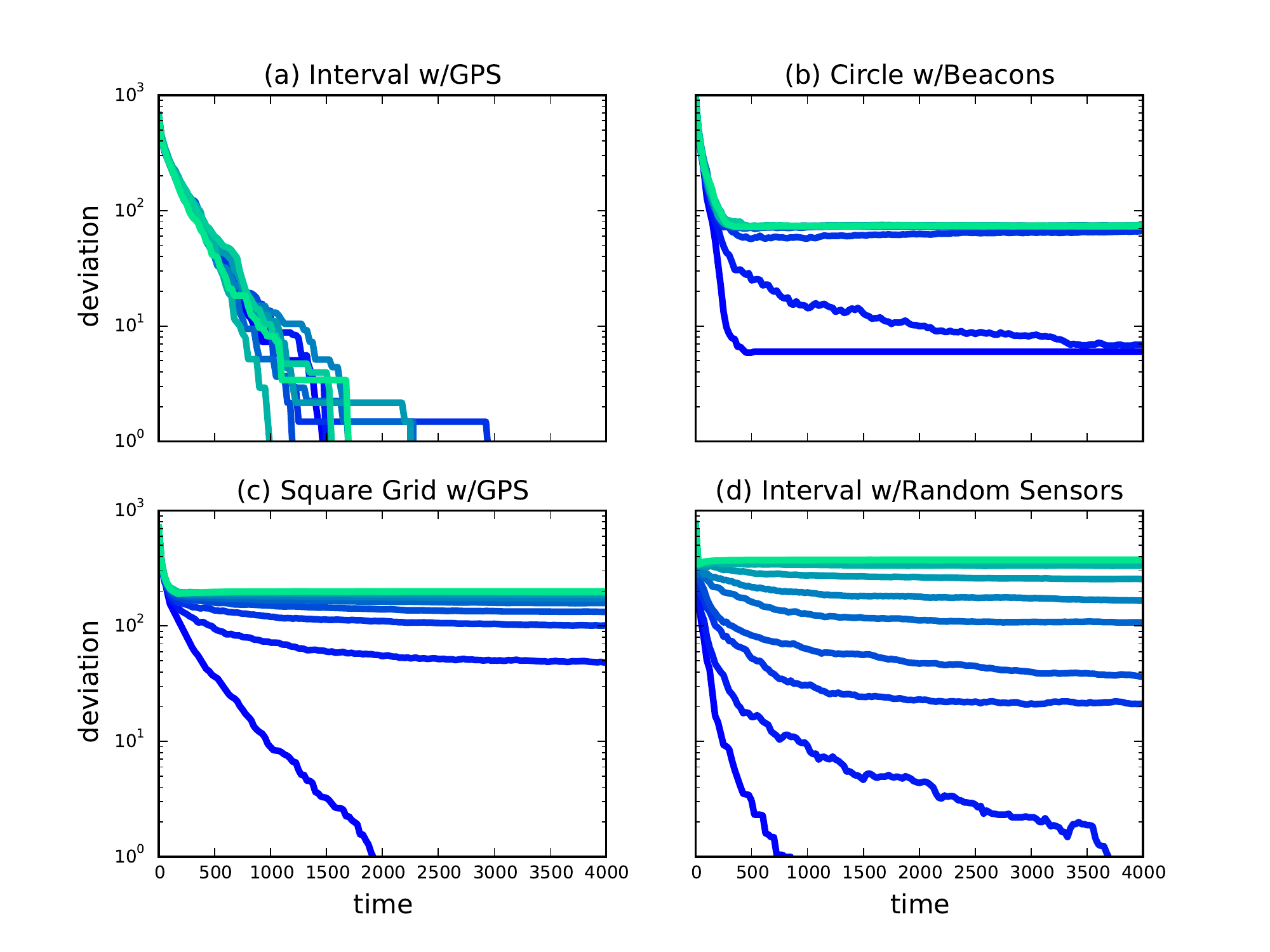}
		\caption{\small Logarithmic plot of the mean number of incorrect edges in the derived poc graph of an empirical snapshot (20 sensors), for learning thresholds varying linearly between $\tfrac{1}{4}$ (cyan/light) and $\tfrac{1}{20^3}$ (blue/dark), averaged over 50 runs of random walks each.
		\normalsize	\label{fig:empirical learners}}
	\end{center}
\end{figure}
%COMPARISON

The resulting plots show significant, though subtle, differences in performance between the four settings, illustrating the similarities and differences in the weak poc set structures being approximated, most notably:
\begin{itemize}
	\item[-] The sharper initial decline in the mean deviation for (b) and (c) in comparison to (a) is expected due to the relative abundance of crossing in the former, as opposed to complete nesting (see definition \ref{defn:nesting}) in the latter.
	\item[-] Performance in the random setting (d) seems to lag significantly behind performance in any of the structured settings.
	\item[-] Performance in the completely nested setting (a) seems to provide exponentially fast learning no matter what; by contrast, the other settings seem to experience a transition between two modes, depending on how small the learning threshold $\tau$ is:
	\begin{enumerate}
		\item For large $\tau$, the deviation plateaus.
		\item For small enough $\tau$, the deviation decreases to zero in finite time.
	\end{enumerate}
	We expect the critical value of $\tau$ in any finite setting to be somewhere around the minimum probability of a state (under the stationary distribution of the random walk): in order for a relation $a<b$ to be put on record, it is necessary for the agent to have visited $\rho(a)\cap\rho(b^\ast)$ at a frequency below $\tau$; the smaller $\tau$ is, the fewer false relations will be recorded for posterity.
	\item[-] Recalling that the poc set representing the ground truth in (c) is the direct sum (see \ref{example:direct sum}) of two smaller copies of (a) having 10 sensors each, we see that the crossing relations between the $x$-axis sensors and their $y$-axis counterparts account for 800 of the 1600 entries (two 20$\times$20 null sub-matrices) in the adjacency matrix of the derived graph. Thus, the two experiments are not that different: loosely speaking, the 10$\times$10 square grid experiment projects onto a Cartesian product of two 10-path experiments where the random walk on the 10-path becomes a lazy walk with probability $\tfrac{1}{2}$ to stay put. In other words, the behavior of (c) may be inferred from the behavior of (a).
	\item[-] Not so when comparing (a) and (c) with (b): note how the sub-critical values of the learning parameters in (a),(c) and (d) force the deviation plot to `plunge' into the $x$-axis {\it versus} the horizontal asymptote behavior of (b). In view of theorem \ref{thm:homotopy type}), our guess is that the environment (the circle) not being contractible has something to do with this qualitative change in behavior, but this requires further investigation.
\end{itemize}

\subsection{Discounted Snapshots}\label{subsection:discounted snapshot}
A notable weakness of empirical snapshots as a data structure is their potential high cost in space, due to the need for indefinitely maintaining integer-valued counters. In some sense, the {\it entire} history of the agent matters, and, in some sense, matters too much. This motivates the search for an alternative, more quantized, updating mechanism whose dependence on any given past observation weakens at a fixed rate.

\medskip
\subsubsection{Construction and Properties}\label{discounted:formalism}
\begin{defn}(discounted update)\label{defn:discounted update} Let $q\in[0,1]$ and let $\snap{S}$ be a probabilistic snapshot over $\sens$. For any complete $\ast$-selection $O$ on $\sens$ we define the snapshot $O\ast_{q}\snap{S}$ to be the snapshot with weights determined by
\begin{equation}
	\witness{ab}(O\ast_{q}\snap{S}):=q\witness{ab}(\snap{S})+(1-q)\ev{\indicator{O}}{a}\cdot\ev{\indicator{O}}{b}
\end{equation}
The state of $O\ast_q\snap{S}$ is set to $\coh{O}$, the reduction being computed with respect to the weak poc set structure derived from the new weights. Finally, define the \emph{$q$-discounted update} of $\snap{S}$ to be the snapshot $\snapfloor{O\ast_q\snap{S}}$ and we refer to $q$ as the \emph{decay parameter}.\defstop
\end{defn}
A significant advantage of the discounted update is its applicability to arbitrary probabilistic snapshots:
\begin{lemma}\label{lemma:discounted update} The $q$-discounted update of a probabilistic snapshot by a complete $\ast$-selection is probabilistic.\defstop
\end{lemma}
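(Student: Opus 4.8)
The plan is to decompose the $q$-discounted update into its two constituent operations — the raw discounted mixing $O\ast_q\snap{S}$ and the truncation $\snapfloor{\,\wild\,}$ — and to treat them separately, reducing the second to the already-established Proposition \ref{snapshot truncation}.

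First I would show that $O\ast_q\snap{S}$ is itself probabilistic. The key observation is that the three defining constraints in Definition \ref{defn:probabilistic snapshot} are \emph{convexity-friendly}: consistency \eqref{eqn:consistency identity} and the orientation cocycle \eqref{eqn:cocycle identity} are homogeneous linear relations among the edge weights, while normalization \eqref{eqn:normalization identity} is affine with the value $1$ on its right-hand side, which is preserved by any convex combination since $q\cdot 1+(1-q)\cdot 1=1$; non-negativity (Definition \ref{defn:snapshot}(b)) is likewise preserved because $q,1-q\geq 0$. Hence the set of weight systems satisfying all of Definition \ref{defn:probabilistic snapshot}'s constraints is closed under convex combination.

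Next I would identify the second summand of the mixing formula as itself a probabilistic weight system. Writing $c_{ab}:=\ev{\indicator{O}}{a}\cdot\ev{\indicator{O}}{b}=\indicator{O}(a)\indicator{O}(b)$, these are exactly the coincidence indicators of the single complete $\ast$-selection $O$, and the identities \eqref{indicator identities} — established for precisely such indicators — say verbatim that $(c_{ab})$ satisfies consistency (via $c_{aa}=c_{ab}+c_{ab^\ast}$, independent of $b$), normalization (the fourth identity) and the cocycle identity (the fifth). Since the old weights $\witness{ab}(\snap{S})$ satisfy the same constraints by hypothesis, the mixed weights $\witness{ab}(O\ast_q\snap{S})=q\witness{ab}(\snap{S})+(1-q)c_{ab}$ form a convex combination of two probabilistic weight systems, hence are probabilistic by the previous step. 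The state of $O\ast_q\snap{S}$ is set to $\coh{O}$, a coherent $\ast$-selection by construction of the coherent projection, so the state requirement holds as well; the learning thresholds are untouched by the update and retain their symmetry. This establishes $O\ast_q\snap{S}\in\probabilistic{\sens}$.

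Finally, the $q$-discounted update is by definition $\snapfloor{O\ast_q\snap{S}}$, and Proposition \ref{snapshot truncation} — whose proof verifies directly that the truncation of a probabilistic snapshot is again probabilistic — applies to the probabilistic snapshot $O\ast_q\snap{S}$, yielding the claim. I expect the only genuinely delicate point to be the bookkeeping in the convexity step: one must check that normalization, the single inhomogeneous constraint, survives the combination, which is exactly where the identity $q+(1-q)=1$ is used rather than mere non-negativity of the coefficients. Everything else is a matter of matching the constraints of Definition \ref{defn:probabilistic snapshot} against the indicator identities \eqref{indicator identities} one at a time.
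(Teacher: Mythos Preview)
Your proposal is correct and follows essentially the same two-step decomposition as the paper: first verify that $O\ast_q\snap{S}$ is probabilistic, then invoke Proposition~\ref{snapshot truncation} for the truncation. The paper's proof is terse to the point of simply asserting the first step (``it is clear that the discounted update preserves probabilisticity''), whereas you supply the convexity argument explicitly; your observation that the normalization constraint is the one place where $q+(1-q)=1$ matters, rather than mere non-negativity, is precisely the content the paper leaves to the reader.
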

\begin{proof} It is clear that the discounted update preserves probabilisticity. Proposition \ref{snapshot truncation} finishes the proof.
\end{proof}

Consider the length of time (or the amount of evidence) it takes a discounted snapshot to acquire an implication, compared to the amount of evidence required for giving up an implication already on record.

Assuming a fixed value of the decay parameter $q$ over a considerable length of time, a lower bound on the amount of time $\Delta t$ required for $\witness{ab^\ast}$ to become small enough for a relation $a\leq b$ to be put on record is given by the situation when a long enough sequence of consecutive observations with $a\wedge b^\ast$ {\it not} occurring is made:
\begin{equation}
	q^{\Delta t}<\threshold{ab}\IFF \Delta t>\frac{\log_2\threshold{ab}}{\log_2 q}
\end{equation}
On the other hand, once the relation $a\leq b$ has been put on record, the number $\Delta t$ of successive observations of $a\wedge b^\ast$ required for replacing this relation with $a\pitchfork b$ must satisfy:
\begin{equation}
	\Delta t(1-q)>\threshold{ab}\IFF\Delta t>\frac{\threshold{ab}}{1-q}
\end{equation}
--~ this much is guaranteed by the truncation mechanism. Overall, it seems that choosing a value of $q$ with $1-q$ sufficiently small should produce meaningful learning: lower values of $\threshold{ab}$ make it both harder to learn and easier to unlearn a false relation, while maintaining a qualitative difference between the necessary requirements for either process.
 
\medskip
Keeping $q$ fixed over long periods of time places an emphasis on the values of the learning thresholds $\threshold{ab}$. As these values do not have to be chosen uniformly over the snapshot, one might want to vary the values of the learning thresholds individually with the aim of altering the flexibility of the learning process in the corresponding square. This opens up a doorway to employing methods for varying the learning thresholds and the decay parameter in ways analogous to \cite{Martius_et_al-infotaxis_driven_self_organization} and \cite{Cobo_Isbell_Thomasz-object_focused_Qlearning} as a means of improving the quality/dependability of the model space. The simulation results below emphasize the need for this kind of control, showing that a discounted agent is much more susceptible to changes in geometry and topology/combinatorics of the sensor fields than an empirical one.

\medskip
\subsubsection{Performance Analysis}\label{discounted:performance}
Figure \ref{fig:discounted learners} compares the mean performance of time-discounted snapshot learning from a random walk in the four settings described earlier in \ref{empirical:performance}, for the values of the decay parameter $q$ given by $q=1-\tfrac{1}{2^{k+2}}$, $k=\{0,\ldots,9\}$.

One immediately notices, in comparison with the empirical case, that the dependence of the learning process on the discount parameter is not monotone: it would seem that a choice of $k=5$ works best for all settings in terms of optimizing the eventual deviation,~ ---~ though it is hard to say what `best' would even mean for (d)~ ---~ while a choice of $k=4$ is more reasonable given the observed waiting time until meaningful learning occurs in the structured environments (a)-(c). 

\begin{figure}[t]
	\begin{center}
		\includegraphics[width=\columnwidth]{./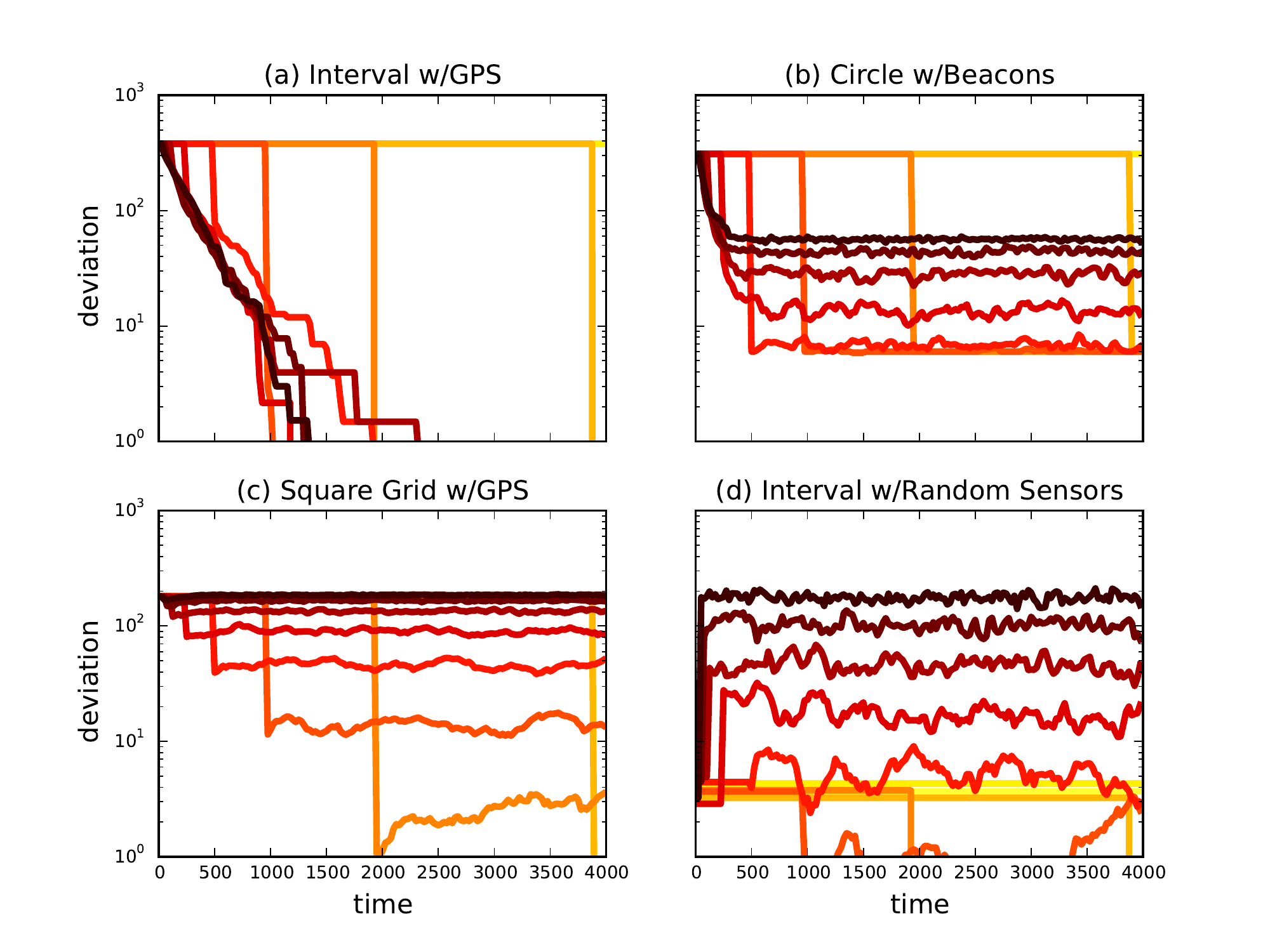}
		\caption{\small
		Mean number of incorrect edges in the derived poc graph of a discounted snapshot in 4 environments (20 sensors each) for varying values of the decay parameter, $q=1-\tfrac{1}{2^{k+2}}$, $k$ from $0$ (red/dark) to $9$ (yellow/light), averaged over 50 runs of a random walk.
		\normalsize\label{fig:discounted learners}}
	\end{center}
\end{figure}

Similar observations to those made for the empirical case (figure \ref{fig:empirical learners}) regarding the interplay between `learning modes' and geometry/topology can be made here as well, but are more subtle, as the comparison in figure \ref{fig:learning quality} shows.

\begin{figure}[t]
	\begin{center}
		\includegraphics[width=\columnwidth]{./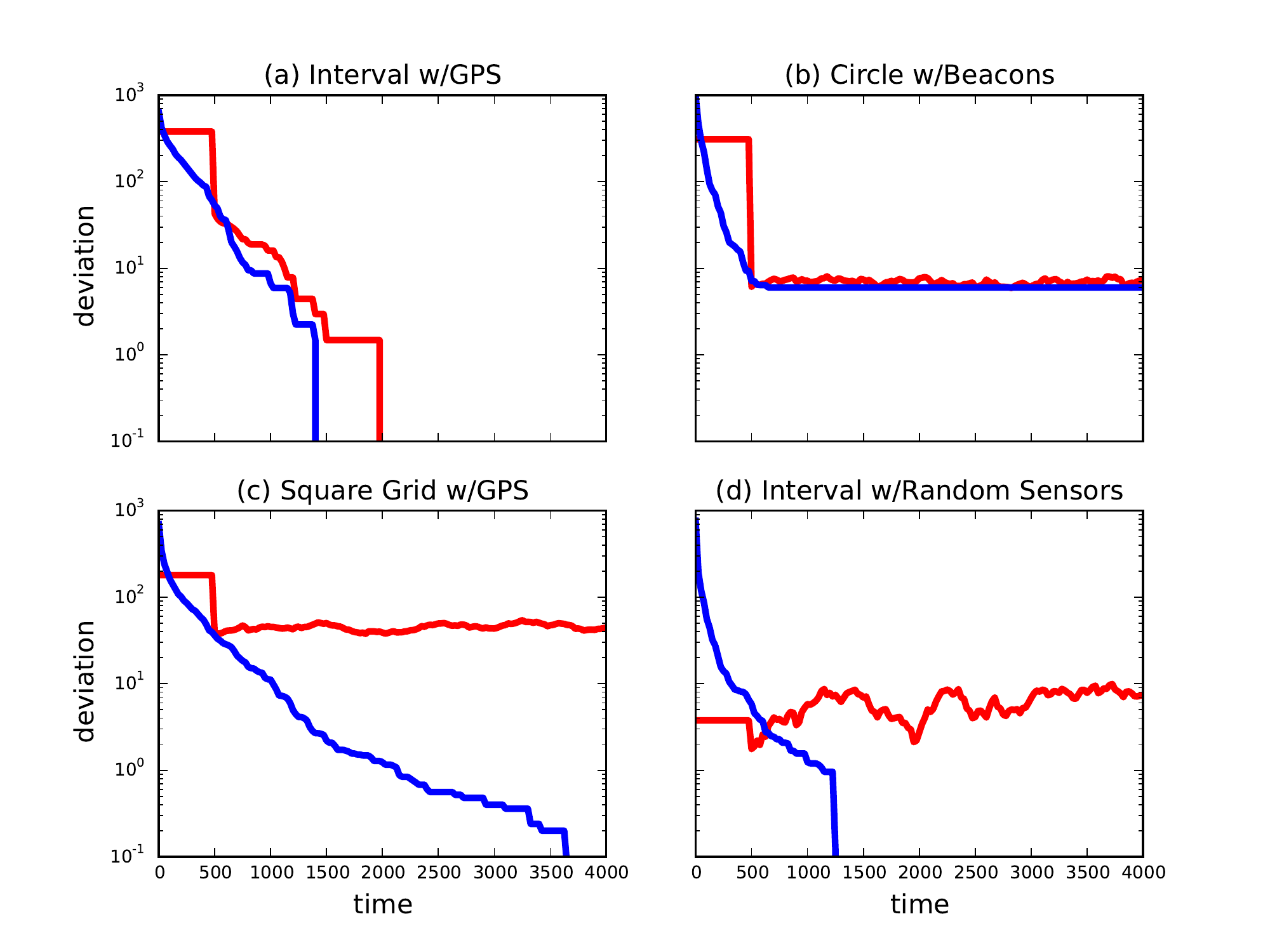}
		\caption{\small A comparison of the mean number of incorrect edges in the derived poc graph as a function of time, for an empirical snapshot (blue) and a discounted one (red). Here $\tau=1/20^3$ and $q=1-\tfrac{1}{2^6}$. \normalsize\label{fig:learning quality}}
	\end{center}
\end{figure}

\subsection{Further Adjustments to the Weak Poc Structure}\label{subsection:equivalences} The implication record constructed from a probabilistic snapshot in the preceding section does not recognize possible equivalences among sensations: if, for whatever reason, a relation of the form 
\begin{equation}\label{eqn:equivalence}
	\witness{ab^\ast}=\witness{a^\ast b}=0
\end{equation}
takes place in a snapshot $\snap{S}$, it becomes reasonable to interpret it as the logical equivalence $a\IFF b$, yet $\snapdir{\snap{S}}$ will not register any relations in the square $\snap{S}\res{ab}$, barring an agent equipped with $\snap{S}$ from utilizing the currently observed equivalence. 

Thus, an adjustment of $\snapdir{\snap{S}}$ is required if we are to allow our agents the advantage of reasoning about equivalences. The following extension of $\snapdir{\snap{S}}$ turns out to serve our purposes for a restricted class of probabilistic snapshots:
\begin{defn}\label{defn:poc graph with equivalences} Let $\snap{S}$ be a probabilistic snapshot. The poc graph $\snapdir{\snap{S}}_0$ is defined to be the poc graph obtained from $\snapdir{\snap{S}}$ by adding the directed edges $ab,ba,a^\ast b^\ast,b^\ast a^\ast$ for each $ab\in\pocg$ satisfying \eqref{eqn:equivalence}.\defstop
\end{defn}

It turns out that $\snapdir{\snap{S}}_0$ gives rise to an adequate weak poc set structure and model space, provided $\snap{S}$ satisfies the additional requirement:
\begin{defn} A snapshot $\snap{S}$ is said to satisfy the {\it triangle inequality}, if
\begin{equation}\label{eqn:triangle inequality for snapshots}
	\witness{a^\ast b}+\witness{ab^\ast}+\witness{b^\ast c}+\witness{bc^\ast}\geq\witness{a^\ast c}+\witness{ac^\ast}
\end{equation}
holds whenever $ab,bc,ac\in\pocg$.\defstop
\end{defn}

A class of examples of special significance in this work is that of snapshots $\snap{S}$ whose edge weights are derived from a measure $\mu$ on a space $Z$ by pulling back along a realization $\rho:\sens\to Z$ as follows:
\begin{equation}
	\witness{ab}=\mu\left(\rho(a)\cap\rho(b)\right)
\end{equation}
The triangle inequality for $\snap{S}$ is then an immediate consequence of the well-known (e.g. \cite{Deza_Laurent-cutbook}, chapter 3) triangle inequality for measures:
\begin{equation}
	\mu(A\symplus C)\leq\mu(A\symplus B)+\mu(B\symplus C)\,,
\end{equation}
where $A,B,C\subset Z$ are arbitrary measurable sets and $A\symplus B$ denotes the symmetric difference $(A\minus B)\cup(B\minus A)$.

The coincidence indicators $c_{ab}^t$ of \eqref{eqn:coincidence indicators} are a special case of this example (where $\mu$ is an atomic measure), and so are empirical snapshots (as their weights are sums of coincidence indicators). Discounted snapshots fall into this class, too, as their weights are convex combinations of coincidence indicators.

Due to the technical nature of the interactions between $\snapdir{\snap{S}}$ and the extension $\snapdir{\snap{S}}_0$, we postpone the formal discussion of these interactions to appendix \ref{proofs:adding equivalences}. The bottom line, however, is that for any probabilistic snapshot structure satisfying the triangle inequality our agent may safely apply the control protocols of the next section to the {\it extended} poc graph derived from the agent's current snapshot to arrive at action choices while taking advantage of the perceived equivalences within the sensorium. 

Although technically we are obliged to distinguish between $\snapdir{\snap{S}}$ and $\snapdir{\snap{S}}_0$, as well as between the weak poc set structures they correspond to, we will treat these objects as identical for the sake of simplifying the rest of the exposition.

\section{Control with Snapshots}\label{section:control}
This section introduces the basic control function of a snapshot. We begin with introducing a formalism designed to treat discrete actions as a sub-structure of the binary sensorium, and discuss the effect of this formalism on shaping the model space (\ref{subsection:actions}). We next turn to a discussion of the snapshot $\snap{S}\at{t}$ as a highly efficient computational mechanism for coherent state-updating and for decision making based on the geometry of the model space $\model\at{t}$ (\ref{subsection:planning}). 

At the technical level, this section requires an understanding of the convexity theory of cubings: The classical results are covered in appendix \ref{subsection:convexity}, while our new technical results underlying the use of snapshots for greedy navigation in cubings are covered in appendix \ref{subsection:greedy navigation}.

Building on these results, section \ref{propagation} introduces the mechanism of {\it signal propagation over a snapshot} which realizes the computation both of coherent projection and of closest point projections to prescribed convex subsets of the associated model space. This mechanism suggests a view of snapshot architecture as highly simplified connectionist architectures, and some related work in the literature is discussed.

At the heart of our proposed decision-making algorithm is an assumption that the sensorium is rich enough to detect direct causal relations between actions and other sensations. We provide a fairly broad formalization of this assumption in \ref{evaluating an action} (with an example in \ref{example:N-path with propagation}), and prove the ability of an agent to correctly `halucinate' the immediate consequences of taking an action, provided sufficient exposure.

An algorithm using this tool to attempt greedy navigation over $\model\at{t}$ to a specified target state is proposed in \ref{general planning}, and some of its failure modes are discussed as a motivation for future research on judicious dynamical expansions of the sensorium which would allow the agent to overcome the navigational obstructions formed by states in $\model\at{t}$ having no witness in the situation space $\spc$.

Finally, in \ref{gradient descent}, we explore the performance of some {\it excitation-driven DBAs}: agents endowed with an excitation level that changes depending on their distance from a target in the environment $\env$; the agents are capable of sensing an increase or a decrease in excitation, and seek instant gratification in the sense of operating on the mandate to always pick an action guaranteeing an increase in excitation (or else act randomly). We compare the performance of such agents in the domains considered in \ref{empirical:performance} and \ref{discounted:performance}; in these domains it is easy to guarantee arrival to the target given {\it a-priori} knowledge of the correct snapshot structure, but we are interested in the agent's performance as they learn the problem "from scratch".

\subsection{Defining Actions}\label{subsection:actions}
We will now restrict attention to DBAs with a sensorium $\sens$ endowed only with state (degree $0$) and transition (degree $1$) sensors. As before, we denote the realization of a sensor $a\in\sens$ by $\rho(a)$, where $\rho(a)\subseteq X$ for a state sensor and $\rho(a)\subseteq X\times X$ for a transition sensor. Thus, state sensors and transition sensors may be viewed as Boolean and situational fluents over the situation space $\spc$, which is sufficient for setting up a discussion of actions and competencies according to McCarthy and Hayes \cite{McCarthy_Hayes-Philosophical_probs_of_AI}.

For our agents, we posit a set $\actions\subset\sens$ of transition sensors, each of which may be switched on and off {\it at will}, earning them the name of {\it actions}. To be precise, our requirements are:
\begin{itemize}
	\item{\bf Actions are binary. } We assume $\actions\cap\actions^\ast=\varnothing$, and we denote the poc subset $\actions\cup\actions^\ast\cup\{\minP,\maxP\}$ by $\pact$.
	\item{\bf Every action has outcomes. } For any $\alpha\in\actions$ and $x\in\spc$, the sets
	\begin{equation}
		\alpha(x)=\set{y\in\spc}{x\times y\in \rho(\alpha)}
	\end{equation}
	are non-empty subsets of $\spc$.
\end{itemize}
In this we depart slightly from the accepted notion of actions in the literature on transition systems of various flavors (e.g. \cite{Erdmann-IJRR'09-StrategySpace},\cite{Sutton_Precup_Singh-SMDPs_temporal_abstraction_in_RL}), where actions are attached to states and the collection of actions available at each state may differ, depending on that state. Instead, we consider actions as nothing more than control signals, sent by the agent's `mind' to the agent's `body' in order to invoke (or not) one or more of a fixed set of available behaviors. It is the purpose of the `mind' to identify whether or not a control signal produces meaningful outcomes as those outcomes are being sensed.

\medskip
\subsubsection{Invoking Actions Synchronously}\label{synchronous actions} Our sensor-centric approach to actions reflects the viewpoint that (1) an action $\alpha\in\actions$ taken at a state $x\in\spc$ imposes a time-independent restriction on the set of states the system may enter in the following moment, and (2) the agent is capable~ ---~ at least in principle~ ---~ of observing its own decisions as they are being invoked. We must now discuss the precise extent to which these principles may or may not restrict our initial suggestion that the sensations in $\pact$ are controllable.%\footnote{The $\ast-$symmetry of both requirements makes it natural to speak of $\pact$ as the collection of actions, keeping in mind that the trivial sensors are nothing more than a convenient decoration as far as our modeling effort is concerned.}.

For example, consider the situation where the agent is not engaging in an action $\alpha\in\actions$ during a transition from state $x$ to state $y$. This implies $\alpha^\ast$ is on during this transition, which restricts the possible values of $y$ to $\spc\minus\alpha(x)$. %In the case of an embodied agent, actions will normally represent the various ways in which the agent applies force to its environment. 
Hence, not invoking any of the available $\alpha\in\actions$ must then restrict $y$ to the intersection $\bigcap_{\alpha\in\actions}\spc\minus\alpha(x)$, the set of possible outcomes of the "no-action".

More generally, allowing a number of actions to be taken at the same time (while not engaging in the rest) forces the following interpretation by our sensing model:
\begin{enumerate}
	\item A {\it generalized action} by the agent is a complete $\ast$-selection $A$ on $\pact$ (recall definition \ref{defn:selection});
	\item The realization of a (generalized) action $A\in S(\pact)^0$ is defined to be
	\begin{equation}
		\rho(A)=\bigcap_{\alpha\in A}\rho(\alpha)\,,
	\end{equation}
	or, equivalently, for every $x\in\spc$, the set of possible outcomes of an action $A$ equals
	\begin{equation}
		A(x)=\bigcap_{\alpha\in A}\alpha(x)
	\end{equation}
\end{enumerate}

For this extended collection of actions one notices that the second requirement of an action~ ---~ $A(x)\neq\varnothing$ for all $x\in\spc$~ ---~ does not necessarily hold: for example, moving forward along a rail contradicts any motion in the opposite direction. We will say that a generalized action $A\in S(\pact)^0$ is {\it admissible at $x\in\spc$} if $A(x)\neq\varnothing$, and that $A$ is {\it admissible}, if it is admissible at $x$ for all $x\in\spc$.

Aside from setting natural bounds on the meaning of the initial statement that actions are available to the agent at will, the notion of admissibility of a generalized action explains how to interpret the poc set structure induced on $\pact$ from the realization $\rho$: if
\begin{equation}
	\alpha<\beta\IFF\rho(\alpha)\subset \rho(\beta)
\end{equation}
happens to hold for $\ppoc\at{t}$, then every generalized action admissible at a point $x\in\spc$ defines a vertex of $\cube{\pact}$ no matter the choice of $x\in\spc$. Similarly, generalized actions not showing up as vertices $\cube{\pact\at{t}}$, where $\pact\at{t}$ denotes the restriction of the poc set structure $\ppoc\at{t}$ to $\pact$, represent the agent's belief at time $t$ regarding combinations of elementary actions it cannot achieve at that moment.

In the simple examples considered in this paper all agents will be endowed with a collection of mutually exclusive atomic actions. By this we mean that $\alpha<\beta^\ast$ holds for all $\alpha,\beta\in\actions$ ($\alpha\neq\beta$). Equivalently, only the "no-action", $\set{\alpha^\ast}{\alpha\in\actions}$, and the "pure" actions $\{\alpha\}\cup\set{\beta^\ast}{\beta\in\actions\minus\{\alpha\}}$ are admissible, and the resulting cubing $\cube{\pact}$ takes the form of a {\it starfish}: a tree with only one vertex of degree$\geq 2$ given by the "no-action" and with a set of leaves in one-to-one correspondence with the set of "pure" actions (see example \ref{example:N-path} and figure \ref{fig:N-path}b).

\medskip
\subsubsection{Observations}
The following set is the set of {\it observations} in $\ppoc$ (note that it is closed under the $\ast$-operator):
\begin{equation}\label{defn:observations}
	\pobs:=(\sens\minus\pact)\cup\{\minP,\maxP\}\,,
\end{equation}
and stands for the set of "passive" sensations, as opposed to actions. Sections \ref{subsection:snapshot preliminaries}-\ref{subsection:equivalences} explain how a trajectory $\varphi\at{t}$, $t\geq 0$ may be used to form an evolving sequence of weak poc-set structures $(\ppoc\at{t})_{t\geq 0}$ over $\sens$, with each $\ppoc\at{t}$ representing the agent's belief at time $t$ regarding which implications among the sensors in $\sens$ hold true throughout time. Two poc subsets of $\ppoc\at{t}$ are formed by restricting its poc structure:
\begin{itemize}
	\item $\pact\at{t}$ is the induced poc structure on $\pact$; 
	\item $\pobs\at{t}$ is the induced poc structure on $\pobs$.
\end{itemize}
We are interested in the interaction between these smaller poc sets and the full model space, $\cube{\ppoc\at{t}}$. One has two surjections
\begin{equation}
	\begin{array}{c}
		proj_{act}:\cube{\ppoc\at{t}}\to\cube{\pact\at{t}}\\
		proj_{obs}:\cube{\ppoc\at{t}}\to\cube{\pobs\at{t}}
	\end{array}
\end{equation}
defined, at the level of $0$-skeleta, as follows: $proj_{act}$ sends a coherent $\ast$-selection $A$ on $\ppoc\at{t}$ to the $\ast$-selection $A\cap\pact$, and similarly for $proj_{obs}$. Hence, at the level of $0$-skeleta, there is a map:
\begin{equation}\label{eqn:representing in product}
	\cube{\ppoc\at{t}}\to\cube{\pact\at{t}}\times\cube{\pobs\at{t}}
\end{equation}
In fact, Sageev-Roller duality \cite{Roller-duality} implies a much more precise statement:
\begin{prop}\label{prop:representing in product} The map above \eqref{eqn:representing in product} is a median-preserving embedding of $\cube{\ppoc\at{t}}$ in the Cartesian product $\cube{\pact\at{t}}\times\cube{\pobs\at{t}}$.
\end{prop}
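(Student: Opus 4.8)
The plan is to prove the statement on the level of $0$-skeleta --- where the median structure lives --- and then invoke Sageev-Roller duality \cite{Roller-duality} to extend the conclusion to the full cubings. Recall that a vertex of $\cube{\ppoc\at{t}}$ is a coherent complete $\ast$-selection (an ultrafilter) $\sigma\subseteq\sens$, that $proj_{act}$ and $proj_{obs}$ are the restrictions $\sigma\mapsto\sigma\cap\pact$ and $\sigma\mapsto\sigma\cap\pobs$, and that these are precisely the dual maps of the poc-subset inclusions $\pact\into\ppoc\at{t}$ and $\pobs\into\ppoc\at{t}$. I would first record the (routine) check that $\sigma\cap\pact\in\pact^\circ$ and $\sigma\cap\pobs\in\pobs^\circ$: completeness is immediate since $\pact,\pobs$ are $\ast$-closed, while both coherence and upward-closure are inherited because the order on $\pact\at{t}$ (resp. $\pobs\at{t}$) is the restriction of the order on $\ppoc\at{t}$.

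For median-preservation I would argue directly. The median $\med{\sigma_1}{\sigma_2}{\sigma_3}$ of three vertices is computed by majority vote: for each $a\in\sens$ the resulting $\ast$-selection keeps $a$ precisely when $a$ lies in at least two of the $\sigma_i$ (this is single-valued since there are three $\sigma_i$). Since restriction to the $\ast$-closed subset $\pact$ merely forgets the coordinates outside $\pact$, and majority vote is computed coordinate-by-coordinate, restriction commutes with the median:
\begin{equation*}
	\med{\sigma_1}{\sigma_2}{\sigma_3}\cap\pact=\med{\sigma_1\cap\pact}{\sigma_2\cap\pact}{\sigma_3\cap\pact}\,,
\end{equation*}
and likewise for $\pobs$. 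As the median on the product $\cube{\pact\at{t}}\times\cube{\pobs\at{t}}$ is taken coordinatewise, this shows that $\Phi:=(proj_{act},proj_{obs})$ preserves medians. Equivalently, one may simply cite that dual maps of poc morphisms are median-preserving and that a pairing of median-preserving maps is median-preserving.

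Injectivity is where the hypothesis on the sensorium enters. Because $\pact\cup\pobs=\actions\cup\actions^\ast\cup(\sens\minus\pact)\cup\{\minP,\maxP\}=\sens$, every $a$ lies in $\pact$ or in $\pobs$; hence the pair $(\sigma\cap\pact,\sigma\cap\pobs)$ determines, for each $a$, whether $a\in\sigma$, so $\Phi$ is injective on the $0$-skeleton. Structurally, this is the statement that $\Phi=\pi^\circ$ is the dual of the canonical surjection $\pi:\pact\at{t}\oplus\pobs\at{t}\onto\ppoc\at{t}$ of the direct sum (example \ref{example:direct sum}) onto $\ppoc\at{t}$ that is the identity on underlying sensors, together with the product--coproduct duality $\cube{\pact\at{t}\oplus\pobs\at{t}}=\cube{\pact\at{t}}\times\cube{\pobs\at{t}}$; surjectivity of $\pi$ dualizes to injectivity of $\pi^\circ$.

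The main obstacle is the last step: upgrading an injective, median-preserving map of $0$-skeleta to a genuine embedding of the cubical complexes. Here I would appeal to the halfspace/wall description underlying the duality. The genuine walls of $\cube{\pact\at{t}}\times\cube{\pobs\at{t}}$ are exactly the elements of $\pact\minus\{\minP,\maxP\}$ together with those of $\pobs\minus\{\minP,\maxP\}$, and under $\Phi$ the wall of $\cube{\ppoc\at{t}}$ dual to a pair $\{a,a^\ast\}$ is traced by the corresponding wall in whichever single factor contains $a$. Since $\pact\cap\pobs=\{\minP,\maxP\}$ carries no genuine wall, distinct pairs $\{a,a^\ast\}\neq\{b,b^\ast\}$ are never identified; in particular two adjacent vertices of $\cube{\ppoc\at{t}}$ (differing in a single pair) map to two vertices differing in a single wall of a single factor, i.e.\ to an edge, and no cube is folded. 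By the functoriality of Sageev-Roller duality this non-collapsing of walls is precisely the condition guaranteeing that $\Phi$ extends to a median-preserving cubical embedding, completing the proof. Verifying that the product cubing's wall set is exactly $\pact\sqcup\pobs$ --- so that no two walls of $\ppoc\at{t}$ collapse --- is the only genuinely non-formal point.
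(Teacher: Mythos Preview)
Your proposal is correct and takes essentially the same approach as the paper: both identify the map \eqref{eqn:representing in product} with the dual of the bijective poc morphism $\pact\at{t}\vee\pobs\at{t}\to\ppoc\at{t}$ (your $\pi$), together with the identification $\cube{\pact\at{t}\vee\pobs\at{t}}\cong\cube{\pact\at{t}}\times\cube{\pobs\at{t}}$ from example \ref{example:direct sum}. The only difference is packaging: where you unpack median-preservation and injectivity by hand and then argue the cubical upgrade via walls, the paper simply invokes proposition \ref{prop:some dual properties} (duals of bijective poc morphisms are injective cellular embeddings), which already contains your wall argument as a black box.
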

\begin{proof} See proof in appendix \ref{proofs:representing in product}.
\end{proof}

\medskip
\subsubsection{Example: discrete path with motion}\label{example:path with motion} To illustrate the description of the model space provided by proposition \ref{prop:representing in product}, consider an agent moving in steps of unit length along a path of integer length $L>1$. Formally, the environment is given by $\env=\{0,\ldots,L\}$ and the agent has the actions defined by:
\begin{equation}\label{eqn:move_example}
	\begin{array}{rcl}
	y\in\wait(x)
		&\IFF&	\pos(y)=\pos(x)\\
	y\in\fwd(x)
		&\IFF&	\pos(y)=\min\left\{L,\pos(x)+1\right\}\\
	y\in\back(x)
		&\IFF&	\pos(y)=\max\left\{0,\pos(x)-1\right\}
	\end{array}
\end{equation}
%\begin{eqnarray}\label{eqn:move_example}
%	\ev{\wait}{x\times y}=1
%		&\IFF&	\pos(y)=\pos(x)\,,\\
%	\ev{\fwd}{x\times y}=1
%		&\IFF&	\pos(y)=\min\left\{L,\pos(x)+1\right\}\,,\\
%	\ev{\back}{x\times y}=1
%		&\IFF&	\pos(y)=\max\left\{0,\pos(x)-1\right\}\,,
%\end{eqnarray}
enabling motion from any vertex $k\in\env$ to the adjacent $k+1$ and $k-1$, when they exist. We also endow the agent with sensors $a_1,\ldots,a_L\in\sens$ realized as:
\begin{equation}\label{eqn:sensors_on_interval}
	\ev{a_k}{x}=1\IFF\pos(x)<k
\end{equation}

Up to symmetry, the only relations holding in the existing scheme are
\begin{equation}
	a_1<a_2<\ldots<a_L\,,
\end{equation}
the `starfish' relations for $\pact$: 
\begin{equation}
	\fwd<\back^\ast\,,\quad
	\back<\wait^\ast\,,\quad
	\wait<\fwd^\ast\,,
\end{equation}
for the actions $\{\fwd,\back,\wait\}$, and the two relations
\begin{equation}\label{eqn:boundary relations}
	\fwd<a_1^\ast\,,\quad
	\back<a_L\,,
\end{equation}
indicating $\pos(x)=0$ may not be reached by applying $\fwd$, while $\pos(x)=L$ may not be reached by applying $\back$. No other relations hold universally. Let $\ppoc$ denote the poc set structure over $\sens$ recording these relations.  

\begin{figure}[ht]
	\begin{center}
		\includegraphics[width=\columnwidth]{./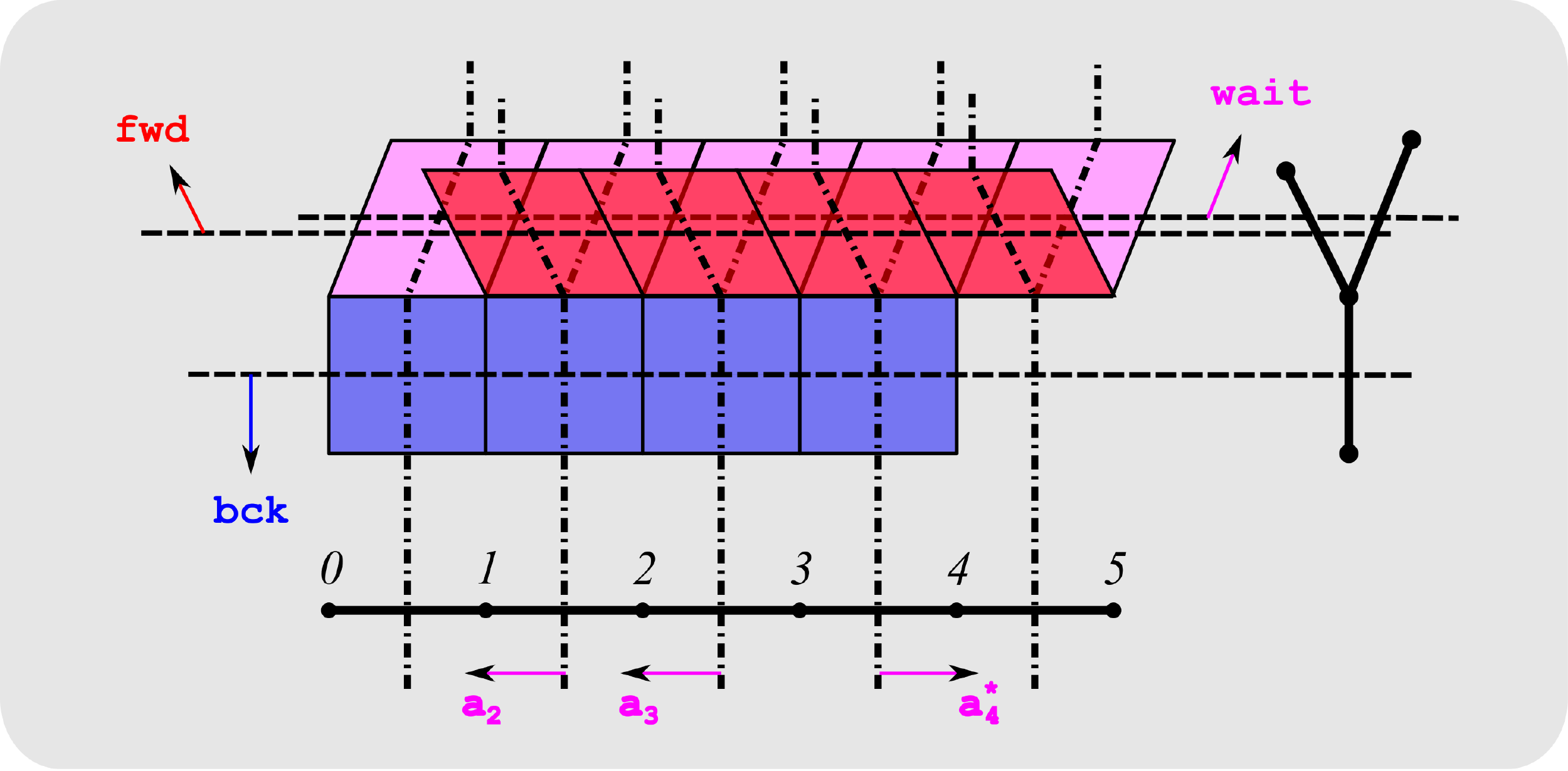}
		\caption{\small Model space for a DBA placed in a discrete path, depicted together with its projections to $\cube{\pact}$ (right) and to $\cube{\pobs}$ (below). This is the case $L=5$ of example \ref{example:path with motion}. \normalsize}\label{fig:interval with motion, basic}
	\end{center}
\end{figure}

$\cube{\ppoc}$ is the result of forming the Cartesian product of a $3$-pronged starfish $\cube{\pact}$ with the path of length $L$ obtained\footnote{Compare with example \ref{example:N-path} and figure \ref{fig:N-path}.} as $\cube{\pobs}$, and then removing two squares as shown in figure \ref{fig:interval with motion, basic}, due to the relation in \eqref{eqn:boundary relations}.

\subsection{Reactive Planning}\label{subsection:planning} 
\subsubsection{Statement of the planning problem}\label{planning problem statement} In this section we consider a DBA at time $t>0$, equipped with a snapshot $\snap{S}\at{t}$ with a derived poc graph $\pog\at{t}=\snapdir{\snap{S}\at{t}}$ and associated weak poc set $\ppoc\at{t}$ (but keep in mind the notational simplifications at the end of \ref{subsection:equivalences}). The agent's tasks at hand are:
\begin{description}
	\item[\bf{(T1)}] Predict the immediate outcome of any $\alpha\in\pact\at{t}$ (or, more generally, of any $A\in\cube{\pact\at{t}}$);
	\item[\bf{(T2)}] Given a set $T\subset\sens$ of target sensations to be achieved {\it jointly}, decide on a (generalized) action $\decide{t}$ for the agent to invoke in the next transition.
\end{description}
Both tasks need to be achieved based on the agent's record of the current state, $\current{t}=\state{}\snap{S}\at{t}$, which is a coherent (though not necessarily complete) $\ast$-selection on $\ppoc\at{t}$ obtained from the complete $\ast$-selection $\observe{t}$ representing the agent's raw observation of the current state at time $t$, by coherent projection \eqref{eqn:coherent projection}. We will keep all the above notation fixed through the rest of this section.

It is crucial that we interpret these tasks in terms of the model space $\model\at{t}=\cube{\ppoc\at{t}}$. For any subset $B\subset\sens$, define the set:
\begin{equation}\label{eqn:easy defn of convex set}
	\half{B}:=\set{V\subset\sens}{V\text{ is a vertex of }\model\at{t}\text{ containing } B}
\end{equation}
These are known to be {\it precisely} the convex subsets of the $1$-skeleton of $\model\at{t}$ (see appendix \ref{subsection:convexity}). Thus, the agent is assigned the problem of reaching the convex set $\half{T}$ from a (possibly unknown) position in the convex set $\half{\current{t}}$.

\medskip
\subsubsection{Signal Propagation over a Snapshot}\label{propagation} The purpose of $\pog\at{t}$ is to serve as an inference tool for the agent. Recall that $\pog\at{t}$ is formed from the weight structure of the snapshot $\snap{S}\at{t}$, which, in turn, is a result of updating the weights on $\snap{S}\at{t-1}$ with the {\it raw} observation $\observe{t}$. The last step of the update is the `loading' of $\pog\at{t}$ with the current state $\current{t}=\coh{\observe{t}}$ of the agent.

\begin{defn} Let $B\subset\sens$. Denote by $[\pog\at{t},B]$ the weighted graph obtained from $\pog\at{t}$ by attaching the Boolean weight $\ev{\indicator{B}}{v}$ to each vertex $v\in\sens$, and refer to it as {\it $\pog\at{t}$ being loaded with $B$}.\defstop
\end{defn}
\begin{defn}\label{defn:propagation algorithm} A \emph{propagation algorithm along $\pog\at{t}$} is any algorithm which, for any coherent load $B\subset\sens$ and \emph{any} $T\subseteq\sens$ accepts $[\pog\at{t},B]$ and $T$ as input and produces as its output the loaded graph $[\pog\at{t},R]$ where $a\in R$ if and only if: 
\begin{enumerate}
	\item there is a directed path in $\pog\at{t}$ from $B\cup T$ to $a$, or --
	\item there is no directed path in $\pog\at{t}$ from $a$ into $T^\ast$.
\end{enumerate}
The set $R\subset\sens$ is said to be the {\it result of propagating the signal $T$ along $[\pog\at{t},B]$}.\defstop
\end{defn}
Applying the convexity theory of the model space $\model\at{t}$~ ---~ specifically corollary \ref{cor:projection by propagation} and proposition \ref{prop:adding equivalences}~ ---~ we find the following applications for propagation:
\begin{lemma}[Implementing the State Update]\label{lemma:propagation gives coherent projection} For any propagation algorithm, propagating the signal $\observe{t}$ along $[\pog\at{t},\varnothing]$ produces $\current{t}=\coh{\observe{t}}$.\defstop
\end{lemma}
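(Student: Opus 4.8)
The plan is to read this lemma as the empty-load specialization of the general nearest-point-projection statement, corollary \ref{cor:projection by propagation}, together with the defining property of coherent projection. Recall from appendix \ref{coherent projection} that $\coh{\observe{t}}$ in \eqref{eqn:coherent projection} is characterized as the median (nearest-point) projection of the raw complete observation $\observe{t}$ onto the model space $\model\at{t}$: concretely, $\half{\coh{\observe{t}}}$ is the smallest convex subcomplex of $\model\at{t}$ containing every vertex of $\model\at{t}$ closest to $\observe{t}$, equivalently the set cut out by exactly those halfspaces that survive the incoherences of $\observe{t}$. The naturality requirements recalled there pin this set down uniquely, so it will suffice to identify the propagated output with it.

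First I would unwind both sides into statements about the poc order $\leq$ of $\ppoc\at{t}=\poc{\snap{S}\at{t}}$. On the propagation side I would specialize Definition \ref{defn:propagation algorithm} to load $B=\varnothing$ and signal $T=\observe{t}$, and translate each occurrence of ``directed path in $\pog\at{t}$'' into the relation $\leq$ using Definition \ref{defn:derived poc set}; the two clauses then become conditions of the form ``$c\leq a$ for some $c\in\observe{t}$'' and ``$a\not\leq c^\ast$ for every $c\in\observe{t}$''. On the projection side I would expand \eqref{eqn:coherent projection} into the corresponding halfspace conditions. Then I would invoke corollary \ref{cor:projection by propagation}: since $\half{\varnothing}$ is all of $\model\at{t}$, the corollary identifies the propagated set $R$ with the nearest-point projection of the configuration $\observe{t}$ onto the entire model space, which is exactly $\coh{\observe{t}}$. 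Because the convention at the end of \ref{subsection:equivalences} allows replacing $\snapdir{\snap{S}\at{t}}$ by its equivalence-extended version $\snapdir{\snap{S}\at{t}}_0$, I would cite proposition \ref{prop:adding equivalences} to certify that propagation along the extended graph still computes the same convex set, so that perceived equivalences are respected.

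The main obstacle — the only genuinely delicate point — is verifying that the propagated output is a \emph{coherent $\ast$-selection} and is the \emph{canonical} projection rather than merely some coherent set. This is exactly where completeness of $\observe{t}$ as a $\ast$-selection is essential: for each proper pair $\{a,a^\ast\}$ one must show that the path-conditions, combined with acyclicity of $\pog\at{t}$ and the contrapositive symmetry $a\leq b\THEN b^\ast\leq a^\ast$, never retain both sides, and that the side retained is precisely the one forced by the surviving halfspaces. I would therefore verify coherence directly (ruling out $a,b\in R$ with $a\leq b^\ast$) and then match minimality against the characterization in appendix \ref{coherent projection}, concluding $R=\coh{\observe{t}}=\current{t}$.
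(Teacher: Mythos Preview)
Your proposal is correct and invokes exactly the same two results the paper cites (corollary \ref{cor:projection by propagation} and proposition \ref{prop:adding equivalences}), so the approach is the paper's. You have, however, over-engineered it and introduced a couple of inaccuracies worth flagging. First, with $B=\varnothing$ the propagation output collapses immediately (via Definition \ref{defn:propagation algorithm} or Algorithm \ref{alg:propagation_by_DFS}) to $\up{\observe{t}}\setminus(\up{\observe{t}})^\ast$, which is \emph{literally} formula \eqref{eqn:coherent projection}; no projection-theoretic detour is required, and the $S=\varnothing$ case of corollary \ref{cor:projection by propagation} is just this identity. Second, appendix \ref{coherent projection} does \emph{not} characterize $\coh{A}$ as a nearest-point projection of the raw observation onto $\model\at{t}$: Proposition \ref{prop:coherent projection} gives only the set-theoretic formula together with a direct proof of coherence, and in corollary \ref{cor:projection by propagation} the projection runs from $L=\half{S}$ onto $K=\half{\coh{T}}$, not from $T$ onto the model space. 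Your ``main obstacle'' is therefore already handled by Proposition \ref{prop:coherent projection}, and completeness of $\observe{t}$ plays no role in this particular lemma.
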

\begin{lemma}[Reasoning in Snapshots]\label{lemma:propagation gives closest point projection} Let $T\subset\sens$ be any set. For any propagation algorithm, propagating the signal $T$ along $[\pog\at{t},\current{t}]$ produces the projection in $\model\at{t}$ of the current state $\half{\current{t}}$ onto the reduced target $\half{\coh{T}}\subset\model\at{t}$.
\end{lemma}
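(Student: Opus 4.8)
The plan is to reduce the statement to the convexity theory of the model space (appendix \ref{subsection:convexity}) together with the two appendix results already cited, and then to verify a purely combinatorial translation between the propagation rule of definition \ref{defn:propagation algorithm} and the half-space description of the closest-point (gate) projection. The two standing facts I would lean on are: (i) the sets $\half{B}$ are exactly the convex subsets of the $1$-skeleton of $\model\at{t}$, so both the current-state region $\half{\current{t}}$ and the target $\half{\coh{T}}$ are convex, and the closest-point projection between them is well defined and again of the form $\half{R}$ (median / CAT(0) cube complexes are gated); and (ii) Sageev--Roller duality \cite{Roller-duality}, which supplies the dictionary in which a vertex of $\model\at{t}$ is a coherent complete $\ast$-selection, the half-space $\half{\{a\}}$ is the set of vertices containing $a$, and a directed path in $\pog\at{t}$ from $x$ to $a$ is exactly the relation $x\leq a$ in $\poc{\snap{S}\at{t}}$ (definition \ref{defn:derived poc set}). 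Under this dictionary, condition (1) of \ref{defn:propagation algorithm} reads $a\in\up{(\current{t}\cup T)}$, and condition (2) reads $a^\ast\notin\up{T}$.

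First I would record the half-space description of the gate. For convex $C_0=\half{\current{t}}$ and $C_1=\half{\coh{T}}$ the projection $\half{R}=\pi_{C_1}(C_0)$ is determined wall by wall: a half-space $\half{\{a\}}$ contains $\pi_{C_1}(C_0)$ precisely when $\half{\{a\}}\supseteq C_1$, or else when $\half{\{a\}}$ meets $C_1$ and contains $C_0$; the remaining walls cross $\pi_{C_1}(C_0)$ and stay free, while walls on which $C_1$ is itself undecided are dropped. The containment $\half{\{a\}}\supseteq C_1$ is equivalent to $a\in\up{\coh{T}}$, and ``$\half{\{a\}}$ meets $C_1$'' to $a^\ast\notin\up{\coh{T}}$, so the whole description is expressible through up-closures in $\poc{\snap{S}\at{t}}$. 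The task then becomes to show that these up-closure conditions are exactly what the output $[\pog\at{t},R]$ records, when one reads a coordinate as forced if exactly one of $a,a^\ast$ lies in $R$, as free if both do, and as dropped if neither does.

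Next I would discharge the reduction from the raw signal $T$ to its coherent reduction $\coh{T}$ and the passage to perceived equivalences. Lemma \ref{lemma:propagation gives coherent projection} already identifies coherent projection with propagation out of the empty load, which lets me relate the up-closures of $T$ and of $\coh{T}$ without changing the reachable sets; corollary \ref{cor:projection by propagation} packages the single statement ``propagation computes the gate onto an intersection of half-spaces'', and proposition \ref{prop:adding equivalences} upgrades the derived poc graph $\snapdir{\snap{S}\at{t}}$ to its equivalence-extended form $\snapdir{\snap{S}\at{t}}_0$ so that squares satisfying the equivalence relation are traversed in both directions. Assembling: loading $\pog\at{t}$ with $\current{t}$ fixes the $C_0$-side of the wall-by-wall test, propagating $T$ supplies the $C_1$-side through conditions (1)--(2), and the read-off rule above returns $\half{R}=\pi_{\half{\coh{T}}}(\half{\current{t}})$. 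Independence of the particular propagation algorithm is immediate, since $R$ is pinned down by the ``if and only if'' in \ref{defn:propagation algorithm}.

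The hard part will be the combinatorial matching in the middle step: showing that conditions (1) and (2), taken together with the load $\current{t}$, reproduce \emph{exactly} the gate's defining half-spaces, and in particular that they correctly separate forced coordinates from free walls and correctly drop the coordinates on which the coherently reduced target is undecided. The delicate interaction is between the load $\current{t}$ and the signal $T$: one must check that propagating the raw $T$ (rather than first forming $\coh{T}$) still yields the projection onto $\half{\coh{T}}$, which is precisely the point secured by lemma \ref{lemma:propagation gives coherent projection} and corollary \ref{cor:projection by propagation}, and that the equivalence edges added in $\snapdir{\snap{S}\at{t}}_0$ do not create directed cycles that would corrupt the up-closures --- this is where proposition \ref{prop:adding equivalences} is indispensable.
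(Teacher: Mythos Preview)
Your proposal is correct and ultimately rests on the same two appendix results the paper invokes (corollary \ref{cor:projection by propagation} and proposition \ref{prop:adding equivalences}), so the core of the argument coincides with the paper's.

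The main difference is one of economy. The paper treats the lemma as an immediate consequence: corollary \ref{cor:projection by propagation} already gives the closed-form identity
\[
\proj{\half{\coh{T}}}{\half{S}}=(\up{S}\cup\up{T})\minus\up{T}^\ast
\]
for coherent $S$, and one simply substitutes $S=\current{t}$ (noting $\up{\current{t}}=\current{t}$ by proposition \ref{prop:coherent projection}(2)) and matches the right-hand side with the output of propagation as in definition \ref{defn:propagation algorithm} / Algorithm~\ref{alg:propagation_by_DFS}. Your intermediate ``wall-by-wall'' description of the gate is essentially a re-derivation of proposition \ref{prop:general projection}, which the paper has already proved and packaged into corollary \ref{cor:projection by propagation}; it is valid but redundant. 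Two smaller remarks: invoking lemma \ref{lemma:propagation gives coherent projection} here is circular, since it is a sibling consequence of the same corollary rather than an input; and your ``forced/free/dropped'' read-off (in particular ``free if both $a,a^\ast\in R$'') is not quite right---the output $R$ is coherent, so that case does not arise, and ``free'' corresponds to \emph{neither} of $a,a^\ast$ lying in $R$.
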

The first lemma explains how to implement the snapshot update, given a propagation algorithm: 
\begin{enumerate}
	\item use the raw observation $\observe{t}$ and $\snap{S}\at{t-1}$ to recalculate the edge weights for $\snap{S}\at{t}$;
	\item compute the derived graph $\pog\at{t}$;
	\item propagate the signal $\observe{t}$ over $[\pog\at{t},\varnothing]$ to compute $\current{t}=\coh{\observe{t}}$.
\end{enumerate}
The second lemma is the key tool for turning a propagation algorithm into the planning algorithms we discuss in the rest of this section.

\medskip
In practice, one can implement propagation using a variant of depth-first search (DFS) on $\pog\at{t}$ \cite{Cormen_et_al-intro_to_algorithms}, while maintaining an expanding record of vertices visited~ ---~ see algorithm 1. This algorithm clearly has time complexity that is at most quadratic in the number of sensors, and we conclude:
\begin{cor}[Quadratic Snapshot Maintenance] Both the time and space complexity of updating the snapshot $\snap{S}\at{t-1}$ with an observation $\observe{t}$ to form $\snap{S}\at{t}$ are at most quadratic in $\card{\sens}$.\defstop
\end{cor}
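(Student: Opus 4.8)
The plan is to charge each operation performed in forming $\snap{S}\at{t}$ from $\snap{S}\at{t-1}$ separately, following the three-step update procedure laid out immediately after Lemma~\ref{lemma:propagation gives coherent projection}, and to check that every step costs at most $O(n^2)$ in both time and storage, where $n:=\card{\sens}$. The governing fact is that every datum of a snapshot lives on the graph $\pocg$, which, being the complete graph on $\sens$ with the $aa^\ast$-matching removed, carries $\binom{n}{2}-\tfrac{n}{2}=O(n^2)$ edges; the state vector contributes only $O(n)$. Hence the space bound for the structure itself is immediate from Definition~\ref{defn:snapshot}, and to extend it to the whole update I would note that the loaded graph $[\pog\at{t},\varnothing]$, the orientation data of $\pog\at{t}$, and the visited-vertex record used during traversal are all indexed by edges or vertices of $\pocg$, so no step allocates superquadratic auxiliary storage.

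For the time bound I would treat the three steps in turn. First, recomputing the edge weights: whether one uses the empirical or the $q$-discounted rule, each weight $\witness{ab}$ is revised by a single constant-time operation (a membership test of $a,b$ against $\observe{t}$ followed by one addition, respectively one convex combination), and the subsequent truncation $\snapfloor{\wild}$ likewise visits each square once; summing over the $O(n^2)$ edges gives $O(n^2)$. Second, extracting the derived graph $\pog\at{t}=\snapdir{\snap{S}\at{t}}$: whether an edge $ab$ is present is decided by the purely local criterion \eqref{eqn:virtual implication}, comparing $\witness{ab^\ast}$ against a minimum of three further weights in the same square together with $\threshold{ab}$, which is a constant amount of work per edge, hence $O(n^2)$ in total.

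The only genuinely non-local step, and the one I expect to require the most care, is the third: computing $\current{t}=\coh{\observe{t}}$ by propagating $\observe{t}$ along $[\pog\at{t},\varnothing]$, as licensed by Lemma~\ref{lemma:propagation gives coherent projection}. Here the output set $R$ of Definition~\ref{defn:propagation algorithm} is defined through directed reachability in $\pog\at{t}$, so a naive reading might suggest one reachability query per vertex and therefore a cubic cost. The key observation is that the two defining conditions collapse to a bounded number of multi-source searches: condition (1) is answered by a single multi-source depth-first search from $B\cup T$, while condition (2) --- the set of vertices that \emph{cannot} reach $T^\ast$ --- is the complement of the set reached by one multi-source search from $T^\ast$ on the reversal of $\pog\at{t}$. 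Each such search on a digraph with $O(n)$ vertices and $O(n^2)$ edges is linear in the size of the graph, hence $O(n^2)$, and only a constant number of them are needed, which is precisely what the depth-first implementation of Algorithm 1 realizes. Adding the three bounds, with the space consumption dominated throughout by the $O(n^2)$ snapshot itself, yields the claimed at-most-quadratic bounds in $\card{\sens}$ and completes the argument.
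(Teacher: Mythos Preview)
Your proposal is correct and follows the same approach as the paper, which simply remarks that Algorithm~1 (the DFS-based propagation) ``clearly has time complexity that is at most quadratic in the number of sensors'' and states the corollary without further proof. Your version is considerably more thorough: you explicitly account for the space bound via the edge count of $\pocg$, separately cost the weight update, the derivation of $\pog\at{t}$, and the propagation step, and you preempt the natural worry about a cubic reachability cost by observing that conditions~(1) and~(2) of Definition~\ref{defn:propagation algorithm} reduce to a constant number of multi-source traversals. One small remark: Algorithm~1 as written does not run a second search on the reversed graph; it computes only the forward closure $U=\up{T}$ and exploits the poc-graph symmetry $ab\in\pog\Leftrightarrow b^\ast a^\ast\in\pog$ to recover the backward reachability from $T^\ast$ as $U^\ast$, so the set-complement step in \textsc{main} already handles condition~(2). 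This does not affect your complexity bound.
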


\begin{algorithm}[t]
\caption{A simple implementation of propagation of a signal $T$ over a poc graph $\pog$ loaded with $B$, based on depth-first search.}
\label{alg:propagation_by_DFS}
\begin{algorithmic}
\Function{main}{$\pog,B,T$}\Comment{Propagating $T$ over $[\pog,B]$}
	\State $\mathtt{visited}\gets\varnothing$ \Comment{A global variable}
	\State $U\gets$\Call{closure}{$\pog,T$}
	\State \Return $(B\cup U)\minus U^\ast$
\EndFunction

\Function{closure}{$\pog,T$}\Comment{Forward closure of $T$ in $\pog$}
	\ForAll{$a\in T$}
		\State\Call{explore}{$\pog,a$}
	\EndFor
	\State \Return $\mathtt{visited}$
\EndFunction

\Procedure{explore}{$\pog,v$}\Comment{Recursive step}
	\State $\mathtt{visited}\gets\mathtt{visited}\cup\{v\}$
	\ForAll{$w\in$\Call{children}{$\pog,v$}$\minus\mathtt{visited}$}
		\State\Call{explore}{$\pog,w$}
	\EndFor
\EndProcedure

\Function{children}{$\pog,v$}\Comment{Children of $v$ in $\pog$}
	\State \Return $\set{w\in\sens}{vw\in\pog}$
\EndFunction 
\end{algorithmic} 
\end{algorithm}

A far more efficient implementation is possible provided sufficient parallel processing power, by realizing propagation directly on $\pog\at{t}$ in a distributed fashion, using corollary \ref{cor:projection by propagation}: given $[\pog\at{t},\current{t}]$ and a target $T$ one first follows all directed paths in $\pog\at{t}$ emanating from $(\current{t}\cup T)$ loading the traversed vertices with $1$, and then follows all {\it reverse} paths emanating from $T^\ast$ and loads their vertices with zeros. Implementing this algorithm in practice is problematic for large $\card{\sens}$ in view of the high plasticity of the graph $\pog\at{t}$ and the potentially prohibitive requirement for the DBA to maintain up to $O(\card{\sens})$ processes, all active at the same time. Despite its current impracticality, such an implementation seems evocative of the notion of neuronal networks. We discuss this tentative connection in section \ref{discussion:connectionist}.

\medskip
\subsubsection{Algorithm: Computing the consequences of an action}\label{evaluating an action}
Planning of any kind requires an ability to sense the context of an action. This ability may be imparted to the agent by introducing sensors of the form
\begin{equation}\label{eqn:context}
	\ev{\alpha\wedge s}{\varphi}\at{t}=\ev{\alpha}{\varphi}\at{t}\cdot\state{s}{\snap{S}\at{t-1}}
\end{equation}
where $\alpha$ is an action and $s\in\sens$ is any sensor. The idea behind constructing $\alpha\wedge s$ in this way is for snapshots to be able to detect implications of the form "invoking $\alpha$ when $s$ is on leads to $s'$" simply as directed paths from $\alpha\wedge s$ to $s'$ in the derived graph. From a formal point of view, allowing this kind of sensor requires the observability of the snapshot in $\spc$, that is: in order for the values of sensors in $\sens$ to be allowed as input to (possibly other) sensors in $\sens$, it is necessary by our formalism for $\spc$ to carry information regarding these values in its state. 

%On the practical level, the relation between $\alpha\wedge s$ and $\alpha$ and $s$ imply the presence of a feedback mechanism whose task is to channel the current state of the snapshot as (partial) input to the next state in a way reminiscent of simple recurrent networks \cite{Elman-recurrent_networks_and_grammatical_structure}.

The problem of constructing a judicious process of enriching the sensorium with an effective collection of introspective sensors is set aside for future research. Instead, in the present model and simulations used to illustrate these ideas for the purposes of this paper we have committed to a sensorium containing an over-abundance of such sensors:
\begin{itemize}
	\item{\bf Position Sensors. } We assume the environment $\env$ is given as the union of a collection $\hsm{U}$ of its subsets, the agent being given a state sensor $\loc{U}$ for each $U\in\hsm{U}$ satisfying $\ev{\loc{U}}{x}=\ev{\indicator{U}}{\pos(x)}$.
	\item{\bf Actions. } A collection of actions (in the form of $1$-sensors) is provided.
	\item{\bf Contextualized actions. } For each $U\in\hsm{U}$ and $\alpha\in\pact$ the agent is given the sensors $\alpha\wedge\loc{U}$ and $\alpha\wedge\loc{U}^\ast$.
\end{itemize}
Under these assumptions, the following result yields a mechanism allowing the agent to `hallucinate' the broadest consequences of an action within the context of its current model space $\model\at{t}$:
\begin{cor}[Computing the Consequences of an Action]\label{consequences of an action} For any generalized action $A\subset\pact$, the result of applying $A$ in the transition from time $t>0$ to time $t+1$ is the result of propagating the collection $\left\{\alpha\wedge\loc{U}\right\}_{\loc{U}\in\current{t}, \alpha\in A}$ along $[\pog\at{t},\current{t}]$.\defstop
\end{cor}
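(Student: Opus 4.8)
The plan is to reduce the statement to Lemma~\ref{lemma:propagation gives closest point projection} by correctly identifying the target signal and then interpreting the resulting projection as the predicted outcome. First I would fix the generalized action $A\subset\pact$ and set $T:=\left\{\alpha\wedge\loc{U}\right\}_{\loc{U}\in\current{t},\,\alpha\in A}$, the collection of contextualized action sensors appearing in the statement. The opening task is to show that $T$ is exactly the set of contextualized sensors forced to evaluate to $1$ when the agent invokes $A$ in the transition out of its current recorded state. This follows directly from the defining rule \eqref{eqn:context}: the sensor $\alpha\wedge\loc{U}$ fires precisely when $\alpha$ is being invoked and $\loc{U}$ was on in the recorded state, i.e.\ when $\alpha\in A$ and $\loc{U}\in\current{t}$. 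Thus committing to $A$ from $\current{t}$ is equivalent, at the level of the sensorium, to asserting the joint sensation $T$.

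Second, I would apply Lemma~\ref{lemma:propagation gives closest point projection} with this $T$: propagating $T$ along $[\pog\at{t},\current{t}]$ yields the projection in $\model\at{t}$ of the current state $\half{\current{t}}$ onto the reduced target $\half{\coh{T}}$. It then remains to argue that this projection is the agent's prediction of the post-action state. Here the interpretation of the two clauses of Definition~\ref{defn:propagation algorithm} is decisive: clause~(1) places into the output every sensation reachable by a directed path from $\current{t}\cup T$, hence every sensation the learned implications force to hold once the action-in-context sensors are asserted; clause~(2) removes every sensation that directed-path-implies the negation of a target, guaranteeing coherence with $T$. Together these reproduce the $(B\cup U)\minus U^\ast$ formula realized by Algorithm~\ref{alg:propagation_by_DFS}, so the output is the unique coherent $\ast$-selection obtained from $\current{t}$ by incorporating all consequences of $A$ while perturbing the current state no more than the record $\pog\at{t}$ demands.

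Finally, I would identify this projection with ``the result of applying $A$''. By design (the discussion preceding the statement), the contextualized sensors are introduced so that an implication ``invoking $\alpha$ when $s$ is on leads to $s'$'' is recorded as a directed path $\alpha\wedge s\to s'$ in $\pog\at{t}$; consequently forward propagation from $T$ recovers exactly the sensations the agent's model predicts will hold after the transition, while the nearest-point property encodes the standing assumption that only the action's direct consequences alter the sensorium. The main obstacle, in my view, is not any single calculation but the faithful translation between the situation-space notion of ``applying $A$'' (producing some $y\in A(x)=\bigcap_{\alpha\in A}\alpha(x)$) and the purely syntactic propagation over $\pog\at{t}$. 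The bridge is precisely the observability assumption built into \eqref{eqn:context} together with the convexity-theoretic content already packaged in Lemma~\ref{lemma:propagation gives closest point projection}, so that once $T$ is correctly identified the corollary becomes essentially a specialization of that lemma.
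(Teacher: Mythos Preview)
Your proposal is correct and matches the paper's intended reasoning: the paper offers no separate formal proof of this corollary (it is stated with the terminal $\square$ and no proof environment), treating it as an immediate consequence of the setup of contextualized action sensors together with Lemma~\ref{lemma:propagation gives closest point projection}. Your elaboration makes explicit exactly the identification the paper leaves implicit, namely that asserting $A$ from $\current{t}$ is, at the level of $\sens$, the same as asserting the target $T=\{\alpha\wedge\loc{U}\}$, after which the lemma does the rest.
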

Thus, propagation provides a provably correct and computationally efficient mechanism for predicting the immediate outcomes of an action, provided a sensorium of the above form and a snapshot faithfully recording the nesting relations among the sensors.

Combined with the results of section \ref{section:snapshots} demonstrating that learning the correct relations within a fixed sensorium with a high degree of fidelity is possible even for an agent performing a random walk, the last corollary suggests that effective (and efficient) planning and closed loop control~ ---~ bundled together with life-long learning features~ ---~ are entirely feasible for DBAs carrying a snapshot architecture. We discuss both problems in the following paragraphs.

\begin{figure}[ht]
	\begin{center}
%		\def\svgwidth{1.2\columnwidth}
%		\includesvg{interval_with_motion}
		\includegraphics[width=.8\columnwidth]{./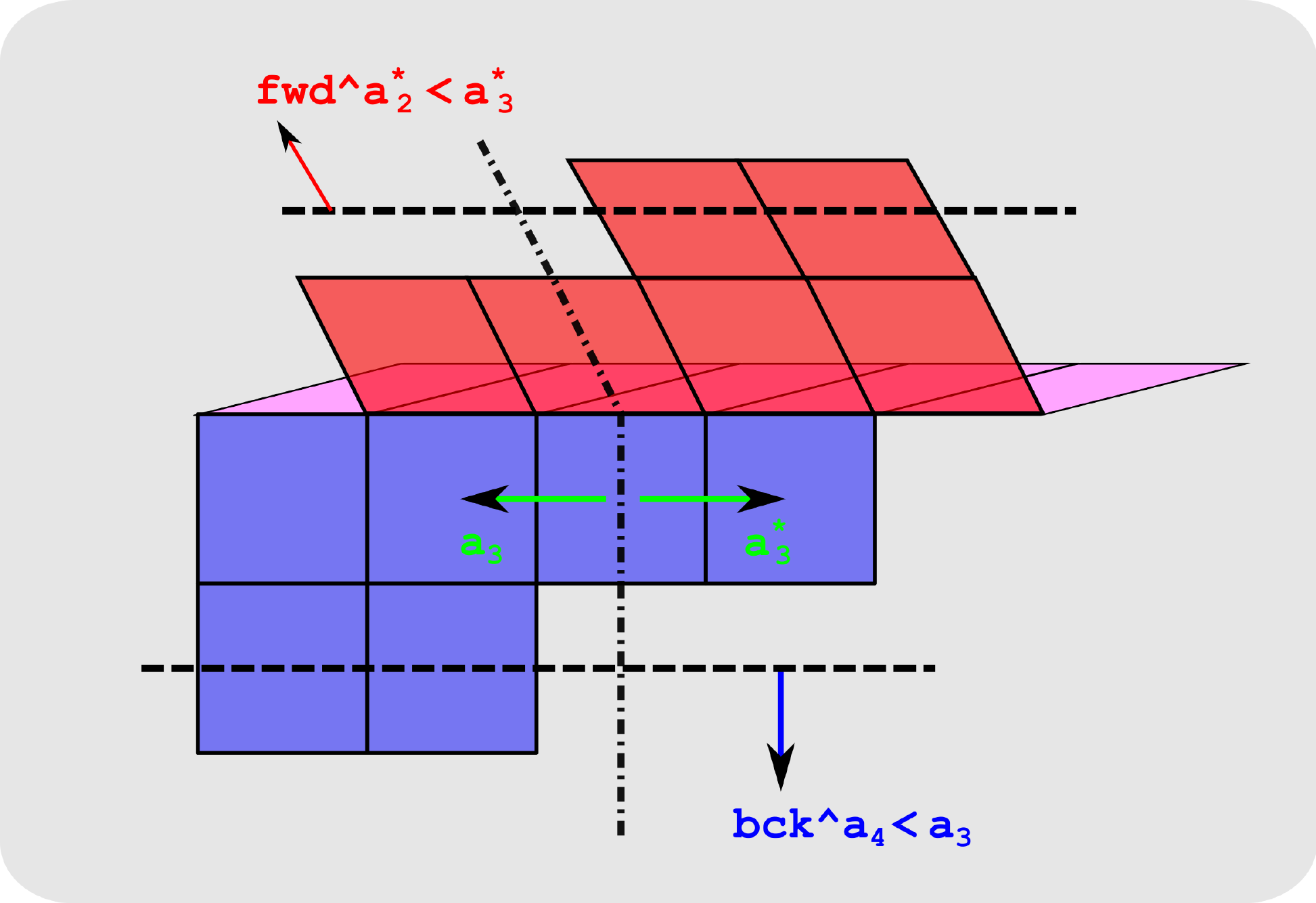}
		\caption{\small Model space for an agent on a discrete path, with two added contextual action sensors. \normalsize\label{fig:interval with motion, slight expansion}}
	\end{center}
\end{figure}

\medskip
\subsubsection{Example: discrete path with motion, revisited}\label{example:N-path with propagation} To illustrate the above, we continue example \ref{example:path with motion}. Recalling $\env=\{0,\ldots,L\}$ we see that the sensors $a_k$ defined in \eqref{eqn:sensors_on_interval} may be rewritten as:
\begin{equation}\label{eqn:sensors on interval redefined}
	a_k=\loc{U_k}\,,\quad
	U_k=\set{i\in\env}{0\leq i<k}
\end{equation}
Thus, for example, adjoining the two sensors $\fwd\wedge a_2^\ast$ and $\back\wedge a_4$ to $\sens$ implies the relations 
\begin{equation}
	\fwd\wedge a_2^\ast<a_3^\ast\,,\quad
	\back\wedge a_4<a_3\,,
\end{equation}
whose effect on $\cube{\ppoc}$, once they are learned by the agent, is shown in figure \ref{fig:interval with motion, slight expansion}.

\medskip
Further expanding $\sens$ to include all the sensors
\begin{equation}\label{eqn:contextual fwd and bck}
	\begin{array}{rl}
	\fwd\wedge a_k^\ast\,,& k=1,\ldots,L-1\\
	\back\wedge a_k\,,& k=2,\ldots,L
	\end{array}
\end{equation}
turns $\cube{\ppoc}$ into the complex illustrated in figure \ref{fig:interval with motion, full expansion}. As shown in the figure, the order structure on $\ppoc$ encodes both large-scale geometry (the agent may use propagation to conclude "in order to reach $\half{a_5^\ast}$, I need to to reach $\half{a_2^\ast}$"), and the actions required to negotiate this geometry ("I know that $\fwd\wedge a_1^\ast$ implies $a_2^\ast$, and I am currently in $\half{a_1^\ast}$").

\begin{figure}[ht]
	\begin{center}
%		\def\svgwidth{1.2\columnwidth}
%		\includesvg{interval_with_motion}
		\includegraphics[width=\columnwidth]{./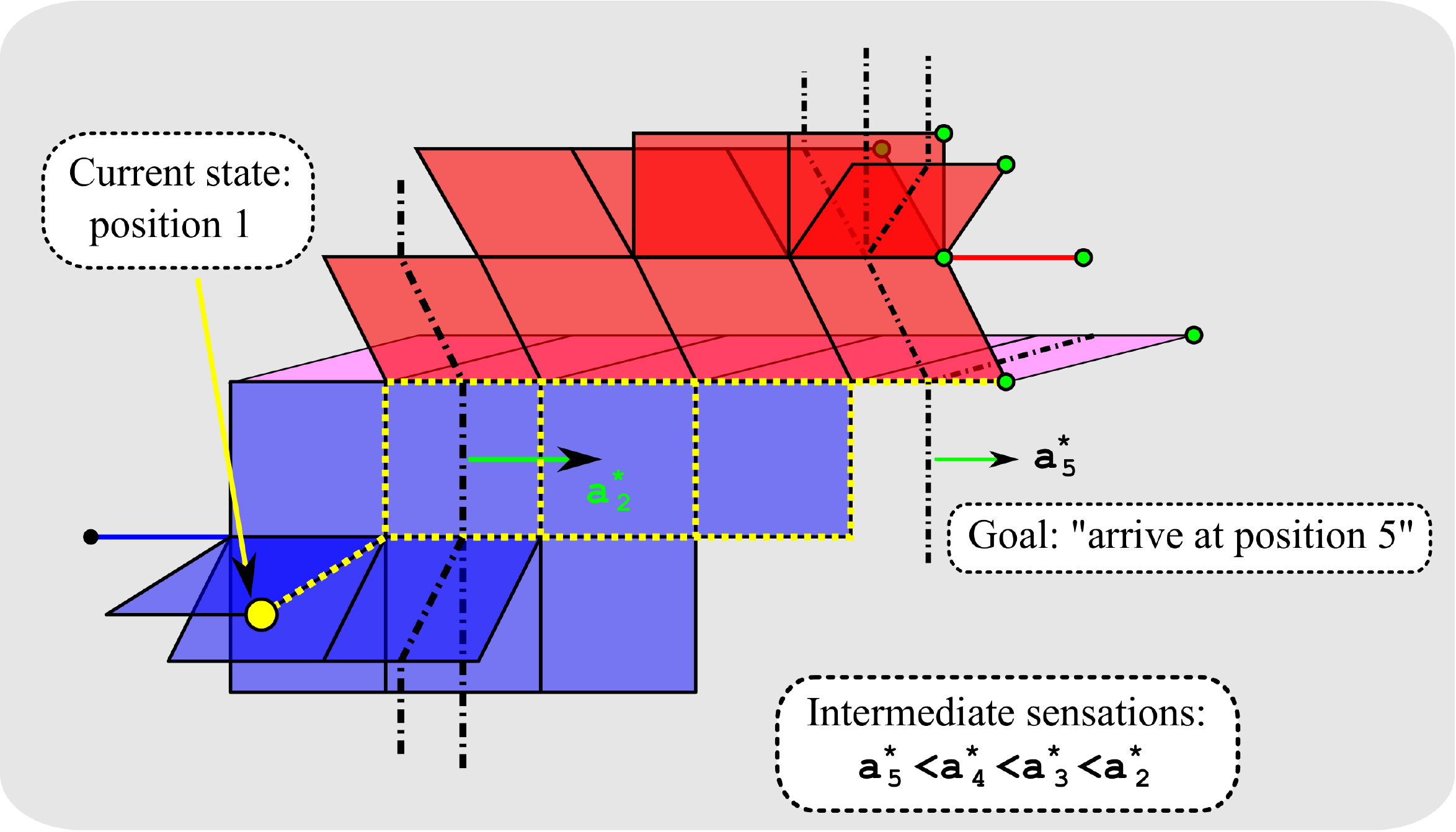}
		\caption{\small Model space for an agent on a discrete path, enriched with a full complement of contextualized action sensors \eqref{eqn:contextual fwd and bck}, and illustrating the geometry underlying planning by propagation \ref{general planning}.\normalsize\label{fig:interval with motion, full expansion}}
	\end{center}
\end{figure}

\medskip
\subsubsection{Algorithm: Greedy Reactive Planning (GRP) of Motion Towards a Specified Target}\label{general planning}
The ability to compute the immediate consequences of any available action and the convexity theory of $\model\at{t}$ underlie the following greedy algorithm used to decide on an action to be taken for the purpose of achieving a {\it long-term} goal:
\begin{enumerate}
	\item Given a set $T$ of target sensations, propagating $T$ over $[\pog\at{t},\current{t}]$ yields a list $R$ of sensations characterizing the projection of the region $\half{\current{t}}$ representing current state in $\model\at{t}$ to the desired region $\half{T}$.
	\item Each of the elements of $R$ may be considered as a sub-goal, and a generalized action guaranteed to achieve as many of these subgoals as possible may be selected based on the corollary \ref{consequences of an action}; any ties are broken arbitrarily. 
	\item Once an action is invoked, the same target $T$ is presented to the agent for an additional iteration of this procedure, until completion.
\end{enumerate}
By lemma \ref{lemma:propagation gives closest point projection}, this algorithm is directly analogous to motion planning in the Euclidean plane in the absence of obstacles: the agent selects an action which, to the best of its knowledge, best approximates the greedy path towards the closest point of the indicated target. The next section will consider the kinds of problems arising in the presence of obstacles in the model space and some early numerical study undertaken to explore overcoming some of these problems.

\subsection{Obstructions to Greedy Reactive Planning. } Where do obstructions to GRP in $\model\at{t}$ come from? Recall that every transition $x\times y\in X\times X$ capable of occurring in the given experiment determines a complete $\ast$-selection $A$ on $\sens$, by our observation model, via:
\begin{equation}\label{eqn:real vertices}
	a\in A\iff\left\{\begin{array}{rl}
		y\in\rho(a)	&\text{if }a\text{ is a state sensor}\\
		x\times y\in\rho(a)	&\text{if }a\text{ is a transition sensor}
	\end{array}\right.
\end{equation}
Thus, although $\model\at{t}$ does provide a universal model space for {\it any} realization of the weak poc set structure $\ppoc\at{t}$, the agent is only capable of witnessing $\ast$-selections of the above form, no matter the choice of action. This observation motivates the following definition:
\begin{defn}[Punctured Model Space]\label{defn:punctured model spc} By the {\it punctured model space at time $t$} we mean the sub-complex $\punct\at{t}$ of $\model\at{t}$ induced\footnote{Recall that a sub-complex $L$ of a cell complex $K$ is induced by a set of vertices $V\subset K^0$, if $L$ contains every cell of $K$ all of whose vertices belong to $V$.} by the set of vertices of $\model\at{t}$ of the form \eqref{eqn:real vertices} (compare with the discussion in appendix \ref{realization}).
\end{defn}

Thus, in addition to the possibility that an agent will have a false implication on record (causing some sensory equivalence classes to be deemed incoherent until they are sufficiently sampled), it is also possible that $\model\at{t}$ contains obstacles to GRP in the form of vertices in $\model\at{t}\minus\punct\at{t}\neq\varnothing$. In fact, the presence of obstacles of this kind is {\it guaranteed} by the main result of \cite{Allerton_2012}~ ---~ also reviewed in appendix \ref{homotopy type result}, Theorem \ref{thm:homotopy type}~ ---~ at least in cases when $\env$ does not have the homotopy type of a point and the covering $\hsm{U}$ of $\env$ by location place fields satisfies the richness requirements placed on it by that theorem. We consider two of examples of this kind.

\subsubsection{Example: A Punctured Grid}

Consider the example of an agent navigating an $N\times N$ square grid (realized as a subset $G_N$ of the integer grid) and equipped with a collection of position sensors identical to that of \ref{empirical:performance}(d) and \ref{discounted:performance}(d). Denoting $\pos(x):=\xi\times\eta\in\ZZ\times\ZZ$ we have the sensors
\begin{equation}
	\ev{\mathtt{x}_i}{x}=1\IFF\xi<i\,,\quad
	\ev{\mathtt{y}_i}{x}=1\IFF\eta<i\,,
\end{equation}
for $i\in\{1,\ldots,N-1\}$. This time, however, suppose that one interior vertex $v_0$ of the grid has been removed, so that $\env=G_N\minus\{v_0\}$. As in the above simulations, we assume the agent is equipped with actions labelled $\mathtt{up},\mathtt{down},\mathtt{left}$ and $\mathtt{right}$ whose effect at each vertex is to move to the appropriate adjacent vertex of the integer grid if that vertex belongs to $\env$, and to remain in place otherwise. Suppose, for simplicity, that the snapshot structure for this agent is empirical.

For $N$ sufficiently large, the statistical nature of the learning algorithm will cause the agent to learn the same weak poc set structure as in the case of $v_0$ being present: implications of the form 
\begin{equation}
	\mathtt{up}\wedge\mathtt{y}_i^\ast<\mathtt{y}_{i+1}^\ast\,,\quad
	\mathtt{up}\wedge\mathtt{x}_i<\mathtt{x_i}\,,\quad
	\mathtt{x}_i<\mathtt{x}_{i+1}
\end{equation}
and their respective variations will be learned upon sufficient exposure, giving rise to the {\it same poc set structure as the one representing the complete grid $G_N$}. One could view this as a manifestation of the fact that our model spaces are always contractible (corollary \ref{cor:cubings are contractible}).

As a consequence, the agent is bound to attempt moving to the unavailable vertex $v_0$ at any time $t$ when its position, $p^t$, is adjacent to $v_0$ and $v_0$ belongs to a shortest path in $G_N$ joining $p^t$ with the prescribed target. In such a situation, the agent is guaranteed to attempt motion in the direction of $v_0$ and fail (remain in place). Moreover, after sufficiently many such attempts the agent is bound to unlearn the implication responsible for this particular choice of action; this will have an overall negative impact on planning.

\subsubsection{Example: Agent on a Circular Rail}
A subtly different example is that of \ref{empirical:performance}(b) and \ref{discounted:performance}(b). Here, motion along a circular rail is modeled by setting $\env$ to be the set of integers modulo $N$, with two available actions $\fwd$ and $\back$ corresponding to the operations of adding and subtracting a unit, respectively (all arithmetic relating to the environment in this example is done modulo $N$). Position sensors have the form $\loc{U_i}$ where $U_i=\{i-1,i,i+1\}$.

For simplicity consider a situation with $N$ big and even, and assume the agent has complete knowledge of the correct poc set structure, which is the one generated by the relations (appendix \ref{example:generators and relations}):
\begin{equation}
		\loc{U_i}<\loc{U_j}^\ast\IFF\dist{i}{j}>2\,,
\end{equation}
where $\dist{i}{j}$ denotes the distance (modulo $N$) between the positions $i$ and $j$, as well as:
\begin{equation}
\begin{array}{rcl}
	\fwd\wedge\loc{U_i}&<&\loc{U_{i+1}}\,,\\
	\back\wedge\loc{U_i}&<&\loc{U_{i-1}}
\end{array}	
\end{equation}
for all $i\in\{0,\ldots,N-1\}$. Without loss of generality, the current state $x$ of the system satisfies $\pos(x)=0$.

Let the specified target be $T=\{U_p\}$, where $p\in\{0,\ldots,N-1\}$ is sufficiently far from the origin (the current position of the agent) to accommodate a pair $U_i,U_j$ such that:
\begin{enumerate}
	\item $U_i\cup U_j$ does not intersect $U_p\cup U_0$;
	\item $U_i\cup U_j$ separates $p$ from $0$ (on the circle).
\end{enumerate}
Thus both the current state and the target region satisfy the constraints $\loc{U_i}^\ast$ and $\loc{U_j}^\ast$, which implies that {\it any} geodesic in the model space joining the current model state with the model target set passes through $\half{\loc{U_i}^\ast,\loc{U_j}^\ast}$, yet it is clearly impossible to guarantee these constraints by {\it any} of the available actions.

It is, never the less, possible to extend this sensorium in a way that enables the effective learning of a target, as the numerical studies below demonstrate, by introducing into the environment a graded signal whose strength encodes a measure of distance to the target, while endowing the agent with sensors responding to the gradient of this signal.

\subsubsection{Closing the Loop: Excitation-Driven Navigation}\label{gradient descent}\label{subsection:closing the loop}

From the preceding examples it is clear that additional sensing~ ---~ most probably involving information regarding transitions~ ---~ is absolutely necessary for overcoming the geometric and topological obstructions to the GRP algorithm: while the GRP algorithm may be considered as providing a reasonable reference dynamics for reactive planning, one must consider possible means for {\it replanning in the face of failure}. We conjecture that the notion of a snapshot is sufficiently simple and agile for such purposes:
\begin{itemize}
	\item[-]{\bf Control of learning thresholds. } At this stage of our research, no attempt is being made to control the learning thresholds $\tau_{ab}$; it seems plausible, however, that having a high level of "frustration" cause the lowering of a relevant threshold may be used as a tool for locating exceptions to poc relations.
	\item[-]{\bf Introduction of new sensors. } A principled mechanism is required for the introduction of {\it combinations of existing sensors}, such as Boolean functions thereof, or delayed conjunctions such as \eqref{eqn:context}, to {\it become additional members of the sensorium}. In particular, such a mechanism must be capable of responding to exceptions, or failures of GRP, as we had already discussed above.
\end{itemize}
The need for self-adjustment in the sensorium opens the door to the introduction of auxiliary internal mechanisms of evaluating the position of the agent in the environment, or, more generally, the state of the system consisting of the pairing of the agent with the environment. For example, the settings described in the preceding paragraph suggests the introduction of an internal variable evaluating success (and failure) of invoking a planned action, while the need for closed-loop control suggests simple local control mechanisms based on an internally-defined `navigation function' \cite{Koditschek_Rimon-artificial_potential_functions}. Many other ideas of internal behavior modulation ranging from varying notions of novelty, surprise and dependability \cite{Barto_Mirolli_Baldassarre-novelty_or_surprise,Schmidhuber-theory_intrinsic_motivation,Cobo_Isbell_Thomasz-object_focused_Qlearning,Mugan_Kuipers-autonomous_learning_high_level_states} all the way to a multivariate model of human neuro-modulation mechanisms \cite{Cox_Krichmar-neuromodulation_as_controller} become relevant in this context.

\medskip
\noindent{\bf Excitation-Driven Agents. } In the absence of tools for reactive replanning (our current situation), we have chosen to study a simplified notion of target, allowing us to close the control loop with a "motion" command generated with the aim to guarantee an immediate decrease in the value of an internal excitation signal.

The simplest instance of such a controller, applied to the navigation setting, seems to be the following. In addition to a sensorium of the form described above in \ref{evaluating an action} and the examples that followed, assume that the agent possesses a pair of sensors $\better$ and $\worse$, responding to the decrease and increase, respectively, in a fixed measure of distance to a target point in the environment $\env$, over a single transition (think of this as a radically simplified sense of smell). 

Starting out as a lazy random-walking agent\footnote{An action $\wait$ is available now, to let the agent stay put when it has found the target.}, the agent uses the algorithm of \ref{general planning} at each step to obtain an action resulting with $\better$ (target specification $T=\{\better\}$) as its first priority. In the case of failure to produce such an action, the agent attempts to guarantee $\worse^\ast$, periodically attempting a random action so as not to get stuck in place (upon having figured out that $\wait<\worse^\ast$). Figure \ref{fig:sniffy} presents a comparison between the mean behaviors of four different such agents simulated in the same settings as those analyzed in section \ref{section:snapshots}.

\begin{figure}[t]
	\begin{center}
		\includegraphics[width=\columnwidth]{./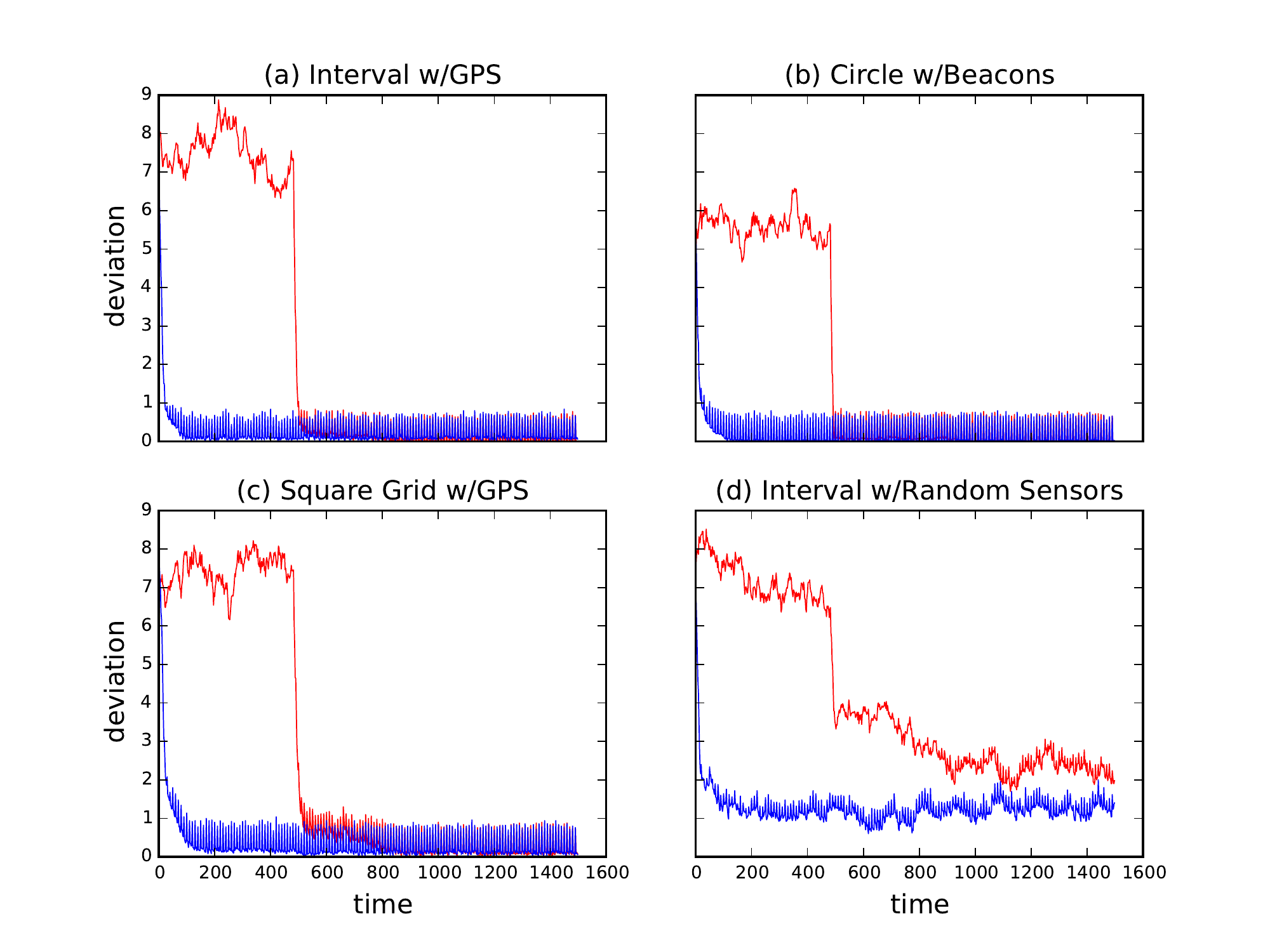}
		\caption{\small Mean deviation from target for empirical (blue) and a discounted (red) agents (20 sensors each), as a function of time in four different settings, averaging over 50 runs.\normalsize\label{fig:sniffy}}
	\end{center}
\end{figure}

It is important to stress that, following the discussion in \ref{example:path with motion} and the formal results of the preceding paragraphs, the guarantee of the agents in figure \ref{fig:sniffy}(a)-(c) finding their targets {\it given sufficient knowledge} of the correct poc structure on $\sens$ is absolute. To see this, it suffices to verify for the true poc set structure on $\sens$ that any position other than the target has associated with it a location sensor $a=\loc{U}$ and an action $\alpha$ such that every state $x$ with $\ev{a}{x}=1$ satisfies the requirement that $\alpha(x)$ is closer to the target than $x$ is. 

The only remaining question is whether or not the control strategy we propose guarantees sufficient exposure for the agent to recover the poc relations necessary for it to capture the target. Figure \ref{fig:sniffy} offers some evidence in favor of an affirmative answer, based on simulations.

\section{Discussion}\label{section:discussion}
\subsection{Topological Mapping and Planning}
Mapping methods vary from probabilistic representations \cite{Leonard_Durrant-Whyte-SLAM,Thrun-probabilistic_robotics,Montemerlo_Thrun-fast_SLAM} to hybrid representations in which precisely mapped contractible local metric patches are integrated into a global map through `gluing instructions' recorded in an annotated graph that is learned by the agent in the course of its travels (\cite{Tomatis_et_al-hybrid_SLAM,Kuipers-topological_SLAM,Dellaert-probabilistic_topological_mapping,Choset-bayesian_topological_SLAM}). The latter are known as {\it topological SLAM} (simultaneous localization and mapping) methods. 

In the presence of significant obstacles in large scale environments, the lightweight `topological skelteon' of the environment recorded by a topological SLAM method provides valuable information on loop closure, which would otherwise be costly to obtain using global metric mapping techniques. This observation led to an extensive effort \cite{Kuipers-SSH,Kuipers_Remolina-topological_maps} to produce a general notion of a {\it topological map}, based on the notion of {\it spacial semantic hierarchy} (SSH), that would allow for motion planning at varying scales. Other ways of leveraging planar topological descriptors to obtain simple and efficient data structures supporting localization and motion planning in simply connected planar domains have been explored as well, e.g.  \cite{Tovar_et_al-BITBOTS,Tovar_et_al-Information_spaces_BITBOTS} and \cite{Mair_et_al-bioinspired_SLAM}.

In view of our stated need for guaranteed universality properties, the strong dependence of SLAM methods on Euclidean geometry made it necessary for us to adopt a significantly more abstract approach to localization arising from the point of view of sensor fields. A precursor of our approach is a family of SLAM algorithms, known as RatSLAM, employing a neural network to simulate the function of place/pose cells in rats, e.g.  \cite{Milford_Wyeth_Prasser-RatSLAM,Milford_Wyeth-persistent_navigation_with_RatSLAM,Milford_Jacobson-brain_inspired_sensor_fusion_for_SLAM}. In a broad generalization of the neural computational engine underlying RatSLAM, \cite{Curto_Itskov-cell_groups} tests the hypothesis that place cells with sensor fields in {\it any} sufficiently dense configuration should make it possible [for a rat] not only to localize well, but also to accurately represent the topology of the environment by estimating the nerve\footnote{See {\it nerve of a covering} in \cite{Hatcher-alg_top_textbook}, section 3.3.} of the system of place fields from observations of near-synchrony in place-cell firing. It is shown through simulations that recovering the topological invariants of a connected planar arena, as well as some approximation of its geometry, is possible with a sufficiently dense network of convex place fields. %Moreover, geometric path lengths as measured in the arena are reasonably well approximated by the corresponding combinatorial path lengths in the nerve, which suggests the use of abstract topological models of the environment as a viable alternative to the hybrid models currently prevalent in SLAM. 
Further evidence in support of this idea is the recently introduced \cite{Ghrist-homological_SLAM} method for localization in an urban canyon, as well as the ELM architecture \cite{Stachowicz_Kruijff-episodic_like_memory}, using nerve-like information (nesting among convex polygons in the plane) as a means of encoding spatio-temporal context in an `episodic memory' for a planar agent.

A significant drawback of nerve estimation is that, by definition, computing nerve of a covering requires exponential space in the cardinality of the covering\footnote{As one must keep track of the intersections of all possible sub-collections of the covering}. At the same time, restricting attention to pairwise intersections only, and focusing on those of them that are empty turns out to guarantee a universal model space for each combinatorial type of this "reduced nerve", by Sageev-Roller duality \cite{Sageev-thesis,Roller-duality}. By construction, snapshots are estimators of this reduced nerve, turned into a computational tool for navigating the corresponding model space after converting all relations of the form "$A\cap B$ negligible/unimportant" into relations of the form "$A$ implies $B$".

The idea of leveraging nesting relations among geometric descriptors of events for the purpose of properly representing context is a well-recognized and widely used tool in the literature, for example: nesting of planar domains is used in \cite{Kuipers_Remolina-topological_maps}, as well as the more recent \cite{Stachowicz_Kruijff-episodic_like_memory}, and a notion of nesting for actions is used in \cite{Mugan_Kuipers-autonomous_learning_high_level_states}. What is new about the snapshot architecture is its application of this principle to the {\it entire} sensorium, including the set of available actions.

Our numerical experiments with closed loop control suggest that a snapshot-driven agent with sufficient actuation and sufficiently rich sensorium is capable of learning a good approximation of the gradient field of a (discrete) Morse function despite been given no prior semantic information and starting out with random `motor babble' for its navigation strategy.

In fact, the snapshot architecture is flexible enough to trivialize the task of introducing discrete variants of complex motivational systems \cite{Cox_Krichmar-neuromodulation_as_controller} based on introspective sensing of signals quantifying (a) internally available resources (e.g. battery charge), (b) repulsion or fear of punishment, (c) attraction or anticipation of a reward signal (e.g., in the sense of navigation functions \cite{Koditschek_Rimon-artificial_potential_functions} or in the broad sense of Reinforcement Learning \cite{Barto_Sutton-reinforcement_learning}), and even (d) frustration over the failure of a plan \cite{Revzen_Ilhan_Koditschek-frustration} and (e) measures of innovation \cite{Barto_Mirolli_Baldassarre-novelty_or_surprise,Schmidhuber-theory_intrinsic_motivation}.

We expect systems such as (a)-(c) to contribute to an agent's planning capabilities from the point of view of the variety of tasks they would enable. Even more significantly, we expect (d) and (e) to contribute to the agent's ability to improve the quality of the topological representation encoded in its snaphot. Namely, (d) could be used to facilitate chunking by serving as a signal driving the creation of new conditional sensors detecting inconsistent states of the model space, while (e) could drive the learning of useful complex actions, as has already been proposed for many other architectures \cite{Schmidhuber-Godel_machines_optimality,Mugan_Kuipers-autonomous_learning_high_level_states,Cobo_Isbell_Thomasz-object_focused_Qlearning,Martius_et_al-infotaxis_driven_self_organization}, resulting in improving the connectivity of the model space. Endowing the snapshot architecture with these capabilities is the most immediate goal for follow-up research. 

\subsection{Connectionist Architectures}\label{discussion:connectionist} 
From the earliest days of the field, even extremely simple neural networks with a very small number of neurons have demonstrated the ability to perform complex learning and control tasks \cite{Barto_Sutton-adaptive_network_with_internal_rep,Barto_Sutton_Anderson-neuronlike_solve_difficult_learning_probs}, including complex symbolic structures such as context-sensitive grammars \cite{Elman-finding_structure_in_time,CMU_team-encoding_structure_in_recurrent_networks,Elman-recurrent_networks_and_grammatical_structure}, and complex hierarchical schemata whose structural features vastly outnumber the physical components the actual network \cite{Botvinick_Plaut-without_schema_hierarchies}. Even simple feed-forward networks have been shown to be capable of exercising `intelligent' control through stigmergy \cite{Chung_Choe-memory_emergence_from_markers}, the depositing of tokens in the environment. Furthermore, the merging of reinforcement learning methods with the computational power of `deep learning' networks \cite{Hinton_Salakhutdinov-reducing_dimensionality,Lee_Grosse_Ranganath_Ng-convolutional_deep_belief_networks} has yielded architectures capable of matching and outperforming humans in complex tasks such as learning and playing video games \cite{Mnih_et_al-ATARI_deep_RL} based only on the raw RGB output provided by the game console.

At the same time, the power of connectionist architectures comes with very limited formal understanding of how the internal representations they maintain encode symbolic representations in terms of problem spaces.

Though the connection demonstrated in this work falls short solving this problem (due to the over-simplification of the connectionist structure and due to realization of plasticity in the network by a non-neural controller), direct analogies with current studies of plasticity and neural coding give hope that a suitable generalization of the snapshot architecture could yield {\it both} strength of performance {\it and} provable guarantees of symbolic reasoning processes.

Of the classes of neural networks that are well understood, most relevant for our purposes is that of competitive attractor networks, whose strong stability properties \cite{Cohen_Grossberg-absolute_stability_competitive} were applied to modeling the navigation mechanisms in rats, also inspiring the RatSLAM mapper \cite{Milford_Wyeth_Prasser-RatSLAM}.

Expanding on these results, \cite{Xie_Hahnloser_Seung-selectively_grouping_neurons,Hahnloser_Seung_Slotine-permitted_and_forbidden_sets} sparked the discussion of the structure of the set of binary codewords corresponding to stable activity patterns of threshold-linear neural networks. This line of inquiry was picked up in \cite{Curto_Degratu_Itskov-encoding}, producing a combinatorial characterization of the possible codes in terms of the network's organization; and in \cite{Curto_Degratu_Itskov-flexible_memory_networks}, initiating a rigorous study of the way codes vary as the synaptic weights are perturbed while subject to structural constraints, exposing interesting connections with topological invariants associated with these constraints.

The analogy with our work is straightforward. The process of obtaining the coherent projection of a binary observation by propagating it through the derived graph of a snapshot is completely analogous to the process of propagating a signal through a threshold-linear neural network and waiting for the network state to stabilize at a code word~ ---~ especially when taking into account the excitatory nature of relations of the form $a\leq b$ and inhibitory role of relations of the form $a\leq b^\ast$ under propagation.

Chasing this analogy, it could be worthwhile investigating the degree to which the collection of codewords of a threshold-linear network conforms to the strict demands of median geometry (represented by coherent snapshot states), to establish a rigorous formal connection (if it exists) between the two architectures. A more general study of which neural learning methods \cite{Hinton-connectionist_learning_procedures} could be transferred into a snapshot architecture may, on one hand, expand the range of applications for snapshots, while, on the other hand, provide some existing architectures with a rigorous symbolic interpretation.

\section{Conclusion}
We introduce a new computationally efficient architecture
intended to endow a generic discrete binary agent with the capacity to build over time an actionable model of its operations within a completely unknown  dynamic environment, $\env$. The proposed architecture has a dual nature. On one hand, the agent maintains an evolving data structure,~ ---~ the snapshot $\snap{S}\at{t}$~ ---~ of size quadratic in the number of sensors, encoding a planning mechanism based on propagation of excitation and inhibition signals through the highly plastic directed network $\snapdir{\snap{S}\at{t}}$, and is, thus, in a very crude sense, a connectionist learning and control architecture. On the other hand, the rather specific ordering properties of networks arising in this way (the derived `weak poc set structure' $\ppoc\at{t}$) also characterize any such network as encoding a system of `half-spaces' in a geometric internal representation $\model\at{t}$ that is {\it just rich enough} to account for all sensory equivalence classes provided to the agent by its sensorium $\sens$. This duality affords the re-interpretation of snapshots as encoding a high-level symbolic representation of the problem space (i.e., the state space $\spc$ and the transition system induced on it by the agentÕs interaction with its dynamic environment), through a mathematical formalism that rigorously supports symbolic planning with the efficiency and economy of a connectionist architecture.

Clearly, our current snapshot architectures (section II) lack certain key features found in existing AGI architectures. First and foremost among these is a mechanism for enriching the agent's sensorium with sensors representing general Boolean predicates (or, even better,  some limited LTL predicates), composed of the original atomic sensations. Of course, the problem lies not in proposing intuitively attractive approaches (there are many) but rather doing so in a principled, economical way that maintains the present combination of analytical and computational tractability. These `compound' sensors are required for facilitating chunking and the learning of useful motor primitives. Another required feature is a capacity for symbolic abstraction (relating problem spaces via substitutions). While the duality theory of weak poc sets and their model spaces (appendix \ref{subsection:duality theory}) affords a rigorous formulation of enriched predicates and consequent symbolic abstraction, it is not yet clear how to engineer an enlarged snapshot-like architecture realizing such meta-extensions.

\section*{Acknowledgements}
This work was funded in part by the Air Force Office of Science Research under the MURI FA9550-10-1-0567 and in part by the National Science Foundation under CDI-II-1028237.

\bibliographystyle{IEEEtran}
\bibliography{IEEEabrv,memory}
%\bibliographystyle{unsrt}
%\bibliography{memory}

\appendices
\section{A Primer on Sageev-Roller Duality}\label{appendix:prelim}
The duality between poc sets and median algebras, going back to \cite{Isbell-duality}, was thoroughly studied by Martin Roller in \cite{Roller-duality}, in a very successful attempt of pushing the envelope on a theory of actions of discrete groups on simply connected non-positively curved cubical complexes -- henceforth reffered to as \emph{cubings} -- pioneered by Michah Sageev in \cite{Sageev-thesis} and by Victor Chepoi \cite{Chepoi-median}, who characterized such complexes in terms of the convexity theory of their $1$-dimensional skeleta. 

This appendix provides a detailed review of the elements of this theory supporting the memory architecture proposed in this paper. This overview of the preliminary meterial is meant to extend the initial discussion provided in \cite{Allerton_2012} as well as to illustrate it with examples, intended as bridges to our current application. In the end, the duality theory of poc sets will be called upon to provide the necessary formal guarantees that the proposed memory and control architectures actually do their job. We will mainly rely on \cite{Roller-duality} as a source of theoretical results, though sometimes it will be easier to use results from the elegant exposition in \cite{Nica-spaces_with_walls}.

\subsection{Basic Notions} We introduced the extension of the duality theory of poc sets to so-called {\it weak poc sets} in \cite{Allerton_2012} out of the necessity to maintain poc sets as dynamical data structures.
\begin{defn}[Weak Poc Set]\label{defn:poc set} A partially-ordered set $(P,\leq)$ endowed with an order-reversing fixpoint-free involution $a\mapsto a^\ast$ and having a minimum element $\minP\in P$ is called a \emph{weak poc set}. Note that $\maxP$ is a maximum for $P$. Thus, for all $a,b\in P$ one has:
\begin{itemize}
	\item $0\leq a=a^{\ast\ast}$ and $a^\ast\neq a$;
	\item $a\leq b\THEN b^\ast\leq a^\ast$.
\end{itemize}
An element $a\in P$ is said to be \emph{negligible} if $a\leq a^\ast$, and \emph{ubiquitous} if $a^\ast$ is negligible. A \emph{poc set} is a weak poc set in which $\minP$ is the only negligible element. An element that is neither negligible nor ubiquitous is said to be \emph{proper}.\defstop
\end{defn}
Weak poc sets form a category:
\begin{defn}[Poc Morphism]\label{defn:poc morphism}
A function $f:P\to Q$ between two weak poc sets is a \emph{poc morphism}, if $f$ is order-preserving, $\ast$-equivariant and $f(\minP)=\minP$. The set of all poc morphisms as above will be denoted $\morph{P}{Q}$.\defstop
\end{defn}

\medskip
\subsubsection{The Minimal Poc Set}\label{example:minimal poc set} The set $\{0,1\}$ with the relations $0<1$ and $1=0^\ast$ is a poc set, and it is denoted by $\power{}$. Clearly, there is only one poc morphism of $\mbf{2}$ into any weak poc set $P$, but then there may be many poc morphisms of a weak poc set $P$ onto $\mbf{2}$. 

\medskip
\subsubsection{Generators and Relations}\label{example:generators and relations} A weak poc set $P=\gen{S}{R}$ may be specified using a set $S$ of generators and a set of relations $R$ of the form $a<b$ or $a^\ast<b$ or $a<b^\ast$ for $a,b\in S$\footnote{One may also use weak inequalities $(\leq)$ to specify relations in $R$.}. 

Formally, $P$ is constructed as follows. Assume that the symbol $\minP$ is not contained in $S$. First, set $S_\pm:=(\{\minP\}\sqcup S)\times\{+,-\}$ and define $(s,+)^\ast=(s,-)$ and $(s,-)^\ast=(s,+)$. Thus, $S_\pm$ has a fix-point free involution $\ast$ defined on it. For simplicity, for each $s\in\{0\}\cup S$ we identify $(s,+)$ with $s$. 

The relation set $R$ is required to be a subset of $S_\pm\times S_\pm$. We then define an extension $R_{_{poc}}$ of $R$ to be the intersection of all relations $W\subseteq S_\pm\times S_\pm$ that are reflexive, transitive and, in addition, satisfy the following:
\begin{itemize}
	\item $(\minP,a)\in W$ holds for all $a\in S_\pm$;
	\item For all $a,b\in S_\pm$, if $(a,b)\in W$ then $(b^\ast,a^\ast)\in W$.
\end{itemize}

We set $P$ to be the quotient of $S_\pm$ by the equivalence $$x\sim y\IFF (x,y)\in R_{_{poc}}\;\wedge\;(y,x)\in R_{_{poc}}$$
with the induced partial ordering $$[x]\leq[y]\IFF (x,y)\in R_{_{poc}}\,.$$

For example, the notation $$\gen{a,b,c}{a<c,\,b<c}$$ stands for the poc set $$P=\{\minP,\maxP,a,b,c,a^\ast,b^\ast,c^\ast\}$$ having the order relations
\begin{eqnarray*}
	\minP<a<c<\maxP,&& \minP<b<c<\maxP\\
	\minP<c^\ast<a^\ast<\maxP,&& \minP<c^\ast<b^\ast<\maxP
\end{eqnarray*}
as well as the ones derived from these by transitivity. Thus, generators and relations provide a compact way of representing a (weak) poc set explicitly.

As another example, consider the poc sets $$P=\gen{a,b}{a<b}\,,\quad Q=\gen{a,b}{a^\ast<b}$$ The partial assignment $f:P\to Q$ satisfying $$f(a)=a^\ast\,,\quad f(b)=b$$ has one and only one extension to a poc morphism of $P$ into $Q$.

\medskip
\subsubsection{$\sigma$-Algebras as poc sets}\label{example:power set as poc set} Let $\hsm{B}$ be a $\sigma$-algebra on a non-empty (possibly infinite) set $\spc$. Then $(\hsm{B},\subseteq,F\mapsto \spc\minus F)$ is a poc set. In particular, the power set of $\spc$, denoted $\power{\spc}$, obtains the structure of a poc set in this way. It is standard to identify $\power{\spc}$ with the space of functions $f:\spc\to\power{}$: any such $f$ will be identified with the subset $f\inv(1)\in\power{\spc}$. Note that the intersection  and symmetric difference operators translate under this identification into pointwise multiplication and addition modulo $2$, respectively. Recalling our notation \eqref{eqn:evaluations2} for the evaluation of functions, it will be convenient to extend it as follows:
\begin{equation}\label{eqn:evaluation}
	\ev{f}{x}=f(x)\,,\quad \ev{f^\ast}{x}=1+\ev{f}{x}
\end{equation}
The poc set structure on $\power{\spc}$ may then be written in functional form via
\begin{equation}
	f\leq g\;\;in\;\;\power{\spc}\IFF \forall_{x\in\spc} \ev{f}{x}\leq\ev{g}{x}\;\;in\;\;\power{}\IFF fg=f\,,
\end{equation}
where $f,g\in\power{\spc}$ are arbitrary elements.

\begin{defn}[realization] Let $P$ be a weak poc set and let $\spc$ be a non-empty set. A \emph{realization of $P$ in $\spc$} is a poc morphism $f:P\to\power{\spc}$.\defstop
\end{defn}

A realization $f:P\to\power{\spc}$ provides a consistent way of regarding each $a\in P$ as a binary question over $\spc$, so that the set of all $x\in\spc$ with $\ev{f(a)}{x}=1$ is the set of all points where the question is answered affirmatively.

\medskip
\subsubsection{Canonical Quotient} Every weak poc set has a canonical poc set quotient $\hat P$ obtained as the quotient of $P$ by the equivalence relation
\begin{equation}
	a\sim b\IFF\left\{\begin{array}{l}
		a=b\text{ or }\\
		a,b\text{ are both negligible, or }\\
		a,b\text{ are both ubiquitous}
	\end{array}\right.
\end{equation}
and endowed with the obvious ordering and involution.
\begin{defn} Let $P$ be a weak poc set and let $\hat{P}$ denote its canonical poc quotient. For every $a\in P$, we denote the equivalence class of $a$ in $\hat{P}$ with $\hat{a}$. The poc morphism $a\mapsto\hat{a}$ will be denoted by $\pi_P$.\defstop
\end{defn}
The main characteristic of $\hat P$ is the following elementary lemma:
\begin{lemma}\label{lemma:canonical poc quotient} Let $P$ be a weak poc set. Then any poc morphism $f:P\to Q$ of $P$ into a poc set $Q$ factors through $\pi_P$, that is: there exists one and only one poc morphism $\hat{f}:\hat{P}\to Q$ satifying $f=\hat{f}\circ\pi_P$.
\end{lemma}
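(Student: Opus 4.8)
The plan is to exploit the universal property of the surjection $\pi_P$ directly: because $\pi_P$ is onto, uniqueness of $\hat f$ is automatic, so the entire content of the lemma lies in showing that the assignment $\hat f(\hat a):=f(a)$ is well-defined and then that it respects all of the poc structure. I would set up the proof around the single slogan that \emph{$f$ must be constant on $\sim$-classes}, and reduce everything to that fact.

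The crux, and the only place where the hypothesis that $Q$ is a genuine poc set (not merely weak) is used, is this well-definedness. Following the three clauses defining $\sim$, I must check $f(a)=f(b)$ whenever $a\sim b$. The case $a=b$ is vacuous. If $a,b$ are both negligible, then $a\le a^\ast$ gives, by order-preservation and $\ast$-equivariance of $f$, that $f(a)\le f(a^\ast)=f(a)^\ast$; hence $f(a)$ is negligible in $Q$, and since $Q$ is a poc set its only negligible element is $\minP$, forcing $f(a)=\minP=f(b)$. The ubiquitous case then follows by applying this to $a^\ast,b^\ast$ and using $\ast$-equivariance, yielding $f(a)=f(b)=\maxP$. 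Thus $\hat f$ is well-defined and satisfies $\hat f\circ\pi_P=f$ by construction.

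Next I would verify that $\hat f$ is a poc morphism, inheriting each property from $f$ through the factorization. Basepoint preservation is $\hat f(\hat\minP)=f(\minP)=\minP$, and $\ast$-equivariance holds because $\pi_P$ is $\ast$-equivariant: $\hat f(\widehat{a^\ast})=f(a^\ast)=f(a)^\ast$. For order-preservation I would use that the ordering on $\hat P$ is the one induced from $P$ through $\pi_P$, so that $\hat a\le\hat b$ means some representatives (or a finite $\sim$/$\le$ chain of them) are comparable in $P$; applying the order-preserving $f$ along them, together with the fact just established that $f$ is constant on $\sim$-classes, gives $f(a)\le f(b)$, i.e. $\hat f(\hat a)\le\hat f(\hat b)$.

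Finally, uniqueness is immediate: any poc morphism $g$ with $g\circ\pi_P=f$ must satisfy $g(\hat a)=f(a)=\hat f(\hat a)$ for all $a$, and surjectivity of $\pi_P$ then forces $g=\hat f$. The main obstacle is the well-definedness step, specifically the observation that negligible elements of $P$ are necessarily sent to $\minP$: this is exactly the point where passing to a target that is a bona fide poc set collapses the ambiguity tolerated by the weak source, and it is what makes the factorization exist at all.
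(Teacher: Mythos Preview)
Your proposal is correct and is exactly the standard argument one expects for a universal-property lemma of this kind. The paper itself does not actually supply a proof here: it introduces the statement as ``the following elementary lemma'' and moves on, so there is nothing to compare against beyond noting that your well-definedness step (negligible elements must land on $\minP$ because $Q$ has no other negligible element) is precisely the content the authors allude to in the sentence immediately following the lemma.
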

For example, seeking a realization of a weak poc set structure in any space $\spc$ is only possible after identifying all negligible elements with $\minP$, because the only subset $F$ of $\spc$ satisfying $F\subseteq \spc\minus F$ is $F=\varnothing$.

\subsection{The Dual Graph of a Poc Set} The duality theory for poc sets is an extension of Stone Duality \cite{Stone-duality}. At the base of the construction are binary selections:
\begin{defn}[$\ast$-selections]\label{defn:selection} Let $P$ be a weak poc set. A subset $A\subset P$ is a \emph{$\ast$-selection on $P$} if no $a\in A$ has $a^\ast\in A$. A $\ast$-selection $A$ on $P$ is \emph{complete} if for any $a\in P$ either $a\in A$ or $a^\ast\in A$. The set of all complete $\ast$-selections on $P$ is denoted by $S(P)^0$.\defstop
\end{defn}
The following is a metric on $S(P)^0$:
\begin{equation}\label{eqn:ellone metric}
	\ellone{A}{B}=\left|A\minus B\right|=\left|B\minus A\right|=\tfrac{1}{2}\left|A\vartriangle B\right|
\end{equation}
Indeed, fixing $A_0\in S(P)^0$, an explicit isometry of $\left(S(P)^0,\mbf{\Delta}\right)$ onto $\power{A_0}$ endowed with the Hamming distance is constructed by sending $A\in S(P)^0$ to the [indicator function of the] set $A_0\minus A$. Thus, $S(P)^0$ may be thought of as simply being the vertex set, or $0$-skeleton, of the $\frac{|P|}{2}$-dimensional standard unit cube, viewed as a combinatorial cubical complex, -- we denote this complex by $S(P)$ -- where a cubical face $Q$ of $S(P)$ of dimension $d$ corresponds to a maximal subset of $S(P)^0$ with diameter $d$ as calculated in the metric $\ellone{\cdot}{\cdot}$.

\medskip
\subsubsection{Construction of the Dual Graph}\label{constructing the dual} Some vertices of $S(P)$ cannot be witnessed in any realization of $P$:
\begin{defn}[Coherence]\label{defn:coherence} A pair of elements $a,b\in P$ is said to be \emph{incoherent} if $a\leq b^\ast$. A subset $A$ of a poc set $P$ is said to be \emph{coherent} if it contains no incoherent pair.\defstop
\end{defn}
\begin{defn}[Dual graph, dual Cubing]\label{defn:dual} Given a (finite) poc set $P$, one defines:
\begin{itemize}
	\item[(a)]{\bf The dual cubing of $P$}, denoted $\cube{P}$, is the sub-complex of $S(P)$ induced by the set of coherent vertices;
	\item[(b)]{\bf The dual median algebra of $P$}, denoted $P^\circ$, is the $0$-skeleton of $\cube{P}$;
	\item[(c)]{\bf The dual graph of $P$}, denoted $\dual{P}$, is the $1$-skeleton of $\cube{P}$.\defstop
\end{itemize}
\end{defn}

\begin{figure}[t]
	\begin{center}
%		\def\svgwidth{\columnwidth}
%		\includesvg{simple_poc_set}
		\includegraphics[width=\columnwidth]{./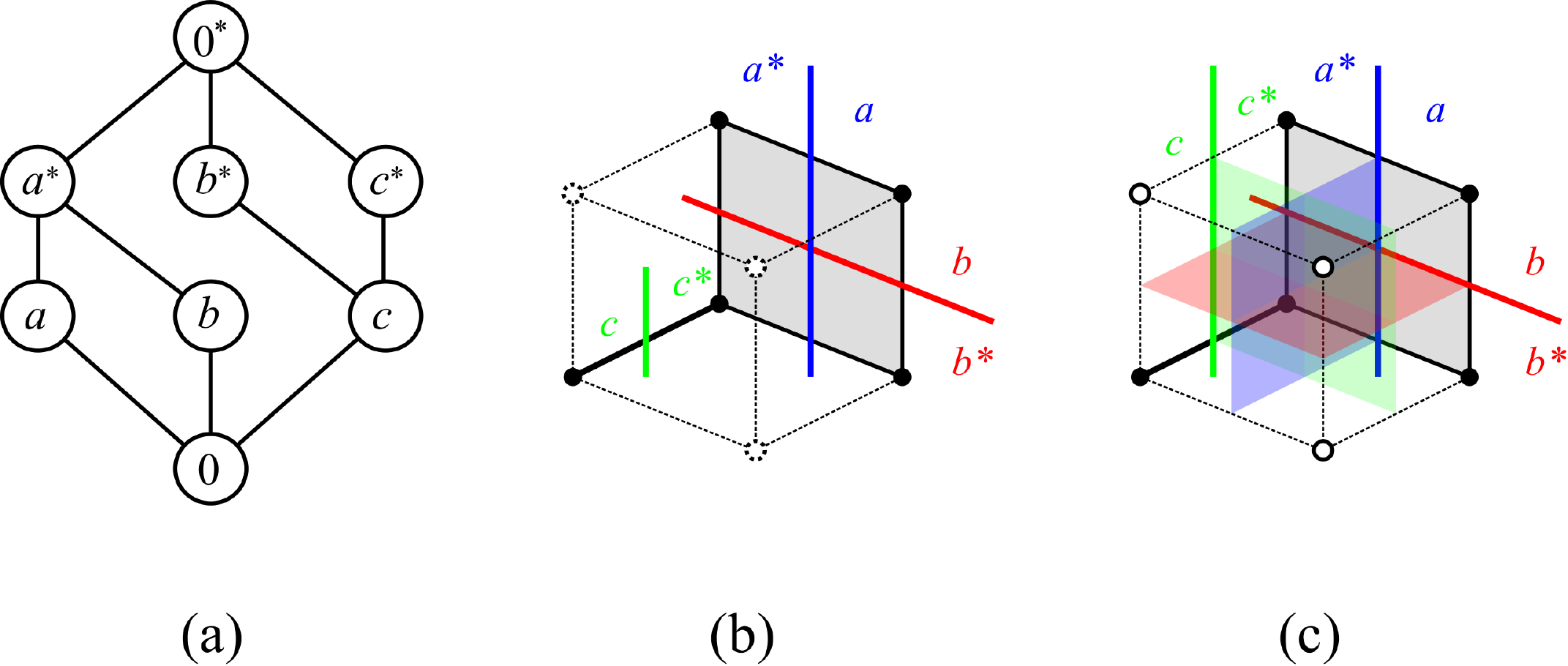}
		\caption{A simple poc set $P$ on three generators (a) and the resulting cube complex (b), obtained by deleting all incoherent vertices from the cube $S(P)$ (c) -- see example \ref{example:simple example of a dual}.\label{fig:duality for a simple relation}}
	\end{center}
\end{figure}

\medskip
\subsubsection{A Simple Example}\label{example:simple example of a dual}
To illustrate the definition, consider the poc set $P$ whose Hasse diagram is given in figure \ref{fig:duality for a simple relation}(a). Given by generations and relations, $P$ takes the form:
\begin{equation}
	P=\gen{a,b,c}{a<c^\ast,b<c^\ast}
\end{equation}
A good way of thinking about a poc set is to pretend that it is realized in a space $\spc$, so that our $P$ is a collection of three binary questions ($a,b$ and $c$) about $\spc$, augmented with the complementary questions ($a^\ast,b^\ast$ and $c^\ast$), together with an additional record of known implication relations between them ($a<c^\ast$ and $b<c^\ast$). In the absence of any implications on record, an observer endowed with $P$ will model the space $\spc$ as the full $3$-cube $S(P)$, where the proper elements of $P$ correspond to the co-dimension one faces of the cube -- fig. \ref{fig:duality for a simple relation}(c). On the other hand, knowledge of the above relations renders some of the vertices of $S(P)$ redundant, resulting in a reduction in the number of binary states necessary for modeling $\spc$ using the same three questions -- fig. \ref{fig:duality for a simple relation}(b).

\begin{remark} We have chosen the term {\it coherent subset} over Roller's {\it filter-base} to better fit the context of our planning/sensing problem. %From a more abstract perspective, it is convenient to define the vertices of $\cube{P}$ to be maximal coherent subsets of $P$. The definitions are easily seen to be equivalent: a coherent subset of $P$ is maximal if and only if it is a $\ast$-selection. While the former is technically more useful in the infinite case, the latter seems to provide a more hands-on description of $\cube{P}$ in the finite case. 
\end{remark}

\begin{remark} The standard identification of $\power{P}$ with the space of $\{0,1\}$-valued functions on $P$ also puts the set $P^\circ$ of coherent vertices of $S(P)$ in one-to-one correspondence with the set $\morph{P}{\power{}}$ of poc morphisms of $P$ onto the trivial poc set $\power{}$.
\end{remark}

\subsection{Poc Set Duals are Median Graphs}\label{median graphs} Graphs of the form $\dual{P}$ (for a weak poc set $P$) are completely characterized. We recall:
\begin{defn}[hop-distance, intervals]\label{defn:hop distance} Let $G=(V,E)$ be a connected simple graph and let $u,v\in V$. The \emph{hop distance} $d_G(u,v)$ is defined to be the minimum length of an edge-path in $G$ joining $u$ with $v$. The interval $I(u,v)$ is defined to be the set of all vertices $w\in V$ satisfying the equality $d_G(u,v)=d_G(u,w)+d_G(w,v)$.\defstop
\end{defn}
A fundamental fact about the dual $\dual{P}$ of a poc set $P$ is a quick corollary of the results in \cite{Nica-spaces_with_walls}, section 4:
\begin{lemma}\label{lemma:ellone metric is hop metric} Let $G=\dual{P}$ for a finite poc set $P$. Then the metric $\mbf{\Delta}$ coincides with the hop metric on $G$.
\end{lemma}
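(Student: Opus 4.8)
The plan is to prove the two inequalities $\ellone{A}{B}\leq d_G(A,B)$ and $d_G(A,B)\leq\ellone{A}{B}$ separately, for arbitrary coherent vertices $A,B\in P^\circ$. The first is immediate from the construction of $\dual{P}$: along any edge $\{C,C'\}$ of $G$ one has $\ellone{C}{C'}=1$, so for any edge-path $A=C_0,C_1,\ldots,C_m=B$ the triangle inequality for $\mbf{\Delta}$ gives $\ellone{A}{B}\leq\sum_{i=1}^m\ellone{C_{i-1}}{C_i}=m$; minimizing over paths yields $\ellone{A}{B}\leq d_G(A,B)$. All the content therefore lies in producing, between any two coherent vertices, an edge-path of length \emph{exactly} $\ellone{A}{B}$ that never leaves the coherent sub-complex $\dual{P}$.

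First I would record the characterization of coherent vertices already implicit in Definition \ref{defn:coherence} and the remark identifying $P^\circ$ with $\morph{P}{\power{}}$: a complete $\ast$-selection $A$ is coherent if and only if it is an \emph{up-set}, i.e. $a\in A$ and $a\leq b$ imply $b\in A$. Indeed, an up-set which is a $\ast$-selection cannot contain an incoherent pair $a,b$ with $a\leq b^\ast$, since upward closure would force $b^\ast\in A$ against $b\in A$; conversely, if $A$ is coherent with $a\in A$, $a\leq b$ but $b\notin A$, then $b^\ast\in A$ by completeness and the pair $a,b^\ast$ is incoherent ($a\leq b=(b^\ast)^\ast$), a contradiction.

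The key step is a single \emph{coherence-preserving flip}. Fix coherent $B$ and coherent $C\neq B$, and set $D:=C\minus B$, so $D\neq\varnothing$ and $D^\ast=B\minus C$ by completeness. Since $P$ is finite, choose $a$ minimal in $D$ for the order of $P$. I claim $a$ is minimal in all of $C$: if $x\in C$ satisfied $x<a$, then either $x\in D$, contradicting minimality of $a$ in $D$, or $x\in C\cap B$, whence $x\in B$ and $x<a$ force $a\in B$ because $B$ is an up-set --- contradicting $a\in D=C\minus B$. With $a$ minimal in $C$, a direct check shows $C':=(C\minus\{a\})\cup\{a^\ast\}$ is again an up-set: deleting $a$ cannot break upward closure, as no element of $C$ lies strictly below $a$, while inserting $a^\ast$ forces only elements $y>a^\ast$, which under $y\mapsto y^\ast$ correspond to elements strictly below $a$, hence absent from $C$, so that $y=(y^\ast)^\ast\in C\subseteq C'\cup\{a\}$ with $y\neq a$. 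Thus $C'\in P^\circ$, the pair $\{C,C'\}$ is an edge ($\card{C\minus C'}=1$), and since $a\in D$ gives $a^\ast\in B$ we obtain $C'\minus B=D\minus\{a\}$, so $\ellone{C'}{B}=\ellone{C}{B}-1$.

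Iterating this flip from $C=A$ yields, after exactly $\ellone{A}{B}$ steps, a chain of coherent vertices terminating at $B$ (when the disagreement set empties, completeness forces $C=B$), i.e. an edge-path in $G$ of length $\ellone{A}{B}$; hence $d_G(A,B)\leq\ellone{A}{B}$ and equality follows. The main obstacle is entirely concentrated in the flip lemma: excising incoherent vertices from the cube $S(P)$ could a priori sever Hamming geodesics, so one must guarantee that some length-$\ellone{A}{B}$ geodesic can be routed through coherent vertices alone. The decisive observation is that minimality in the disagreement set $D$ \emph{upgrades} to minimality in $C$ precisely because $B$ is an up-set; this is exactly what makes each elementary flip simultaneously preserve coherence and strictly decrease the distance to $B$, so that no backtracking is ever required. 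This is the combinatorial content behind the quick corollary of \cite{Nica-spaces_with_walls}, section 4, cited in the statement.
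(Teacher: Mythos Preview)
Your argument is correct and is precisely the standard ``flip'' argument underlying the cited result of \cite{Nica-spaces_with_walls}; in fact it anticipates almost verbatim the machinery the paper itself introduces later in Lemma~\ref{lemma:flipping} (the minset characterisation of neighbours) and Proposition~\ref{prop:constructing geodesics} (the greedy construction of geodesics to a convex target). The only point where your write-up is slightly terse is the justification of $y\neq a$ in the case $y>a^\ast$: this uses that $P$ is a genuine poc set (not merely weak), so $a^\ast<a$ would force $a^\ast=\minP$ and hence $a=\maxP\in B$, contradicting $a\in C\minus B$; you might make this explicit.
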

An important and well-studied class of graphs is:
\begin{defn}[median graphs \cite{Chepoi-median,median1}] A connected simple graph $G=(V,E)$ is said to be a \emph{median graph}, if the set $I(u,v)\cap I(v,w)\cap I(u,w)$ contains exactly one vertex for each $u,v,w\in V$. This vertex is the \emph{median} of the triple $(u,v,w)$ and denoted by $\med{u}{v}{w}$ -- see figure \ref{fig:median}.\defstop
\end{defn}
Median graphs are a special subfamily of a family of ternary algebras, called {\it median algebras}, \cite{median4,median5,median3,median2}. Some modern generalizations and applications may be found in \cite{CDH-median_spaces_measured_walls}.

\begin{figure}[t]
	\begin{center}
		\includegraphics[width=.6\columnwidth]{./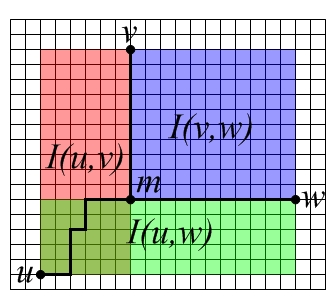}
	\end{center}
	\caption{Computing a median in the integer grid (see example \ref{example:direct sum} for a poc-set presentation).\label{fig:median}}
\end{figure}

Another way of stating the preceding lemma (again, see \cite{Nica-spaces_with_walls}, section 4, where these results are derived in a much more general setting):
\begin{thm} The dual $G=\dual{P}$ of a finite poc set $P$ is a finite median graph, with the median calculated according to the formula
\begin{equation}\label{eqn:median formula}
	\med{u}{v}{w}=(u\cap v)\cup(u\cap w)\cup(v\cap w)
\end{equation}
for all $u,v,w\in P^\circ$.
\end{thm}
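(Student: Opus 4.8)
The plan is to combine the metric identification of Lemma \ref{lemma:ellone metric is hop metric} with a direct verification that the majority-vote formula \eqref{eqn:median formula} always lands inside the coherent sub-complex. Write $m:=(u\cap v)\cup(u\cap w)\cup(v\cap w)$ for $u,v,w\in P^\circ$; concretely, an element $a\in P$ lies in $m$ exactly when it belongs to at least two of the three selections $u,v,w$. First I would pin down intervals. By Lemma \ref{lemma:ellone metric is hop metric} the hop distance on $G=\dual{P}$ agrees with $\ellone{\cdot}{\cdot}$, and through the isometry of $(S(P)^0,\ellone{\cdot}{\cdot})$ onto a Hamming cube described after \eqref{eqn:ellone metric} one reads off the standard betweenness relation: for $u,v\in P^\circ$ a vertex $z\in P^\circ$ lies in $I(u,v)$ if and only if $u\cap v\subseteq z\subseteq u\cup v$, i.e. $z$ is forced to agree with both $u$ and $v$ wherever the two already agree.

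Second I would show $m\in P^\circ$, i.e. that $m$ is a genuine vertex. That $m$ is a complete $\ast$-selection (Definition \ref{defn:selection}) is a counting triviality: for each proper pair $\{a,a^\ast\}$ each of $u,v,w$ contains exactly one member, so the votes for $a$ and for $a^\ast$ sum to three and precisely one of them achieves a majority. Coherence (Definition \ref{defn:coherence}) is the substantive point: suppose $a,b\in m$ with $a\leq b^\ast$; then $a$ lies in at least two of $u,v,w$ and $b$ in at least two, so by pigeonhole some single selection, say $u$, contains both $a$ and $b$, contradicting the coherence of $u$. Hence $m$ contains no incoherent pair and is a vertex of $\dual{P}$.

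Finally I would assemble the median statement using the betweenness criterion. One has $m\in I(u,v)$ because $u\cap v\subseteq m$ (an element in both $u$ and $v$ has two votes) and $m\subseteq u\cup v$ (an element with two votes must be supported by $u$ or $v$, since only $w$ is left over), and the other two intervals are symmetric, so $m\in I(u,v)\cap I(v,w)\cap I(u,w)$. For uniqueness, the same criterion forces any $z$ in the triple intersection to match the majority on each element individually: if $a$ lies in two of $u,v,w$ then those two agree on $a$ and the corresponding interval forces $a\in z$, whereas if $a$ lies in at most one then two of the selections agree on $a^\ast$ and force $a\notin z$. Thus $z=m$ as a set, the triple intersection equals $\{m\}$, and $\med{u}{v}{w}=m$ as claimed.

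The hard part is the coherence verification together with the guarantee that intervals do not shrink under passage to the induced graph. The ambient cube $S(P)$ is trivially median, being a product of edges, so the entire content of the theorem is that the majority vertex never escapes $\cube{P}$ and that geodesics in $\dual{P}$ realize the $\ell^1$ distance. Both facts hinge on the interplay between the $\ast$-involution and Definition \ref{defn:coherence}, and on Lemma \ref{lemma:ellone metric is hop metric} to keep the combinatorial and metric pictures aligned; once these are in hand the majority argument is purely mechanical.
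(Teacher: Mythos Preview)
Your argument is correct. The paper does not actually prove this theorem in-house: it is stated as a restatement of Lemma~\ref{lemma:ellone metric is hop metric} and is deferred wholesale to \cite{Nica-spaces_with_walls}, section~4. So there is no ``paper's proof'' to compare against beyond that citation.

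What you have written is the standard direct verification, and it is sound. The only place where one might worry is the passage from Lemma~\ref{lemma:ellone metric is hop metric} to the interval characterization $z\in I(u,v)\Leftrightarrow u\cap v\subseteq z\subseteq u\cup v$: this is exactly where the lemma is doing real work, since it guarantees that geodesics in the induced subgraph $\dual{P}$ realize the ambient $\ell^1$ distance, so that intervals in $\dual{P}$ are just the Hamming intervals intersected with $P^\circ$. You flag this correctly in your closing paragraph. The pigeonhole argument for coherence of the majority vertex and the coordinate-wise uniqueness argument are both clean and complete. Your proof thus supplies what the paper omits, at the cost of depending on Lemma~\ref{lemma:ellone metric is hop metric}, which the paper likewise imports from \cite{Nica-spaces_with_walls}.
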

In other words, the median of three coherent $\ast$-selections is determined by a majority vote on the values of their observations. %Surprisingly, the converse is also true -- a point we need to dwell on for a short while, the immediate benefit being that a schematic of the proof will also allow the reader to verify the correctness of the drawings in this paper.

\subsection{Convexity}\label{subsection:convexity} Median graphs have a very strong convexity theory. We recall:
\begin{defn}[Convexity] Let $G=(V,E)$ be a graph. A subset $K\subseteq V$ is said to be \emph{convex} if $I(u,v)\subseteq K$ holds whenever $u,v\in K$.\defstop
\end{defn}
\begin{defn}[Half-Spaces] Let $G=(V,E)$ be a graph. A subset $H\subseteq V$ is said to be a half-space of $G$, if both $H$ and $V\minus H$ are convex subsets of $G$.\defstop
\end{defn}
For example, lemma \ref{lemma:ellone metric is hop metric} can be used to quickly deduce
\begin{lemma}\label{lemma:ellone halfspaces} Let $P$ be a poc set. Then the half-spaces of $\dual{P}$ are precisely the subsets of $P^\circ$ of the form
\begin{equation}\label{eqn:ellone halfspaces}
	\half{a}:=\set{u\in P^\circ}{a\in u}
\end{equation}
where $a$ ranges over $P$. In particular, subsets of $P^\circ$ of the form
\begin{equation}\label{eqn:typical convex set}
	\half{K}:=\set{u\in P^\circ}{K\subseteq u}=\bigcap_{a\in K}\half{a}
\end{equation}
are convex in $\dual{P}$.
\end{lemma}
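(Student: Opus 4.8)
The plan is to prove the asserted equality of two families of subsets by establishing each inclusion separately, and then to read off convexity of $\half{K}$ as a one-line corollary. Before starting I would record two bookkeeping facts that streamline everything. Since every $u \in P^\circ$ is a \emph{complete} $\ast$-selection, $a \notin u$ holds iff $a^\ast \in u$; consequently the complement of $\half{a}$ in $P^\circ$ is precisely $\half{a^\ast}$. Moreover the two improper half-spaces are already of the advertised form: no coherent selection can contain the negligible element $\minP$ (as $\minP \le b^\ast$ for every $b$), so $\maxP \in u$ for all $u$, whence $\half{\maxP} = P^\circ$ and $\half{\minP} = \varnothing$. This lets me restrict attention to proper half-spaces in the hard direction.

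The forward inclusion — each $\half{a}$ \emph{is} a half-space — I would obtain from an interval computation. By Lemma \ref{lemma:ellone metric is hop metric} the metric $\ellone{\cdot}{\cdot}$ agrees with the hop metric, and for complete selections $w$ lies in $I(u,v)$ exactly when $u \cap v \subseteq w \subseteq u \cup v$ (agreement of $w$ with $u,v$ on all coordinates where the latter agree). Hence if $a \in u$ and $a \in v$ then $a \in u \cap v \subseteq w$ for every $w \in I(u,v)$, so $\half{a}$ is convex; combined with the complementation fact above, $\half{a^\ast} = P^\circ \minus \half{a}$ is convex too, so $\half{a}$ is a half-space.

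The substantive work is the converse: every proper half-space $H$ coincides with some $\half{a}$. First I would locate a separating edge. Choosing $u \in H$ and $w \in P^\circ\minus H$ with $d(u,w)$ minimal, a short argument forces $d(u,w)=1$: if $d(u,w)\ge 2$, a neighbour $z_1$ of $u$ on a geodesic to $w$ must lie in $H$ (else $(u,z_1)$ is a closer separating pair), but then $(z_1,w)$ is a closer separating pair — a contradiction either way. The edge $\{u,w\}$ flips a single element, giving $a \in u$, $a^\ast \in w$, so $u \in \half{a}$ and $w \in \half{a^\ast}$. The crux — and the step I expect to be the main obstacle — is promoting this one local crossing to the global identity $H = \half{a}$; this is exactly where the median structure must be exploited. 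For any $x \in H$, suppose toward contradiction $a^\ast \in x$, and set $m = \med{x}{u}{w}$. By the majority formula \eqref{eqn:median formula}, $a^\ast$ (lying in both $x$ and $w$) belongs to $m$; yet $m \in I(u,w) = \{u,w\}$ because an edge has a two-point interval, and $a^\ast \notin u$, forcing $m = w$. Thus $w = \med{x}{u}{w} \in I(x,u)$, and convexity of $H \ni x,u$ gives $w \in H$, contradicting $w \notin H$. Hence $H \subseteq \half{a}$, and symmetrically $P^\circ \minus H \subseteq \half{a^\ast}$; since both pairs partition $P^\circ$, we conclude $H = \half{a}$.

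Finally the ``in particular'' clause is immediate once the main statement is in hand: $\half{K} = \bigcap_{a \in K}\half{a}$ is an intersection of convex sets, and an intersection of convex sets is convex directly from the interval definition (if $u,v$ lie in every $\half{a}$ then so does all of $I(u,v)$). The only real care needed throughout is the interval characterization $I(u,w) = \{u,w\}$ for an edge and the majority-vote evaluation of the median, both of which are furnished by the median-graph results already established above.
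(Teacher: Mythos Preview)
Your proof is correct. The paper itself does not supply a proof of this lemma, only asserting that it ``can be used to quickly deduce'' from Lemma~\ref{lemma:ellone metric is hop metric}; your argument fills in exactly the details one would expect, using the $\ellone$/hop-metric identification for the interval characterization in the forward direction and the median (majority-vote) formula~\eqref{eqn:median formula} to promote a single separating edge to the global identity $H=\half{a}$ in the converse. Nothing in your write-up conflicts with the paper's framework, and the final clause about $\half{K}$ is indeed immediate once the main statement is established.
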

\begin{remark} Note also that $\half{a^\ast}=P^\circ\minus\half{a}$ for all $a\in P$.
\end{remark}
Much more can be said in general:
\begin{thm}[Properties of Median Graphs, \cite{Roller-duality}, section 2]\label{thm:convexity theory of median graphs}
Let $G=(V,E)$ be a finite median graph. Then:
\begin{enumerate}
	\item Every convex set is an intersection of halfspaces;
	\item Any family of pairwise-intersecting convex sets has a common vertex ({\it 1-dimensional Helly property});
	\item For any convex subset $K\subset V$, the subgraph of $G$ induced by $K$ is a median graph;
	\item For any convex subset $K\subset V$ and any vertex $v\notin K$ there exists a unique vertex $\proj{K}{v}\in K$ at minimum hop distance from $v$.
	\item For any convex subset $K\subset V$, the closest-point projection $\proj{K}{\wild}$ is a median-preserving, distance non-increasing map of $G$ onto the subgraph of $G$ induced by $K$.
\end{enumerate}
\end{thm}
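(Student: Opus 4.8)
The plan is to base everything on two facts already available: the median formula \eqref{eqn:median formula}, which guarantees $\med{u}{v}{w}\in I(u,v)\cap I(u,w)\cap I(v,w)$, and the identification of half-spaces with the sets $\half{a}$ (lemma \ref{lemma:ellone halfspaces}). Combining lemma \ref{lemma:ellone metric is hop metric} with the metric \eqref{eqn:ellone metric}, I would first isolate the \emph{interval characterization}: $w\in I(u,v)$ if and only if $w$ belongs to every half-space that contains both $u$ and $v$. Clause (3) is then immediate, since for convex $K$ and $u,v,w\in K$ convexity gives $I(u,v)\subseteq K$, so intervals computed inside $K$ agree with those in $G$ and $\med{u}{v}{w}\in I(u,v)\subseteq K$; the induced subgraph thus carries the restricted median and is itself median.

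For clause (4), the function $u\mapsto d(u,v)$ attains a minimum on the finite set $K$; if $u_1,u_2\in K$ both minimize it, then $m:=\med{u_1}{u_2}{v}\in I(u_1,u_2)\subseteq K$, while $m\in I(u_1,v)$ together with minimality of $d(u_1,v)$ forces $d(u_1,m)\le0$, so $m=u_1$, and symmetrically $m=u_2$; hence the minimizer $\proj{K}{v}$ is unique. The same computation with a single point $u\in K$ yields the \textbf{gate property}: taking $m=\med{v}{\proj{K}{v}}{u}\in I(\proj{K}{v},u)\subseteq K$ and minimality give $m=\proj{K}{v}$, that is $\proj{K}{v}\in I(v,u)$ for every $u\in K$. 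This gate property is the lever for the two remaining projection clauses.

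For clause (1) one inclusion is trivial; for $v\notin K$ put $w=\proj{K}{v}\neq v$ and choose any half-space $H=\half{a}$ with $w\in H$, $v\notin H$ (one exists since $d(v,w)\ge1$). If some $u\in K$ had $u\notin H$, then $u,v\in\half{a^\ast}=P^\circ\minus H$, and the gate property $w\in I(v,u)$ would force $w\in\half{a^\ast}$ by the interval characterization, a contradiction; hence $K\subseteq H$ and $H$ separates $v$ from $K$. For clause (5), the same dichotomy shows that a half-space $H$ \emph{cutting} $K$ satisfies $\proj{K}{x}\in H\IFF x\in H$ (project $x$ toward a vertex of $K$ on either side of $H$ and invoke the gate property), while a non-cutting $H$ sends all projections to $K$'s side. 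Since any $H$ separating $\proj{K}{v_1}$ from $\proj{K}{v_2}$ must cut $K$, this equivalence shows such an $H$ already separates $v_1$ from $v_2$, so reading $d$ as the number of separating walls yields that $\proj{K}{\wild}$ is distance non-increasing. For median-preservation, fix $u,v,w$ and a half-space $H$: if $H$ cuts $K$ the preceding equivalence turns the majority vote defining $\med{u}{v}{w}$ into the one defining $\med{\proj{K}{u}}{\proj{K}{v}}{\proj{K}{w}}$, whereas if $H$ does not cut $K$ both $\proj{K}{\med{u}{v}{w}}$ and $\med{\proj{K}{u}}{\proj{K}{v}}{\proj{K}{w}}$ lie on $K$'s side of $H$; agreeing in every half-space they coincide, and by clause (3) the latter lies in $K$.

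Finally, clause (2) I would prove by a median argument and induction on the size of the family. For three pairwise-intersecting convex sets, choose $x_{ij}\in K_i\cap K_j$ and note that $\med{x_{12}}{x_{13}}{x_{23}}\in I(x_{12},x_{13})\subseteq K_1$, and likewise lies in $K_2$ and $K_3$, giving a common vertex. For $n$ sets, replace $K_{n-1},K_n$ by the convex intersection $K_{n-1}\cap K_n$; the three-set case shows it still meets every remaining $K_i$, and induction concludes. The delicate point throughout, and the step I expect to be the main obstacle, is the interaction between the nearest-point projection and the half-space structure, precisely the lemma that projection preserves exactly those walls that cut $K$: this single fact underlies both the separation in clause (1) and the non-expansiveness and median-preservation in clause (5), while everything else reduces to the one-line median computations above. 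Some care is also needed to keep the bookkeeping of walls versus complementary half-spaces consistent, so that the hop distance is correctly read off as the number of separating walls.
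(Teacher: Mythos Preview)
The paper does not supply its own proof of this theorem: it is stated with a citation to Roller's monograph (section 2) and used as a black box throughout. So there is no ``paper's proof'' to compare against; you have written a self-contained argument where the authors chose to quote the literature.

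Your argument is correct and follows the standard route. A couple of points you pass over quickly that deserve one more sentence each: in clause~(3), the claim that ``intervals computed inside $K$ agree with those in $G$'' rests on the induced subgraph being isometrically embedded, which you should state explicitly (it follows because any $G$-geodesic between $u,v\in K$ lies in $I(u,v)\subseteq K$, hence is already a path in the induced subgraph; this also gives connectedness). In clause~(5), the conclusion ``agreeing in every half-space they coincide'' uses that half-spaces separate vertices of $\dual{P}$, which is immediate from lemma~\ref{lemma:ellone halfspaces} and $\ellone{u}{v}\geq 1$ for $u\neq v$, but worth saying once. With those two remarks made explicit, your proof is complete and would serve as a replacement for the citation.
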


Any graph $G=(V,E)$ generates a poc set $\poc{G}$: we let the underlying set of $\poc{G}$ be the set of all half-spaces of $G$, then we order it by inclusion and set $H^\ast=V\minus H$ for the complementation operator. Of course, some graphs (e.g. any odd cycle) will generate the trivial poc set in this way, but not so for median graphs:
\begin{thm}[\cite{Roller-duality}, proposition 5.9]\label{thm:every median graph is a poc dual} Let $G$ be a finite median graph. Then $G$ is canonically isomorphic to $\dual{\poc{G}}$ via the median-preserving map which sends each vertex $v$ to the collection of halfspaces of $G$ which contain $v$.
\end{thm}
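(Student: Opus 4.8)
The plan is to exhibit the map $\Phi\colon V\to\poc{G}^\circ$ defined by $\Phi(v)=\set{H}{v\in H}$ (the collection of half-spaces of $G$ containing $v$) and show it is a median-preserving bijection onto the $0$-skeleton of $\cube{\poc{G}}$, then upgrade this to a graph isomorphism $G\cong\dual{\poc{G}}$. First I would verify that $\Phi$ is well defined, i.e.\ that each $\Phi(v)$ is a coherent complete $\ast$-selection on $\poc{G}$. Completeness is immediate, since for every half-space $H$ exactly one of $H$ and $H^\ast=V\minus H$ contains $v$. Coherence is equally quick: by Definition \ref{defn:coherence} a pair $H,K$ is incoherent precisely when $H\leq K^\ast$, which for half-spaces reads $H\cap K=\varnothing$; but if $H,K\in\Phi(v)$ then $v\in H\cap K$, so no incoherent pair can occur. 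Injectivity follows from Theorem \ref{thm:convexity theory of median graphs}(1): singletons are convex (as $I(u,u)=\{u\}$), hence $\{u\}=\bigcap\set{H}{u\in H}$, so for $v\neq u$ some half-space separates them and $\Phi(u)\neq\Phi(v)$.

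The main step is surjectivity. Given a coherent complete $\ast$-selection $\alpha$ on $\poc{G}$, I would argue that $K_\alpha:=\bigcap_{H\in\alpha}H$ is a single vertex realizing $\alpha$. Coherence of $\alpha$ says that no two of its members are disjoint, so the half-spaces in $\alpha$ form a finite family of pairwise-intersecting convex sets; the $1$-dimensional Helly property (Theorem \ref{thm:convexity theory of median graphs}(2)) then gives $K_\alpha\neq\varnothing$. To see $K_\alpha$ is a singleton, suppose $u,w\in K_\alpha$ with $u\neq w$; by the separation argument above there is a half-space $H$ with $u\in H$, $w\notin H$, and completeness of $\alpha$ forces $H\in\alpha$ or $H^\ast\in\alpha$, either of which contradicts $u,w\in K_\alpha$. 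Hence $K_\alpha=\{v\}$ for a unique $v$, and $\alpha\subseteq\Phi(v)$; since both are complete $\ast$-selections this inclusion is an equality, so $\Phi(v)=\alpha$.

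To see that $\Phi$ is median-preserving, I would combine convexity of half-spaces with the majority-vote formula \eqref{eqn:median formula} for the median on $\poc{G}^\circ$: a half-space $H$ lies in $\med{\Phi(u)}{\Phi(v)}{\Phi(w)}$ iff it contains at least two of $u,v,w$. Writing $m=\med{u}{v}{w}$, if $H$ contains, say, $u$ and $v$ then $m\in I(u,v)\subseteq H$ by convexity of $H$; if $H$ contains at most one of the three, the same argument applied to the convex set $H^\ast$ gives $m\in H^\ast$. Thus $m\in H$ exactly when $H$ contains two of the three vertices, i.e.\ $\Phi(m)=\med{\Phi(u)}{\Phi(v)}{\Phi(w)}$. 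Finally, since adjacency in a median graph is recoverable from the median operation — $u\sim v$ iff $u\neq v$ and $I(u,v)=\{u,v\}$, with intervals characterized by $w\in I(u,v)\IFF\med{u}{v}{w}=w$ — a median-preserving bijection between the median graphs $G$ and $\dual{\poc{G}}$ (the latter being median by Lemma \ref{lemma:ellone metric is hop metric}) carries edges to edges in both directions, yielding the desired isomorphism; its canonicity is manifest from the intrinsic definition of $\Phi$.

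The delicate point is surjectivity. The Helly property guarantees the candidate intersection $K_\alpha$ is nonempty, but pinning it down to a single vertex is exactly where coherence and completeness of the abstract selection must be played against the convexity theory of $G$. This is the step that genuinely uses that $G$ is a median graph rather than an arbitrary graph, and I expect it to be the crux of the argument, with well-definedness, injectivity, and the median identity being comparatively routine consequences of the results already recorded in Theorem \ref{thm:convexity theory of median graphs} and Lemma \ref{lemma:ellone halfspaces}.
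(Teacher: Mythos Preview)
The paper does not supply its own proof of this statement; it is quoted as Proposition~5.9 of \cite{Roller-duality} and used as a black box. Your argument is correct and is essentially the standard proof, assembled from exactly the ingredients the paper has already recorded: half-space separation of convex sets (Theorem~\ref{thm:convexity theory of median graphs}(1)) for injectivity, the $1$-dimensional Helly property (Theorem~\ref{thm:convexity theory of median graphs}(2)) for surjectivity, and convexity of half-spaces together with the majority-vote formula~\eqref{eqn:median formula} for the median identity. The final step---recovering adjacency from the median via $w\in I(u,v)\IFF\med{u}{v}{w}=w$---is the right way to promote a median bijection to a graph isomorphism. One cosmetic point: the fact that $\dual{\poc{G}}$ is itself a median graph is not Lemma~\ref{lemma:ellone metric is hop metric} but the (unlabeled) theorem immediately following it; this does not affect the argument.
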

An important conclusion (special case of proposition 6.11 in \cite{Roller-duality}) is:
\begin{cor}\label{cor:reconstruction} For any finite poc set $P$, the map $a\mapsto\half{a}$ is a poc-isomorphism of $P$ onto $\poc{\dual{P}}$. In other words, the poc-set $P$ may be reconstructed from its dual.
\end{cor}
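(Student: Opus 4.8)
The plan is to verify directly that $\Phi\colon a\mapsto\half{a}$ is a poc-isomorphism, checking in turn that it is a $\ast$-equivariant order-isomorphism onto the underlying set of $\poc{\dual{P}}$. Surjectivity is immediate: by Lemma~\ref{lemma:ellone halfspaces} the half-spaces of $\dual{P}$ are \emph{exactly} the sets $\half{a}$, $a\in P$, and these are by definition the elements of $\poc{\dual{P}}$. The $\ast$-equivariance is handed to us by the remark following that lemma, which gives $\half{a^\ast}=P^\circ\minus\half{a}=\half{a}^\ast$, the last equality being the definition of complementation in $\poc{\dual{P}}$. Finally $\Phi(\minP)=\half{\minP}=\varnothing$, since $\minP\leq\minP^\ast$ makes $\{\minP\}$ incoherent (Definition~\ref{defn:coherence}), so $\minP$ lies in no coherent vertex; thus $\Phi$ carries the minimum of $P$ to the minimum $\varnothing$ of $\poc{\dual{P}}$.

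It remains to show that $\Phi$ is an order-isomorphism, i.e. that $a\leq b\IFF\half{a}\subseteq\half{b}$; this also yields injectivity by antisymmetry. The forward implication is the easy half and rests only on coherence: if $a\leq b$ and $u\in\half{a}$ but $b\notin u$, then completeness forces $b^\ast\in u$, and then $a,b^\ast\in u$ with $a\leq b=(b^\ast)^\ast$ form an incoherent pair, contradicting $u\in P^\circ$. The reverse implication is where the real content lies. I would prove its contrapositive as a \emph{separation lemma}: if $a\not\leq b$, then there is a coherent complete $\ast$-selection $u$ with $a\in u$ and $b^\ast\in u$, so that $u\in\half{a}\minus\half{b}$.

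The hard part is constructing this $u$, and this is the step I expect to be the main obstacle. First I would check that the two-element set $\{a,b^\ast\}$ is already coherent: since $P$ is a genuine poc set its only negligible element is $\minP$ and its only ubiquitous element is $\maxP$, and $a\not\leq b$ forces $a\neq\minP$ and $b\neq\maxP$; this rules out $a\leq a^\ast$ and $b^\ast\leq b$, while $a\leq(b^\ast)^\ast=b$ and $b^\ast\leq a^\ast$ are each equivalent to $a\leq b$ and hence excluded. Next, using finiteness of $P$, I would enlarge $\{a,b^\ast\}$ to a \emph{maximal} coherent $\ast$-selection $u$. The crux is that maximality forces completeness: suppose some $c\in P$ had $c,c^\ast\notin u$. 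Adding $c$ to the coherent set $u$ creates an incoherent pair only if some $x\in u$ satisfies $x\leq c^\ast$ (the two ways a pair $\{x,c\}$ can be incoherent coincide under $\ast$), and symmetrically adding $c^\ast$ fails only if some $y\in u$ satisfies $y\leq c$. If \emph{both} extensions failed we would have $x\leq c^\ast$ and $y\leq c$, whence $x\leq c^\ast\leq y^\ast$, an incoherent pair inside $u$ --- a contradiction. Hence at least one of $u\cup\{c\}$, $u\cup\{c^\ast\}$ is coherent, contradicting maximality; so $u$ is complete, $u\in P^\circ$, and $u$ separates $a$ from $b$ as required.

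Assembling these pieces, $\Phi$ is a $\ast$-equivariant, minimum-preserving bijection with $a\leq b\IFF\half{a}\subseteq\half{b}$, i.e. a poc-isomorphism of $P$ onto $\poc{\dual{P}}$. As a sanity check one could instead deduce the statement from Theorem~\ref{thm:every median graph is a poc dual} applied to the finite median graph $G=\dual{P}$, but the separation lemma above is self-contained and is in any case the essential ingredient underlying that theorem as well.
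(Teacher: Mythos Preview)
Your argument is correct. The paper itself does not give a proof but simply records the corollary as a special case of proposition~6.11 in Roller's monograph, so your self-contained treatment is actually more than what the paper provides. The route you take---verifying $\ast$-equivariance and the minimum directly, invoking Lemma~\ref{lemma:ellone halfspaces} for surjectivity, and proving the order-reflection $\half{a}\subseteq\half{b}\Rightarrow a\leq b$ via a separation lemma (extend a coherent pair to a maximal coherent $\ast$-selection, then show maximality forces completeness)---is precisely the standard argument underlying Roller's result, specialized to the finite case where Zorn's lemma is replaced by a finite extension. So the two approaches are not really different in substance; you have simply unpacked what the citation is pointing to.

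One small remark on your closing ``sanity check'': applying Theorem~\ref{thm:every median graph is a poc dual} to $G=\dual{P}$ yields only $\dual{P}\cong\dual{\poc{\dual{P}}}$, an isomorphism of the \emph{duals}. To conclude $P\cong\poc{\dual{P}}$ from this you would still need to know that the functor $\dual{\,\cdot\,}$ reflects isomorphisms, which in turn rests on exactly the separation lemma you proved. So the direct argument is not merely an alternative but is the essential content either way.
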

From a practical standpoint, these two results offer an approach to understanding the geometry of $\cube{P}$ in terms of the order structure of $P$, which is the purpose of this and the following sections. As an aside, let us mention also that these results are best viewed together in categorical terms, as part of a restatement of the duality between the category of finite poc sets (with poc morphisms) and the category of finite median algebras (with median-preserving maps)~ ---~ see appendix \ref{subsection:duality theory} below.

We must consider the possible relations (if any) among elements $a,b\in P$. Those are:
\begin{equation}\label{eqn:nesting relations}
	a\leq b\,,\quad
	a^\ast\leq b\,,\quad
	a^\ast\leq b^\ast\,,\quad
	a\leq b^\ast
\end{equation}
It is easy to see that a pair of distinct {\it proper} elements will never satisfy two of the above conditions at the same time, as $\cube{P}$ provides us with a realization of $P$ inside $\power{P^\circ}$ -- after all, the last theorem tells us that:
\begin{equation}
	a\leq b \IFF \half{a}\subseteq\half{b}
\end{equation}
\begin{defn}[Nesting and Transversality]\label{defn:nesting} Suppose $a,b$ are proper elements of a weak poc set $P$. We say that they \emph{cross} ($a\pitchfork b$), if none of \eqref{eqn:nesting relations} hold. Otherwise, we say they are \emph{nested} ($a\Vert b$). A subset $A$ of $P$ is said to be \emph{nested} if all its elements are pairwise nested, and \emph{transverse} if its elements cross pairwise.\defstop
\end{defn}
Thus, the half-spaces of $\dual{P}$ are nothing more than the restriction to $\dual{P}$ of the half-spaces of the cube $S(P)^1$, with two of them nesting if and only if the corresponding elements of $P$ are nested, that is, if and only if exactly one of the following holds:
\begin{equation}
	\begin{array}{ll}
		\half{a}\cap\half{b}=\varnothing\,,\;&
		\half{a^\ast}\cap\half{b}=\varnothing\,,\;\\
		\half{a^\ast}\cap\half{b^\ast}=\varnothing\,,\;&
		\half{a}\cap\half{b^\ast}=\varnothing
	\end{array}
\end{equation}
We conclude that the more relations are on record in the order structure of $P$ the fewer transverse sets there are to be found there. In other words, nesting relations are an obstruction to high-dimesional cubes in $\cube{P}$: each additional relation in $P$ implies fewer faces of the original cube $S(P)$ survive the culling of incoherent vertices used for obtaining $\cube{P}$. At one extreme one finds $\cube{P}=S(P)$ when $P$ itself (up to removing improper elements) is transverse. At the other extreme (exercise for the reader) $\cube{P}$ forms a tree if and only if $P$ is nested.

\medskip
\subsubsection{Example: the path of length $N$}\label{example:N-path} Consider an idealized point-robot situated on the interval $\env=[0,1]$ and capable of moving about in this environment. Suppose the robot is endowed with binary sensors $a_1,\ldots,a_N$, each responding to the robot's position -- denoted for now by $x$ -- according to the rule, say, that $a_k$ turns on whenver $x<x_k:=k/N$. It would be reasonable for us to wish for the robot to eventually be able to realize that $a_k$ turning on implies $a_{k+1}$ turning on, for all $k<n$. Forming the poc set
\begin{equation}
	P=\gen{a_1,\ldots,a_N}{a_k<a_{k+1},\; k=1,\ldots,N-1}
\end{equation}
it is easy to verify that $\dual{P}$ is the $N$-path -- the path with $N+1$ vertices and $N$ edges -- whose vertices are all of the form
\begin{equation}
	v_k=\{\maxP\}\cup\left\{a_j^\ast\right\}_{j>k}\cup\left\{a_i\right\}_{i\geq k}\,,\quad 0\leq k\leq N
\end{equation}
Please note that the choice of the points $x_k\in\env$ is immaterial -- only their ordering should matter for the correctness of $\cube{P}$ as a discretized model of the `environment' $\env$ of our robot.

\begin{figure}[t]
	\begin{center}
%		\def\svgwidth{\columnwidth}
%		\includesvg{real_line_interval}
		\includegraphics[width=\columnwidth]{./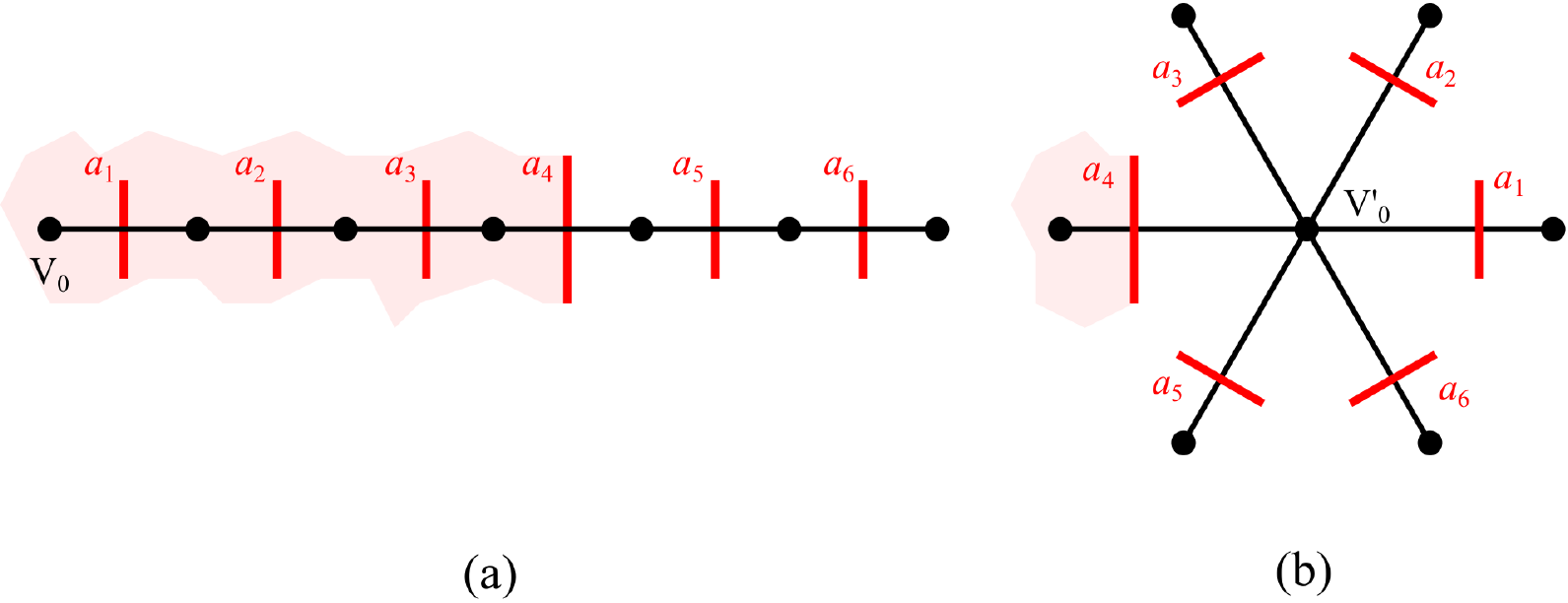}
		\caption{Dual graphs for two arrangements of sensors along the real line (see \ref{example:N-path}): threshold sensors encoding a path (a), and beacon sensors encoding a `starfish' (b).\label{fig:N-path}}
	\end{center}
\end{figure}

\medskip
At the same time, imagine that the sensors $a_k$ corresponded to `beacons', with $a_k$ turning on if and only if $\left|x-\frac{k}{N}\right|<\frac{1}{3N}$. Then a poc set description of the form
\begin{equation}
	P=\gen{a_1,\ldots,a_N}{a_k<a_j^\ast,\;1\leq k<j<N}
\end{equation}
would be more appropriate, indicating that the sensations $a_k$ are mutually exclusive. The resulting dual would still have $N+1$ vertices and $N$ edges, the vertices being:
\begin{equation}
	\begin{array}{rcl}
		v'_k&=&\{\maxP,a_k\}\cup\{a_j^\ast\}_{j\neq k}\text{ for }1\leq k\leq N\\[.5em]
		v'_0&=&\{\maxP\}\cup\left\{a_1^\ast,\ldots,a_N^\ast\right\}
	\end{array}
\end{equation}
In both cases the dual graph is a tree (a {\it path} and a {\it starfish}), and it is hard to ignore the difference in the quality of its representation of the underlying space -- see figure \ref{fig:N-path} 

\medskip
\subsubsection{Example: direct sums of poc sets}\label{example:direct sum} The easiest way to join two poc sets together is to form their direct sum:
\begin{defn}\label{defn:direct sum of poc sets}\label{defn:direct sum} Let $P$ and $Q$ be poc sets. Their \emph{direct sum} $P\vee Q$ is defined to be the quotient of their external disjoint union $P\sqcup Q$ by the identification $\minP_P=\minP_Q$ and $\maxP_P=\maxP_Q$, endowed with the following:
\begin{itemize}
	\item $a<b$ in $P\vee Q$ iff $a,b\in P$ and $a<b$ or $a,b\in Q$ and $a<b$;
	\item $b=a^\ast$ iff both $a,b\in P$ and $b=a^\ast$ or $a,b\in Q$ and $b=a^\ast$.
\end{itemize}
(We abuse notation by identifying each element of $P\cup Q$ with the equivalence class in $P\vee Q$ of its natural representative in $P\sqcup Q$)\defstop
\end{defn}
It is easy to verify, then, that 
\begin{equation}\label{eqn:direct sum}
	\cube{P\vee Q}\equiv\cube{P}\times\cube{Q}
\end{equation}
where the isomorphism is that of cubical complexes. Intuitively, any proper elements $a\in P$ and $b\in Q$ satisfy $a\pitchfork b$, resulting in every cube in $\cube{P}$ and every cube in $\cube{Q}$ to form a product cube in $\cube{P\vee Q}$. For example, the grid in figure \ref{fig:median} may be thought of as the product of an $N$-path with an $M$-path (for the appropriate values of $M$ and $N$) -- hence the dual of the direct sum of two poc sets of the type discussed in the preceding example \ref{example:N-path}.

%1) Compass example -- six way.
\begin{figure}[t]
	\begin{center}
%		\def\svgwidth{\columnwidth}
%		\includesvg{six_way_compass}
		\includegraphics[width=\columnwidth]{./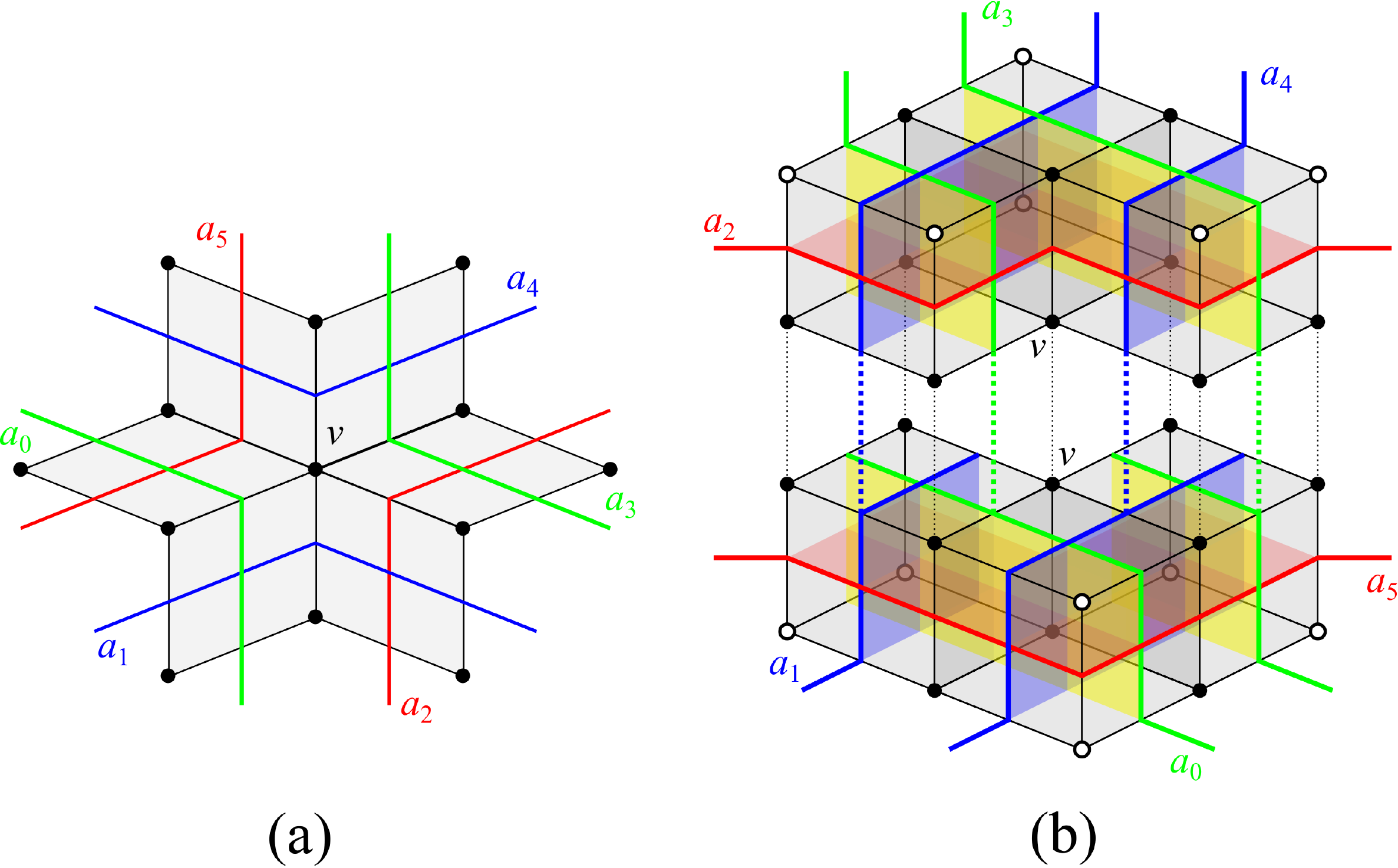}
		\caption{\small Cubical models for example \ref{example:N-cycle} with relations (a) $a_i<a_{i+x}^\ast$ where $x\in\{2,3,4\}$ and addition is modulo $6$, and (b) only the relations $a_i<a_{i+3}^\ast$ are present. Black vertices are those coherent in for both poc set structures. Vertices painted white are coherent vertices for agent $\#2$ that are incoherent for agent $\#1$. The vertex $v$ corresponds to the shared coherent $\ast$-selection $\{a_0^\ast,\ldots,a_5^\ast\}$.\normalsize\label{fig:six way compass}}
	\end{center}
\end{figure}

\subsubsection{Example: a cycle of length $6$}\label{example:N-cycle}
Imagine an agent -- call it $\#1$ -- living on the unit circle $\env=\fat{S}^1$. We mark six vertices, spread uniformly along the circle, with the digits $\{0,\ldots,5\}$. Suppose that agent $\#1$ is capable, for each position it occupies on $\env$, of asking any of the binary questions
\begin{itemize}
	\item $A_j$: {\it Am I positioned at arc length$<\tfrac{\pi}{3}$ from position $j$ along $\env$?}
\end{itemize}
Agent $\#2$ asks a slightly different set of questions:
\begin{itemize}
	\item $B_j$: {\it Am I positioned at arc length$<\tfrac{\pi}{2}$ from position $j$ along $\env$?}
\end{itemize}
The questions available to either agent have sufficient resolution to pinpoint the agent's position wherever it is, but we claim that the collection $\{A_j\}_{j=0}^5$ is, in a sense, more efficient than $\{B_j\}_{j=0}^5$. Let $P=\{\minP,\maxP\}\cup\{a_i,a_i^\ast\}_{i=0}^5$, where the $a_i$ are symbols to represent the sensations corresponding to $A_i$ for agent $\#1$ and to $B_i$ for agent $\#2$. We compare the resulting embeddings $\rho_i:P\into\power{V}$ defined by
\begin{eqnarray*}
	\rho_1(a_j)=A_j\,,\; \rho_1(a_j^\ast)=V\minus A_j\,,\quad\\
	\rho_2(a_j)=B_j\,,\; \rho_2(a_j^\ast)=V\minus B_j\,,\quad
\end{eqnarray*}
and with $\rho_i(\minP)=\varnothing$ and $\rho_i(\maxP)=V$, of course. We observe that both representations of $P$ in $\power{V}$ form injective poc morphisms of $P$ into $\power{V}$ if $P$ is given all relations of the form $a_i<a_{i+3}^\ast$ (addition modulo $6$), yet only agent $\#1$ can afford to also add the relations $a_i<a_{i+2}^\ast$ and $a_i<a_{i+4}^\ast$ to the record without losing the property of $\rho_1$ being a poc morphism. The difference between the duals is significant -- see figure \ref{fig:six way compass} -- clearly showing the advantage of the compact and simple world map that agent $\#1$ could deduce over the cumbersome monstrosity agent $\#2$ must deal with. Note how the complex (a) in the figure may be obtained from (b) through deleting the vertices painted white -- those are precisely the vertices of (b) forming incoherent families for the poc set structure represented in (a).

Two aspects of this example are noteworthy:
\begin{enumerate}
	\item The less nested poc set of the two example poc set structures is capable of accommodating both agents, thus giving us the means for comparing them.
	\item With none of the agents having direct access to the realization maps, they should be looking for efficient means of \emph{evolving their maps} in an adaptive fashion so as to produce a good enough symbolic approximation of the ground truth. 
\end{enumerate}

\subsection{Cubings and the Duality Theory of Weak Poc Sets}\label{subsection:duality theory}
\subsubsection{Sageev-Roller Duality from the Categorical Viewpoint} In the finite case, the duality theory of poc sets has a very clean formulation in category-theoretical terms. For a quick review of the basic notions of Category Theory we refer the reader to \cite{Kozlov-combi_alg_top} chapter 4, while here we will stick to the specific categories of interest:
\begin{itemize}
	\item $\catpoc$, the category of finite poc sets\footnote{One could work with the full category $\mbf{Poc}$ of {\it all} poc sets (rather than just the finite ones) but this introduces major complications that seem unnecessary given the current application. Similarly for the case of median graphs/algebras.}, where each $P,Q\in\catpoc$ have assigned to them the set $\morph{P}{Q}$ of poc morphisms from $P$ to $Q$;
	\item $\catmed$, the category of finite median graphs, where each $G,H\in\catmed$ are assigned the set $\morph{G}{H}$ of median-preserving maps from the vertex set of $G$ to the vertex set of $H$ (such maps are called {\it median morphisms}).
\end{itemize}
What connects the two categories is the assignment of the graph $\dual{P}$ to every poc set $P$. The important bit here is that this assignment is not confined to the level of objects, but, rather, extends over the level of maps as well in a natural way:
\begin{defn}\label{defn:dual map} Let $f:P\to Q$ be a morphism of weak poc sets. The dual map $f^\circ:Q^\circ\to P^\circ$ is defined to be the pullback map $f^\circ(A)=f\inv(A)$.\defstop
\end{defn}
It is easy to verify that $f^\circ:Q^\circ\to P^\circ$ is a median-preserving map, that is:
\begin{equation}\label{eqn:median preserving}
	f^\circ\left(\med{u}{v}{w}\right)=\med{f^\circ(u)}{f^\circ(v)}{f^\circ(w)}
\end{equation}
where the medians are computed in the appropriate duals. Thus, a map $f\in\morph{P}{Q}$ yields a map $f^\circ\in\morph{\dual{Q}}{\dual{P}}$. Moreover, one easily checks that this is done in a way that respects composition, that is:
\begin{equation}
	(g\circ f)^\circ=f^\circ\circ g^\circ
\end{equation}
whenever the composition of the poc morphisms $f,g$ is well-defined. This notion of map between categories is called a {\it functor}. The above constructions (of the dual graph and the dual map), together, are known as the {\it Sageev-Roller duality}.

Applying the results \ref{thm:every median graph is a poc dual} and\ref{cor:reconstruction}, we conclude that the above assignments form a {\it complete duality}, or {\it co-equivalence of categories}, between $\catpoc$ and $\catmed$. That is, there are:
\begin{itemize}
	\item{\bf A correspondence between $\catpoc$ and $\catmed$ at the level of objects: } $P\mapsto\dual{P}$ is a one-to-one correspondence between the collection of finite poc sets and the collection of median graphs;
	\item{\bf A correspondence between $\catpoc$ and $\catmed$ at the level of maps: } $f\mapsto f^\circ$ is a composition-reversing one-to-one correspondence between poc morphisms and median morphisms.
\end{itemize}
Thus, Sageev-Roller duality is a dictionary, translating order-theoretic statements about finite poc sets into graph-theoretic statements about finite median graphs and vice-versa. Loosely speaking, the aspects of Boolean Algebra covered by poc sets may be conveniently interpreted in terms of the convex geometry of median graphs, reasoned about within this framework, and the conclusions may then be translated back into the Boolean Algebra setting for the purpose of dealing with applications. We will now proceed to survey some of the contributions of this category-theoretic point of view to our application.

\medskip
\subsubsection{Extending Sageev-Roller Duality to Weak Poc Sets} Recalling the fact that the category $\catpoc$ of proper finite poc sets is too restrictive for the purposes of our current application \cite{Allerton_2012}, the first order of business is to verify that Sageev-Roller duality extends to weak poc sets.

The first observation regarding dual maps is a consequence of the fact that no coherent subset of a weak poc set may contain a negligible element:
\begin{lemma}\label{lemma:reduction to poc sets} Let $P$ be a weak poc set and let $\pi:P\to\hat{P}$ denote the canonical projection. Then $p^\circ:\hat{P}^\circ\to P^\circ$ is a median isomorphism, making $\cube{P}$ and $\cube{\hat{P}}$ isomorphic cubical complexes.
\end{lemma}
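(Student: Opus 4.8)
The plan is to exploit two basic facts. First, no coherent subset of a weak poc set can contain a negligible element: if $a\leq a^\ast$ then the pair $(a,a)$ is incoherent, so any coherent $\ast$-selection omits all negligibles, and dually every \emph{complete} coherent $\ast$-selection must contain every ubiquitous element (in particular $\maxP$ and not $\minP$). Second, the canonical projection $\pi\colon P\to\hat{P}$ collapses all negligibles to $\hat{\minP}$, all ubiquitous to $\hat{\maxP}$, and restricts to an order-preserving bijection between the proper elements of $P$ and those of $\hat{P}$. Granting that $\pi^\circ\colon\hat{P}^\circ\to P^\circ$ is a well-defined median-preserving map — which is exactly the content of the definition of the dual map together with the remark that dual maps preserve medians — it suffices to show $\pi^\circ$ is a bijection, since a median-preserving bijection has a median-preserving inverse and is therefore a median isomorphism.

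Injectivity is immediate: because $\pi$ is surjective, $\pi\bigl(\pi^{-1}(A)\bigr)=A$ for every $A\subseteq\hat{P}$, so $\pi^{-1}(A)=\pi^{-1}(B)$ forces $A=B$. For surjectivity I would exhibit the inverse explicitly as $C\mapsto\pi(C)$. Fix $C\in P^\circ$; by the first fact $C$ omits all negligibles and contains all ubiquitous elements. Using the second fact one checks in turn that $\pi(C)$ is a $\ast$-selection (a coincidence $\hat{c},\hat{c}^\ast\in\pi(C)$ would force either a proper pair $c,c^\ast\in C$ or a negligible element of $C$, both impossible), that it is complete (for a proper $\hat{a}$ this is completeness of $C$ on the corresponding proper pair, and $\hat{\maxP}\in\pi(C)$ since $\maxP\in C$), and that $\pi^{-1}(\pi(C))=C$ (proper fibres are singletons, the fibre $\hat{\minP}$ is excluded since $C$ has no negligibles, and the fibre $\hat{\maxP}$ pulls back to the ubiquitous elements, all of which already lie in $C$). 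This yields $\pi^\circ(\pi(C))=C$.

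The step deserving genuine care — and the main obstacle — is coherence of $\pi(C)$, i.e. that the quotient creates no new incoherent pairs. Suppose $\hat{a}\leq\hat{b}^\ast$ with $a,b\in C$. If $a$ were ubiquitous then $\hat{a}=\hat{\maxP}$ would force $\hat{b}^\ast=\hat{\maxP}$, i.e. $b$ negligible, contradicting $b\in C$; hence $a$ is proper, and by the description of the obvious ordering on $\hat{P}$ either $b^\ast$ is ubiquitous (again making $b$ negligible, impossible) or the relation lifts to $a\leq b^\ast$ in $P$, contradicting coherence of $C$. The crux is the claim that the obvious ordering on $\hat{P}$ introduces no relation $\hat{a}\leq\hat{c}$ between proper elements beyond those present in $P$; this is precisely what makes coherence transfer in both directions, and it is the only place where one reasons about the quotient order itself rather than merely permuting the forced negligible and ubiquitous coordinates.

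Finally I would upgrade the median isomorphism of $0$-skeleta to an isomorphism of the cubings. Since $\dual{P}$ and $\dual{\hat{P}}$ are the respective $1$-skeleta, the median isomorphism $\pi^\circ$ preserves hop-distance and hence adjacency, and the higher cubes of $\cube{P}$ and $\cube{\hat{P}}$ are determined combinatorially by their $1$-skeleta; therefore $\pi^\circ$ extends to a cubical isomorphism $\cube{\hat{P}}\cong\cube{P}$, as required.
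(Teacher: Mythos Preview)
Your proof is correct and follows essentially the same approach as the paper, which does not spell out a proof but merely notes that the lemma ``is a consequence of the fact that no coherent subset of a weak poc set may contain a negligible element.'' You have carefully filled in the details the paper leaves implicit, and your treatment of the one nontrivial step---that coherence survives the passage to $\hat{P}$ because the obvious ordering introduces no new relations among proper elements---is sound.
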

Thus, weak poc sets are indistinguishable from poc sets, as far as dual graphs are concerned: applying Sageev-Roller duality one simply obtains
\begin{cor} For any weak poc set $P$, $\hat{P}$ is naturally isomorphic to $\poc{\dual{P}}$.\defstop
\end{cor}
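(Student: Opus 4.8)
The plan is to derive the statement from the two facts already established for genuine poc sets, namely the reconstruction corollary \ref{cor:reconstruction} and the reduction lemma \ref{lemma:reduction to poc sets}, and then to pin down the isomorphism by an explicit canonical map. First I would observe that $\hat{P}$ is, by construction, a genuine poc set: its only negligible element is $\minP$. Hence corollary \ref{cor:reconstruction} applies verbatim to $\hat{P}$, yielding a poc-isomorphism $\hat{P}\xrightarrow{\sim}\poc{\dual{\hat{P}}}$ via $\hat{a}\mapsto\half{\hat{a}}$.

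Next I would invoke lemma \ref{lemma:reduction to poc sets}: the canonical projection $\pi_P$ dualizes to a median isomorphism $\pi^\circ:\hat{P}^\circ\to P^\circ$, so that $\cube{\hat{P}}$ and $\cube{P}$ are isomorphic cubical complexes. Restricting to $1$-skeleta, $\dual{\hat{P}}$ and $\dual{P}$ are isomorphic median graphs. Since the assignment $G\mapsto\poc{G}$ (the half-spaces of $G$ ordered by inclusion, with $H^\ast:=V\minus H$) depends only on the median-graph structure of $G$, this isomorphism of duals induces a poc-isomorphism $\poc{\dual{\hat{P}}}\cong\poc{\dual{P}}$. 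Composing with the isomorphism of the previous step gives $\hat{P}\cong\poc{\dual{P}}$.

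Tracing the two steps through, the resulting map sends $\hat{a}\in\hat{P}$ to the half-space $\half{a}\subseteq P^\circ$ of $\dual{P}$ guaranteed by lemma \ref{lemma:ellone halfspaces}, and I would verify directly that this is the clean description of the isomorphism. It is well-defined on equivalence classes: a negligible $a$ has $\half{a}=\varnothing$ (no coherent selection contains a negligible element), while a ubiquitous $a$ has $\half{a}=P^\circ$, so all negligibles collapse onto $\minP$ and all ubiquitous elements onto $\maxP$, exactly matching the relation defining $\hat{P}$. The remark $\half{a^\ast}=P^\circ\minus\half{a}$ gives $\ast$-equivariance, order-preservation is immediate, surjectivity is lemma \ref{lemma:ellone halfspaces}, and injectivity on the proper classes is precisely the content of corollary \ref{cor:reconstruction}.

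The one point requiring real care is the qualifier \emph{naturally}: one must check that the isomorphism commutes with poc morphisms, i.e. that for every poc morphism $g:P\to Q$ the evident square relating the canonical isomorphisms for $P$ and for $Q$ commutes. I expect this to be the only step that is not purely formal, but it reduces to ingredients already in hand --- the functoriality of the dual construction $f\mapsto f^\circ$ (the composition-reversing correspondence underlying Sageev-Roller duality) together with the universal property of the canonical projection recorded in lemma \ref{lemma:canonical poc quotient}. Both composites around the square are poc morphisms carrying $\hat{a}$ to the same half-space, so their equality can be confirmed on proper generators, completing the argument.
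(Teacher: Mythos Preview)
Your proposal is correct and follows precisely the route the paper intends: the corollary is stated immediately after lemma \ref{lemma:reduction to poc sets} with no explicit proof, the idea being that one applies corollary \ref{cor:reconstruction} to the genuine poc set $\hat{P}$ and then transports across the median isomorphism $\dual{\hat{P}}\cong\dual{P}$. Your write-up is more thorough than the paper's own treatment (which leaves the corollary as immediate), and your attention to naturality and to the explicit form $\hat{a}\mapsto\half{a}$ is appropriate and welcome.
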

At the same time, weak poc sets form a more flexible class of objects. In particular, weak poc set structures are easier to represent and evolve dynamically using snapshots \ref{section:snapshots}.

\medskip
\subsubsection{Example: Realizations}\label{realization}
Suppose $\spc$ is the state space (possibly infinite) of some system, and $\sens$ is a collection of binary sensors sensitive to the state of the system, such as in examples \ref{example:N-path} and \ref{example:N-cycle}. Since the sensors are binary we may assume that each sensor $a\in\sens$ comes paired with a sensor $a^\ast\in\sens$ corresponding to the negation of $a$. In other words, the sensorium $\sens$ comes equipped with a fixpoint-free involution $\ast$ and with a {\it realization map} $\rho:\sens\to\hsm{B}\subseteq\power{\spc}$ satisfying $\rho(a^\ast)=\spc\minus\rho(a)$ for all $a\in\sens$, where $\hsm{B}$ is a prescribed $\sigma$-algebra of measurable events in $\spc$. It also costs us nothing to assume there is a special sensor $\minP\in\sens$ that is never on, that is: $\rho(\minP)=\varnothing$.

Suppose now that, having spent some time observing state transitions in $\spc$, we are able to write down some implication relations among the elements of $\sens$. These will be recorded in the form of a partial ordering $(\leq)$. We would like to use our {\it a-priori} knowledge that $\rho(a^\ast)=\spc\minus\rho(a)$ and, naturally, we would like to believe that $a\leq b$ implies $\rho(a)\subseteq\rho(b)$, in which case $\rho$ becomes a morphism of the weak poc set $\ppoc=(\sens,\leq,\ast)$ into the poc set $\hsm{B}$. Assuming this is correct, what can we say?

For any observed state $x\in\spc$ the poc set $\hsm{B}$ supplies us with a coherent $\ast$-selection $\hsm{B}_x=\set{A\in\hsm{B}}{x\in A}$. The dual $\rho^\circ:\hsm{B}^\circ\to\ppoc^\circ$ then produces a vertex $v_\rho(x):=\rho^\circ(\hsm{B}_x)$ in $\cube{\ppoc}$.
\begin{defn}[Consistent Vertices]\label{defn:consistent vertices} Let $\rho$ be a realization of a poc set $P$. Then the vertices of the form $v_\rho(x)$ as above are called {\it $\rho$-consistent vertices}.
\end{defn}
For example, the vertex $v$ in figures \ref{fig:six way compass} (a) and (b) is inconsistent for either realization.

\medskip
It is possible that $v_\rho:\spc\to\ppoc^\circ$ is not surjective, motivating the definition:
\begin{defn}[Punctured Dual]\label{defn:punctured dual} Let $\rho$ be a realization of a poc set $P$. The {\it punctured dual} (with respect to $\rho$), denoted $\punc{\ppoc}$, or $\cube{\ppoc,\rho}$, is the cubical sub-complex of $\cube{\ppoc}$ induced by the set of $\rho$-consistent vertices of $\ppoc{}^\circ$ which are contained in $\cube{\ppoc}$.
\end{defn}
As a corollary of this discussion we obtain:
\begin{cor} $\cube{\ppoc}$ is the smallest cubical sub-complex of $S(P)$ with the property that, for {\it any} realization $\rho$ of $\ppoc$, if $\rho$ is a poc morphism, then $\cube{\ppoc}$ contains all $\rho$-consistent vertices of $S(P)$.
\end{cor}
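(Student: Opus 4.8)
The plan is to unwind the corollary into two assertions about the $0$-skeleton of $\cube{P}$: first, that $\cube{P}$ already enjoys the stated property, i.e. it contains every $\rho$-consistent vertex for every poc-morphism realization $\rho$ (``sufficiency''); and second, that it is minimal, in the sense that every vertex of $\cube{P}$ is forced into any subcomplex sharing this property. Since $\cube{P}$ is by definition (\ref{defn:dual}) the subcomplex of $S(P)$ induced by the coherent vertices, and an induced subcomplex is determined by its vertex set, both assertions reduce to a single statement at the level of $0$-skeleta: the set of vertices of $S(P)$ that are $\rho$-consistent for \emph{some} poc-morphism realization $\rho$ coincides exactly with the set $P^\circ$ of coherent complete $\ast$-selections.

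For the sufficiency direction, I would fix a poc-morphism realization $\rho:P\to\power{\spc}$ and a point $x\in\spc$, and check directly that $v_\rho(x)=\rho^\circ(\hsm{B}_x)=\set{a\in P}{x\in\rho(a)}$ is a coherent complete $\ast$-selection, hence a vertex of $\cube{P}$. Completeness follows from $\ast$-equivariance together with $\rho(a^\ast)=\spc\minus\rho(a)$: for each $a$, exactly one of $x\in\rho(a)$, $x\in\rho(a^\ast)$ holds, so exactly one of $a,a^\ast$ lies in $v_\rho(x)$. Coherence uses order-preservation: if $a,b\in v_\rho(x)$ formed an incoherent pair, i.e. $a\leq b^\ast$, then $\rho(a)\subseteq\rho(b^\ast)=\spc\minus\rho(b)$ would force $x\notin\rho(b)$, contradicting $b\in v_\rho(x)$. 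Thus every consistent vertex lies in $\cube{P}$, so $\cube{P}$ satisfies the property.

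The substance is the minimality direction, where I must show conversely that \emph{every} coherent vertex is consistent for some poc-morphism realization; the key move is to exhibit a single canonical realization witnessing all of them simultaneously. Taking $\spc=P^\circ$ and $\rho_0(a):=\half{a}$, Corollary \ref{cor:reconstruction} guarantees that $a\mapsto\half{a}$ is a poc-isomorphism of $P$ onto $\poc{\dual{P}}\subseteq\power{P^\circ}$, hence a poc morphism, i.e. a bona fide realization (one checks $\minP\mapsto\varnothing$ using that no coherent selection contains the negligible element $\minP$). Then for any $u\in P^\circ$, regarded as a point of $\spc=P^\circ$, unwinding the pullback gives $v_{\rho_0}(u)=\set{a\in P}{u\in\half{a}}=\set{a\in P}{a\in u}=u$, so every coherent vertex is $\rho_0$-consistent. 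Consequently any subcomplex with the stated property must contain all of $P^\circ$; combined with sufficiency, this pins the forced vertex set to exactly $P^\circ$ and identifies $\cube{P}$ as minimal. I expect the only delicate point to be the reading of ``smallest'': the argument really establishes an equality of $0$-skeleta, and ``smallest cubical subcomplex'' should be understood as the induced subcomplex on this minimal vertex set — any complex with the property contains $P^\circ$ by minimality, while $\cube{P}$ attains this bound and has the property by sufficiency.
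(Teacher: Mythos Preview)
Your proof is correct and follows essentially the same approach as the paper, which leaves the corollary as an immediate consequence of the preceding discussion: the sufficiency direction is exactly the observation that $v_\rho(x)=\rho^\circ(\hsm{B}_x)\in\ppoc^\circ$ made just before the corollary, and your minimality argument via the canonical realization $\rho_0(a)=\half{a}$ on $\spc=P^\circ$ is precisely the content of Corollary~\ref{cor:reconstruction}. Your remark about the reading of ``smallest'' (as the induced subcomplex on the forced vertex set) is well taken and matches the paper's usage, since $\cube{P}$ is itself defined as an induced subcomplex.
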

In other words, any realization that is also a poc morphism gives rise to a discrete representation of $\spc$ in $\cube{\ppoc}$ via the mapping $v_\rho:\spc\to\ppoc^\circ$. The benefit of maintaining a record of the order in $\sens$ is our ability to discard the incoherent vertices of $S(P)$ (viewed as possible states of the observed system) without the risk of losing any information about $\spc$, while possibly gaining some insight into the organization of $\spc$, as stated in the introduction, contribution {\bf (i)}.

\medskip
\subsubsection{Realizations, Cubings and Topology}\label{homotopy type result} We recall a definition from \cite{Sageev-thesis}:
\begin{defn} A \emph{cubing} is a simply connected, non-positively curved cubical complex.\defstop
\end{defn}
We point the reader to \cite{Bridson_Haefliger-textbook} for a detailed account of non-positively curved metric spaces. %We will remark that non-positive curvature in cubical complexes is characterized by the link of every vertex being a simplicial flag complex (\cite{Bridson_Haefliger-textbook}, II.5.20). 
For the purpose of this paper it will suffice to quote a corollary of the well-known Cartan-Hadamard theorem (\cite{Bridson_Haefliger-textbook}, II.4.1):
\begin{cor}\label{cor:cubings are contractible} Cubings are contractible.
\end{cor}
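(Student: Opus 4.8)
The plan is to use the Cartan--Hadamard theorem exactly as the statement advertises: first upgrade the two hypotheses (local non-positive curvature and simple connectivity) to the single global conclusion that $\model$, equipped with its natural piecewise-Euclidean metric, is a $\mathrm{CAT}(0)$ space, and then contract any $\mathrm{CAT}(0)$ space to a point by sliding along the geodesics issuing from a fixed basepoint. Since the definition already hands us ``simply connected'' and ``non-positively curved,'' essentially all of the work is the verification of the metric hypotheses needed to invoke \cite[II.4.1]{Bridson_Haefliger-textbook} and the standard fact that $\mathrm{CAT}(0)$ implies contractible.

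First I would equip the cubical complex with the intrinsic length metric in which each $n$-cube is isometric to the unit Euclidean cube $[0,1]^n\subset\RR^n$ and distances are measured by infima of lengths of piecewise-geodesic paths. Because every cell is a cube, the complex has only finitely many isometry types of cells -- one for each dimension up to $\dim\model$, which is finite (for the duals $\cube{P}$ of interest it is bounded by $\tfrac{1}{2}\card{P}$) -- so by the metric-completeness theorem for $M_\kappa$-polyhedral complexes with finitely many shapes \cite[Theorem I.7.13]{Bridson_Haefliger-textbook} the space is a \emph{complete geodesic} metric space. This is the routine hypothesis-checking step, but it must be recorded explicitly, since completeness is precisely what Cartan--Hadamard requires. ``Non-positively curved'' is then read as the statement that the complex is locally $\mathrm{CAT}(0)$ (equivalently, by Gromov's link condition, that every vertex link is flag -- though we need not invoke this characterization, as local curvature is given by hypothesis).

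With these in hand, the generalized Cartan--Hadamard theorem \cite[II.4.1]{Bridson_Haefliger-textbook} asserts that a complete, connected, locally $\mathrm{CAT}(0)$ space has a $\mathrm{CAT}(0)$ universal cover; as our complex is simply connected it is its own universal cover, hence globally $\mathrm{CAT}(0)$. Finally I would fix a basepoint $x_0$ and define $H\colon\model\times[0,1]\to\model$ by letting $H(x,s)$ be the point dividing the \emph{unique} geodesic from $x_0$ to $x$ in ratio $s:(1-s)$, so that $H(\cdot,0)\equiv x_0$ and $H(\cdot,1)=\id{\model}$. Uniqueness of geodesics in $\mathrm{CAT}(0)$ spaces makes $H$ well defined, and the one genuinely non-formal point -- the step I expect to be the main obstacle -- is the continuity of $H$ jointly in $(x,s)$: this follows from the convexity of the metric in $\mathrm{CAT}(0)$ spaces, which guarantees that geodesics vary continuously with their endpoints \cite[II.1.4, II.2.2]{Bridson_Haefliger-textbook}. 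Thus $H$ is a homotopy from the constant map to the identity, proving $\model$ is contractible.
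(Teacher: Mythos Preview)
Your proposal is correct and follows exactly the route the paper indicates: the paper simply records the corollary as an immediate consequence of the Cartan--Hadamard theorem \cite[II.4.1]{Bridson_Haefliger-textbook} without spelling out any details, and what you have written is precisely the standard unpacking of that citation (finitely many shapes gives a complete geodesic metric, simple connectivity plus local $\mathrm{CAT}(0)$ gives global $\mathrm{CAT}(0)$, and geodesic contraction gives contractibility).
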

We owe the following theorem in its full generality (finite and infinite cases) to the collective efforts of Michah Sageev \cite{Sageev-thesis}, Martin Roller \cite{Roller-duality} and Victor Chepoi \cite{Chepoi-median}.
\begin{thm} The following are equivalent for a finite simple graph $G$:
\begin{enumerate}
	\item $G$ is the $1$-dimensional skeleton of a cubing;
	\item $G$ is a median graph;
	\item $G$ is isomorphic to $\dual{P}$ for some poc set $P$;
	\item $G$ is the $1$-dimensional skeleton of $\cube{P}$ for some poc set $P$.
\end{enumerate}
\end{thm}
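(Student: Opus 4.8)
The plan is to treat the four conditions as two blocks: the purely combinatorial trio (2)--(4), which follows almost entirely from results already assembled above, and the geometric condition (1), which carries the real content. First I would dispatch (3)$\Leftrightarrow$(4): by Definition \ref{defn:dual}, $\dual{P}$ is \emph{defined} to be the $1$-skeleton of $\cube{P}$, so these statements are literally interchangeable for the same $P$. Next, (2)$\Leftrightarrow$(3) is already in hand: the theorem computing the median on $\dual{P}$ by the formula \eqref{eqn:median formula} shows every $\dual{P}$ is a median graph, giving (3)$\THEN$(2), while Theorem \ref{thm:every median graph is a poc dual} furnishes, for any median graph $G$, the isomorphism $G\cong\dual{\poc{G}}$, giving (2)$\THEN$(3) with $P=\poc{G}$. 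Thus (2), (3), (4) are equivalent on the strength of the duality already developed.

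The substantive work is the bridge to cubings, which I would organize as (4)$\THEN$(1) and (1)$\THEN$(2). For (4)$\THEN$(1) I must show that $\cube{P}$ is a cubing, i.e. simply connected and non-positively curved. Non-positive curvature I would obtain from Gromov's link condition (see \cite{Bridson_Haefliger-textbook}): a cube complex is NPC precisely when the link of every vertex is a flag simplicial complex. Translated through the dual dictionary, the link of a vertex $u\in P^\circ$ records the proper elements $a$ with $a\in u$ whose associated halfspaces are incident to $u$, and a set of such elements spans a simplex exactly when the elements are pairwise transverse ($a\pitchfork b$). Flagness then amounts to the assertion that a pairwise-transverse family incident at $u$ always bounds a cube of $\cube{P}$~ ---~ and this holds by the very construction of $\cube{P}$ as the complex whose cubes are the maximal transverse coherent families, so no ``empty simplex'' can occur. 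This reduces NPC to a finite combinatorial check on transversality that the definition of $\cube{P}$ satisfies automatically.

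Simple connectivity is the hard part, and the step I expect to be the main obstacle. The tool I would use is the halfspace/wall structure supplied by the convexity theory of Theorem \ref{thm:convexity theory of median graphs}: each $a\in P$ yields a halfspace $\half{a}$ that is convex with convex complement $\half{a^\ast}$, so the associated hyperplane (the wall dual to $a$) is two-sided, embedded, and non-self-osculating. These are exactly the hyperplane conditions in Sageev's criterion \cite{Sageev-thesis} guaranteeing that a cube complex built from a wall space is simply connected. Concretely, I would argue that any edge-loop in $\dual{P}=\cube{P}^{(1)}$ crosses each wall an even number of times and can be homotoped across squares to cancel consecutive crossings of the same wall, contracting it; the Helly property and the uniqueness of closest-point projections from Theorem \ref{thm:convexity theory of median graphs} are what keep this reduction well-defined and terminating in the finite case. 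This is precisely the point where one leans on the collective Sageev--Roller--Chepoi machinery rather than a short self-contained argument.

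Finally, (1)$\THEN$(2) is Chepoi's direction \cite{Chepoi-median}: the $1$-skeleton of a cubing is a median graph. Here I would use that, in a CAT(0) cube complex, the hyperplanes again cut the vertex set into convex halfspaces, so the collection of halfspaces containing a vertex is a coherent $\ast$-selection and the median of any three vertices is computed by majority vote over walls exactly as in \eqref{eqn:median formula}; convexity of the halfspaces guarantees the three intervals meet in the unique majority vertex, which is the defining property of a median graph. Combining the two directions with the already-established equivalence of (2)--(4) closes the cycle $(1)\THEN(2)\THEN(3)\THEN(4)\THEN(1)$. Throughout, the genuinely delicate ingredient is the geometry-to-combinatorics passage~ ---~ concretely, simple connectivity of $\cube{P}$~ ---~ for which Corollary \ref{cor:cubings are contractible} is a consequence rather than a tool, and which is the reason the theorem is credited to Sageev, Roller and Chepoi jointly.
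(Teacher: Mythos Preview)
The paper does not actually prove this theorem: it is stated without proof and attributed in the text immediately preceding it to ``the collective efforts of Michah Sageev, Martin Roller and Victor Chepoi'' with citations to \cite{Sageev-thesis}, \cite{Roller-duality}, and \cite{Chepoi-median}. So there is no proof in the paper for your proposal to be compared against; your outline is a reasonable reconstruction of the standard literature argument, and the partition into the ``combinatorial trio'' (2)--(4) (handled by the duality results the appendix does develop) versus the geometric bridge to (1) is exactly how the cited sources divide the labor.

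One small correction in your (4)$\THEN$(1) step: the link of a vertex $u\in P^\circ$ is not indexed by all $a\in u$ but by $\min(u)$ (Lemma~\ref{lemma:flipping}), with simplices corresponding to transverse subsets of $\min(u)$ (Lemma~\ref{lemma:where cubes come from}). Since that lemma asserts that \emph{every} transverse subset of $\min(u)$ spans a cube at $u$, flagness of the link is immediate, and your Gromov-link argument goes through once phrased this way. Your acknowledgment that simple connectivity of $\cube{P}$ is the genuinely nontrivial ingredient~---~and the reason the result is credited jointly rather than derived in the appendix~---~is accurate.
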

Further developing the discussion in the preceding paragraph, we recall one of the main theorem of \cite{Allerton_2012}, applied to that setting:
\begin{thm}\label{thm:homotopy type} Let $\spc$ be a topological space and let $\ppoc$ be a poc set structure on a finite set $\sens$ with realization $\rho:\ppoc\to\power{\spc}$. Let $\hsm{C}$ denote the collection of cubes in the cubical complex $\punc{\ppoc}=\cube{\ppoc,\rho}$. For each $C\in\hsm{C}$ let $\spc_C=(r^\circ)^\inv(C)$ be the set of all points in $\spc$ witnessing $C$. If $\rho$ is a poc morphism, and every $\spc_C$, $C\in\hsm{C}$ has an open neighbourhood $\hsm{N}_C\subset\spc$ such that:
\begin{enumerate}
	\item each $\hsm{N}_C$ is contractible;
	\item $\{\spc_C\}_{C\in\hsm{C}}$ and $\{\hsm{N}_C\}_{C\in\hsm{C}}$ have isomorphic nerves,
\end{enumerate}
then $\punc{\ppoc}$ is homotopy-equivalent to $\spc$.
\end{thm}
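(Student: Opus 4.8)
The statement is a \emph{nerve theorem} in disguise: the plan is to compute the homotopy type of both $\spc$ and $\punc{\ppoc}$ through one and the same simplicial complex, the common nerve supplied by hypothesis (2). Recall from \ref{realization} the vertex map $v_\rho:\spc\to\ppoc^\circ$, $x\mapsto\rho^\circ(\hsm{B}_x)$, which takes values in the $\rho$-consistent vertices, i.e. in the $0$-skeleton of $\punc{\ppoc}$; concretely $v_\rho\inv(v)=\bigcap_{a\in v}\rho(a)$, and $\spc_C=(\rho^\circ)\inv(C)=v_\rho\inv(\vertset{C})$ is the union of these fibres over the vertices of the cube $C$. First I would record two elementary facts: (i) by Definition \ref{defn:punctured dual} a vertex belongs to $\punc{\ppoc}$ exactly when its fibre under $v_\rho$ is nonempty, so $\spc_C\neq\varnothing$ for every $C\in\hsm{C}$; and (ii) in a cube complex any family of cubes meeting in a common point meets in a common face, whence $\vertset{\bigcap_iC_i}=\bigcap_i\vertset{C_i}$. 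Combined, these give $\bigcap_i\spc_{C_i}=v_\rho\inv(\bigcap_i\vertset{C_i})$, which is nonempty precisely when the $C_i$ share a common vertex.

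Consequently the nerve $\mathcal{N}(\{\spc_C\}_{C\in\hsm{C}})$ is identical, as an abstract simplicial complex, to the nerve of the cover of the polyhedron $\punc{\ppoc}$ by its closed cubes: in both cases a finite family of cubes spans a simplex iff the cubes share a common vertex. Since each closed cube is contractible and a nonempty intersection of closed cubes is again a closed cube, this closed cover is good, so the nerve lemma (\cite{Hatcher-alg_top_textbook}) gives $\mathcal{N}(\{\spc_C\})\simeq\punc{\ppoc}$. This is the step that trades the combinatorics of $\ppoc$ for the topology of the model space, and it uses only that $\punc{\ppoc}$ is a finite polyhedral complex — not that it is contractible, which would defeat the purpose.

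On the analytic side, hypothesis (1) provides an open cover $\{\hsm{N}_C\}_{C\in\hsm{C}}$ of $\spc$ by contractible sets; it is a cover because every $x$ lies in $\spc_{\{v_\rho(x)\}}\subseteq\hsm{N}_{\{v_\rho(x)\}}$. Provided this cover is good, the nerve lemma again gives $\spc\simeq\mathcal{N}(\{\hsm{N}_C\})$, and hypothesis (2) identifies $\mathcal{N}(\{\hsm{N}_C\})$ with $\mathcal{N}(\{\spc_C\})$; chaining the three relations yields $\spc\simeq\mathcal{N}(\{\hsm{N}_C\})\cong\mathcal{N}(\{\spc_C\})\simeq\punc{\ppoc}$. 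To make the equivalence explicit rather than a bare zig-zag, I would realize $\spc\to\mathcal{N}(\{\hsm{N}_C\})$ as the canonical nerve map built from a partition of unity subordinate to $\{\hsm{N}_C\}$ (assuming $\spc$ paracompact), and compose with the simplicial isomorphism of (2) and the good-cover equivalence of the previous paragraph.

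The hard part is the \emph{goodness} of $\{\hsm{N}_C\}$: the nerve lemma requires every nonempty finite intersection $\hsm{N}_{C_0}\cap\cdots\cap\hsm{N}_{C_k}$ to be contractible, whereas (1) asserts this only for single sets. The route I would pursue is to lean on the inclusions $\spc_C\subseteq\hsm{N}_C$ and on (2): a nonempty $\bigcap_i\hsm{N}_{C_i}$ occurs exactly when $\bigcap_i\spc_{C_i}=\spc_{C^\dagger}\neq\varnothing$ for the common face $C^\dagger=\bigcap_iC_i$, and one wants the neighbourhoods chosen so that $\bigcap_i\hsm{N}_{C_i}$ deformation retracts onto $\spc_{C^\dagger}$ (or a contractible neighbourhood thereof). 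Establishing this retraction is the real geometric content packaged in the words ``isomorphic nerves''; it is where the richness assumptions on the covering $\hsm{U}$ must be spent, and without it one is left with a map inducing the correct isomorphism on nerve diagrams but not a homotopy equivalence. A secondary bookkeeping obstacle is aligning the two applications of the nerve lemma along the isomorphism of (2), so that the final composite — and not merely a chain of separate equivalences — is seen to be a homotopy equivalence.
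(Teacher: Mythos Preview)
The paper does not prove this theorem: it is quoted as ``one of the main theorems of \cite{Allerton_2012}'' and no argument is supplied here, so there is no in-paper proof to compare against.

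On its own merits your plan is the natural one and is essentially correct in outline. The identification of the nerve of $\{\spc_C\}_{C\in\hsm{C}}$ with the nerve of the closed cover of $\punc{\ppoc}$ by its cubes is the right combinatorial bridge, and the nerve lemma on the $\punc{\ppoc}$ side goes through cleanly since intersections of cubes in a cubical complex are again cubes.

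You have also correctly isolated the one genuine gap. Hypothesis (1) asserts only that each individual $\hsm{N}_C$ is contractible; the nerve lemma needs every nonempty finite intersection $\bigcap_i\hsm{N}_{C_i}$ to be contractible. Hypothesis (2), read literally, is a purely combinatorial statement about which intersections are nonempty and says nothing about their homotopy type, so it cannot by itself supply the missing contractibility. Your proposed fix --- arranging that $\bigcap_i\hsm{N}_{C_i}$ deformation-retracts onto $\hsm{N}_{C^\dagger}$ for the common face $C^\dagger$ --- is the right instinct, but it amounts to an additional hypothesis (for instance, that the $\hsm{N}_C$ can be chosen monotone in $C$, i.e.\ $C'\subseteq C$ implies $\hsm{N}_{C'}\subseteq\hsm{N}_C$, so that the intersection \emph{equals} some $\hsm{N}_{C^\dagger}$). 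Absent access to \cite{Allerton_2012} one cannot tell whether the original statement carries such a strengthening implicitly; as the theorem is transcribed here, your concern stands and the argument is incomplete at exactly the point you identify.
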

To illustrate the theorem, consider figure \ref{fig:six way compass} again: note how deleting the vertex $v$ from either discrete model of the circle results in a space with the correct homotopy type (that of the circle). On the other hand, going back to example \ref{example:N-path} and figure \ref{fig:N-path}, the last theorem explains the qualitative difference in representations of the interval between the two provided models: while the `thresholds' model satisfies the requirements of the theorem, the `beacons' model possesses a vertex -- the one marked $V'_0$ -- whose set of witnesses is not connected.

This result could be interpreted as stating a condition on the {\it richness} of the sensorium $(\sens,\rho)$: the complex $\cube{\ppoc}$ provides an observer with a discretized contractible model of the state space $\spc$ of the observed system, while $\punc{\ppoc}$ is a more realistic model of $\spc$ taking the standard form of a contractible space minus a set of obstacles. 

\medskip
\subsubsection{Example: A `bad' Poc Morphism}\label{example:bad dual} It is not true in general that the dual of a poc morphism $f:P\to Q$ extends to a morphism of graphs. For example, consider the situation
\begin{equation}
	P=\gen{a,b,c}{a<b,\,b<c},\quad
	Q=\gen{x,y}{x<y}
%	\begin{array}{rcl}
%	P&=&\{\minP,\maxP,a,b,c,a^\ast,b^\ast,c^\ast\}\,,\quad
%	a\pitchfork b\,,\; a<c\,,\; b<c\,,\\
%	Q&=&\{\minP,\maxP,x,y,x^\ast,y^\ast\}\,,\quad
%	x<y
%	\end{array}
\end{equation}
and $f:P\to Q$ is defined by $f(a)=f(b)=x$ and $f(c)=y$. The duals and dual map are illustrated in figure \ref{fig:bad dual}.

\begin{figure}[ht]
	\begin{center}
%		\def\svgwidth{.6\columnwidth}
%		\includesvg{bad_dual}
		\includegraphics[width=.9\columnwidth]{./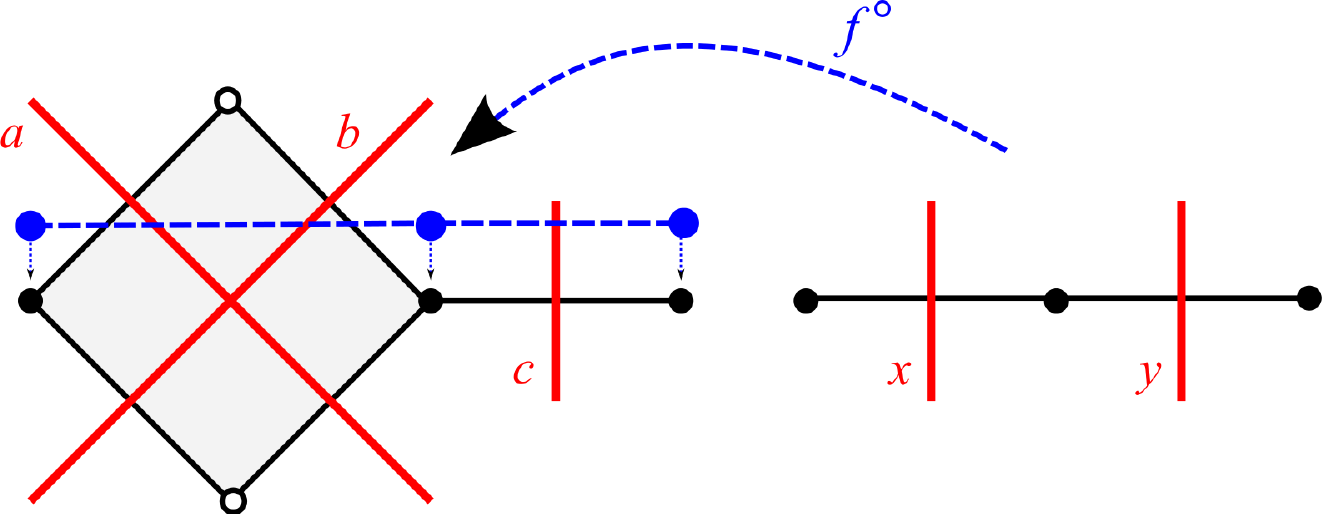}
		\caption{The dual of a poc morphism is not necessarily a graph morphism (details in \ref{example:bad dual}).\label{fig:bad dual}}
	\end{center}
\end{figure}

The absence of a canonical choice of extension for $f^\circ$ to a graph morphism of $\dual{Q}$ into $\dual{P}$ hints at a solution directly involving cubings: if one were to extend the range of $f^\circ$ to include the $2$-dimensional cube shown in the figure, it would have been possible to extend $f^\circ$ to a cellular map taking the edge of $\dual{Q}$ crossed by $x$ to an appropriately chosen diagonal of that cube. More generally, it {\it is} possible to extend $f^\circ$ to a continuous embedding of $\cube{Q}$ into $\cube{P}$ for {\it any} poc morphism $f:P\to Q$ by applying convexity properties of the canonical piecewise-Euclidean metrics on $\cube{P}$ and $\cube{Q}$ (\cite{Bridson_Haefliger-textbook}, II.2.7). Thus, although median graphs are sufficient for describing the dual graphs of poc sets, describing the {\it dual morphisms} requires the higher dimensional geometry of cubings.

\subsubsection{Example: Degeneration}\label{example:degeneration} Recall our promise to maintain weak poc set structures on a sensorium $\sens$ {\it dynamically}, updating the ordering on $\sens$ in real time. The duality theory of poc sets provides us with a hint as to how such maintenance should be done. The learning methods of section \ref{section:snapshots} are motivated by the an analogy between the following observations and the ideas underlying Hebbian learning:

The kind of update we expect to see in an instance of learning is captured in the following simple example.
\begin{eqnarray*}
	P_1&=&\gen{a,b,c}{a<c,\,b<c}\,,\\
	P_2&=&\gen{a,b,c}{a<b<c}
\end{eqnarray*}

\begin{figure}[ht]
	\begin{center}
%		\def\svgwidth{.8\columnwidth}
%		\includesvg{degeneration}
		\includegraphics[width=\columnwidth]{./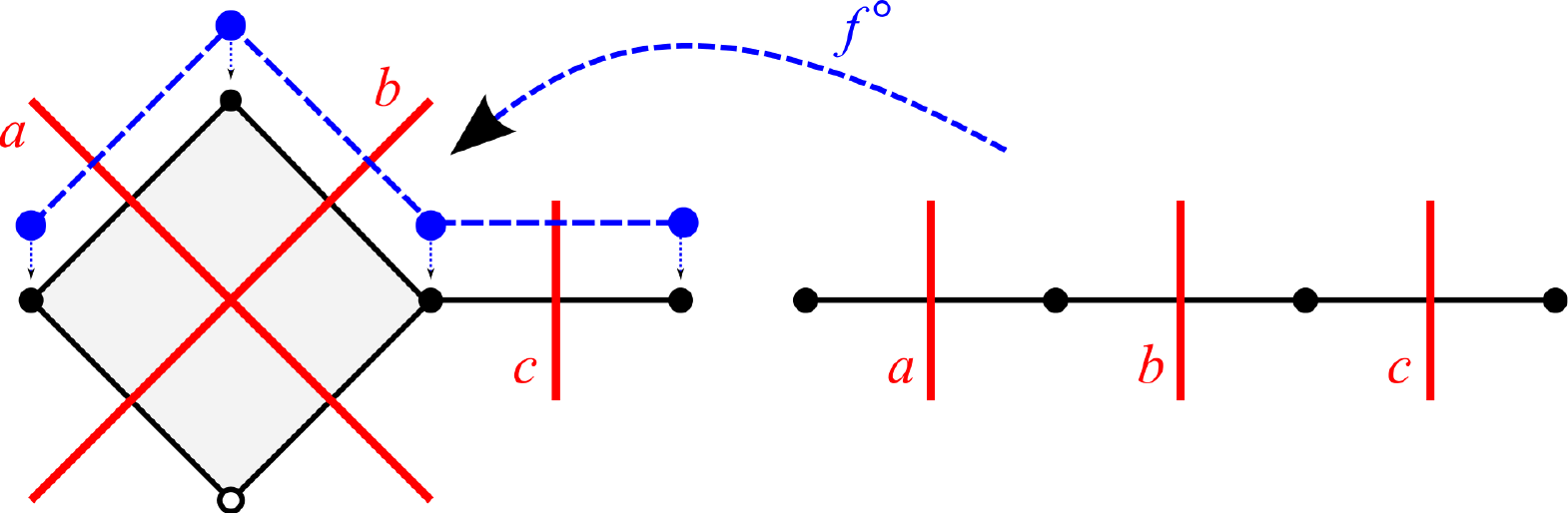}
		\caption{The dual of a degeneration is an embedding of median graphs (details in \ref{example:degeneration}).\label{fig:degeneration}}
	\end{center}
\end{figure}

The two poc set structures have the same underlying set (denote it by $P$) and the identity map $f=\id{P}:P_1\to P_2$ is a morphism, while the inverse map $g=\id{P}:P_2\to P_1$ is not. Thinking of $P_1$ as representing an observer yet undecided regarding the nature of nesting (if any) of the pair $\{a,b\}$ and therefore maintaining $a\pitchfork b$ in $P_1$, we see poc set $P_2$ as representing an observer with an identical set of beliefs except for the additional relation $a<b$. Figure \ref{fig:degeneration} visualizes the dual map $f^\circ$.

In general, if $P_1$ and $P_2$ are poc sets with the same underlying set $P$ and $f=\id{P}:P_1\to P_2$ is a poc morphism, then the dual $f^\circ$ has the following properties (see e.g. \cite{Roller-duality}):
\begin{prop}\label{prop:some dual properties} Suppose $f:P_1\to P_2$ is a bijective poc morphism. Then:
\begin{enumerate}
	\item $f^\circ$ is injective (\cite{Roller-duality}, proposition 7.8);
	\item $f^\circ$ extends to an injective cellular embedding of $\cube{P_2}$ in $\cube{P_1}$;
	\item The image of $\cube{P_2}$ under this embedding is a strong deformation retract of $\cube{P_1}$. 
\end{enumerate}
\end{prop}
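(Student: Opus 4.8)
The plan is to treat the three claims in sequence, exploiting throughout the fact that a bijective poc morphism $f:P_1\to P_2$ may (after identifying the two underlying sets via $f$) be regarded as the identity on a common set $P$, with the single consequence that the order is enlarged: $a\leq_1 b\THEN a\leq_2 b$, i.e. $\leq_1\;\subseteq\;\leq_2$. Here $f^\circ$ is the pullback $f^\circ(A)=f\inv(A)$ of definition \ref{defn:dual map}. I would first dispose of (1) either by citing Roller directly, or by the short self-contained check: for $A\in P_2^\circ$, the set $f\inv(A)$ is a complete $\ast$-selection by $\ast$-equivariance, and it is coherent in $P_1$ because any incoherent pair $p\leq_1 q^\ast$ ($p,q\in f\inv(A)$) would give $f(p)\leq_2 f(q)^\ast$ with $f(p),f(q)\in A$, contradicting coherence of $A$. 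Thus $f^\circ$ lands in $P_1^\circ$, and it is injective since $f\inv$ is injective on subsets.

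For (2) the key observation is that enlarging the order can only \emph{destroy} transversality: if $a,b$ are proper and $a\pitchfork b$ in $P_2$ (none of the nesting relations of definition \ref{defn:nesting} hold under $\leq_2$), then a fortiori none holds under $\leq_1$, so $a\pitchfork b$ in $P_1$. I would use this together with the coherence pullback from (1) to show $f^\circ$ carries cubes to cubes. Concretely, an edge of $\cube{P_2}$ joins coherent selections $u,w$ with $\card{u\minus w}=1$; their images are coherent in $P_1$ and still differ in exactly one element (as $f\inv$ preserves symmetric differences), hence span an edge of $\cube{P_1}$. A $d$-cube of $\cube{P_2}$ is spanned by a $d$-element transverse family of coherent vertices; transversality and coherence both persist in $P_1$, so the whole cube maps isomorphically onto a $d$-cube of $\cube{P_1}$. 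Being injective (by (1)), median-preserving, and cube-to-cube, $f^\circ$ is an injective cellular map whose image — closed under passing to faces — is a genuine subcomplex of $\cube{P_1}$; this realizes $\cube{P_2}$ as a subcomplex of $\cube{P_1}$.

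The crux is (3). The clean route I would take is purely homotopy-theoretic: by Corollary \ref{cor:cubings are contractible} both $\cube{P_1}$ and $\cube{P_2}$ are cubings, hence contractible, so the inclusion $\iota:\cube{P_2}\into\cube{P_1}$ from (2) induces isomorphisms on all homotopy groups (all of which are trivial). As both spaces are CW (cubical) complexes, Whitehead's theorem upgrades $\iota$ to a homotopy equivalence. Finally, the inclusion of a subcomplex of a CW complex is a cofibration (the pair has the homotopy extension property), and a cofibration that is also a homotopy equivalence is precisely the inclusion of a \emph{strong} deformation retract; this yields exactly property (3). The main obstacle to watch is this last step's hypotheses: one must genuinely know the image is a subcomplex (so that the inclusion is a cofibration), which is why establishing (2) as a \emph{cellular} embedding onto a subcomplex, rather than merely an injection of $0$-skeleta, is load-bearing. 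As a more hands-on alternative — closer to the folding constructions of \cite{Sageev-thesis,Roller-duality} and to the picture in example \ref{example:degeneration} — I could instead factor $f$ into finitely many elementary degenerations, each adding one transverse pair $a\pitchfork_1 b$ as a nesting $a\leq b$; one checks (even after transitive closure) that the vertices lost are exactly the convex region $\half{a}\cap\half{b^\ast}$, and then builds an explicit CAT(0) geodesic homotopy collapsing that region onto the surviving subcomplex, composing the resulting strong deformation retractions. This is more explicit but messier, and I would expect the contractibility-plus-cofibration argument to be the shortest rigorous path.
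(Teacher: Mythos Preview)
The paper does not actually prove this proposition: it is stated with a parenthetical ``see e.g.\ \cite{Roller-duality}'' and item (1) carries an explicit pointer to Roller's proposition 7.8, but no argument is given in the text. So there is no in-paper proof to compare against; your proposal stands on its own.

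Your argument is correct. For (2) there is an even more direct route implicit in the paper's setup: once you identify the underlying sets via $f$, both $\cube{P_1}$ and $\cube{P_2}$ are by definition the subcomplexes of the \emph{same} ambient cube $S(P)$ induced by $P_1^\circ$ and $P_2^\circ$ respectively, and your coherence check in (1) already gives $P_2^\circ\subseteq P_1^\circ$; hence $\cube{P_2}$ is \emph{literally} a full subcomplex of $\cube{P_1}$, with the inclusion being $f^\circ$. This sidesteps the need to track transversality or minsets cube-by-cube. Your treatment of (3) via contractibility (Corollary \ref{cor:cubings are contractible}), Whitehead's theorem, and the cofibration property of CW-subcomplex inclusions is a clean and entirely valid shortcut; the alternative you sketch --- factoring into elementary degenerations and collapsing one halfspace intersection at a time --- is closer in spirit to the constructive arguments in \cite{Sageev-thesis,Roller-duality} and to the picture the paper draws in example \ref{example:degeneration}, and has the advantage of producing an explicit retraction, at the cost of more bookkeeping.
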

A more complicated instance of this setting is very nicely visualized in figure \ref{fig:six way compass}.

\section{Proofs of Technical Results}
\subsection{Proof of proposition \ref{prop:acyclicity lemma}}\label{proofs:acyclicity lemma} We first extend the weight function $ab\mapsto\witness{ab}$ to a symmetric function of $\sens\times\sens$ by setting
\begin{equation}\label{eqn:weight extension}
	\witness{aa}=\witness{ab}+\witness{ab^\ast}\,,\quad
	\witness{aa^\ast}=0
\end{equation}
for any $a\in\sens$ and for any $b\in\sens$ with $ab\in\pocg$. The consistency constraint \eqref{eqn:consistency identity} implies that this extension is well-defined. This allows us to extend the definition of $\ori{\wild}$ in \eqref{eqn:orientation cocycle} to the whole of $\sens\times\sens$ while satisfying the following identities
\begin{equation}\label{eqn:extended cocycle symmetries}
	\ori{ab}+\ori{ba}=0\,,\quad
	\ori{aa}=0
\end{equation}
for all $a,b\in\sens$. Note that $\ori{\wild}$, by definition, takes directed edges and loops for its input.

Further, the consistency identity allows us to write, for any $ab\in\pocg$:
\begin{eqnarray}
	\ori{aa^\ast}&=&\witness{a^\ast a^\ast}-\witness{aa}\\
		&=&	\witness{a^\ast b}+\witness{a^\ast b^\ast}-\witness{ab}-\witness{ab^\ast}\\
		&=&	\ori{ab}+\ori{ab^\ast}\label{eqn:degenerate cocycle identity 2}
\end{eqnarray}
We next observe that the cocycle constraint \eqref{eqn:cocycle identity} may be rewritten in the form
\begin{equation}\label{eqn:cocycle identity 2}
	\ori{ab}+\ori{bc}=\ori{ac}
\end{equation}
Let us verify that this identity is, in fact, an identity over all $a,b,c\in\sens$. Due to the symmetries of $\ori{\wild}$ in \eqref{eqn:extended cocycle symmetries} it suffices to verify \eqref{eqn:cocycle identity 2} only in the following cases:
\begin{enumerate}
	\item The pair $ab\in\pocg$ and $c=a$: this case is taken care of by the anti-symmetry identity of $\ori{\wild}$.
	\item The pair $ab\in\pocg$ and $c=a^\ast$: this is precisely the statement of \eqref{eqn:degenerate cocycle identity 2}.
	\item The pair $ac\in\pocg$ and $b,c\in\{a,a^\ast\}$: without loss of generality, either $b=c=a$ and \eqref{eqn:cocycle identity 2} ends up claiming that $0+0=0$, or $b=a$ and $c=a^\ast$ -- in which case the statement turns into the trivial identity $\ori{aa^\ast}=\ori{aa^\ast}$.
\end{enumerate} 
(cases 1-2 correspond to exactly two of the pairs being proper; case 3 accounts for all situations when none of the pairs is proper; having exactly one of the pairs proper is impossible).

Now suppose that $p=(a_0,\ldots,a_m)$ is any directed vertex path in the given poc graph $\pog$. Then, applying the identity \eqref{eqn:cocycle identity 2} repeatedly we obtain:
\begin{equation}
	\ori{a_0a_m}=\ori{a_0a_1}+\ldots+\ori{a_{m-1}a_m}
\end{equation}
By the assumption on $\pog$, all the summands on the right hand side are positive. In particular, if $p$ were a cycle with $a_m=a_0$ we would have obtained
\begin{equation}
	0=\ori{a_0a_0}=\ori{a_0a_m}>0
\end{equation}
-- a contradiction. We must therefore conclude that directed cycles in $\pog$ are impossible, as desired.

\subsection{Proof of lemma \ref{lemma:empirical evolves from trivial}}\label{proofs:empirical evolves from trivial} An evolution of the trivial snapshot is an empirical snapshot, for if $\snap{S}=\snap{S}\at{t}$ can be written in the form $\snap{S}=O_k\ast\cdots\ast O_1\ast\nullsnap$ then defining indicators
\begin{equation}
	c_{ab}^k=\ev{\indicator{O_k}}{a}\cdot\ev{\indicator{O_k}}{b}
\end{equation}
-- compare with \eqref{eqn:coincidence indicators} -- one would have the following identity holding for $\snap{S}$:
\begin{equation}\label{eqn:empirical decomposes into indicators}
	\witness{ab}=\sum_{k=1}^t c_{ab}^k\in\ZZ_{_{\geq 0}}
\end{equation}
Since the functions $c_{\cdot}^k$ satisfy the consistency constraint, so does their sum $\witness{\cdot}$. The clock requirement is satisfied, too, since for any proper pair $\{a,b\}$ one has:
\begin{equation}
	\witness{a}+\witness{a^\ast}=\sum_{k=1}^t\underbrace{(c_{ab}+c_{ab^\ast}+c_{a^\ast b^\ast}+c_{a^\ast b})}_{=1}=t
\end{equation}
-- compare with equation \eqref{indicator identities}.

\medskip
Conversely, due to the presence of an integer clock, it suffices to show that any empirical snapshot $\snap{S}$ can be written in the form $\snap{S}=O\ast\snap{T}$ where $O=\state{\cdot}\snap{S}$ and $T$ is either trivial or empirical. Let $\snap{T}$ be the weighted graph obtained from $\snap{S}$ by performing the following operations:
\begin{enumerate}
	\item Subtract one unit from $\witness{ab}$ for every proper pair $a,b\in\sens$ satisfying $\{a,b\}\subseteq O$. 
	\item For any $a\in\sens$, set $\state{a}$ for $\snap{T}$ to $1$ iff $\witness{a}>\witness{a^\ast}$ in $\snap{T}$.
\end{enumerate}
The set $\state{\snap{T}}$ is a complete $\ast$-selection by construction, so it remains to verify the consistency and synchrony conditions for the new snapshot. For every proper pair $a,b\in\sens$, the fact that $O$ is a $\ast$-selection implies that all but one of the edge counters in $\snap{T}\res{ab}$ coincide with their $\snap{S}$ counterparts, while the exceptional one -- let it be $\witness{ab}$ without loss of generality -- is smaller than its counterpart in $\snap{S}$ by one unit. Since the sum of edge counters in $\snap{S}\res{ab}$ is independent of the choice of square, we conclude the same is true for $\snap{T}\res{ab}$. To prove consistency we observe that $\witness{ab}+\witness{ab^\ast}$ suffering a decrease (of one unit) in the passage to $\snap{T}$ implies $a\in O$ and hence $\witness{ac}+\witness{ac^\ast}$ must suffer a decrease as well for any $c\notin\{a,a^\ast\}$, since $O$ is a complete $\ast$-selection. Finally, with $\witness{a}$ being well-defined in $\snap{T}$ for all $a\in\ppoc$, the choice of $\state{\snap{T}}$ guarantees that $\state{a}=1$ is only possible in $\snap{T}$ if $\witness{a}>0$ in $\snap{T}$. 

\subsection{Equivalences in probabilistic Snapshots}\label{proofs:adding equivalences}

Throughout this section, $\snap{S}$ is a probabilistic snapshot satisfying the triangle inequality \eqref{eqn:triangle inequality for snapshots}. The reason for the name is that the symmetric {\it dissimilarity measure} on $\sens\times\sens$ defined by
\begin{equation}
	\tri{ab}:=\witness{a^\ast b}+\witness{ab^\ast}\geq 0
\end{equation}
allows rewriting \eqref{eqn:triangle inequality for snapshots} in the form
\begin{equation}
	\tri{ac}\leq\tri{ab}+\tri{ac}
\end{equation}
for all $a,b,c\in\sens$.

Now let us turn to the purpose of this discussion, the analysis of equivalences in $\sens$ that are observed in $\snap{S}$. Let $Eq(\snap{S})$ denote the undirected graph by vertex set $\sens$ having edges $ab$ and $a^\ast b^\ast$ for every $ab\in\pocg$ satisfying \eqref{eqn:equivalence}. Let $\bar\sens$ denote the partition of $\sens$ induced by the connected components of $Eq(\snap{S})$, and let $eq:\sens\to\bar\sens$ denote the map sending $a\in\sens$ to its block in $\bar\sens$. In other words, $eq(a)$ is nothing more than the list of all sensations $b\in\sens$ deemed equivalent to $a$, either directly or through a finite chain of equivalences. 

Returning to the notation of appendix \ref{proofs:acyclicity lemma} and keeping in mind \eqref{eqn:weight extension} we observe that:
\begin{equation}\label{eqn:equiv1}
	a\equiv b\THEN \ori{ab}=\witness{a^\ast b}-\witness{ab^\ast}=0
\end{equation}
and since $\ori{\wild}$ is additive \eqref{eqn:cocycle identity 2} we conclude:
\begin{equation}\label{eqn:equiv2}
	b\in eq(a)\THEN \ori{ab}=0
\end{equation}
In particular, since $\ori{ab}>0$ whenever $ab\in\snapdir{\snap{S}}$, we conclude:
\begin{lemma} If $a,b\in\sens$ are connected by a directed path in $\snapdir{\snap{S}}$ then $eq(a)\cap eq(b)=\varnothing$.
\end{lemma}
Similarly, for the metric $\tri{\wild}$ on $\sens$, we have that $\tri{ab}=0$ if $a\equiv b$ by definition. The triangle inequality allows us to conclude:
\begin{lemma} If $b\in eq(a)$ then $\tri{ab}=0$.
\end{lemma}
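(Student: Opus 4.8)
The plan is to recognize that $\tri{\wild}$ is a \emph{pseudometric} on $\sens$, and that the partition $\bar\sens$ is exactly the one identifying points at zero $\tri$-distance; once this is in place the statement is forced by the triangle inequality. Concretely, $\tri{ab}=\witness{a^\ast b}+\witness{ab^\ast}$ is manifestly symmetric and non-negative, and the hypothesis that $\snap{S}$ satisfies the triangle inequality \eqref{eqn:triangle inequality for snapshots} is precisely the statement that $\tri{ac}\le\tri{ab}+\tri{bc}$ for all $a,b,c\in\sens$ (the right-hand side carries a $\tri{bc}$ term; the displayed rewriting earlier in this appendix has an evident typo writing $\tri{ac}$ there).

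The first step is to check that every edge of $Eq(\snap{S})$ joins a pair at zero dissimilarity. If $ab\in\pocg$ satisfies \eqref{eqn:equivalence}, then $\witness{a^\ast b}=\witness{ab^\ast}=0$, whence $\tri{ab}=0$ straight from the definition of $\tri{\wild}$; the companion edge $a^\ast b^\ast$ likewise satisfies $\tri{a^\ast b^\ast}=\witness{ab^\ast}+\witness{a^\ast b}=0$. Thus every edge of $Eq(\snap{S})$ is a zero-$\tri$ pair.

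The second step is to unwind the meaning of $b\in eq(a)$. By construction $eq(a)$ is the block of $a$ in the partition $\bar\sens$ induced by the connected components of $Eq(\snap{S})$, so $b\in eq(a)$ means precisely that there is a finite walk $a=c_0,c_1,\ldots,c_n=b$ in $Eq(\snap{S})$ with each $c_ic_{i+1}$ an edge. The final step then chains the triangle inequality along this walk: by induction on $n$, using $\tri{c_0c_n}\le\tri{c_0c_{n-1}}+\tri{c_{n-1}c_n}$ at the inductive step, one obtains
\begin{equation*}
	\tri{ab}\;\le\;\sum_{i=0}^{n-1}\tri{c_ic_{i+1}}\;=\;0\,,
\end{equation*}
each summand vanishing by the first step. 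Since $\tri{ab}\ge0$, this forces $\tri{ab}=0$, as claimed.

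There is no genuine obstacle here beyond bookkeeping; the proof is a routine transitivity-of-zero-distance argument. The only point demanding care is ensuring the triangle inequality is invoked in the correct orientation (matching $\tri{ac}\le\tri{ab}+\tri{bc}$ rather than the typo'd form), and observing at the outset that the non-negativity of $\tri{\wild}$ is what upgrades the inequality $\tri{ab}\le 0$ to the desired equality.
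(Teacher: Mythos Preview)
Your proof is correct and follows essentially the same approach as the paper: unwind $b\in eq(a)$ as a finite chain of direct equivalences in $Eq(\snap{S})$, observe each link has zero dissimilarity, and iterate the triangle inequality along the chain. You add a little more detail (explicitly noting non-negativity of $\tri{\wild}$ and correctly flagging the typo in the displayed triangle inequality), but the argument is the same.
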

\begin{proof} We have $b\in eq(a)$ iff there is a finite sequence of elements in $\sens$ of the form
\begin{equation}
	a=a_0\equiv a_1\equiv\ldots\equiv a_n=b\,,\quad n\geq 0
\end{equation}
Applying the triangle inequality $n-1$ times we obtain
\begin{equation}
	\tri{ab}\leq\sum_{i=1}^n\tri{a_{i-1}a_i}=0\,,
\end{equation}
as desired.
\end{proof}

In order to see that $\snapdir{\snap{S}}$ actually defines a weak poc set structure on $\bar\sens$ we need the following:
\begin{lemma} Given $\snap{S}$ as above, for all $a\in\sens$ we have:
\begin{enumerate}
	\item $eq(a^\ast)=eq(a)^\ast$;
	\item $eq(a^\ast)\neq eq(a)$.
\end{enumerate}
(Recall the convention $A^\ast=\set{x^\ast}{x\in A}$ for $A\subseteq\sens$)
\end{lemma}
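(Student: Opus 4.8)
The plan is to reduce both parts to a single structural observation, that the $\ast$-involution acts as an automorphism of the undirected graph $Eq(\snap{S})$, and then, for part (2), to play this against the dissimilarity computation already developed above.

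First I would check that $c\mapsto c^\ast$ permutes the edge set of $Eq(\snap{S})$. This is immediate from the shape of the defining condition \eqref{eqn:equivalence}: the proper pair $\{a,b\}$ satisfies $\witness{ab^\ast}=\witness{a^\ast b}=0$ if and only if the pair $\{a^\ast,b^\ast\}$ satisfies the identical two equations, since replacing $a,b$ by $a^\ast,b^\ast$ merely interchanges the two vanishing weights. As $Eq(\snap{S})$ carries, by construction, both $ab$ and $a^\ast b^\ast$ whenever either condition holds, the vertex bijection $\ast$ sends edges to edges bijectively, hence is a graph automorphism. Any automorphism carries connected components onto connected components, so it carries the component of $a$ onto the component of $a^\ast$, giving $eq(a)^\ast=eq(a^\ast)$ and proving part (1). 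Concretely, a path $a=a_0,a_1,\ldots,a_n=b$ witnessing $b\in eq(a)$ maps under $\ast$ to the path $a^\ast=a_0^\ast,\ldots,a_n^\ast=b^\ast$, so $b\in eq(a)\IFF b^\ast\in eq(a^\ast)$.

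For part (2) I would argue by contradiction using the dissimilarity measure $\tri{\wild}$. Suppose $eq(a^\ast)=eq(a)$, so in particular $a^\ast\in eq(a)$. The preceding lemma then forces $\tri{aa^\ast}=0$. On the other hand, unwinding $\tri{aa^\ast}=\witness{a^\ast a^\ast}+\witness{aa}$ and using the extension \eqref{eqn:weight extension} together with the normalization constraint \eqref{eqn:normalization identity}, one finds $\witness{aa}+\witness{a^\ast a^\ast}=1$ for any proper pair, whence $\tri{aa^\ast}=1$. This contradiction establishes $eq(a^\ast)\neq eq(a)$.

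I do not expect a genuine obstacle. The only point needing care is the bookkeeping in part (1), namely confirming that \eqref{eqn:equivalence} is truly symmetric under $c\mapsto c^\ast$ so that $\ast$ is an automorphism of the full graph structure rather than a mere relabelling of vertices; once that is secured, connectivity transports automatically. Part (2) is then entirely carried by the elementary identity $\tri{aa^\ast}=1$, forced by normalization, set against the vanishing of $\tri{\wild}$ along equivalences supplied by the previous lemma.
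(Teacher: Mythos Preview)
Your proof is correct and follows essentially the same approach as the paper: part~(1) is the observation that $a\equiv b\Rightarrow a^\ast\equiv b^\ast$ (which you phrase as $\ast$ being a graph automorphism of $Eq(\snap{S})$), and part~(2) uses the preceding lemma to derive $\tri{aa^\ast}=0$ from $a^\ast\in eq(a)$ and contradicts this with the normalization identity $\tri{aa^\ast}=\witness{aa}+\witness{a^\ast a^\ast}=1$. The paper's version is terser but substantively identical.
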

\begin{proof} For (1) it suffices to observe that $a\equiv b$ implies $a^\ast\equiv b^\ast$ by construction. Assertion (2) follows from observing that
\begin{equation}
	eq(a^\ast)=eq(a)\IFF eq(a^\ast)\cap eq(a)\neq\varnothing\IFF a^\ast\in eq(a)
\end{equation}
However, by the preceding lemma, $a^\ast\in eq(a)$ would imply $\tri{aa^\ast}=0$ which, in turn, would contradict the obvious equality \begin{equation}
	\tri{aa^\ast}=\witness{a^\ast a^\ast}+\witness{aa}=1
\end{equation}
finishing the proof.
\end{proof}
The following proposition summarizes our progress thus far:
\begin{prop}\label{prop:adding equivalences} Let $\snap{S}$ be a probabilistic snapshot satisfying the triangle inequality. Then:
\begin{enumerate}
	\item The operation $eq(a)\mapsto eq(a^\ast)$ defines a fixpoint-free involution on $\bar\sens$.
	\item The directed graph $\pog$ with vertex set $\bar\sens$ and with an edge pointing from $eq(a)$ to $eq(b)$ iff there exist $a'\in eq(a)$ and $b'\in eq(b)$ such that $a'b'$ is an edge of $\snapdir{\snap{S}}$ is an acyclic poc graph.
	\item Let $\ppoc=\poc{\snapdir{\snap{S}}}$ and $\cl{\ppoc}=\poc{\pog}$. Then the quotient map $eq:\sens\to\bar{\sens}$ is a poc morphism of $\ppoc$ onto $\cl{\ppoc}$ with the property that every fiber of $eq$ is a transverse subset of $\ppoc$.
	\item For any subset $A\subset\sens$ one has 
	\begin{equation}\label{propagation with equivalences}
		\up{A}=eq\inv\left(\up{eq(A)}\right)
	\end{equation}
	In particular, propagation over $\cl{\ppoc}$ is equivalent to propagation over $\snapdir{\snap{S}}_0$ (see defn. \ref{defn:poc graph with equivalences}).\defstop
\end{enumerate}
\end{prop}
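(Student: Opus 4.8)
The plan is to prove everything from a single \emph{substitution principle}, which I would isolate first as a lemma: if $b_1,b_2\in\sens$ lie in a common block of $\bar\sens$ (equivalently $\tri{b_1 b_2}=0$), then $\witness{c b_1}=\witness{c b_2}$ for every $c$ forming a proper pair with both. This is where the triangle-inequality hypothesis \eqref{eqn:triangle inequality for snapshots} is spent, and I expect it to be the main obstacle. From $\tri{b_1 b_2}=0$ one reads off $\witness{b_1^\ast b_2}=\witness{b_1 b_2^\ast}=0$, hence $\witness{b_1}=\witness{b_1 b_2}=\witness{b_2}$ by \eqref{eqn:consistency identity}; feeding the consistency identities on the $c$-edges together with the equality $\tri{c b_1}=\tri{c b_2}$ (which the triangle inequality yields by applying it both ways through $b_1\equiv b_2$) forces the difference $\witness{c b_1}-\witness{c b_2}$ to equal its own negation, so it vanishes. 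Granting this, every pairwise fact recorded in $\snapdir{\snap{S}}$ may be transported across equivalence blocks.

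Parts (1) and (2) then follow cleanly. Assertion (1) is immediate from the already-proved identities $eq(a^\ast)=eq(a)^\ast$ and $eq(a^\ast)\neq eq(a)$: the assignment $eq(a)\mapsto eq(a^\ast)$ is well defined on blocks, is an involution since $a^{\ast\ast}=a$, and is fixpoint-free. For (2) I would first descend the orientation cocycle: since $\ori{ab}=0$ whenever $b\in eq(a)$ by \eqref{eqn:equiv2} and $\ori{\wild}$ is additive along directed paths by \eqref{eqn:cocycle identity 2}, the quantity $\ori{ab}$ depends only on the blocks of $a$ and $b$, defining an additive, antisymmetric $\bar\omega$ on $\bar\sens$ that is strictly positive on every edge of $\pog$ (each such edge lifts to an edge of $\snapdir{\snap{S}}$, on which $\ori{\wild}>0$). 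Rerunning the argument of Proposition~\ref{prop:acyclicity lemma} with $\bar\omega$ in place of $\ori{\wild}$ excludes directed cycles, and in particular opposite edges. The first poc-graph axiom is inherited from the $\ast$-equivariance of $\snapdir{\snap{S}}$; for the second axiom I would use substitution: were both $eq(a)eq(b)$ and $eq(a)eq(b^\ast)$ edges, transporting their lifts to a common representative pair $(a_2,b_2)$ would yield, straight from \eqref{eqn:virtual implication}, the contradictory inequalities $\witness{a_2 b_2^\ast}<\witness{a_2 b_2}$ and $\witness{a_2 b_2}<\witness{a_2 b_2^\ast}$, and the companion crossing case is dispatched identically using a different min-term of \eqref{eqn:virtual implication}. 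These comparisons never touch $\threshold{\wild}$, so the possible variation of the thresholds across a block is harmless.

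For (3), order-preservation of $eq$ follows by projecting a directed path in $\snapdir{\snap{S}}$ witnessing $a\leq b$ onto a directed walk in $\pog$; together with the $\ast$-equivariance and $eq(\minP)=\minP$ already in hand this makes $eq$ a poc morphism, visibly onto $\bar\sens$. Transversality of a fiber $eq(a)$ reduces to showing that two distinct members $a',a''$ of one block cross in $\ppoc$. The no-directed-path lemma already kills $a'\leq a''$ and $a''\leq a'$, and the two remaining nesting relations $a'\leq a''^\ast$, $a'^\ast\leq a''$ are excluded by the same weight-comparison mechanism as in part (2): such a relation would, after substitution, have to be witnessed by a strict inequality contradicting the vanishing cross-weights $\witness{a'a''^\ast}=\witness{a'^\ast a''}=0$ coming from \eqref{eqn:equivalence}. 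Hence each fiber is transverse.

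Finally, (4) is a formal graph-commutation once (1)--(3) are in place, and I would read $\up{\wild}$ as forward closure in $\snapdir{\snap{S}}_0$. The inclusion $\up{A}\subseteq eq\inv(\up{eq(A)})$ holds because every edge of $\snapdir{\snap{S}}_0$ either collapses inside a block or maps to an edge of $\pog$, so a forward path projects to a forward walk and forward closure commutes with $eq$ in one direction. For the reverse inclusion I would lift a directed $\pog$-walk emanating from $eq(A)$ one edge at a time, bridging consecutive lifts inside each block by means of the bidirectional equivalence edges that $\snapdir{\snap{S}}_0$ adds (Definition~\ref{defn:poc graph with equivalences}), which make every block strongly connected; this reaches any $x$ with $eq(x)\in\up{eq(A)}$, giving $x\in\up{A}$. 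The concluding clause is then just the remark that this commutation is precisely the identity \eqref{propagation with equivalences}, so propagation over $\cl{\ppoc}$ and over $\snapdir{\snap{S}}_0$ produce the same result.
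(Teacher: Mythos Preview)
Your substitution principle is a correct and useful strengthening of what the paper explicitly proves: the paper's preparatory lemmas only establish $\ori{ab}=0$ and $\tri{ab}=0$ across a block, whereas you push the triangle inequality and consistency one step further to obtain invariance of \emph{each individual} weight $\witness{c\,\cdot}$ across a block. This pays off in your treatment of the second poc-graph axiom in part~(2), which the paper does not spell out at all (the proposition is presented there merely as a ``summary'' of the preceding lemmas). Parts (1), (2) and (4) of your argument are sound and more detailed than anything the paper offers.

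There is, however, a genuine gap in your argument for the transversality claim in part~(3). You assert that $a'\leq a''^{\ast}$ (with $a',a''$ in the same block) ``would, after substitution, have to be witnessed by a strict inequality contradicting the vanishing cross-weights $\witness{a'a''^{\ast}}=\witness{a'^{\ast}a''}=0$''. That contradiction is available only if $a'a''^{\ast}$ is a \emph{direct edge} of $\snapdir{\snap{S}}$: then \eqref{eqn:virtual implication} forces $\witness{a'a''}<\witness{a'a''^{\ast}}=0$, which is impossible. But $a'\leq a''^{\ast}$ in $\ppoc=\poc{\snapdir{\snap{S}}}$ is a statement about the \emph{transitive closure}, i.e.\ a directed path $a'=c_0\to c_1\to\cdots\to c_n=a''^{\ast}$, and your ``weight-comparison mechanism as in part (2)'' operates edge-by-edge on a single square. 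Applying substitution to the terminal edge $c_{n-1}\to a''^{\ast}$ only yields (modulo thresholds) an edge $c_{n-1}\to a'^{\ast}$, hence $a'\leq a'^{\ast}$; but in a \emph{weak} poc set this is not a contradiction, it merely says $a'$ is negligible. Concretely, with a measure-type snapshot one can arrange $\rho(a')$ tiny, $\rho(a')\subset\rho(c)$, $\rho(c)$ almost contained in $\rho(d)$, and $\rho(a')\cap\rho(d)=\varnothing$, producing a path $a'\to c\to d\to a'^{\ast}$ in $\snapdir{\snap{S}}$; adjoining $a''$ with $\rho(a'')=\rho(a')$ then gives $a'\leq a''^{\ast}$ with $a'\equiv a''$. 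So as written the argument does not close; you need either an additional hypothesis excluding negligible elements in $\ppoc$, or a different device for this case. (The paper's own lemmas do not cover this case either.)
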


\subsection{Proof of proposition \ref{prop:representing in product}}\label{proofs:representing in product}
This proof requires the results of section \ref{subsection:duality theory}. The inclusions:
\begin{equation}
	inc_{act}:\pact\at{t}\into\ppoc\at{t}\,,\quad
	inc_{obs}:\pobs\at{t}\into\ppoc\at{t}
\end{equation}
satisfy $proj_{act}=inc_{act}^\circ$ and $proj_{obs}=inc_{obs}^\circ$, ans since $\sens=\pact\cup\pobs$ and $\pact\cap\pobs=\{\minP,\maxP\}$, we conclude that the identity map $\mathrm{id}_{\ppoc}$ of $\ppoc$ is, in fact, a surjective poc morphism
\begin{equation}
	\mathrm{id}_{\ppoc}:\pact\at{t}\vee\pobs\at{t}\to\ppoc\at{t}
\end{equation}
(see \ref{example:direct sum} for a definition of $\pact\at{t}\vee\pobs\at{t}$).
By proposition \ref{prop:some dual properties}, the dual of this map is a median-preserving embedding of cubical complexes:
\begin{equation}\label{eqn:splitting the sensorium}
	\mathrm{id}_{\ppoc}^\circ:\cube{\ppoc\at{t}}\into\cube{\pact\at{t}}\times\cube{\pobs\at{t}}\,,
\end{equation}
as required.

\subsection{Local Structure of Duals and Greedy Navigation}\label{subsection:greedy navigation}
In \cite{Allerton_2012} we suggested exploring the link between the convexity theory of duals of weak poc sets and planning in DBAs, yet the formal results contained therein proved insufficient for supporting the planning algorithms proposed in this paper. This section fills in this gap.

Throughout this section we fix a finite weak poc set $P$ and the median graph $\Gamma=\dual{P}$ (which is to say, $\Gamma$ is an arbitrary finite median graph). We study the problem of computing the image of a non-empty convex subset $V(S)$ of $\Gamma$ under the closest point projection of $\Gamma$ to the convex subset $V(T)$. 

\medskip
\subsubsection{Gates} We recall the following definitions and results from \cite{Roller-duality}:
\begin{defn}[Separator] Let $K,L\subseteq P^\circ$ be sets. The set
\begin{equation}\label{eqn:separator}
	\sep{K,L}=\set{a\in P}{K\subseteq V(a)\,,\;L\subseteq V(a^\ast)}
\end{equation}
is called the {\emph separator of $K$ and $L$}.\defstop
\end{defn}
The inequality $\ellone{u}{v}\geq\card{\sep{K,L}}$ follows immediately for all $u\in K$ and $v\in L$. This motivates:
\begin{defn}[Gate] Let $K,L\subseteq P^\circ$. A \emph{gate for $K,L$} is a pair of points $u\in K$, $v\in L$ such that $\ellone{u}{v}=\card{\sep{K,L}}$.\defstop
\end{defn}
The following result is well known in our setting:
\begin{prop}\label{prop:gates exist} Let $K,L$ be non-empty convex subsets of $\Gamma$ and let $u\in K$ and $v\in L$. Then $u,v$ form a gate for $K,L$ if and only if $\proj{K}{v}=u$ and $\proj{L}{u}=v$. Moreover, any pair of non-empty convex subsets of $\Gamma$ has a gate.
\end{prop}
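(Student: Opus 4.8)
The plan is to prove the biconditional first and then deduce existence from it by a finiteness argument. Throughout I identify half-spaces of $\Gamma$ with the sets $V(a)$, $a\in P$, via Lemma \ref{lemma:ellone halfspaces}, and I use that, by Lemma \ref{lemma:ellone metric is hop metric}, $\ellone{\cdot}{\cdot}$ is the hop metric, so $\ellone{u}{v}=\card{\set{a\in P}{a\in u,\,a^\ast\in v}}$ counts exactly the half-spaces separating $u$ from $v$. Since any such $a$ that in addition satisfies $K\subseteq V(a)$ and $L\subseteq V(a^\ast)$ lies in $\sep{K,L}$, the containment $\sep{K,L}\subseteq\set{a}{a\in u,\,a^\ast\in v}$ reproves the inequality $\ellone{u}{v}\ge\card{\sep{K,L}}$ already noted in the text; a gate is then precisely a pair for which the only separating half-spaces are those in $\sep{K,L}$.

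The easy direction (gate $\THEN$ projections) is immediate: if $u,v$ form a gate and $w=\proj{K}{v}$, then $w\in K$ forces $\ellone{w}{v}\ge\card{\sep{K,L}}=\ellone{u}{v}$, while minimality of the projection gives $\ellone{w}{v}\le\ellone{u}{v}$. Hence $u$ also realizes the minimal distance from $v$ to $K$, and uniqueness of closest-point projection (Theorem \ref{thm:convexity theory of median graphs}(4)) yields $\proj{K}{v}=u$; the symmetric argument gives $\proj{L}{u}=v$.

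The crux is the converse, for which I would first establish the following separation claim: \emph{if $w=\proj{K}{v}$, then every half-space $V(a)$ separating $w$ from $v$ (i.e. $a\in w$, $a^\ast\in v$) contains all of $K$.} Suppose not, so some $u'\in K$ has $a^\ast\in u'$, and form the median $m=\med{w}{u'}{v}$ via the majority-vote formula \eqref{eqn:median formula}. On the coordinate $a$ the three selections read $a,a^\ast,a^\ast$, so $a^\ast\in m$, whence $m\neq w$. Since $m\in I(w,u')$ and $K$ is convex with $w,u'\in K$, we get $m\in K$; and since $m\in I(w,v)$ with $m\neq w$ we get $\ellone{m}{v}<\ellone{w}{v}$, contradicting minimality of $w=\proj{K}{v}$. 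This establishes the claim, and its proof is the main obstacle — everything else is bookkeeping around it.

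Given the claim, the converse follows quickly. Assume $\proj{K}{v}=u$ and $\proj{L}{u}=v$, and let $V(a)$ separate $u$ from $v$, so $a\in u$ and $a^\ast\in v$. Applying the claim to $K$ (with $w=u=\proj{K}{v}$) gives $K\subseteq V(a)$; applying it to $L$ (with $w=v=\proj{L}{u}$ and the complementary half-space $V(a^\ast)$, which separates $v$ from $u$) gives $L\subseteq V(a^\ast)$. Hence $a\in\sep{K,L}$, so every separating half-space lies in $\sep{K,L}$ and $\ellone{u}{v}\le\card{\sep{K,L}}$; with the reverse inequality this forces equality, i.e. $u,v$ is a gate. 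Finally, for existence I would choose $(u^\ast,v^\ast)\in K\times L$ minimizing $\ellone{u^\ast}{v^\ast}$, which is possible since $\Gamma$ is finite and $K,L$ are nonempty. Comparing $(u^\ast,v^\ast)$ with the competitor $(\proj{K}{v^\ast},v^\ast)$ shows $u^\ast$ is a closest point of $K$ to $v^\ast$, so $\proj{K}{v^\ast}=u^\ast$ by uniqueness, and symmetrically $\proj{L}{u^\ast}=v^\ast$; the biconditional then certifies $(u^\ast,v^\ast)$ as a gate.
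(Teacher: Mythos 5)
Your proof is correct, and there is in fact nothing in the paper to compare it against: immediately after stating the proposition the paper says it ``is well known in our setting'' and that it will be applied \emph{without proof}, so your argument fills a gap the paper deliberately outsources to the literature (Roller's duality paper). Your route is clean and self-contained: (i) recasting ``gate'' as the set equality $u\minus v=\sep{K,L}$, using $\ellone{u}{v}=\card{u\minus v}$ and the automatic containment $\sep{K,L}\subseteq u\minus v$; (ii) the separation claim that every half-space separating $\proj{K}{v}$ from $v$ must contain all of $K$, proved by reading the coordinate $a$ off the majority-vote median formula and contradicting minimality of the projection; (iii) both directions of the biconditional as bookkeeping around that claim; and (iv) existence by minimizing $\ellone{\cdot}{\cdot}$ over the finite set $K\times L$ and certifying the minimizer via the biconditional. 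Each step checks out, including the degenerate cases ($v\in K$, or $K\cap L\neq\varnothing$, where $\sep{K,L}=\varnothing$ and the gate is a common point), since uniqueness of closest-point projection (Theorem \ref{thm:convexity theory of median graphs}(4)) is trivial at distance zero. It is worth noting that your separation claim uses exactly the median trick that the paper itself deploys \emph{downstream} of this proposition, in the proofs of Lemma \ref{lemma:supporting halfspace} and Lemma \ref{lemma:general projection, intersecting} (form $\med{u}{v}{w}$, force the disputed letter by majority vote, and contradict minimality of a projection); so your write-up shows that the gate property could have been derived internally by the paper's own methods rather than imported as external knowledge.
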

We will apply this proposition without proof. An important consequence for us is the following:
\begin{lemma}\label{lemma:supporting halfspace} Suppose $K=\half{S}$ and $S\subset P$ is coherent. Then, for any $a\in P$, if $K\subseteq\half{a}$ then there exists $s\in S$ such that $s\leq a$.
\end{lemma}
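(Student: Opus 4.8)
The plan is to argue by contraposition. By Corollary~\ref{cor:reconstruction} the relation $s\leq a$ is equivalent to the half-space containment $\half{s}\subseteq\half{a}$, so the assertion to be proved reads: if $\half{S}=\bigcap_{s\in S}\half{s}$ is contained in $\half{a}$, then already one of the defining half-spaces $\half{s}$ is contained in $\half{a}$. I would therefore assume that $s\not\leq a$ for every $s\in S$ and produce a single coherent $\ast$-selection $u\in P^\circ$ with $S\subseteq u$ and $a^\ast\in u$; such a $u$ lies in $\half{S}=K$ but, being a $\ast$-selection, omits $a$, so $u\notin\half{a}$, contradicting $K\subseteq\half{a}$.

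The construction of $u$ rests on one extension lemma, which I expect to be the crux of the argument: \emph{in a finite poc set, any coherent subset containing no negligible element extends to a complete coherent $\ast$-selection.} The inductive step is the substantive point. Given a coherent set $C$ with no negligible element and a proper $b$ with $b,b^\ast\notin C$, at least one of $C\cup\{b\}$, $C\cup\{b^\ast\}$ is again coherent: if both failed, coherence of $C$ would force witnesses $c_1,c_2\in C$ with $c_1\leq b^\ast$ and $c_2\leq b$, whence $c_2\leq b\leq c_1^\ast$ by the order-reversing property of $\ast$, so the pair $\{c_1,c_2\}$ would be incoherent (and if $c_1=c_2$ this element would be negligible) --- either way contradicting the coherence of $C$. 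Iterating over one representative of each proper $\ast$-pair yields the desired complete coherent $\ast$-selection; along the way the hypothesis on negligibles guarantees we never need to insert $\minP$ or any negligible element.

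It remains to feed the right coherent set into this lemma. Under the standing assumption I would take $C_0:=S\cup\{a^\ast\}$. This is coherent: the only new pairs to check involve $a^\ast$, and $\{s,a^\ast\}$ is incoherent precisely when $s\leq (a^\ast)^\ast=a$, which is excluded by assumption; moreover $a^\ast$ is not negligible once $a$ is proper, so $C_0$ carries no negligible element and the extension lemma applies, delivering the witness $u$ above. Finally I would dispatch the degenerate possibilities. By Lemma~\ref{lemma:reduction to poc sets} one may pass to the canonical poc quotient without altering $P^\circ$ or any $\half{\cdot}$, so the only improper elements are $\minP,\maxP$; the set $K=\half{S}$ is automatically nonempty (apply the extension lemma to $S$ itself, which is coherent and hence free of negligibles), and the remaining case $a=\maxP$ is immediate since then $s\leq a$ holds for every $s\in S$. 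This completes the plan, the only genuinely delicate ingredient being the coherence-preservation step inside the extension lemma.
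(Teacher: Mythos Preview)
Your proof is correct and takes a genuinely different route from the paper's. The paper argues on the median-graph side: it sets $L=\half{a^\ast}$, invokes Proposition~\ref{prop:gates exist} to pick a gate $(u,v)$ for $(K,L)$, chooses $s\in S$ with $v\in\half{s^\ast}$, and then uses a median computation $m=\med{u}{v}{w}$ together with uniqueness of nearest-point projection to force $L\subseteq\half{s^\ast}$, which is $s\leq a$ via Corollary~\ref{cor:reconstruction}. Your argument stays entirely on the poc-set side: you set up $C_0=S\cup\{a^\ast\}$, verify its coherence directly from the contrapositive hypothesis, and extend it to a vertex of $P^\circ$ by the one-element-at-a-time extension lemma you sketch. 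That lemma is exactly the statement that every coherent set free of negligibles lies in some $u\in P^\circ$, and your inductive step (the dichotomy $C\cup\{b\}$ versus $C\cup\{b^\ast\}$) is the standard ultrafilter-style proof of it.

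What each approach buys: the paper's proof leverages the convexity machinery already assembled in the appendix (gates, medians, projections), so it is short once that machinery is in place and it foreshadows how the same tools are reused in Proposition~\ref{prop:general projection}. Your proof is more self-contained and order-theoretic; it does not need Proposition~\ref{prop:gates exist} or any median reasoning, and in fact your extension lemma is equivalent to the nonemptiness of $\half{C}$ for coherent $C$, which the paper uses elsewhere without an explicit proof. Both handle the degenerate case $a=\maxP$ (and hence, after passing to $\hat P$, all ubiquitous $a$) in the same trivial way.
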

\begin{proof} Let $u\in K$ and $v\in L:=\half{a^\ast}$ form a gate. Since $v\notin A$, there exists $s\in S$ such that $v\in\half{s^\ast}$. 

Suppose there were a $w\in B$ with $w\in\half{s}$, and consider $m=\med{u}{v}{w}$. Then $a\in v,w$ implies $a\in m$, but the inequality
\begin{equation}
	\ellone{u}{v}=\ellone{u}{m}+\ellone{m}{v}\geq\ellone{u}{m}
\end{equation}
implies $m=v$, since $v=\proj{L}{u}$. On the other hand, $s\in u,w$ implies $s\in m$ -- a contradiction.

Thus, we have shown that $L=\half{a^\ast}$ is contained in $\half{s^\ast}$. Equivalently, $a^\ast\leq s^\ast$, which is the same as $s\leq a$.
\end{proof}
The same kind of reasoning yields:
\begin{lemma}\label{lemma:general projection, intersecting} Suppose $K,L$ are non-empty convex subsets of $\dual{P}$. If $K\cap L\neq\varnothing$, then $\proj{K}{L}=\proj{L}{K}=K\cap L$.
\end{lemma}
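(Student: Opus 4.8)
The plan is to reduce both claimed equalities to the single assertion that the closest-point projection onto a convex set carries every point of a second convex set into the intersection, exploiting that $\proj{K}{\wild}$ fixes $K$ pointwise and that medians land in all three intervals they span.

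First I would dispatch the easy inclusion $K\cap L\subseteq\proj{K}{L}$. Note $K\cap L$ is itself convex (an intersection of convex sets) and nonempty by hypothesis, so it is a legitimate object. For any $z\in K\cap L$ one has $\proj{K}{z}=z$, since $z$ already lies in $K$ and is therefore its own unique nearest point in $K$ at distance $0$ (the minimizer of item 4 of Theorem \ref{thm:convexity theory of median graphs}). As $z\in L$, this realizes $z$ as a point of $\proj{K}{L}$, giving the inclusion.

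The substance is the reverse inclusion $\proj{K}{L}\subseteq K\cap L$. Here I would fix $v\in L$, set $u=\proj{K}{v}\in K$, and choose any $w\in K\cap L$. Consider the median $m=\med{u}{v}{w}$, which by the definition of a median graph lies in each of $I(u,v)$, $I(v,w)$, and $I(u,w)$. Since $u,w\in K$ and $K$ is convex, $m\in I(u,w)\subseteq K$. Because $m\in I(u,v)$ we have $\ellone{v}{m}\le\ellone{v}{u}$; but $u$ is the \emph{unique} point of $K$ nearest $v$ and $m\in K$, so also $\ellone{v}{m}\ge\ellone{v}{u}$, forcing $\ellone{v}{m}=\ellone{v}{u}$ and hence $m=u$ by the uniqueness in item 4. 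Finally, $m\in I(v,w)$ together with $v,w\in L$ and convexity of $L$ gives $u=m\in L$, so $u\in K\cap L$. Combining the two inclusions yields $\proj{K}{L}=K\cap L$, and swapping the roles of $K$ and $L$ throughout gives $\proj{L}{K}=K\cap L$.

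The only delicate step is the squeeze that identifies the median $m$ with the projection $u$: it rests precisely on the uniqueness of nearest-point projection in median graphs (Theorem \ref{thm:convexity theory of median graphs}, item 4) paired with the defining property that $\med{u}{v}{w}$ sits in all three intervals. Everything else is routine bookkeeping with the convexity definition. No appeal to gates or separators is strictly necessary, although the same conclusion could alternatively be extracted from Proposition \ref{prop:gates exist} by taking $K\cap L$ to furnish a common gate.
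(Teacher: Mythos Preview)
Your proof is correct and is essentially the same as the paper's: both take a point in the projection, form the median with a witness $w\in K\cap L$, use convexity of one set to place $m$ there, squeeze via uniqueness of nearest-point projection to identify $m$ with the projected point, and then use convexity of the other set to finish. The only cosmetic difference is that the paper writes the argument for $\proj{L}{K}$ while you write it for $\proj{K}{L}$.
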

\begin{proof} Clearly, if $v\in K\cap L$ then $\proj{L}(v)=v$, so $K\cap L\subset\proj{L}{K}$. For the reverse inclusion, suppose $v\in\proj{L}{K}$ and write $v=\proj{L}{u}$, $u\in K$. Pick any point $w\in K\cap L$. Setting $m=\med{w}{v}{u}$ we note that $m\in L$ (because $w,v\in L$) and
\begin{equation*}
	\ellone{u}{v}=\ellone{u}{m}+\ellone{m}{v}\geq\ellone{u}{m}\,.
\end{equation*}
The uniqueness of projection forces $v=\proj{L}{u}$ to coincide with $m$. However, since $w,u\in K$ we also have $m\in K$, showing $v\in K\cap L$.
\end{proof}

\subsubsection{The Coherent Projection}\label{coherent projection} We need to study a technical notion motivated by the necessity in correcting the observation of the current state as explained in section \ref{subsection:intro to model spaces and coherence}. We recall the following standard notation for partially ordered sets:
\begin{equation}
	\up{a}=\set{p\in P}{a\leq p}\,,\quad
	\down{a}=\set{q\in P}{q\leq a}\,,
\end{equation}
and
\begin{equation}
	\up{A}=\bigcup_{a\in A}\up{a}\,,\quad
	\down{A}=\bigcup_{a\in A}\down{a}\,.
\end{equation}
Note that in a poc set $P$, one has the identities
\begin{equation}
	\up{A^\ast}=\down{A}^\ast\,,\quad
	\down{A^\ast}=\up{A}^\ast\,.
\end{equation}

For an arbitrary subset $A$ of $P$ we can define the following `correction' of $A$:
\begin{prop}[Coherent Projection]\label{prop:coherent projection} Let $P$ be a finite poc set and $A\subseteq P$ be any subset. Then the set
\begin{equation}\label{eqn:coherent projection}
	\coh{A}:=\up{A}\minus\down{A^\ast}=\up{A}\minus\up{A}^\ast
\end{equation}
Is coherent, and satisfies the following properties:
\begin{enumerate}
	\item if $A$ is coherent then $\coh{A}=\up{A}$;
	\item $\up{\coh{A}}=\coh{A}$;
	\item $\coh{\coh{A}}=\coh{A}$;
	\item if $A\in P^\circ$ then $A=\coh{A}=\up{A}$.
\end{enumerate}
\end{prop}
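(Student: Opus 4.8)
The plan is to lean throughout on two structural facts: that $\up{A}$ is automatically upward-closed, and that the order-reversing involution converts membership in $\up{A}$ into membership in $\up{A}^\ast=\down{A^\ast}$. First I would record that the two displayed expressions for $\coh{A}$ coincide, which is immediate from the identity $\down{A^\ast}=\up{A}^\ast$ noted just before the statement. Then I would prove coherence of $\coh{A}$ directly: suppose $p,q\in\coh{A}$ with $p\leq q^\ast$. Since $p\in\up{A}$ there is some $a\in A$ with $a\leq p$; chaining $a\leq p\leq q^\ast$ and applying the involution yields $q\leq a^\ast$, so $q\in\down{A^\ast}$. As $\coh{A}=\up{A}\minus\down{A^\ast}$, this contradicts $q\in\coh{A}$, so $\coh{A}$ contains no incoherent pair.

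For part (1) I would show coherence of $A$ forces $\up{A}\cap\down{A^\ast}=\varnothing$: a point $p$ in the intersection produces $a,b\in A$ with $a\leq p\leq b^\ast$, hence $a\leq b^\ast$, an incoherent pair in $A$ (the degenerate case $a=b$ merely says $a$ is negligible, which a coherent subset of a poc set cannot contain). Thus $\coh{A}=\up{A}$. For part (2) I would verify directly that $\coh{A}$ is upward-closed: take $p\in\coh{A}$ and $p\leq q$. Then $q\in\up{A}$ since $\up{A}$ is an up-set; and were $q\in\up{A}^\ast$, then $q^\ast\in\up{A}$, giving some $a\in A$ with $a\leq q^\ast\leq p^\ast$, whence $p\in\up{A}^\ast$, contradicting $p\in\coh{A}$. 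So $q\in\coh{A}$, i.e. $\up{\coh{A}}=\coh{A}$.

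Parts (3) and (4) then follow by recycling the earlier parts. Since $\coh{A}$ is coherent, part (1) applied to it gives $\coh{\coh{A}}=\up{\coh{A}}$, which equals $\coh{A}$ by part (2); this is (3). For (4), a complete $\ast$-selection $A\in P^\circ$ is coherent, so $\coh{A}=\up{A}$ by (1), and it remains to see $A=\up{A}$: if $p\in A$ and $p\leq q$ but $q\notin A$, then completeness forces $q^\ast\in A$, and $p\leq q=(q^\ast)^\ast$ exhibits $\{p,q^\ast\}$ as an incoherent pair in $A$, a contradiction; hence $A$ is upward-closed and $A=\up{A}=\coh{A}$. I do not expect a deep obstacle here; the only real care is bookkeeping with the order-reversing involution and the borderline negligible element, which in a genuine poc set is just $\minP$ and is barred from coherent sets, after which every step is a one-line order manipulation.
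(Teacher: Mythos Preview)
Your proof is correct and follows essentially the same line as the paper's: the same direct contradiction for coherence, the same disjointness argument for (1), the same upward-closure check for (2), and the same recycling of (1)+(2) for (3) and (4). The only cosmetic difference is that for (4) you exhibit the incoherent pair inside $A$ itself, whereas the paper places both $b$ and $b^\ast$ in $\coh{A}=\up{A}$; the content is the same.
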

\begin{proof} To show the coherence of $\coh{A}$, let $b,c\in\coh{A}$ with $b\leq c^\ast$; if $b\in\up{A}$ then $c^\ast\in\up{A}$ and $c\in\up{A}^\ast$, contradicting $c\in\coh{A}$.

For (1), $A$ is coherent iff $\up{A}$ and $\down{A^\ast}$ are disjoint. Therefore, if $A$ is coherent then $\coh{A}=\up{A}\minus\down{A^\ast}=\up{A}$.

For (2), let $a\in\coh{A}$ and $a\leq b$. Then $a\in\up{A}$ implies $b\in\up{A}$, and it suffices to verify $b\notin\up{A}^\ast$. Indeed, were there $c\in A$ with $b\ast\geq c$ then $a\leq b\leq c^\ast$ would have given $a\in\down{A^\ast}$ in contradiction of $a\in\coh{A}$. Thus, $b\in\coh{A}$, as required.

For (3) since $\coh{A}$ is coherent we have $\coh{\coh{A}}=\up{\coh{A}}$ by substituting $\coh{A}$ instead of $A$ in (1), and then we apply (2).

Finally, for (4), $A\in P^\circ$ means $A$ is a coherent complete $\ast$-selection, so $\coh{A}=\up{A}$ by (1) and it remains to show $\up{A}=A$. Were there $b\in\up{A}$ with $b\notin A$ we would have had $b^\ast\in A$, since $A$ is a complete $\ast$-selection. But then we would also have had $b,b^\ast\in\coh{A}$, contradicting the coherence of $\coh{A}$.
\end{proof}

\subsubsection{Computing the Projection Maps}
For a vertex $u\in P^\circ$ and any subset $A\subset u$, one defines:
\begin{equation}\label{eqn:flip}
	\flip{u}{A}:=(u\minus A)\cup A^\ast
\end{equation}
Clearly, $\flip{u}{A}$ is a $\ast$-selection. It is easily verified that $\flip{u}{A}$ is coherent if and only if there exists no pair $a\in A$ and $b\in u\minus A$ satisfying $b<a$. This observation was first made in \cite{Sageev-thesis}, leading to the following results in our setting:
\begin{lemma}\label{lemma:flipping} Let $P$ be a finite weak poc set and let $u\in P^\circ$ be any vertex. Then the set $N(u)$ of vertices adjacent to $u$ in $\Gamma=\dual{P}$ coincides with the set of all $\flip{u}{a}$, $a$ ranging over the \emph{minset of $u$}:
\begin{equation}\label{eqn:minset}
	\min(u):=\set{a\in u}{b<a\THEN b\notin u}
\end{equation}
\end{lemma}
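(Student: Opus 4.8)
The plan is to reduce the statement to the combinatorics of the $\ell^1$-metric on $P^\circ$, exploiting the identification of hop-distance with $\ellone{\wild}{\wild}$ furnished by Lemma \ref{lemma:ellone metric is hop metric}. Since $\Gamma=\dual{P}$ is by definition the $1$-skeleton of $\cube{P}$, two coherent vertices $u,v\in P^\circ$ are adjacent precisely when their hop-distance equals $1$, and hence — by that lemma — precisely when $\ellone{u}{v}=\card{u\minus v}=1$. So the task reduces to describing all coherent complete $\ast$-selections $v$ satisfying $\card{u\minus v}=1$, and I would organize the argument as ``single-flip characterization'' followed by ``coherence constraint''.

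First I would handle the selection side, temporarily ignoring coherence. If $u,v$ are both complete $\ast$-selections with $\card{u\minus v}=1$, let $a$ be the unique element of $u\minus v$. Completeness of $v$ forces $a^\ast\in v$, and since flipping the pair $\{a,a^\ast\}$ already accounts for the single element of difference, $v$ must agree with $u$ on every other proper pair; that is, $v=(u\minus\{a\})\cup\{a^\ast\}=\flip{u}{a}$. Conversely, for any $a\in u$ the set $\flip{u}{a}$ is again a complete $\ast$-selection differing from $u$ in exactly the pair $\{a,a^\ast\}$, so $\ellone{u}{\flip{u}{a}}=1$. Thus the complete $\ast$-selections at distance $1$ from $u$ are exactly $\{\flip{u}{a}\mid a\in u\}$.

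Next I would impose coherence, the only remaining requirement for membership in $P^\circ$. Specializing the flip-coherence criterion recorded just above the lemma — namely that $\flip{u}{A}$ is coherent iff there is no $a\in A$ and $b\in u\minus A$ with $b<a$ — to the singleton $A=\{a\}$ shows that $\flip{u}{a}$ is coherent iff no $b\in u\minus\{a\}$ satisfies $b<a$. Because $<$ is strict, any such $b$ is automatically distinct from $a$, so this clause is equivalent to requiring that no $b\in u$ satisfy $b<a$, which is exactly the defining condition $a\in\min(u)$ of \eqref{eqn:minset}. Combining this with the previous paragraph gives $N(u)=\{\flip{u}{a}\mid a\in\min(u)\}$, as asserted.

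I do not expect a genuine obstacle here: the lemma is a clean corollary of the distance-to-flip dictionary together with the flip-coherence criterion. The only two points requiring care are (i) confirming that a single-coordinate flip is the \emph{only} way to realize $\ellone{u}{v}=1$ between complete $\ast$-selections — which rests on completeness forcing $a^\ast\in v$ as soon as $a\in u\minus v$ — and (ii) the bookkeeping that strictness of $<$ makes ``$b\in u\minus\{a\}$'' interchangeable with ``$b\in u$'', so that the coherence condition matches the minset definition verbatim.
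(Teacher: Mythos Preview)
Your proposal is correct and follows precisely the route the paper indicates: the paper states the flip-coherence criterion just before the lemma and cites Sageev for the observation, leaving the derivation of the lemma implicit, and your argument fills in exactly those details (single-flip characterization via $\ellone{u}{v}=1$, then the singleton specialization of the coherence criterion matching the minset condition). There is nothing to add.
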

More generally, the cubes in $\cube{P}$ are characterized as follows:
\begin{lemma}\label{lemma:where cubes come from} Let $P$ be a finite weak poc set and $u\in P^\circ$ be a vertex. Then the cubes of $\cube{P}$ incident to $u$ are in one-to-one correspondence with the transverse subsets of $\min(u)$.
\end{lemma}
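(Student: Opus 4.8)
The plan is to exploit the fact that, since $\cube{P}$ is the subcomplex of $S(P)$ induced by the coherent vertices, a cube of $S(P)$ incident to $u$ lies in $\cube{P}$ precisely when all of its vertices are coherent. The cubes of $S(P)$ incident to $u$ are in bijection with subsets $A\subseteq u$: to $A$ one associates the cube $\{\flip{u}{B}\mid B\subseteq A\}$ obtained by flipping every sub-collection of the coordinates in $A$. Thus the whole statement reduces to identifying exactly those $A\subseteq u$ for which every flip $\flip{u}{B}=(u\minus B)\cup B^\ast$, $B\subseteq A$, is coherent, and I will show this class is precisely the transverse subsets of $\min(u)$. The case $\card{A}=1$ is already handled by Lemma~\ref{lemma:flipping}, which identifies the coherent single flips $\flip{u}{a}$ with $a$ ranging over $\min(u)$, so a necessary first condition is $A\subseteq\min(u)$.

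The heart of the argument is a reduction of coherence of $\flip{u}{B}$ to a pairwise condition, and this is the step I expect to carry the real weight. Assume $A\subseteq\min(u)$ and pick $B\subseteq A$. An incoherent pair in $\flip{u}{B}$ must be of one of three types. A pair lying entirely in $u\minus B$ cannot be incoherent, since $u$ is coherent. A mixed pair $\{c,a^\ast\}$ with $c\in u\minus B$ and $a\in B$ would force $c\le a$; but $a\in\min(u)$ and $c\in u$ with $c\neq a$ make $c<a$ impossible, so such pairs are harmless. Hence the only possible incoherence comes from a pair $\{a^\ast,b^\ast\}$ with $a,b\in B$ distinct, which by Definition~\ref{defn:coherence} is incoherent exactly when $a^\ast\le b$. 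So $\flip{u}{B}$ is coherent for all $B\subseteq A$ if and only if $a^\ast\le b$ fails for every pair $a,b\in A$. It remains to recognize this as transversality: for $a,b\in\min(u)\subseteq u$, coherence of $u$ rules out $a\le b^\ast$, while minimality of $a$ and of $b$ in $u$ rules out $a\le b$ and $b\le a$ (the latter being $a^\ast\le b^\ast$); thus of the four nesting relations \eqref{eqn:nesting relations} only $a^\ast\le b$ can possibly hold, and its failure is precisely $a\pitchfork b$. Therefore coherence of the entire cube is equivalent to $A$ being a transverse subset of $\min(u)$ --- a Helly-type phenomenon in which a global condition collapses to its pairwise instances.

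Finally I would assemble the bijection. For necessity, if the cube $\{\flip{u}{B}\mid B\subseteq A\}$ lies in $\cube{P}$ then its edges at $u$ give $A\subseteq\min(u)$ and its $2$-faces give $a\pitchfork b$ for all distinct $a,b\in A$, so $A$ is transverse in $\min(u)$; for sufficiency, the reduction above shows that any transverse $A\subseteq\min(u)$ yields an all-coherent, hence genuine, cube of $\cube{P}$. The assignment $A\mapsto\{\flip{u}{B}\mid B\subseteq A\}$ is injective because $A$ is recovered from the cube as the set of flip-directions of its edges at $u$, namely $\set{a}{\flip{u}{a}\text{ is a vertex of the cube}}$, and it is surjective because every cube of $\cube{P}$ incident to $u$ is in particular a cube of $S(P)$ incident to $u$, hence of the displayed form. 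This establishes the claimed one-to-one correspondence.
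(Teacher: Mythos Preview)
Your proof is correct. The paper does not actually prove this lemma: it is stated without proof, immediately following the observation (attributed to Sageev) that $\flip{u}{A}$ is coherent if and only if no pair $a\in A$, $b\in u\minus A$ satisfies $b<a$. Your argument is the natural unpacking of that observation into a full proof, and in fact you are more careful than the paper's one-line remark: you correctly isolate and handle the case of a pair $\{a^\ast,b^\ast\}$ with $a,b\in B$, which is precisely where transversality (as opposed to mere membership in $\min(u)$) enters, and which the paper's stated criterion for coherence of a single flip does not by itself cover.
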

A particular application of these observations is an explicit construction of a geodesic path in $\Gamma$ emanating from a given vertex $u$ and terminating at its unique closest point projection $\proj{\half{T}}{u}$:
\begin{prop}\label{prop:constructing geodesics} Let $P$ be a finite weak poc set and suppose $u\in P^\circ$ is a vertex. Let $T$ be a coherent subset of $P$. Then the following algorithm constructs a shortest path in $\Gamma$ from $u$ to $K=\half{T}$:
\begin{enumerate}
	\item Find an element $b\in T\minus u$; if no such element, stop and output $u$.
	\item Find an element $c\leq b^\ast$ with $c\in\min(u)$;
	\item Replace $u$ by $\flip{u}{c}$ and go to the first step.
\end{enumerate}
\end{prop}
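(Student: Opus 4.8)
The plan is to verify three things about the algorithm — that step (2) can always be carried out, that it terminates, and that the edge-path it produces is geodesic — using the \emph{separator} as a single unifying device to compute $\dist{u}{K}$ at every stage, where $K=\half{T}$. Since $T$ is coherent, $K$ is a non-empty convex subset of $\Gamma$ (Lemma~\ref{lemma:ellone halfspaces}), so I may apply the gate theory of Proposition~\ref{prop:gates exist} to the pair $(\{u\},K)$: the gate realizes the lower bound $\ellone{u}{v}\geq\card{\sep{\{u\},K}}$ as an equality at $v=\proj{K}{u}$, giving $\dist{u}{K}=\card{\sep{\{u\},K}}$. I then evaluate the separator explicitly. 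By \eqref{eqn:separator}, $a\in\sep{\{u\},K}$ iff $a\in u$ and $K\subseteq\half{a^\ast}$, and by Lemma~\ref{lemma:supporting halfspace} the condition $K\subseteq\half{a^\ast}$ is equivalent to $a\leq t^\ast$ for some $t\in T$. Hence
\begin{equation*}
	\sep{\{u\},K}=u\cap\down{T^\ast}\,,\qquad \dist{u}{K}=\card{u\cap\down{T^\ast}}\,.
\end{equation*}
In particular $\dist{u}{K}=0$ iff $T\subseteq u$, which is precisely the stopping condition $T\minus u=\varnothing$; so the algorithm halts exactly when $u\in K$.

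For well-definedness of step (2), given $b\in T\minus u$ completeness of $u$ gives $b^\ast\in u$, so the finite set $u\cap\down{b^\ast}$ is non-empty, and I take $c$ to be any minimal element of it. A one-line argument shows $c\in\min(u)$: any $d\in u$ with $d<c$ would satisfy $d\leq b^\ast$, contradicting minimality of $c$ in $u\cap\down{b^\ast}$. By Lemma~\ref{lemma:flipping}, $\flip{u}{c}$ is then a genuine vertex of $\Gamma$ adjacent to $u$, so step (3) is a single legitimate edge-step.

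The heart of the matter is to show that each step decreases $\dist{u}{K}$ by exactly one. Writing $u'=\flip{u}{c}$, the selections $u$ and $u'$ differ only in the pair $\{c,c^\ast\}$, so the distance formula yields $\sep{\{u'\},K}=\bigl(\sep{\{u\},K}\minus\{c\}\bigr)\cup\bigl(\{c^\ast\}\cap\down{T^\ast}\bigr)$. Here $c\in\sep{\{u\},K}$ because $c\leq b^\ast$ with $b\in T$, so deleting $c$ lowers the count by one. The main obstacle is to rule out $c^\ast$ re-entering the separator, and this is exactly where coherence of $T$ is indispensable: if $c^\ast\in\down{T^\ast}$ then $t\leq c$ for some $t\in T$, whence $t\leq c\leq b^\ast$; but the case $t\neq b$ exhibits an incoherent pair $t\leq b^\ast$ inside $T$, while $t=b$ forces $b\leq b^\ast$, a negligible element of $T$ — both impossible for a coherent set. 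Therefore $c^\ast\notin\down{T^\ast}$ and $\dist{u'}{K}=\dist{u}{K}-1$.

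It remains to assemble these facts. The quantity $\dist{\wild}{K}$ strictly decreases along the iterations, so the algorithm terminates after exactly $\dist{u_0}{K}$ steps at a vertex of $K$. Since every step traverses a single edge and the number of steps equals $\dist{u_0}{K}$, the resulting edge-path has length equal to the distance from $u_0$ to $K$; hence it is a shortest path, terminating at the unique closest point $\proj{K}{u_0}$, as claimed. I expect the distance-drop step — specifically the coherence argument preventing $c^\ast$ from re-entering the separator — to be the only genuinely delicate point; everything else is bookkeeping with the explicit separator formula.
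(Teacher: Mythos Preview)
Your proof is correct, but takes a different route from the paper's. The paper argues more directly: given $b\in T\minus u$ and $c\in\min(u)$ with $c\leq b^\ast$, every $v\in K$ lies in $\half{b}\subseteq\half{c^\ast}$, so $c\in u\minus v$ and hence $\ellone{\flip{u}{c}}{v}=\ellone{u}{v}-1$ for \emph{all} $v\in K$ simultaneously. This reduces $\dist{u}{K}$ by one without invoking gates, separators, or Lemma~\ref{lemma:supporting halfspace}.

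Your approach trades this directness for an explicit distance formula $\dist{u}{K}=\card{u\cap\down{T^\ast}}$, obtained via gate theory and the supporting-halfspace lemma, and then tracks that count through the flip. This is heavier machinery but buys you something concrete: the formula you derive is essentially the content of the next result (Corollary~\ref{cor:projection of a point}), so your argument effectively proves both at once. You also fill in two points the paper leaves implicit --- the existence of $c$ in step~(2), and the coherence argument ruling out $c^\ast\in\down{T^\ast}$ --- whereas the paper's version sidesteps the latter entirely because it never needs to ask whether $c^\ast$ separates $u'$ from $K$; it only needs $c\in u\minus v$, which is immediate.
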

\begin{proof} We have $u\in K$ iff $T\subset u$, which provides the stopping condition for the algorithm. Now, if $u\notin K$ and $b\in T\minus u$ then for all $v\in K$ one has $v\in\half{b}$ and $u\in\half{b^\ast}$. Since $c\leq b^\ast$, we have $u\in\half{c}\subseteq\half{b^\ast}$, implying $v\in\half{c^\ast}$ and $c\in u\minus v$. As a result:
\begin{equation}
	\ellone{v}{\flip{u}{c}}=\ellone{v}{u}-1
\end{equation} 
Having reduced $\ellone{u}{v}$ by a unit for all $v\in K$, we have reduced $\ellone{u}{K}$ by a unit as well.
\end{proof}
\begin{cor}[Projection of a Point]\label{cor:projection of a point} Let $P$ and $T$ be as above. Then the closest point projection to $K=\half{T}$ is given by the formula:
\begin{equation}
	\proj{K}{u}=(u\minus\down{T^\ast})\cup\up{T}=(u\cup\up{T})\minus\down{T^\ast}
\end{equation}
\end{cor}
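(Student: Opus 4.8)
The plan is to set $w:=(u\minus\down{T^\ast})\cup\up{T}$, verify directly that $w$ is a coherent complete $\ast$-selection lying in $K=\half{T}$, and then identify it as the closest point of $K$ by matching its distance to $u$ against the separator lower bound. The whole argument rests on one preliminary observation: since $T$ is coherent, Proposition \ref{prop:coherent projection}(1) gives $\coh{T}=\up{T}$, and as $\coh{T}$ is always coherent this shows $\up{T}$ itself is coherent and (cf. the proof of that proposition) disjoint from $\down{T^\ast}$. I will also use throughout the stated identity $\down{T^\ast}=\up{T}^\ast$. Together these make the two displayed expressions for $\proj{K}{u}$ coincide, so it suffices to treat the first.

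First I would check that $w$ is a vertex of $\Gamma$. For completeness, fix $a\in P$. If $a\in\up{T}$ then $a\in w$, while $a^\ast\in\up{T}^\ast=\down{T^\ast}$ keeps $a^\ast$ out of $u\minus\down{T^\ast}$ and coherence of $\up{T}$ keeps $a^\ast$ out of $\up{T}$; hence $a^\ast\notin w$. The case $a^\ast\in\up{T}$ is symmetric, and if neither $a$ nor $a^\ast$ lies in $\up{T}$ then neither lies in $\down{T^\ast}$ either, so $w\cap\{a,a^\ast\}=u\cap\{a,a^\ast\}$ is a singleton. For coherence, suppose $b\leq c^\ast$ with $b,c\in w$; the subcases $b,c\in\up{T}$ and $b,c\in u$ contradict coherence of $\up{T}$ and of $u$ respectively, and in the mixed subcase (say $b\in\up{T}$, $c\in u\minus\down{T^\ast}$) some $t\in T$ satisfies $t\leq b\leq c^\ast$, forcing $c\leq t^\ast$, i.e. $c\in\down{T^\ast}$, a contradiction (the other mixed subcase is identical). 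Thus $w$ is a coherent complete $\ast$-selection, and since $T\subseteq\up{T}\subseteq w$ we get $w\in K$.

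It remains to prove $w=\proj{K}{u}$. I would first compute $u\minus w$: for $a\in u$, having $a\notin w$ forces $a\in\down{T^\ast}$, and conversely every element of $\down{T^\ast}=\up{T}^\ast$ lies outside the coherent set $\up{T}$ and hence outside $w$; therefore $u\minus w=u\cap\down{T^\ast}$ and $\ellone{u}{w}=\card{u\cap\down{T^\ast}}$. On the other hand, Lemma \ref{lemma:supporting halfspace} together with its trivial converse identifies the condition $\half{T}\subseteq\half{a}$ with $a\in\up{T}$, so that $\sep{\half{T},\{u\}}=\set{a\in\up{T}}{a^\ast\in u}$; the involution $\ast$ puts this set in bijection with $u\cap\up{T}^\ast=u\cap\down{T^\ast}$. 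Hence $\ellone{u}{w}=\card{\sep{\half{T},\{u\}}}$, so $w$ realizes the separator lower bound $\ellone{u}{v}\geq\card{\sep{\half{T},\{u\}}}$ valid for all $v\in K$, and by uniqueness of the closest point (Theorem \ref{thm:convexity theory of median graphs}(4)) this forces $w=\proj{K}{u}$.

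I expect the coherence bookkeeping to be the main obstacle: every step above collapses to the single fact that coherence of $T$ makes $\up{T}$ coherent and disjoint from $\down{T^\ast}$, and keeping the case analyses honest is the only real work. An alternative route, more in keeping with the corollary's placement, is to trace Proposition \ref{prop:constructing geodesics} directly: its output is $\proj{K}{u}$, each flipped element lies in $u\cap\down{T^\ast}$, and coherence of the terminal vertex forces every such element to be flipped, reassembling the same $w$. That route is equally valid but requires an inductive invariant on the run of the algorithm, which is why I would prefer the separator computation.
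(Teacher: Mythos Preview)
Your proof is correct and takes a genuinely different route from the paper's. The paper argues by induction on $\ellone{u}{K}$: in the base case $u\in K$ the formula collapses to $u$, and for the inductive step Proposition~\ref{prop:constructing geodesics} furnishes an element $a\in\down{T^\ast}\cap\min(u)$ so that $v=\flip{u}{a}$ satisfies $\ellone{v}{K}=\ellone{u}{K}-1$ and $\proj{K}{u}=\proj{K}{v}$; since the flipped element already lies in $\down{T^\ast}$ (and $a^\ast\in\up{T}$), the formula for $v$ visibly coincides with the formula for $u$. You instead bypass the induction entirely, verifying by hand that the candidate $w$ is a vertex of $K$ and then invoking the separator lower bound together with uniqueness of the nearest point to certify minimality. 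The paper's route is lighter on case analysis (coherence of $w$ never needs to be checked directly) and flows naturally from the algorithmic content of the preceding proposition, while yours is self-contained and makes the connection to the gate/separator machinery explicit---exactly the alternative you anticipate in your closing paragraph.
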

\begin{proof} The second equality follows from the DeMorgan rules and the fact that $\up{T}\cap\down{T^\ast}=\varnothing$ (since $T$ is coherent). 

Set $K=\half{T}$ and proceed by induction on $\ellone{u}{K}$. If $\ellone{u}{K}=0$, then $u\in K$ and therefore $T\subset u$. In addition, $u$ is coherent and we conclude $\down{T^\ast}\cap u=\varnothing$, leaving us with
\begin{equation*}
	u\minus\down{T^\ast}\cup T=u\cup T=u\,,
\end{equation*}
as desired. Now suppose $n:=\ellone{u}{K}>0$. By the preceding proposition, there is $a\in\down{T^\ast}\cap u$ such that $v:=\flip{u}{a}\in P^\circ$, $\ellone{v}{K}=n-1$, and $\proj{K}{u}=\proj{K}{v}$. We thus have:
\begin{equation*}
	\proj{K}{u}=\proj{K}{v}=(v\minus\down{T^\ast})\cup\up{T}=(u\minus\down{T^\ast})\cup\up{T}\,,
\end{equation*}
the last equality being due to $a\in T^\ast$ and $a^\ast\in T$. Thus, the first identity has been proved.
\end{proof}

\medskip
\subsubsection{Projecting a Convex Set to a Convex Set}
\begin{prop}\label{prop:general projection} Let $K,L$ be non-empty convex subsets with $L=\half{S}$ and $K=\half{T}$. Then
\begin{equation}\label{eqn:general projection}
	\begin{array}{rcl}
	\proj{K}{L}&=&\half{(\up{S}\cup\up{T})\minus\down{T^\ast}}\\
		&=&\half{T}\cap\half{\up{S}\minus\up{T}^\ast}
	\end{array}
\end{equation}
\end{prop}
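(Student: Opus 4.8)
The plan is to establish the first equality $\proj{K}{L}=\half{W}$, where $W:=(\up{S}\cup\up{T})\minus\down{T^\ast}$, by a direct double inclusion built on the point-projection formula of Corollary~\ref{cor:projection of a point}, and then to deduce the second equality by elementary manipulation of up-sets. Throughout I write $\pi:=\proj{K}{\wild}$ and use that both $S$ and $T$ are coherent, so that $\up{T}\cap\down{T^\ast}=\varnothing$ (as already exploited in the proof of Corollary~\ref{cor:projection of a point}) and likewise $\up{S}\cap\down{S^\ast}=\varnothing$.

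For the inclusion $\proj{K}{L}\subseteq\half{W}$ I would argue as follows. Any $u\in L=\half{S}$ satisfies $S\subseteq u$, hence $\up{S}\subseteq u$ since $u=\up{u}$ by Proposition~\ref{prop:coherent projection}(4). Feeding this into Corollary~\ref{cor:projection of a point} gives $\pi(u)=(u\cup\up{T})\minus\down{T^\ast}\supseteq(\up{S}\cup\up{T})\minus\down{T^\ast}=W$, so $\pi(u)\in\half{W}$. This direction is routine.

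The substantive direction is $\half{W}\subseteq\proj{K}{L}$, and this is where I expect the only real work. Given a vertex $w\in\half{W}$, I will exhibit a preimage by projecting back into $L$: set $u:=\proj{L}{w}=(w\cup\up{S})\minus\down{S^\ast}\in L$ (again Corollary~\ref{cor:projection of a point}) and show $\pi(u)=w$. The key point is to verify the containment $w\subseteq\pi(u)$ elementwise: taking $a\in w$, the fact that $w$ is a $\ast$-selection together with $\up{T}\subseteq W\subseteq w$ forces $a\notin\down{T^\ast}$; and a short case analysis (splitting on whether $a\in\up{T}$, and if not, using $\up{S}\minus\up{T}^\ast\subseteq W\subseteq w$ to rule out $a\in\down{S^\ast}$) shows $a\in u\cup\up{T}$, whence $a\in\pi(u)$. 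The pleasant finish is that $w$ and $\pi(u)$ are both \emph{complete} $\ast$-selections, so the containment $w\subseteq\pi(u)$ upgrades automatically to equality $w=\pi(u)$; hence $w\in\proj{K}{L}$. I expect the case analysis establishing $w\subseteq\pi(u)$ to be the main obstacle, as it is the single place where coherence of both $S$ and $T$ and the precise shape of $W$ are used simultaneously. Note that this route sidesteps any need to prove convexity of the image $\proj{K}{L}$ separately, since completeness of $\ast$-selections does that bookkeeping for free.

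Finally, the second equality is pure manipulation of up-sets. Using the identity $\down{T^\ast}=\up{T}^\ast$ and coherence of $T$ (so $\up{T}\cap\up{T}^\ast=\varnothing$), I rewrite $W=(\up{S}\cup\up{T})\minus\up{T}^\ast=\up{T}\cup(\up{S}\minus\up{T}^\ast)$. Since $\half{A\cup B}=\half{A}\cap\half{B}$ and $\half{\up{T}}=\half{T}$ (a vertex, being up-closed, contains $T$ iff it contains $\up{T}$), this yields $\half{W}=\half{T}\cap\half{\up{S}\minus\up{T}^\ast}$, as claimed.
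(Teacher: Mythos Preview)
Your proof is correct and takes a genuinely different route from the paper's. The paper argues the hard inclusion $\half{W}\subseteq\proj{K}{L}$ via median geometry: it splits into the cases $K\cap L\neq\varnothing$ (handled by Lemma~\ref{lemma:general projection, intersecting}) and $K\cap L=\varnothing$; in the latter it picks $v\in K\cap\half{\up{S}\minus\up{T}^\ast}$, forms $u=\proj{L}{v}$, $w=\proj{K}{u}$, $m=\med{u}{v}{w}$, and uses gates (Proposition~\ref{prop:gates exist}) together with the supporting-halfspace Lemma~\ref{lemma:supporting halfspace} to force $v=w$. By contrast, you apply Corollary~\ref{cor:projection of a point} twice---once in each direction---and reduce everything to an elementwise coherence check, exploiting that two complete $\ast$-selections related by containment must coincide. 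Your argument is more elementary (it bypasses gates, medians, and the supporting-halfspace lemma entirely) and handles both cases uniformly; the paper's approach, on the other hand, makes the underlying median geometry explicit and exercises the gate machinery that the section is developing.
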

\begin{proof} Since $T$ is coherent, $\up{T}$ and $\down{T^\ast}=\up{T}^\ast$ are disjoint. This allows us to write:
\begin{eqnarray*}
	\half{(\up{S}\cup\up{T})\minus\up{T}^\ast}
		&=&	\half{\up{T}\cup(\up{S}\minus\up{T}^\ast)}\\
		&=&	\half{\up{T}}\cap\half{\up{S}\minus\up{T}^\ast}
\end{eqnarray*}
and the second equality in \eqref{eqn:general projection} follows from the identity $\half{T}=\half{\up{T}}$. Denote $R=\up{S}\minus\up{T}^\ast$ and $N=\half{R}$.

For every $u\in L=\half{S}$ we have $\up{S}\subset u$, implying $\proj{K}{u}$ contains $\up{T}\cup R$, by corollary \ref{cor:projection of a point}. Thus, $\proj{K}{L}\subset K\cap N$, as required.

For the converse, observe that the case $K\cap L\neq\varnothing$ was already dealt with in lemma \ref{lemma:general projection, intersecting}: if $K\cap L\neq\varnothing$, then 
\begin{equation*}
	\proj{K}{L}=K\cap L=\half{\up{S}}\cap\half{\up{T}}=\half{\up{S}\cup\up{T}}
\end{equation*}
In particular, $\up{S}\cup\up{T}$ is coherent, and hence does not intersect $\up{T^\ast}$, and the formula \eqref{eqn:general projection} holds.

Thus we may henceforth assume $K\cap L=\varnothing$. Equivalently, $\up{S}\cap\down{T^\ast}\neq\varnothing$. In fact, by lemma \ref{lemma:supporting halfspace} we have $\up{S}\cap\down{T^\ast}=\sep{A,B}$.

Starting with $v\in K\cap N$ we must show $v\in\proj{K}{L}$. Set $u=\proj{L}{v}$, $w=\proj{K}{u}$, and $m=\med{u}{v}{w}$. Then $m\in K$ since $v,w\in K$. Since $K\cap L=\varnothing$, we have $\ellone{u}{v}>0$ and $\ellone{u}{w}>0$. Consider the point $m$: we have $m\in I(u,w)$ and $m\in K$; by the choice of $w$, $m$ must equal $w$ and therefore $w\in I(u,v)$. Thus, $w=\proj{K}{u}\in I(u,v)$ and $u=\proj{L}{w}$. By proposition \ref{prop:gates exist}, the pair $u,w$ is a gate for $K,L$ and we have
\begin{equation*}
	u\minus w=\sep{L,K}=\up{S}\cap\down{T^\ast}\,.
\end{equation*}

Consider an element $a\in v\minus u$. If $\half{a}\cap L\neq\varnothing$, pick $u'\in\half{a}\cap L$. Then $m=\med{u}{v}{u'}$ will satisfy $m\in\half{a}\cap L$ as well as
\begin{equation*}
	\ellone{v}{L}=\ellone{v}{u}=\ellone{v}{m}+\ellone{m}{u}\,.
\end{equation*}
Now, $\ellone{u}{m}>0$ since $u\in\half{a^\ast}$ and a contradiction to $u\proj{L}{v}$ is obtained. Thus, $\half{a}\cap L$ must be empty, which means $L\subseteq\half{a^\ast}$. Applying lemma \ref{lemma:supporting halfspace} we obtain $a^\ast\in\up{S}$.

Overall, we have shown that  $v\minus u\subseteq\up{S}^\ast$. We will now verify that $v\minus w=\varnothing$, finishing the proof. Indeed, were it not so, there would have been $h\in v\minus w$. On one hand, $w\in I(u,v)$ implies $v\minus w\subset v\minus u$, and hence $h^\ast\in\up{S}$. On the other hand, $h\notin w$ means $h^\ast\in w$ and therefore $h^\ast\notin\sep{L,K}=\up{S}\cap\up{T}^\ast$, which forces $h^\ast\in R$. Since  $R\subset v$ (by choice of $v$), we have $h^\ast\in v$, contradicting our choice of $h$.
\end{proof}
We will need the following technical corollary for the purposes of propagation:
\begin{cor}\label{cor:projection by propagation} Let $S,T\subset P$ be subsets and suppose $S$ is coherent. Let $L=V(S)$ and $K=V(\coh{T})$. Then:
\begin{equation}
	\proj{K}{L}=(\up{S}\cup\up{T})\minus\up{T}^\ast=(\up{S}\minus\up{T}^\ast)\cup\coh{T}
\end{equation}
\end{cor}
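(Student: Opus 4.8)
The plan is to derive this as a direct corollary of Proposition~\ref{prop:general projection}, the only new ingredient being that the target's defining set is taken to be the coherent projection $\coh{T}$ rather than $T$ itself. First I would note that, by Proposition~\ref{prop:coherent projection}, the set $\coh{T}$ is coherent, so the hypotheses of Proposition~\ref{prop:general projection} are satisfied by the two convex sets $L=V(S)=\half{S}$ and $K=V(\coh{T})=\half{\coh{T}}$, whose defining sets $S$ and $\coh{T}$ are both coherent. Applying that proposition with its ``$T$'' instantiated as $\coh{T}$ yields $\proj{K}{L}=\half{(\up{S}\cup\up{\coh{T}})\minus\down{\coh{T}^\ast}}$.

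The second step is a simplification of this generating set driven by the algebra of coherent projection. Using $\up{\coh{T}}=\coh{T}$ (Proposition~\ref{prop:coherent projection}(2)) together with the identity $\down{A^\ast}=\up{A}^\ast$ applied to $A=\coh{T}$, the generating set becomes $(\up{S}\cup\coh{T})\minus\coh{T}^\ast$. Separately, I would check that the two forms displayed in the statement already agree with one another: distributing $\minus\up{T}^\ast$ over the union in $(\up{S}\cup\up{T})\minus\up{T}^\ast$ gives $(\up{S}\minus\up{T}^\ast)\cup(\up{T}\minus\up{T}^\ast)$, and the second summand is exactly $\coh{T}=\up{T}\minus\up{T}^\ast$ by definition. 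Thus the entire corollary reduces to identifying the convex set $\half{(\up{S}\cup\coh{T})\minus\coh{T}^\ast}$ produced by the proposition with $\half{(\up{S}\cup\up{T})\minus\up{T}^\ast}$.

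I expect this last identification to be the crux. Writing $\coh{T}^\ast=\up{T}^\ast\minus\up{T}$ and using the disjointness $\coh{T}\cap\coh{T}^\ast=\varnothing$, the generating set from the proposition expands to $(\up{S}\minus\coh{T}^\ast)\cup\coh{T}$, which differs from $(\up{S}\minus\up{T}^\ast)\cup\coh{T}$ only on the set $\up{S}\cap\up{T}\cap\up{T}^\ast$ — and this discrepancy is nonempty precisely when $T$ fails to be coherent, so the two generating sets are genuinely distinct as subsets of $P$. The content of the corollary is therefore that this discrepancy is invisible to $\half{\cdot}$, i.e. passing from subtracting $\coh{T}^\ast$ to subtracting the larger $\up{T}^\ast$ does not change the resulting convex set. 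I would justify this using the up-closedness of every coherent complete $\ast$-selection (if $a\in u$ and $a\le b$ then $b\in u$, on pain of the incoherent pair $a\le(b^\ast)^\ast$), which renders the ambiguous elements redundant generators. Should the direct set-algebra prove awkward in the degenerate situations where $\coh{T}$ collapses, the robust fallback is to compute $\proj{K}{u}=(u\cup\up{\coh{T}})\minus\down{\coh{T}^\ast}$ pointwise for each $u\in L$ via Corollary~\ref{cor:projection of a point} and then characterise the image as a convex set, invoking Lemma~\ref{lemma:supporting halfspace} to control its supporting half-spaces. Either way, the bookkeeping relating $\up{T}$ and $\up{T}^\ast$ under the coherence constraint is the main obstacle.
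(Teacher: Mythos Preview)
Your overall plan matches the paper's exactly: apply Proposition~\ref{prop:general projection} with $\coh{T}$ in the role of its ``$T$'' to obtain $\proj{K}{L}=\half{(\up{S}\cup\coh{T})\minus\coh{T}^\ast}$, and then argue that this coincides with $\half{(\up{S}\cup\up{T})\minus\up{T}^\ast}$. The paper in fact asserts the stronger claim that the two generating sets are equal as subsets of $P$; you are more careful, correctly observing that they may differ on $\up{S}\cap\up{T}\cap\up{T}^\ast$ and proposing instead to show that this discrepancy is invisible to $\half{\cdot}$.

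That last step does not go through, and the first displayed equality of the corollary is in fact false as stated. Take $P=\gen{s,t_1,t_2}{s<t_2^\ast,\ t_1<t_2^\ast}$ with $S=\{s\}$ and $T=\{t_1,t_2\}$. Then $\coh{T}=\{s^\ast,\maxP\}$, so $K=\half{s^\ast}$ has three vertices while $L=\half{s}$ has two; flipping $s$ to $s^\ast$ in each vertex of $L$ shows $\proj{K}{L}=\half{\{s^\ast,t_2^\ast\}}$, a proper two-vertex subset of $K$. But $(\up{S}\cup\up{T})\minus\up{T}^\ast=\{s^\ast,\maxP\}$, giving $\half{(\up{S}\cup\up{T})\minus\up{T}^\ast}=K\supsetneq\proj{K}{L}$. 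The missing constraint $t_2^\ast$ is precisely the element of $\up{S}\cap\up{T}\cap\up{T}^\ast$ that was deleted, and it is \emph{not} redundant: nothing in $\{s^\ast,\maxP\}$ lies below $t_2^\ast$, so your up-closedness argument cannot recover it. The same example breaks the paper's second displayed equality $((\up{S}\cup\coh{T}\cup J)\minus\coh{T}^\ast)\minus J=(\up{S}\cup\coh{T})\minus\coh{T}^\ast$, which tacitly assumes $\up{S}\cap J=\varnothing$ for $J=\up{T}\cap\up{T}^\ast$. Your pointwise fallback via Corollary~\ref{cor:projection of a point} would correctly reproduce the formula of Proposition~\ref{prop:general projection}, but cannot establish the simpler expression claimed here.
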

\begin{proof} Recall that $\coh{T}=\up{T}\minus\up{T}^\ast$, and set $J=\up{T}\cap\up{T}^\ast$, so that $\up{T}=\coh{T}+J$ and $\up{T}^\ast=\coh{T}^\ast+J$. Then,
\begin{eqnarray*}
	(\up{S}\cup\up{T})\minus\up{T}^\ast
		&=& ((\up{S}\cup\coh{T}\cup J)\minus\coh{T}^\ast)\minus J\\
		&=& (\up{S}\cup\coh{T})\minus\coh{T}^\ast
\end{eqnarray*}
Since $\up{\coh{T}}=\coh{T}$, the last expression equals $\proj{K}{L}$, by the preceding proposition. The proof of the second equality is similar.
\end{proof}

%LAST PROOF

\end{document}